\documentclass[11pt]{article}

%
%
%

\usepackage{microtype} 
\usepackage{booktabs}  
\usepackage{url}  

\usepackage{amsmath}
\usepackage{amsthm}

\usepackage{algorithm}
\usepackage[noend]{algcompatible}
\usepackage{url}

\usepackage{mathtools} 
\usepackage{bm}
\usepackage[utf8]{inputenc} 
\usepackage[T1]{fontenc}    
\usepackage{csquotes}
\usepackage{amsfonts}
\usepackage{nicefrac}       
\usepackage{microtype}      
\usepackage{graphicx}
\usepackage{wrapfig}
\usepackage{subcaption}
\usepackage{verbatim}

\newcommand{\SumNode}{\mathsf{S}}
\newcommand{\ProductNode}{\mathsf{P}}
\newcommand{\Node}{\mathsf{N}}
\newcommand{\Leaf}{\mathsf{L}}
\newcommand{\graph}{\mathcal{G}}
\newcommand{\ch}[1]{\operatorname{ch}(#1)}

\usepackage{sta-colours}
\usepackage{tikz}
\usetikzlibrary{shapes.geometric, arrows}
\tikzstyle{action} = [rectangle, rounded corners, minimum width=3cm, minimum height=1cm,text centered, draw=sta-orange, fill=sta-orange!30]
\tikzstyle{instance} = [rectangle, rounded corners, minimum width=3cm, minimum height=1.5cm,text centered, draw=sta-purple, fill=sta-purple!30]
\tikzstyle{arrow} = [thick,->,>=stealth, auto, line width=0.5mm]
\tikzstyle{line} = [draw, -latex']

\definecolor{grey}{HTML}{424241}

\newcommand{\solidline}[1]{\protect \tikz[baseline=-0.5ex] \protect \draw[very thick, #1] (0,0) -- (0.4,0);}

\newcommand{\dotteddline}[1]{\protect \tikz[baseline=-0.5ex] \protect \draw[very thick, dotted, #1] (0,0) -- (0.4,0);}

\newcommand{\triangletikz}{\protect \tikz[baseline=0ex, scale=0.3] \protect\draw[fill=grey!30] (0,0) -- (1,0) -- (0.5,0.8) -- cycle; }
\newcommand{\redtriangletikz}{\protect \tikz[baseline=0ex, scale=0.3] \protect\draw[fill=red!30] (0,0) -- (1,0) -- (0.5,0.8) -- cycle; }

\definecolor{boTPE}{HTML}{56048C}
\definecolor{boRF}{HTML}{AB0392}
\definecolor{IBO}{HTML}{3492EB}
\definecolor{IBOdist}{HTML}{15A123}
\definecolor{IBOearly}{HTML}{FA675C}
\definecolor{IBOmany}{HTML}{8A4000}
\definecolor{IBOlate}{HTML}{34EBE5}
\definecolor{IBOrecover}{HTML}{EA5CFA}
\definecolor{IBOmany}{HTML}{8a4000}
\definecolor{PiBOcol}{HTML}{014d08}
\definecolor{BOPrOcol}{HTML}{d10808}

\definecolor{cr4S}{HTML}{AB0392}
\definecolor{nTFs}{HTML}{3492EB}
\definecolor{allrev}{HTML}{15A123}

\usepackage{pgfplots}
\usepackage{pifont}
\usetikzlibrary{colorbrewer} \usetikzlibrary{positioning}
\usetikzlibrary{shapes.geometric} \usetikzlibrary{arrows}
\usetikzlibrary{arrows.meta} \usetikzlibrary{patterns,decorations.pathreplacing}
\usetikzlibrary{fit} \usetikzlibrary{backgrounds}
\usetikzlibrary{shapes,backgrounds,calc} \usetikzlibrary{shadows}
\usetikzlibrary{patterns} \definecolor{mygray}{RGB}{190,190,190}

\newtheorem{defin}{Definition}
\newtheorem{prop}{Proposition}

\newtheorem{rem}{Remark}
%
%
%
%
%
%
%
%
%
%

\usepackage[final]{automl}


\usepackage{natbib}

\title{Hyperparameter Optimization via\\ Interacting with Probabilistic Circuits}

%
%
%
%
%
%
%
%
%
%
%
%

\author[1,$\ast$]{\nameemail{Jonas Seng}{jonas.seng@tu-darmstadt.com}}
\author[1,$\ast$]{\nameemail{Fabrizio Ventola}{}}
\author[1]{\nameemail{Zhongjie Yu}{}}
\author[1,2,3]{\nameemail{Kristian Kersting}{}}

\affil[$\ast$]{Equal contribution.}
\affil[1]{Computer Science Department, Technical University Darmstadt}
\affil[2]{Hessian Center for AI (hessian.AI)}
\affil[3]{German Research Center for AI (DFKI)}

\hypersetup{%
  pdfauthor={AutoML}, 
  pdftitle={Documentation for the automl Package},
  pdfsubject={Documentation for automl Package},
  pdfkeywords={AutoML, LaTeX, style}
}

\begin{document}

\maketitle

\begin{abstract}
Despite the growing interest in designing truly interactive hyperparameter optimization (HPO) methods, to date, only a few allow to include human feedback. 
Existing interactive Bayesian optimization (BO) methods incorporate human beliefs by weighting the acquisition function with a user-defined prior distribution. However, in light of the non-trivial inner optimization of the acquisition function prevalent in BO, such weighting schemes do not always accurately reflect given user beliefs.
We introduce a novel BO approach leveraging tractable probabilistic models named probabilistic circuits (PCs) as a surrogate model. PCs encode a tractable joint distribution over the hybrid hyperparameter space and evaluation scores. They enable exact conditional inference and sampling. Based on conditional sampling, we construct a novel selection policy that enables an acquisition function-free generation of candidate points (thereby eliminating the need for an additional inner-loop optimization) and ensures that user beliefs are reflected accurately in the selection policy.
We provide a theoretical analysis and an extensive empirical evaluation, demonstrating that our method achieves state-of-the-art performance in standard HPO and outperforms interactive BO baselines in interactive HPO.
\end{abstract}

\section{Introduction} \label{sec:Intro}

Hyperparameters crucially influence the performance of machine learning (ML) algorithms and must be set carefully to fully unleash the algorithm's potential~\citep{bergstra_2012, hutter_2013, probst2019tunability}. 
\emph{Hyperparameter optimization} (HPO) algorithms~\citep{bischl_2022} aim to efficiently traverse a predefined search space to find good configurations quickly and avoid unpromising regions. 
Generally, HPO is framed as optimizing an expensive black-box function since the true functional form of the objective is commonly unknown, and the evaluation of hyperparameter configurations is costly, as it requires training ML models several times. 
Bayesian optimization (BO) methods have proven to be well-suited for HPO since they are sample efficient and converge on good configurations quickly. Typically, BO algorithms approximate the black-box objective with a surrogate model based on observations made during optimization and use the surrogate in an acquisition function to select the next candidate configuration, balancing exploration of the search space and exploitation of the surrogate~\citep{ShahriariSWAF16, wang2022recentBO}.

\begin{figure*}
\centering
    \begin{subfigure}[c]{0.5\textwidth}
        \centering
        \includegraphics[width=.87\textwidth]{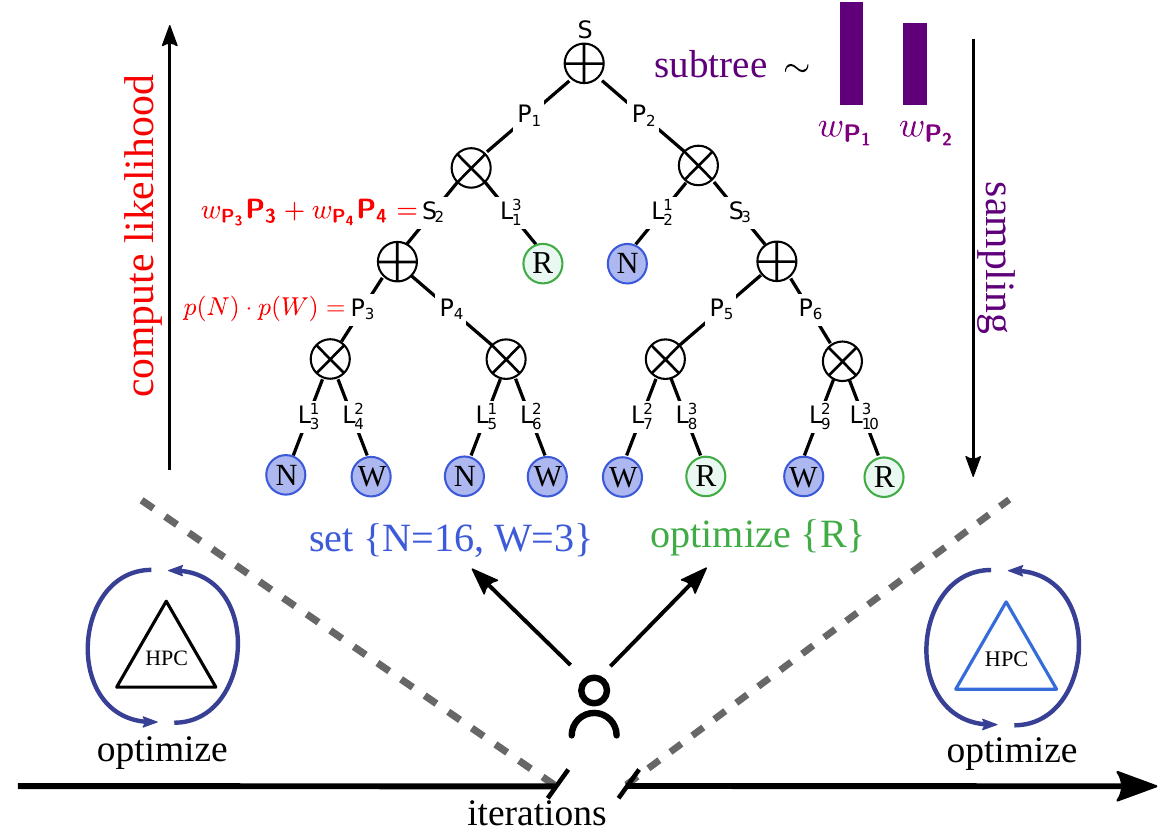}
    \end{subfigure}%
    \hfill
    \begin{subfigure}[c]{0.45\textwidth}
        \centering
        \includegraphics[width=.83\textwidth]{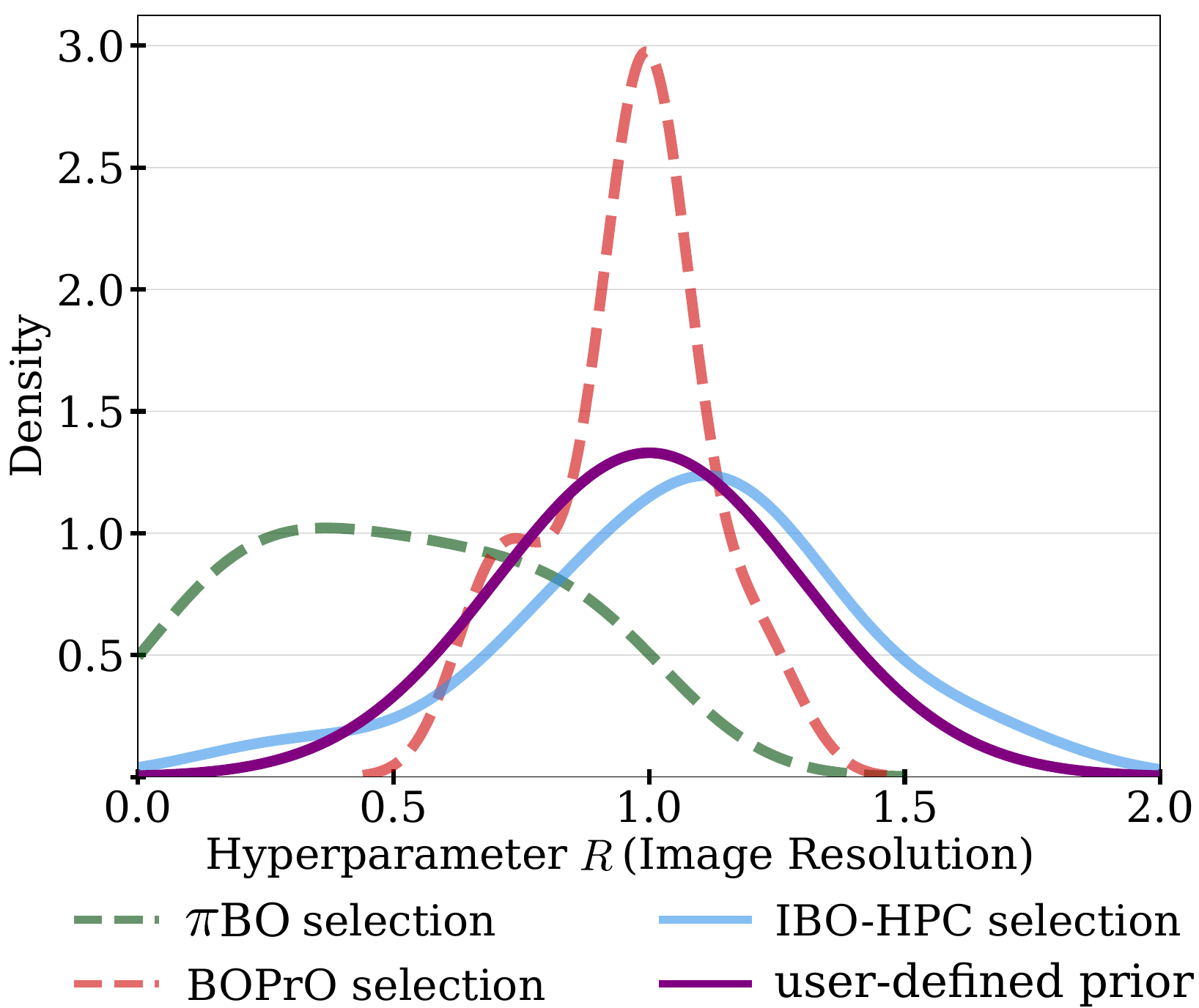}
    \end{subfigure}
    \caption{\textbf{Interactive Bayesian Hyperparameter Optimization.}  
    (Left) We devise an interactive BO method by employing PCs as surrogates encoding a joint distribution over hyperparameters and evaluation scores (omitted for clarity). 
    PCs allow users to directly condition the surrogate on their beliefs during tractable candidate generation, thereby reflecting user knowledge accurately.
    (Right) Accurately reflecting user beliefs is crucial for interactive HPO to fully leverage user knowledge. In contrast to $\pi$BO and BOPrO, IBO-HPC (our method) precisely reflects the user prior provided for hyperparameter $R$ (resolution). See App. \ref{app:motivation} for details.
    }
    \label{fig:ibo_spn_concept}
    \vspace{-0.6cm}
\end{figure*}

Although the recent advancements in HPO could facilitate the design and optimization of ML models for non-experts, in most cases, hyperparameters are still tuned manually~\citep{bouthillier:hal-02447823}. 
Given that many ML practitioners perform hyperparameter tuning purely based on their knowledge, experience, and intuition, integrating this valuable knowledge to guide HPO algorithms during optimization can substantially foster the search and mitigate its cost.
Moreover, it makes HPO more flexible and interactive, bringing it closer to the recently envisioned goal of human-centered AutoML~\citep{LindauerPosition24}.
For example, in Fig. \ref{fig:ibo_spn_concept} (Left), three hyperparameters of a CNN are optimized (depth multiplier $N$, width multiplier $W$, and resolution $R$). During an HPO run, a user might realize that values around $N=16$ and $W=3$ yield high-performing models. Hence, a user can guide the HPO algorithm with the obtained knowledge (here, $N=16$ and $W=3$) without restarting the optimization from scratch. This can considerably increase the convergence speed and quality of the final solution by focusing the optimization on remaining hyperparameters (here $R$, details in App. \ref{app:motivation}).
Recent works by~\citet{DBLP:conf/pkdd/SouzaNOOLH21} and~\citet{hvarfner2022pibo} allow users to infuse knowledge into a BO framework via user-defined prior distributions that are used to reshape the acquisition function according to user beliefs via multiplicative weighting.
Although these approaches are valid and principled ways to guide an HPO task, their weighting schemes might not reflect user knowledge accurately.
For example, in Fig. \ref{fig:ibo_spn_concept} (Right), the configurations selected by both BOPrO~\citep{DBLP:conf/pkdd/SouzaNOOLH21} and $\pi$BO~\citep{hvarfner2022pibo} during the first 20 iterations of optimization remarkably deviate from the given user prior (see App. \ref{app:motivation} for details). This happens because acquisition functions are essentially black-boxes, often non-convex and hard to optimize~\citep{wilson2018maximizing}. Consequently, it is hard to anticipate the influence of user beliefs on the actual behavior of the BO algorithm. Note that although trivial incorporation of user beliefs, e.g., by simply setting certain hyperparameters when optimizing the acquisition function, can be accurate, the acquisition function remains a black-box and the influence of priors hard to anticipate.

To integrate user feedback in HPO more reliably and accurately, we introduce \textsc{Interactive Bayesian Optimization via Hyperparameter Probabilistic Circuits (IBO-HPC)}.
This novel BO method
relies on probabilistic circuits (PCs)~\citep{choi_2020} as a surrogate model. PCs encode joint distributions over a set of random variables and come with exact and tractable marginalization, conditioning, and sampling.
We derive an acquisition function-free, purely data-driven selection policy that suggests new configuration candidates by leveraging PCs' tractable conditional sampling mechanism. User beliefs can be provided at any time and are reflected accurately by directly conditioning the PC on them during candidate selection (see Fig. \ref{fig:ibo_spn_concept}). 

\textbf{Our Contributions:} \textbf{(1)} We introduce a novel BO method (\textsc{IBO-HPC}) that does not require any inner-loop optimization of an acquisition function and enables a direct, accurate, flexible, and targeted incorporation of user knowledge into BO.
\textbf{(2)} 
We formally define a general notion of interactive policy in BO, and show that \textsc{IBO-HPC} conforms to this notion and is guaranteed to reflect user knowledge accurately during optimization. 
\textbf{(3)} We analyze the convergence of \textsc{IBO-HPC} and show that it converges proportionally to the expected improvement.
\textbf{(4)} We provide an extensive empirical evaluation of \textsc{IBO-HPC} showing that it is competitive with strong HPO baselines on standard HPO tasks and outperforms strong interactive baselines in interactive HPO.

\section{Background \& Related Work}

This section briefly introduces hyperparameter optimization (HPO) as well as relevant related work, including (interactive) Bayesian optimization (BO) and probabilistic circuits (PCs).
HPO is formally defined as follows~\citep{kohavi1995automatic, hutter2019automated}.

\begin{defin}[Hyperparameter optimization (HPO)]
Given hyperparameters $\bm{\mathcal{H}} = \{H_1, \dots, H_n\}$ with associated domains $\mathbf{H}_1, \dots, \mathbf{H}_n$, we define a search space $\mathbf{\Theta} = \mathbf{H}_1 \times \dots \times \mathbf{H}_n$.
For a given black-box evaluation function $f: \bm{\Theta} \rightarrow \mathbb{R}$, hyperparameter optimization aims to solve $\bm{\theta}^* = \arg \min_{\bm{\theta} \in \bm{\Theta}} f(\bm{\theta})$.
\end{defin}

\noindent
BO has effectively solved many practically relevant HPO tasks and will be introduced next. 

\paragraph{Interactive Bayesian Optimization} BO aims to optimize a black-box objective function
$f: \mathbf{\Theta} \rightarrow \mathbb{R}$ which is costly to evaluate, i.e., to find the input $\bm{\theta^*} = \arg \min_{\bm{\theta} \in \mathbf{\Theta}} f(\bm{\theta})$~\citep{ShahriariSWAF16}. BO typically leverages two key ingredients, a probabilistic surrogate model and a selection policy determining the next $\bm{\theta}'$ to be evaluated, and uses them as follows in each iteration:
Given a set $\bm{\mathcal{D}}_n$ of observations that correspond to the configurations with associated evaluations $(\bm{\theta}_j, f(\bm{\theta}_j))_{j = 1 \ldots n}$, the probabilistic surrogate $s$ aims to approximate $f$ as closely as possible. Common choices for surrogate models are Gaussian processes (GPs)~\citep{Rasmussen2006GPs} or random forests (RFs)~\citep{breiman2001random}.
The selection policy uses $s$ to select the next $\bm{\theta}' \in \bm{\Theta}$ s.t. it achieves a good exploration--exploitation trade-off. 
Prominent selection policies optimize an acquisition function $a_s : \mathbf{\Theta} \rightarrow \mathbb{R}$, such as expected improvement (EI)~\citep{JonesSW98}
that estimates the utility of an evaluation at an arbitrary point $\bm{\theta} \in \mathbf{\Theta}$ under a surrogate $s$, or perform Thompson sampling~\citep{wang2022recentBO}.
Various approaches to BO with different surrogates and acquisition functions have been proposed~\citep{Mockus1975BO, hutter20111SMAC, DBLP:conf/nips/SnoekLA12,ShahriariSWAF16}.

To increase the efficiency of HPO,~\citet{hvarfner2022pibo} and~\citet{DBLP:conf/pkdd/SouzaNOOLH21} allow users to provide prior beliefs via prior distributions over the search space. The prior is used to multiplicatively re-weight the acquisition function's values according to the prior when selecting new configurations, thus, favoring configurations with a high likelihood in the prior. \citet{mallik2023priorband} propose a similar mechanism to incorporate user knowledge in multi-fidelity optimization. As illustrated in Sec. \ref{sec:Intro}, these approaches present several drawbacks in reflecting user knowledge well when selecting new configurations. Moreover, additional constraints such as invertible priors~\citep{DBLP:journals/kbs/RamachandranGRL20} or a specific acquisition function~\citep{DBLP:conf/pkdd/SouzaNOOLH21} are often required.

\paragraph{Probabilistic Circuits (PCs)}
Probabilistic circuits \citep{choi_2020} are computational graphs that compactly represent multivariate distributions.
PCs provide exact inference for a wide range of probabilistic queries in a tractable fashion and can (conditionally) generate new samples.
We leverage these properties to design a policy that accurately adheres to given user beliefs. 
More formally, a PC is a computational graph encoding a distribution over a set of random variables $\mathbf{X}$. It is defined as a tuple $(\graph, \phi)$ where $\graph = (V, E)$ is a rooted, directed acyclic graph and $\phi: V \rightarrow 2^{\mathbf{X}}$ is the \textit{scope} function assigning a subset of random variables to each node in $\graph$.
For each internal node $\Node$ of $\graph$, the scope is defined as the union of scopes of its children, i.e., $\phi(\Node) = \cup_{\Node' \in \ch{\Node}} \phi(\Node')$. Each leaf node $\Leaf$ computes a distribution/density over its scope $\phi(\Leaf)$. All internal nodes of $\graph$ are either a (weighted) sum node $\SumNode$ or a product node $\ProductNode$ where each sum node computes a convex combination of its children, i.e.,
$\SumNode = \sum_{\Node \in \ch{\SumNode}} w_{\SumNode, \Node}\Node$, and each product node computes a product of its
children, i.e., $\ProductNode = \prod_{\Node \in \ch{\ProductNode}}\Node$. 
We employ \textit{smooth} and \textit{decomposable} PCs (see App. \ref{app:PCs} for details), thus, our method can exploit tractable inference, sampling, and conditioning of valid and efficient PCs. 
For a more detailed description of PCs, refer to App. \ref{app:PCs}; for an overview, see Fig. \ref{fig:ibo_spn_concept} (Left).
We jointly model the hyperparameters and evaluation scores with PCs to guide the optimization towards promising solutions.
Given the hybrid (discrete and continuous) nature of hyperparameter search spaces, IBO-HPC relies on mixed sum-product networks (MSPNs)~\citep{molina_2018}, a decomposable and smooth PC with piecewise polynomial leaves which is tailored to model hybrid domains.

\section{Interactive Hyperparameter Optimization}

We now introduce a formal notion of interactivity in BO to foster a more theoretically grounded approach to interactive BO.
Then, we present IBO-HPC and our novel \textit{feedback-adhering interactive} selection policy which reflects user beliefs accurately and does not require inner-loop optimization of an acquisition function. Lastly, we conduct a theoretical analysis of IBO-HPC.

\subsection{Interactivity in Bayesian HPO}

An interactive BO method should be capable of incorporating, at any time, the knowledge provided by users, and the selection policy should accurately reflect the provided user belief.
Consequently, we formalize the concept of an interactive selection policy that adheres to these requirements and is compatible with a broad set of the possible types of user knowledge $\bm{\mathcal{K}}$ (see App. \ref{app:user_knowledge} for details). 

\begin{defin}[Feedback-Adhering Interactive Policy]\label{def:efficious_interactive_policy}
    Given user knowledge $\mathcal{K} \in \bm{\mathcal{K}}$ and surrogate $s$, an interactive policy $g_s$ is a function $g_s: \mathbf{\Theta} \times \bm{\mathcal{K}} \rightarrow \bm{\mathcal{P}}(\mathbf{\Theta})$ mapping from the search space $\mathbf{\Theta}$ to the set of all distributions $\bm{\mathcal{P}}(\mathbf{\Theta})$ over $\mathbf{\Theta}$. $g_s$ is called efficacious if $g_s(\mathbf{\Theta}, \mathcal{K}) \neq g_s(\mathbf{\Theta}, \emptyset)$ where $\emptyset$ indicates that $g_s$ is applied without user knowledge. If further $\mathcal{K}$ is provided as a distribution $q(\hat{\bm{\mathcal{H}}})$ over $\hat{\bm{\mathcal{H}}} \subset \bm{\mathcal{H}}$, we call $g_s$ feedback-adhering if it is efficacious and $\int_{\bm{\mathcal{H}}'} g_s(\mathbf{\Theta}, \mathcal{K}) = q(\hat{\bm{\mathcal{H}}})$ holds where $\bm{\mathcal{H}}' = \bm{\mathcal{H}} \setminus \hat{\bm{\mathcal{H}}}$, i.e., the distribution over $\hat{\bm{\mathcal{H}}}$ induced by the selection policy equals the prior $q(\hat{\bm{\mathcal{H}}})$ in the next iteration.
\end{defin}

In Def. \ref{def:efficious_interactive_policy}, being efficacious ensures that the user knowledge provided to $g_s$ has an effect on the sampling policy. The feedback-adhering condition ensures that in the \textit{first} iteration, after a user provides a distribution over a subset of hyperparameters, the values sampled for the specified hyperparameters follow the distribution $q$ given by the user.
Note that user knowledge could also be misleading, thus, Def.~\ref{def:efficious_interactive_policy} does not guarantee user knowledge to have exclusively positive effects.
We now introduce IBO-HPC that adheres to Def.~\ref{def:efficious_interactive_policy}.

\subsection{Interactive Bayesian Optimization with Hyperparameter Probabilistic Circuits}\label{sec:PCs}
\begin{wrapfigure}[22]{R}{0.53\textwidth} 
\vspace{-0.5cm}
\begin{minipage}{0.53\textwidth} 
\begin{algorithm}[H]
\caption{\textbf{IBO-HPC}}
\label{algo:IBO-HPC}
\begin{algorithmic}[1]
\STATE \textbf{Input:} Search space $\mathbf{\Theta}$ over $\bm{\mathcal{H}}$, initial prior distribution $u(\bm{\mathcal{H}})$, objective $f: \mathbf{\Theta} \rightarrow \mathbb{R}$, user prior $q(\hat{\bm{\mathcal{H}}})$ (can be provided at any time), decay $\gamma$
\STATE Sample $J$ configurations $\bm{\theta} \sim u(\bm{\mathcal{H}})$
\STATE $\bm{\mathcal{D}} \gets \{(\bm{\theta}_i, f(\bm{\theta}_i) \}$ for $i \in \{1, ..., J\}$
\WHILE{not converged}
    \STATE Fit PC $s$ on $\bm{\mathcal{D}}$ every $L$-th iteration
    \STATE Set $f^* \gets \max_f \bm{\mathcal{D}}$ and $b \sim \text{Ber}(\rho)$
    \IF{prior $q(\hat{\bm{\mathcal{H}}})$ is given and $b = 1$}
        \STATE Sample $N$ conditions $\bm{\theta} \sim q(\hat{\bm{\mathcal{H}}})$ 
        \STATE $\mathbf{C} \gets \emptyset$
        \FOR{condition $\bm{\theta}_i$ in $\bm{\theta}$}
            \STATE Sample $\bm{\theta}'_{1, \dots, B} \sim s(\bm{\mathcal{H}}' | \hat{\bm{\mathcal{H}}}, f^*)$
            \STATE $\bm{\theta}^*_i \gets \arg \max_{\bm{\theta}' \in \bm{\theta}'_{1, \dots, B}} s(\bm{\theta}' | f^*)$
            \STATE $\mathbf{C} \gets \mathbf{C} \cup \bm{\theta}^*_i$
        \ENDFOR
     \STATE $\bm{\theta}^* \sim \mathcal{U}(\mathbf{C})$
    \ELSE
    \STATE $\bm{\theta}^* \sim s(\bm{\mathcal{H}} | f^*)$
    \ENDIF
    \STATE set $\bm{\mathcal{D}} \gets \bm{\mathcal{D}} \cup \{(\bm{\theta}', f(\bm{\theta}'))\}$ and $\rho \gets \gamma \cdot \rho$
\ENDWHILE
\end{algorithmic}
\end{algorithm}
\end{minipage}
\vspace{-0.3cm}
\end{wrapfigure}
To design an interactive BO method that reflects user beliefs accurately and enables flexible interactions with the optimization procedure by providing an arbitrary amount of knowledge about hyperparameters at any iteration,
we construct a policy that leverages PCs as surrogates. We now describe our method in detail (see Algo. \ref{algo:IBO-HPC}).

\paragraph{Method} 
Since PCs are density estimators, we start off by sampling $J$ hyperparameter configurations from a prior distribution $u$ and evaluate them by querying the objective function $f$ \textbf{(Line 2-3)}. The function $f$ yields the (noisy) performance score of the sampled configuration $\bm{\theta}$.
After evaluating each sampled $\bm{\theta}$ we obtain a set $\bm{\mathcal{D}}$ of pairs $(\bm{\theta}, f({\bm{\theta}}))$. We fit a PC $s(\bm{\mathcal{H}}, F)$ that models the observations $\bm{\mathcal{D}}$ as a joint distribution over hyperparameters $\bm{\mathcal{H}}$ and evaluation score $F$ by maximizing the likelihood of $\bm{\mathcal{D}}$ \textbf{(Line 5)}. Both hyperparameters $\bm{\mathcal{H}}$ and evaluation score $F$ are treated as random variables and assumed to follow a ground truth distribution $p(\bm{\mathcal{H}}, F)$ that is approximated by $s$.
Next, a configuration $\bm{\theta}$ is selected by our \textit{feedback-adhering interactive policy} that gets evaluated. 
Our policy exploits the flexible and exact inference of PCs to derive arbitrary conditional distributions according
to the partial evidence at hand~\citep{PeharzTPD15}.
We target the configurations that are likely to achieve a better evaluation score.
Thus, a posterior distribution over the hyperparameter space is derived by conditioning on the best score $f^* = \max_{f}\bm{\mathcal{D}}$ observed so far alongside with (optional) user knowledge $\mathcal{K}$. 
For now, $\mathcal{K}$ is assumed to be given in the form 
of conditions such as $\hat{\bm{\mathcal{H}}} = \hat{\bm{\theta}}$ where $\hat{\bm{\mathcal{H}}} \subset \bm{\mathcal{H}}$ is a subset of hyperparameters being set to $\hat{\bm{\theta}}$.
Using Bayes rule, tractable marginal inference, and sampling of PCs, we obtain the conditional distribution and use it to sample a new configuration from promising regions in the search space. Setting $\bm{\mathcal{H}}' = \bm{\mathcal{H}} \setminus \hat{\bm{\mathcal{H}}}$, we perform sampling by:
\begin{equation} \label{eq:conditional}
    \begin{split}
        s(\bm{\mathcal{H}}' | \hat{\bm{\mathcal{H}}}, F) = \frac{s(\bm{\mathcal{H}}, F)}{\int_{\bm{\mathcal{H}}'} s(\bm{\mathcal{H}}, F)} \text{  ,   then sample} \quad
        \bm{\theta} \sim s(\bm{\mathcal{H}}' | \hat{\bm{\mathcal{H}}}, F=f^*).
    \end{split}
\end{equation}

App. \ref{app:PCs} details how to marginalize, condition, and sample in PCs.
Since users might be uncertain about hyperparameter values, defining a prior $q({\hat{\bm{\mathcal{H}}}})$ over $\hat{\bm{\mathcal{H}}}$ might be more reasonable than setting a fixed value for certain hyperparameters. The prior $q({\hat{\bm{\mathcal{H}}}})$ is interpreted as a distribution over conditions of the form $\hat{\bm{\mathcal{H}}} = \hat{\bm{\theta}}$ where $\hat{\bm{\theta}} \sim q({\hat{\bm{\mathcal{H}}}})$. This weights the distribution from Eq. \ref{eq:conditional} with the user prior. We then obtain $s(\bm{\mathcal{H}}' | \hat{\bm{\mathcal{H}}}, F=f^*) \cdot q(\hat{\bm{\mathcal{H}}})$.
Conditioning $s$ on user knowledge ensures that the provided user knowledge is precisely reflected in the next candidates; also, conditioning $s$ on $f^*$ ensures that only promising configurations are likely to be selected, allowing us to select new candidate configurations by mere sampling, thus, avoiding an inner loop optimization of an acquisition function.
Since user intuitions can be wrong, we allow IBO-HPC to recover from misleading user knowledge $\mathcal{K}$ by deciding whether or not to use the provided $\mathcal{K}$ based on a Bernoulli distribution with success probability $\rho$. To achieve this, we gradually decrease the likelihood of using $\mathcal{K}$ in each iteration after $\mathcal{K}$ is supplied via a decay factor $\gamma$. For user knowledge provided at iteration $T$, the distribution over configurations after $T + t$ iterations reads:
\begin{equation}\label{eq:prior_conditional}
    \gamma^t\rho \cdot s(\bm{\mathcal{H}}' | \hat{\bm{\mathcal{H}}}, F=f^*) \cdot q(\hat{\bm{\mathcal{H}}}) + (1 - \gamma^t\rho) \cdot s(\bm{\mathcal{H}} | F=f^*).
\end{equation}

Note that fusing the prior $q(\hat{\bm{\mathcal{H}}})$ with the PC to allow exact inference and conditioning is non-trivial in general since the prior is defined over an arbitrary subset, and no further assumptions about the prior are made (except efficient sampling from $q(\hat{\bm{\mathcal{H}}})$). Thus, we approximate the first mixture component of Eq.~\ref{eq:prior_conditional} by sampling $N$ times from $q(\hat{\bm{\mathcal{H}}})$ and use Eq.~\ref{eq:conditional} to obtain $N$ conditional distributions respecting the user prior $q(\hat{\bm{\mathcal{H}}})$. We sample $B$ configurations from each conditional to ensure a certain amount of exploration in each iteration. For each conditional, the configuration maximizing the likelihood $s(\bm{\mathcal{H}} | F=f^*)$ is selected to reduce the candidate set to configurations likely to achieve a high evaluation score. This leaves us with $N$ configurations from which we sample uniformly to select the configuration evaluated next \textbf{(Line 6-16)}. We found that setting $B = 1$ works surprisingly well. A discussion about the quality of our approximation and an analysis of the exploration-exploitation trade-off are given in App. \ref{app:SamplingExact} and \ref{app:exploitation_exploration_to} respectively.
The surrogate is kept fixed for $L$ optimization rounds before retraining it. This fosters exploration by leveraging uncertainty encoded in the (conditional) distribution of the surrogate. An iteration is concluded by updating the set of evaluations $\bm{\mathcal{D}}$ that
can be presented to the users \textbf{(Line 17)}. The algorithm runs until
convergence or another condition for termination, e.g., a time budget limit is encountered.

\begin{rem}
    Although similar, our sampling policy differs from Thompson Sampling (TS): TS samples function values from the posterior and selects the next configuration based on the sampled function's maximum. Instead, we use the maximum obtained so far to condition the surrogate and sample from it (via conditional sampling) the next configuration.
\end{rem}

\subsection{Theoretical Analysis} 
Let us now analyze the theoretical properties of IBO-HPC. We start by showing that the presented selection policy is feedback-adhering interactive according to Def. \ref{def:efficious_interactive_policy}.

\begin{prop}[IBO-HPC Policy is feedback-adhering interactive] \label{prop:conditioning_transform}
    Assume a search space $\bm{\Theta}$ over hyperparameters $\bm{\mathcal{H}}$, a PC $s$, user knowledge $\mathcal{K} \in \bm{\mathcal{K}}$ in form of a prior $q$ over $\hat{\bm{\mathcal{H}}} \subset \bm{\mathcal{H}}$ s.t. the marginal over $\hat{\bm{\mathcal{H}}}$ of $s$ conditioned on $f^*$ is different than $q(\hat{\bm{\mathcal{H}}})$, i.e., $\int_{\bm{\mathcal{H}}'} s(\hat{\bm{\mathcal{H}}} | F=f^*) \neq q(\hat{\bm{\mathcal{H}}})$ with $\bm{\mathcal{H}}' = \bm{\mathcal{H}} \setminus \hat{\bm{\mathcal{H}}}$. Then, the selection policy of IBO-HPC is feedback-adhering interactive. See App. \ref{app:PolicyFA} for the proof.
\end{prop}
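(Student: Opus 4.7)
The plan is to verify separately the two conditions in Def.~\ref{def:efficious_interactive_policy}: that the policy is efficacious, and that the $\bm{\mathcal{H}}'$-marginal of its output distribution recovers $q(\hat{\bm{\mathcal{H}}})$. I read ``first iteration'' in Def.~\ref{def:efficious_interactive_policy} as the iteration $T$ at which the user supplies the prior (i.e.\ $t=0$ in Eq.~\ref{eq:prior_conditional}), and take $\rho=1$ at that moment, so that the feedback branch of Lines~7--15 is always taken. Otherwise the mixture in Eq.~\ref{eq:prior_conditional} would contribute a term proportional to $s(\hat{\bm{\mathcal{H}}}\mid F=f^*)$ which, by the hypothesis of the proposition, cannot coincide with $q(\hat{\bm{\mathcal{H}}})$.

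The core step is to trace the $\hat{\bm{\mathcal{H}}}$-coordinate of the selected configuration through Lines~8--15 and show it is distributed according to $q$. First I would note that $s(\bm{\mathcal{H}}'\mid\hat{\bm{\mathcal{H}}},F=f^*)$, obtained as in Eq.~\ref{eq:conditional} via smoothness and decomposability of $s$ (App.~\ref{app:PCs}), is a proper density on $\bm{\mathcal{H}}'$ for every fixed value of $\hat{\bm{\mathcal{H}}}$ and $f^*$. Then, for each $i$, the $\hat{\bm{\mathcal{H}}}$-entries of all $B$ inner samples $\bm{\theta}'_{1,\dots,B}$ coincide with the condition $\hat{\bm{\theta}}_i$ drawn in Line~8, and the argmax in Line~12 only filters the $\bm{\mathcal{H}}'$-part, so the candidate $\bm{\theta}^*_i$ still has $\hat{\bm{\mathcal{H}}}$-coordinate equal to $\hat{\bm{\theta}}_i\sim q$. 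The final uniform selection $\bm{\theta}^*\sim\mathcal{U}(\mathbf{C})$ returns a uniform mixture of $N$ i.i.d.\ copies of $q$, which is again $q(\hat{\bm{\mathcal{H}}})$. Marginalizing over $\bm{\mathcal{H}}'$ therefore yields $\int_{\bm{\mathcal{H}}'} g_s(\bm{\Theta},\mathcal{K}) = q(\hat{\bm{\mathcal{H}}})$, the feedback-adhering identity.

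Efficacy then follows immediately: without feedback the policy draws $\bm{\theta}^*\sim s(\bm{\mathcal{H}}\mid F=f^*)$ (Line~17), whose $\hat{\bm{\mathcal{H}}}$-marginal is $s(\hat{\bm{\mathcal{H}}}\mid F=f^*)$. By the proposition's assumption this differs from $q(\hat{\bm{\mathcal{H}}})$, so the two output laws cannot be equal, giving $g_s(\bm{\Theta},\mathcal{K})\neq g_s(\bm{\Theta},\emptyset)$. The point I expect to be the most delicate to write out cleanly is precisely the reduction from the Monte-Carlo approximation used in the algorithm (sampling $N$ conditions, the $B$-way argmax, and the final uniform selection) to the closed-form marginal $q(\hat{\bm{\mathcal{H}}})$. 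The key observation is that every operation performed on the $\hat{\bm{\mathcal{H}}}$-block is measure-preserving for $q$: conditioning on $\hat{\bm{\theta}}_i$ fixes that block, the argmax touches only $\bm{\mathcal{H}}'$, and a uniform mixture of i.i.d.\ draws from $q$ is still $q$. Hence the approximation discussed in App.~\ref{app:SamplingExact} introduces bias only on the $\bm{\mathcal{H}}'$-coordinates of the joint $s(\bm{\mathcal{H}}'\mid\hat{\bm{\mathcal{H}}},F=f^*)\,q(\hat{\bm{\mathcal{H}}})$, leaving the marginal on $\hat{\bm{\mathcal{H}}}$ exact, which is exactly what Def.~\ref{def:efficious_interactive_policy} requires.
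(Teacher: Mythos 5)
Your proposal is correct and follows essentially the same route as the paper's proof in App.~\ref{app:PolicyFA}: efficacy from the assumed mismatch between $\int_{\bm{\mathcal{H}}'} s(\hat{\bm{\mathcal{H}}}\mid F=f^*)$ and $q(\hat{\bm{\mathcal{H}}})$, and feedback-adherence by tracking that the argmax in Line~12 touches only the $\bm{\mathcal{H}}'$-block while the $\hat{\bm{\mathcal{H}}}$-block stays pinned to the i.i.d.\ draws from $q$, so the final uniform selection preserves $q$. Your observation that a uniform pick from $N$ i.i.d.\ draws of $q$ has marginal law \emph{exactly} $q$ is in fact slightly sharper than the paper's statement, which only claims the identity in the limit $N\rightarrow\infty$.
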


Besides being feedback-adhering, IBO-HPC is a global optimizer for black-box optimization.

\begin{prop}[IBO-HPC is a global optimizer]
    IBO-HPC minimizes simple regret, which is defined as $r = f(\bm{\theta}) - f(\bm{\theta}^*)$ for a hyperparameter configuration $\bm{\theta} \in \bm{\Theta}$ and global optimum $\bm{\theta}^*$. A proof is given in \ref{app:Regret}.
\end{prop}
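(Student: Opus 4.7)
The plan is to establish that the simple regret $r_t = f(\bm{\theta}^{\text{best}}_t) - f(\bm{\theta}^*)$ of the best configuration observed after $t$ iterations converges to zero almost surely. The strategy decomposes into two steps: first, a \emph{full-support} property of the conditional sampling distribution used by IBO-HPC, ensuring that exploration persists throughout optimization; second, a Borel--Cantelli style argument showing that every neighborhood of the global optimum $\bm{\theta}^*$ is entered infinitely often. Since, by Prop.~\ref{prop:conditioning_transform}, conditioning on a user prior only reshapes the marginal on $\hat{\bm{\mathcal{H}}}$ and its influence is geometrically damped by $\gamma^t \rho \to 0$, the asymptotic behavior is governed by the pure conditional $s(\bm{\mathcal{H}} \mid F = f^*)$.

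First I would show that $s(\bm{\mathcal{H}} \mid F = f^*_t)$ assigns positive density to every open subset of $\bm{\Theta}$ at every iteration $t$. The initial $J$ configurations are drawn from the prior $u(\bm{\mathcal{H}})$, which has full support on $\bm{\Theta}$; the MSPN surrogate $s$, with smooth, decomposable structure and piecewise polynomial leaves (App.~\ref{app:PCs}), is consequently positive on $\bm{\Theta} \times \mathbb{R}$ whenever the training data have full support on $\bm{\Theta}$. Because sum-node mixture weights are convex, the joint density $s(\bm{\mathcal{H}}, F = f^*)$ is strictly positive on $\bm{\Theta}$, and so is its normalization $s(\bm{\mathcal{H}} \mid F = f^*)$. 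This ensures the selection policy is never confined to a strict subset of $\bm{\Theta}$, even after many iterations of retraining.

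The second step is the regret argument proper. Fix any $\varepsilon > 0$ and let $B_\varepsilon \subset \bm{\Theta}$ be an open ball around $\bm{\theta}^*$. By the positivity established above and compactness of $\bm{\Theta}$, there exists a sequence of lower bounds $p_t > 0$ with $\sum_t p_t = \infty$ such that the probability of sampling into $B_\varepsilon$ at iteration $t$ is at least $p_t$, conditionally on the history. A conditional Borel--Cantelli argument (Lévy's extension) then implies that $B_\varepsilon$ is sampled infinitely often almost surely, and continuity of $f$ yields $r_t \to 0$. The rate portion of the claim---that convergence is proportional to expected improvement---would follow by rewriting $s(\bm{\mathcal{H}} \mid F = f^*)$ via Bayes rule as $\propto s(F = f^* \mid \bm{\mathcal{H}}) \, s(\bm{\mathcal{H}})$ and identifying the posterior mass on $\{F \geq f^*\}$ as the analogue of the EI integrand.

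The main obstacle will be making the positivity claim quantitative and uniform in $t$: the PC is refitted every $L$ iterations, the observed data drift toward high-scoring regions, and $f^*_t$ is itself monotonically improving, all of which can in principle sharpen $s(\bm{\mathcal{H}} \mid F = f^*_t)$ toward a degenerate concentration. Establishing a nonvanishing lower bound $p_t$ on a neighborhood of $\bm{\theta}^*$ therefore requires either a mild smoothness assumption on $f$ or a structural argument that MSPN leaf widths do not collapse faster than the data accumulates; a secondary technical difficulty is handling the Bernoulli switch governed by $\gamma^t \rho$, so that the two mixture regimes in Eq.~\ref{eq:prior_conditional} can be treated uniformly inside the Borel--Cantelli sum.
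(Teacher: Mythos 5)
Your proposal takes essentially the same route as the paper's proof in App.~\ref{app:Regret}: establish that the conditional sampling distribution $s(\bm{\mathcal{H}} \mid F = f^*)$ has full support on $\bm{\Theta}$ (the paper assumes positive sum weights and positive leaves; you derive it from the full-support initial design and the MSPN structure) and conclude that the global optimum is eventually sampled, so simple regret vanishes. Your Borel--Cantelli formalization and your flagged concern about uniformity of the lower bound in $t$ under refitting are a more careful rendering of the step the paper dispatches in one sentence, but the underlying argument is the same.
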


Next, we analyze IBO-HPC's convergence behavior in each iteration by deriving our algorithm's expected improvement (EI) with a surrogate PC $s$.
\begin{prop}[Convergence of IBO-HPC]
    Assume a differentiable $L$-Lipschitz continuous function $f: \mathbb{R}^d \rightarrow \mathbb{R}$ and a (local) optimum $\bm{\theta}^* \in \mathbb{R}^d$. Assume $f$ is locally convex within a ball $B_r(\bm{\theta}^*) = \{\bm{\theta} \in \mathbb{R}^d : || \bm{\theta} - \bm{\theta}^* || < r\}$. Furthermore, assume we have a dataset $\bm{\mathcal{D}} = \{(\bm{\theta}_1, y_1), \dots, (\bm{\theta}_n, y_n)\}$ where $\bm{\theta}_i \in B_r$, $y_i = f(\bm{\theta}_i)$ and $s$ is a decomposable, smooth PC over $\bm{\mathcal{H}} \cup \{F\}$ where the support of $\bm{\mathcal{H}} = B_r$ and the support of $F = \mathbb{R}$. Assume $s$ locally maximizes the likelihood over $\bm{\mathcal{D}}$ and that all leaves are Gaussians. Then, the lower bound of the convergence rate of IBO-HPC is given by the expected improvement (EI) in each iteration:
    \begin{equation}
        \sum_{i=1}^{\tau_s} w_i \cdot \Big( \prod_{j=1}^d \text{erf}\Big(\frac{\bm{\theta}_{t j}^* - \bm{\mu}_{i j}}{\Sigma_{i_{jj}} \sqrt{2}}\Big) - \prod_{j=1}^d \text{erf}\Big(\frac{\bm{\theta}_{j}^* - \bm{\mu}_{i j}}{\Sigma_{i_{jj}} \sqrt{2}}\Big) + L \epsilon_i \Big). 
    \end{equation}
    Here, $\tau_s$ is the number of induced trees of $s$ (see Def. \ref{def:induced_pc} in App. \ref{app:Convergence}), $\epsilon_i = ||\bm{\mu}_i + \alpha_i \cdot \text{diag}(\Sigma_i) - \bm{\theta}^*||$, each $\bm{\mu}_i$ is the mean vector of a $d$-dimensional multivariate Gaussian defined by the $i$-th induced tree, $\Sigma_i$ is the corresponding correlation matrix and $\bm{\theta}_t^*$ is the best performing configuration until iteration $t$. A proof is given in App. \ref{app:Convergence}.
\end{prop}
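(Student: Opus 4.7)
The plan is to exploit the algebraic structure of a decomposable and smooth PC with Gaussian leaves: by the standard induced-tree decomposition (Def.~\ref{def:induced_pc}), $s(\bm{\mathcal{H}},F)$ can be rewritten as a finite mixture
\[
s(\bm{\mathcal{H}},F) \;=\; \sum_{i=1}^{\tau_s} w_i \, \mathcal{N}(\bm{\mathcal{H}}, F;\, \bm{\mu}_i, \Sigma_i),
\]
where each induced tree contributes one Gaussian whose mean and (diagonal, by decomposability) covariance are read off from the leaves selected by that tree, and the weight $w_i$ is the product of the sum-node weights along the tree. Conditioning on $F=f^*$ preserves the Gaussian-mixture form (with rescaled weights), so the conditional over $\bm{\mathcal{H}}$ that IBO-HPC samples from in Eq.~\ref{eq:conditional} is again a mixture of $\tau_s$ diagonal Gaussians. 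I will keep denoting the per-component mean and covariance by $\bm{\mu}_i$, $\Sigma_i$ after this conditioning (since the conditional only rescales weights when $F$ is a leaf separated from $\bm{\mathcal{H}}$ by a product node, which holds in the maximum-likelihood solution when the data lies in $B_r$).

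Next, I would frame the per-iteration progress of IBO-HPC as an expected improvement against $\bm{\theta}_t^*$: writing $I(\bm{\theta}) = \max(f(\bm{\theta}_t^*)-f(\bm{\theta}),0)$, the expected decrease in simple regret is $\mathbb{E}_{\bm{\theta}\sim s(\cdot|F=f^*)}[I(\bm{\theta})]$. Using the mixture decomposition, this expectation splits linearly across components as $\sum_{i=1}^{\tau_s} w_i \cdot \mathbb{E}_{\mathcal{N}(\bm{\mu}_i,\Sigma_i)}[I(\bm{\theta})]$. For each component I would lower-bound the inner expectation by restricting the integration domain to the axis-aligned box $R_i$ with corners at $\bm{\theta}^*$ and $\bm{\theta}_t^*$: on $R_i \subset B_r(\bm{\theta}^*)$ local convexity of $f$ ensures $f(\bm{\theta}) \le f(\bm{\theta}_t^*)$ (values between the optimum and the current best on a convex function do not exceed the current best), so $I(\bm{\theta}) \ge 0$ pointwise there, and Lipschitzness lets me add the correction term $L\epsilon_i$ that accounts for the mismatch between the Gaussian's effective center $\bm{\mu}_i + \alpha_i \cdot \mathrm{diag}(\Sigma_i)$ and $\bm{\theta}^*$.

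The Gaussian probability mass on $R_i$ is exactly the product of per-coordinate CDF differences, and using $\Phi(x) = \tfrac12(1+\mathrm{erf}(x/\sqrt{2}))$ for each factor gives
\[
\Pr_{\mathcal{N}(\bm{\mu}_i,\Sigma_i)}(R_i)
\;=\; \prod_{j=1}^d \tfrac12\!\left[\mathrm{erf}\!\left(\tfrac{\bm{\theta}_{tj}^* - \bm{\mu}_{ij}}{\Sigma_{i_{jj}}\sqrt{2}}\right) - \mathrm{erf}\!\left(\tfrac{\bm{\theta}_{j}^* - \bm{\mu}_{ij}}{\Sigma_{i_{jj}}\sqrt{2}}\right)\right],
\]
which, after absorbing the constant $\tfrac12$ into $w_i$ and adding $L\epsilon_i$ as a Lipschitz correction, yields exactly the stated bound term for each $i$. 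Summing over induced trees with weights $w_i$ gives the claimed EI lower bound on the convergence rate.

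The main obstacle I anticipate is the erf expression in the proposition: as written it is a difference of products rather than a product of differences, which corresponds to a specific convention (likely expanding only the upper and lower corners of $R_i$) and requires carefully tracking the sign conventions and justifying that local convexity plus the Lipschitz correction $L\epsilon_i$ absorb the cross terms. A secondary obstacle is justifying that, under local ML training on $\bm{\mathcal{D}} \subset B_r$, every induced tree's mean $\bm{\mu}_i$ lies close enough to $\bm{\theta}^*$ (and the covariance $\Sigma_i$ is diagonal) for the box $R_i$ to sit inside the convex region, so that the pointwise bound $I(\bm{\theta}) \ge -L\epsilon_i$ is valid there; I would handle this by invoking the support assumption $\bm{\mathcal{H}} = B_r$ together with decomposability to force diagonal $\Sigma_i$ and $\bm{\mu}_i \in B_r$.
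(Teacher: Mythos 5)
Your proposal follows essentially the same route as the paper's proof: the induced-tree decomposition of the conditional $s(\bm{\mathcal{H}}\mid F=f^*)$ into a mixture of $\tau_s$ diagonal Gaussians, integration of the (Lipschitz-approximated) improvement over the axis-aligned box between $\bm{\theta}^*$ and $\bm{\theta}_t^*$, a truncated-normal mean bound producing $\epsilon_i = \|\bm{\mu}_i + \alpha_i\cdot\mathrm{diag}(\Sigma_i)-\bm{\theta}^*\|$, and the per-coordinate erf rewriting. The one point where you diverge — computing the box mass as a product of per-coordinate CDF differences rather than the stated difference of products — is precisely where the paper passes from $\Phi(\bm{\theta}_t^*;\bm{\mu}_i,\Sigma_i)-\Phi(\bm{\theta}^*;\bm{\mu}_i,\Sigma_i)$ to separate coordinatewise factorizations of each CDF, so the "obstacle" you flag reflects a looseness in the paper's own final algebraic step rather than a gap in your plan.
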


Intuitively, the convergence in each iteration is determined by (1) the probability of sampling a configuration closer to $\bm{\theta}^*$ and (2) expected distance to move to the optimum $\bm{\theta}^*$ if (1) occurs. (1) is lower bounded by the probability mass between the mixture means and the best obtained configuration $\bm{\theta}_t^*$ at iteration $t$, and the probability mass between mixture means and the optimum $\bm{\theta}^*$. (2) is lower bounded for each mixture component by $\epsilon_i$ and the Lipschitz constant $L$.


\begin{figure*}[t]
    \centering
    \begin{subfigure}[c]{0.32\textwidth}
        \centering
        \includegraphics[width=.85\textwidth]{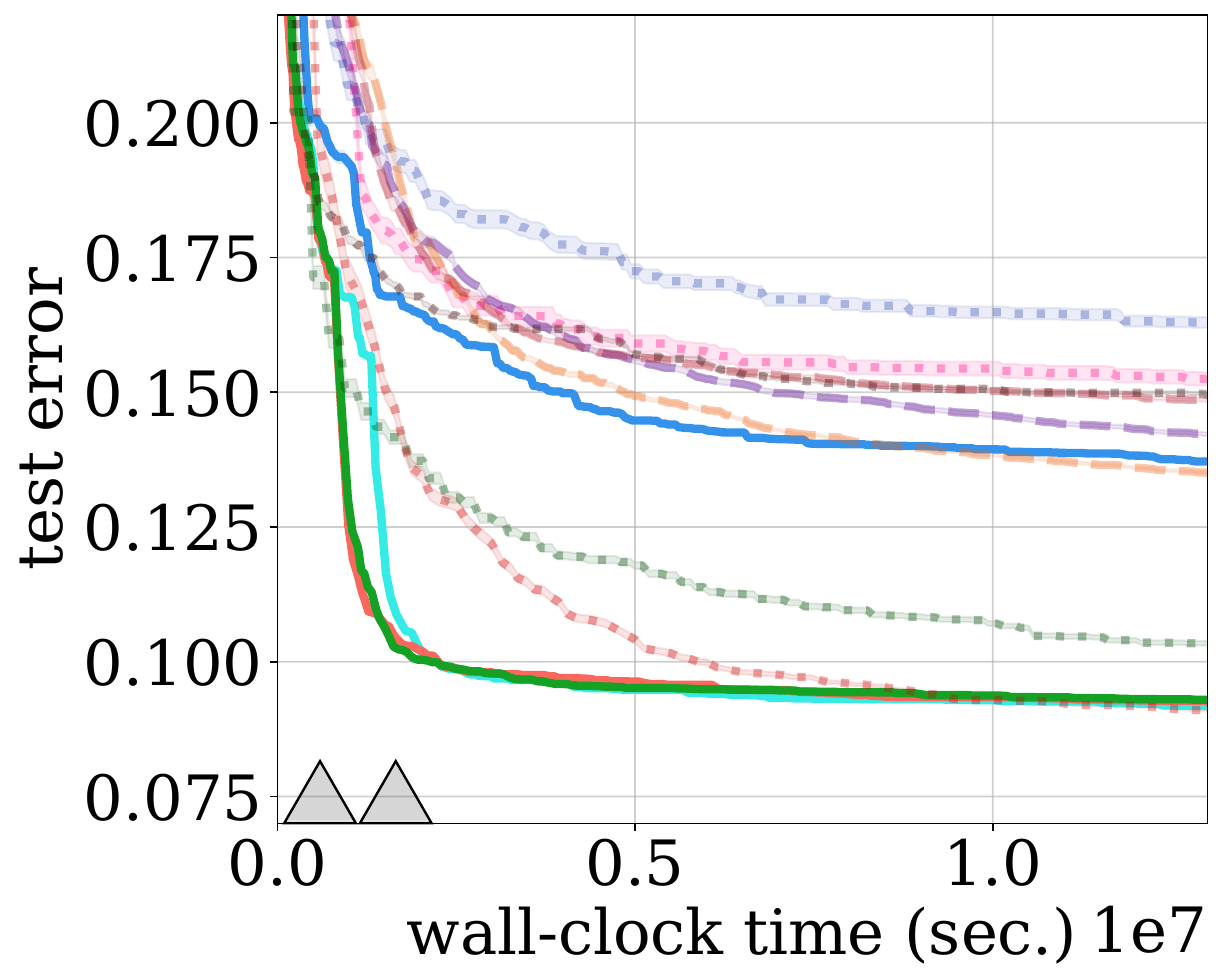}
        \caption{JAHS (CIFAR-10)}
        \label{fig:jahs_cifar10}
    \end{subfigure}
    \hfill
    \begin{subfigure}[c]{0.32\textwidth}
        \centering
        \includegraphics[width=.83\textwidth]{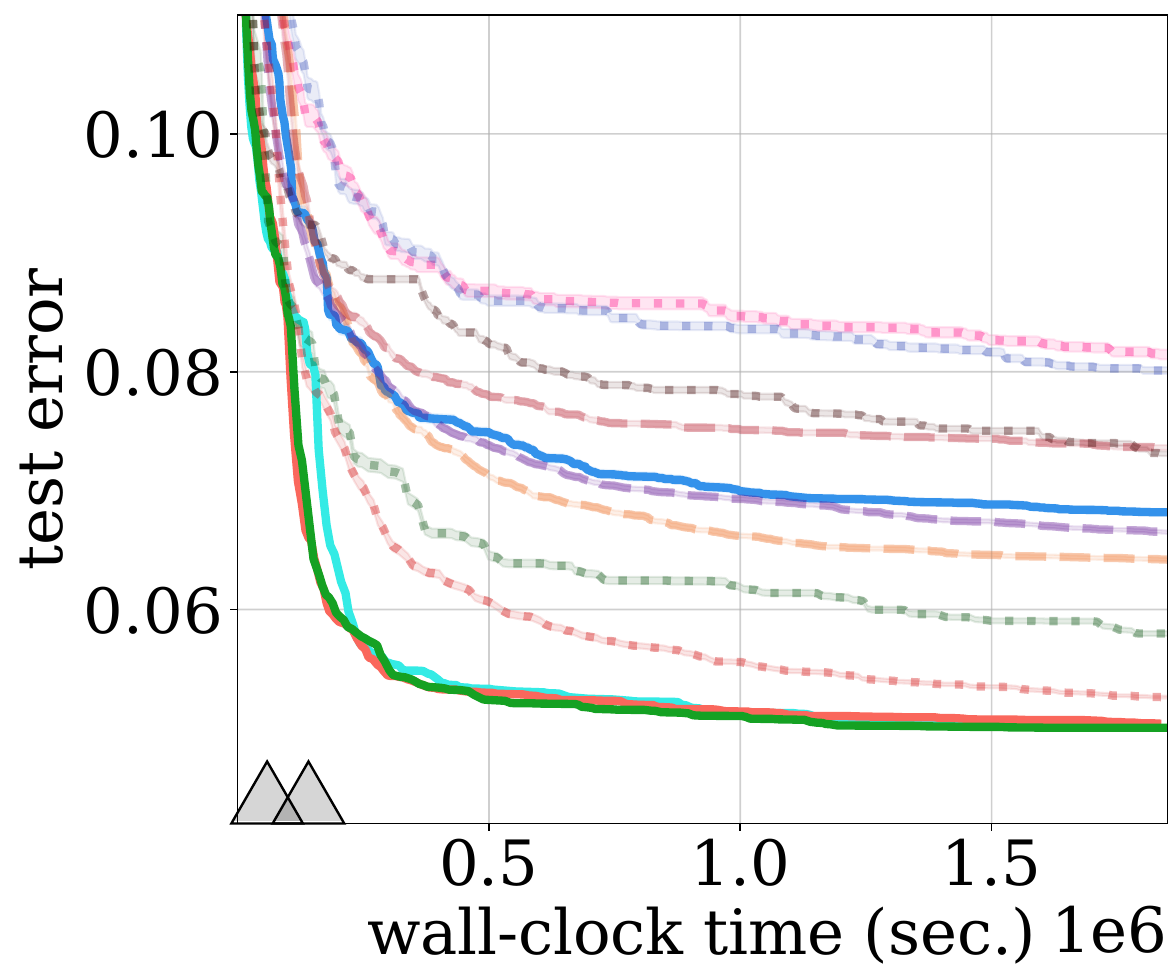}
        \caption{JAHS (C. Histology)}
        \label{fig:jahs_co}
    \end{subfigure}
    \hfill
    \begin{subfigure}[c]{0.32\textwidth}
        \centering
        \includegraphics[width=.85\textwidth]{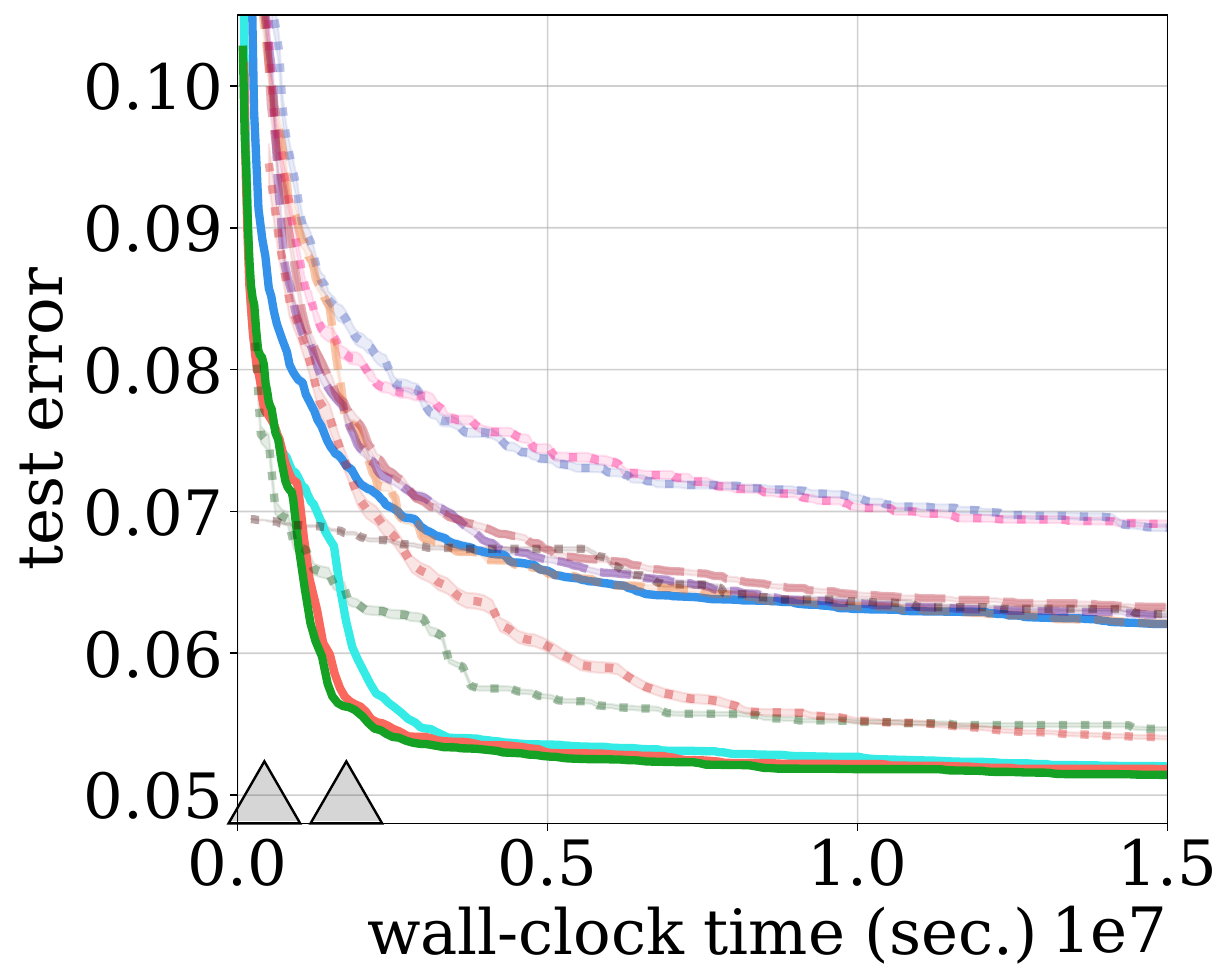}
        \caption{JAHS (F-MNIST)}
        \label{fig:jahs_fm}
    \end{subfigure}
    \begin{subfigure}[c]{0.32\textwidth}
        \centering
        \includegraphics[width=.83\textwidth]{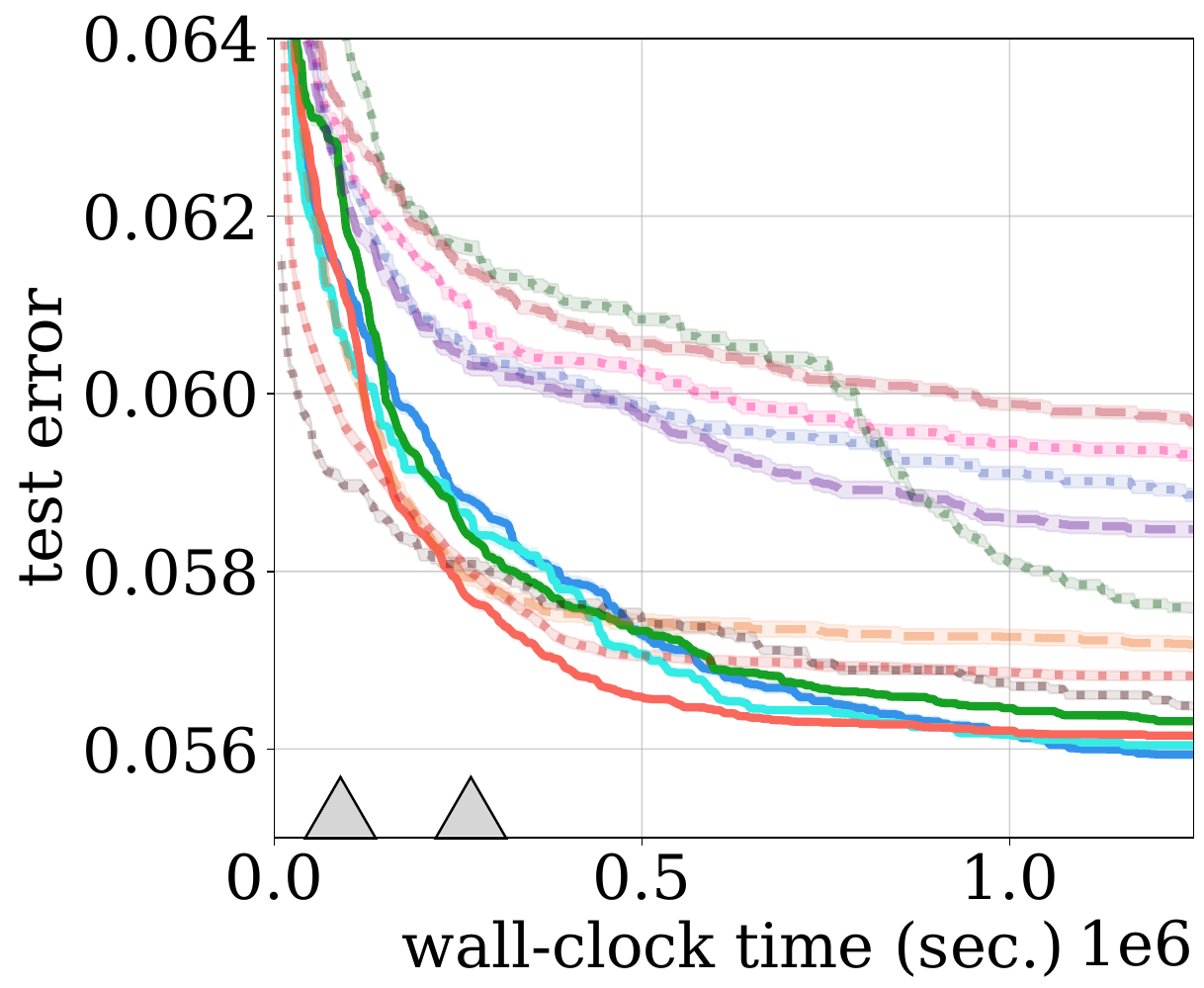}
        \caption{NAS-Bench-101 (CIFAR-10)}
        \label{fig:nas101}
    \end{subfigure}
    \hfill
    \begin{subfigure}[c]{0.32\textwidth}
        \centering
        \includegraphics[width=.83\textwidth]{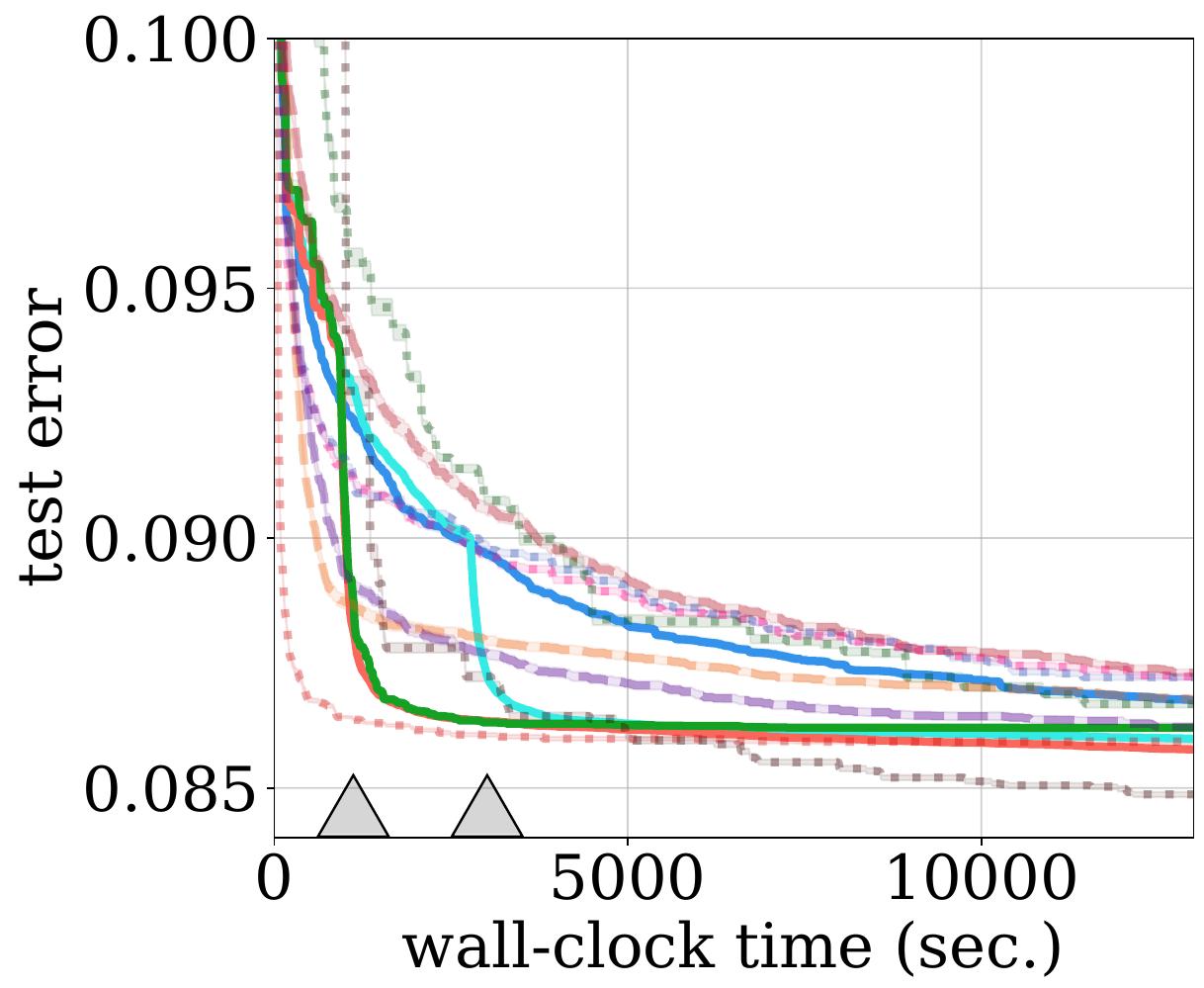}
        \caption{NAS-Bench-201 (CIFAR-10)}
        \label{fig:nas201}
    \end{subfigure}
    \hfill
    \begin{subfigure}[c]{0.32\textwidth}
        \centering
        \includegraphics[width=.65\textwidth]{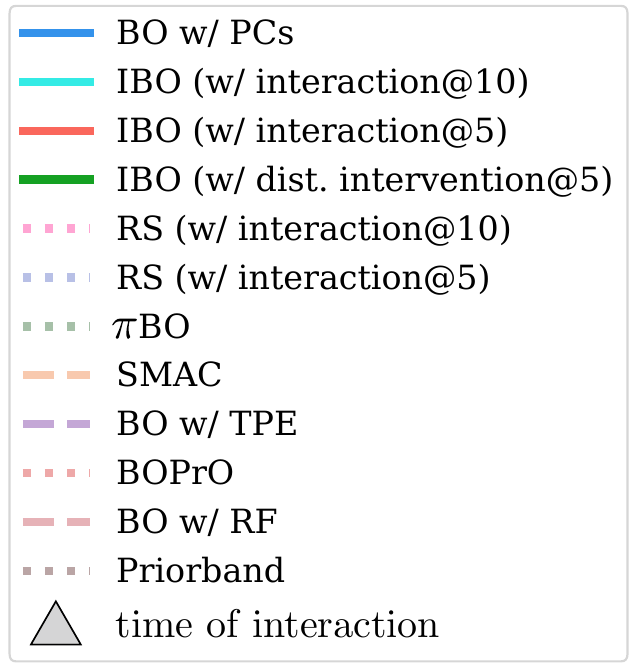}
    \end{subfigure}

    \caption{\textbf{IBO-HPC outperforms state of the art.} For 5/5 tasks across three challenging
    benchmarks, IBO-HPC is competitive with strong baselines when no user knowledge is provided. 
    When beneficial user beliefs are provided (\triangletikz), after 5 (\solidline{IBOearly}) or after 10 iterations (\solidline{IBOlate}), it outperforms all competitors w.r.t. convergence and/or solution quality on 4/5 tasks.}
    \label{fig:results_single_interaction}
    \vspace{-0.5cm}
\end{figure*}

\section{Experimental Evaluation} \label{sec:Experiments}
We now provide an extensive empirical evaluation of IBO-HPC and aim to answer the following research questions: \textbf{(Q1)} Can IBO-HPC compete with prominent HPO methods? 
\textbf{(Q2)} How does the performance of IBO-HPC, provided with user knowledge at various points during optimization, compare to existing approaches incorporating user knowledge ex ante?
\textbf{(Q3)} Is IBO-HPC capable of reliably recovering from misleading user interactions?

\paragraph{Experimental Setup} We compare IBO-HPC against eight diverse competitors: local search (LS)~\citep{White2020ExploringTL}, BO with random forest (RF)~\citep{skopt}, BO with tree-parzen estimators (TPE)~\citep{akiba2019optuna}, and SMAC~\citep{hutter20111SMAC, lindauer2022SMAC} as HPO methods that do not permit user interactions. As baselines allowing for ex ante incorporation of user interactions, we employ random search (RS)~\citep{bergstra_2012} with user priors, BOPrO~\citep{DBLP:conf/pkdd/SouzaNOOLH21}, $\pi$BO~\citep{hvarfner2022pibo}, and Priorband~\citep{mallik2023priorband}. Since Priorband is a multi-fidelity method, we reserve the number of epochs as the fidelity in each benchmark. For all single-fidelity methods, we set it to the highest possible value. For our evaluation, we employ six real-world benchmarks: NAS-Bench-101~\citep{nasbench101},
NAS-Bench-201~\citep{nasbench201}, JAHS~\citep{jahsbench201}, HPO-B~\citep{arango2021hpoblargescalereproduciblebenchmark}, PD1~\citep{Wang2024HyperBO}, and FCNet~\citep{klein2019fcnet}. See App. \ref{app:benchmarks} for an overview. 
From these benchmarks, we use 10 diverse tasks and 7 different search spaces covering continuous, discrete, and mixed spaces. 
Each of the 10 tasks considers either HPO, NAS or both. The tasks cover classification and regression tasks on tabular and image data (see App.~\ref{app:benchmarks} for details).
All algorithms optimize the validation accuracy. We report the mean test error against computational cost and provide the standard error to quantify uncertainty. All algorithms are run with 500 seeds for $200$ iterations (50 seeds with $100$ iterations for HPO-B, PD1, and FCNet) and were initialized with 5 random samples. The computational costs are reported as the accumulated wall-clock time of training and evaluation of each sampled configuration, provided by the benchmarks
(App.~\ref{app:jahs_extension}, \ref{app:hp_ibo}, and \ref{app:hw} for more details).

\paragraph{User Interactions}
We follow~\citep{DBLP:conf/pkdd/SouzaNOOLH21} and ~\citep{hvarfner2022pibo} and define beneficial and misleading user beliefs.
To define beneficial and misleading user knowledge/priors for each benchmark and the corresponding search space,
we randomly sample $10k$ configurations and keep the best/worst performing ones, denoted as $\bm{\theta}^+$ and $\bm{\theta}^-$, respectively. To demonstrate that beneficial user priors \textbf{over only a few hyperparameters} are enough to improve the performance of IBO-HPC remarkably, we define beneficial interactions by selecting a small subset of hyperparameters $\hat{\bm{\mathcal{H}}} \subset \bm{\mathcal{H}}$. Then, we define a prior over each $H \in \hat{\bm{\mathcal{H}}}$ favoring the value of $H$ given in $\bm{\theta}^+$.
For misleading interaction, $\hat{\bm{\mathcal{H}}}$ is chosen to be large to demonstrate that IBO-HPC recovers even if a large amount of misleading information is provided. A prior is then defined s.t. values from $\bm{\theta}^-[H]$ are favored.
Since users might want to specify a concrete value for certain hyperparameters instead of defining a distribution, we also conduct experiments with point masses as user priors.
We discuss interaction design in App. \ref{app:interaction_definition}.
\begin{figure*}[t]
     \centering
     \begin{subfigure}[c]{0.32\textwidth}
         \centering
         \includegraphics[width=0.83\textwidth]{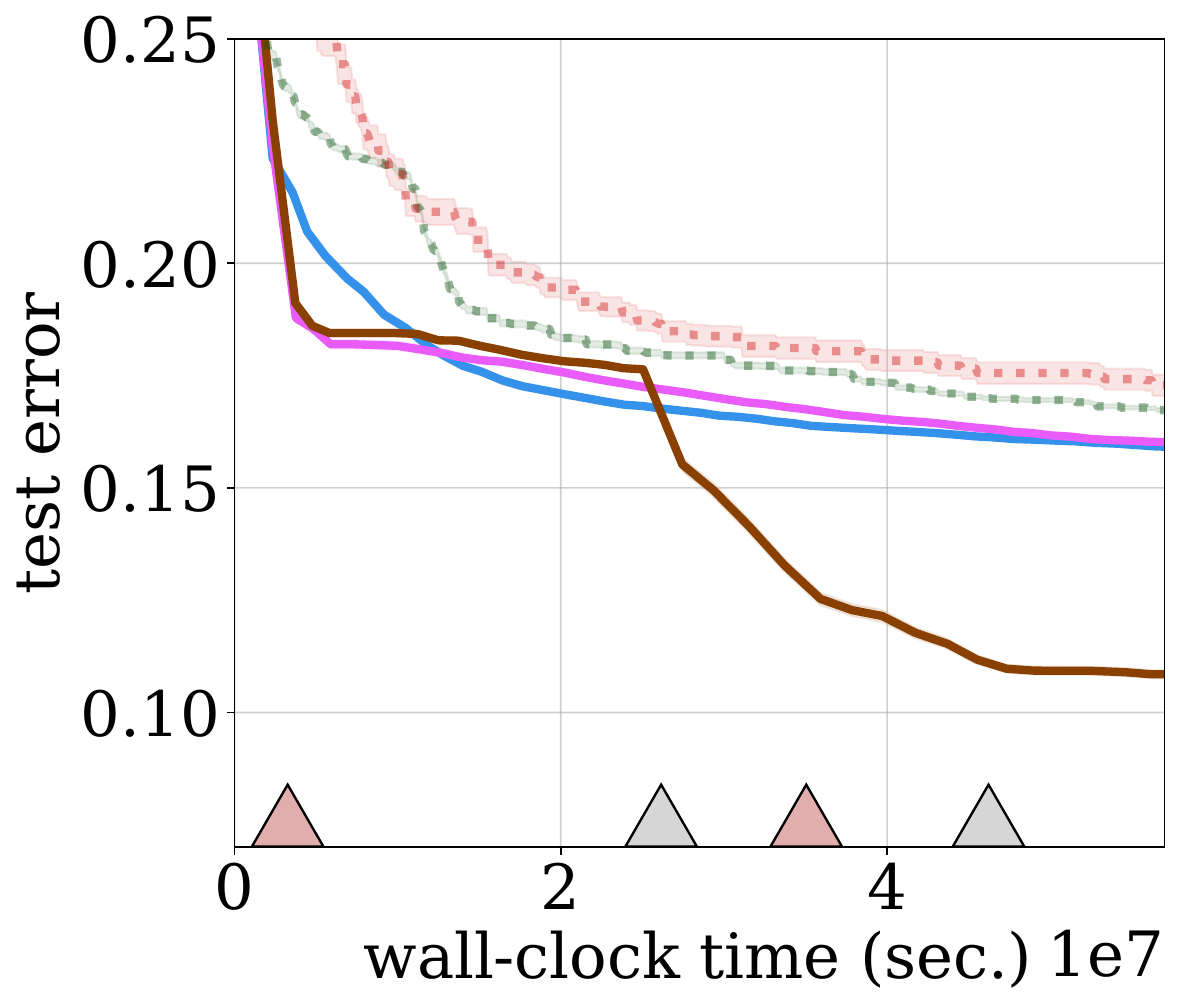}
         \caption{JAHS (CIFAR-10)}
         \label{fig:4jahs_cifar10}
     \end{subfigure}
     \hfill
     \begin{subfigure}[c]{0.32\textwidth}
         \centering
         \includegraphics[width=.85\textwidth]{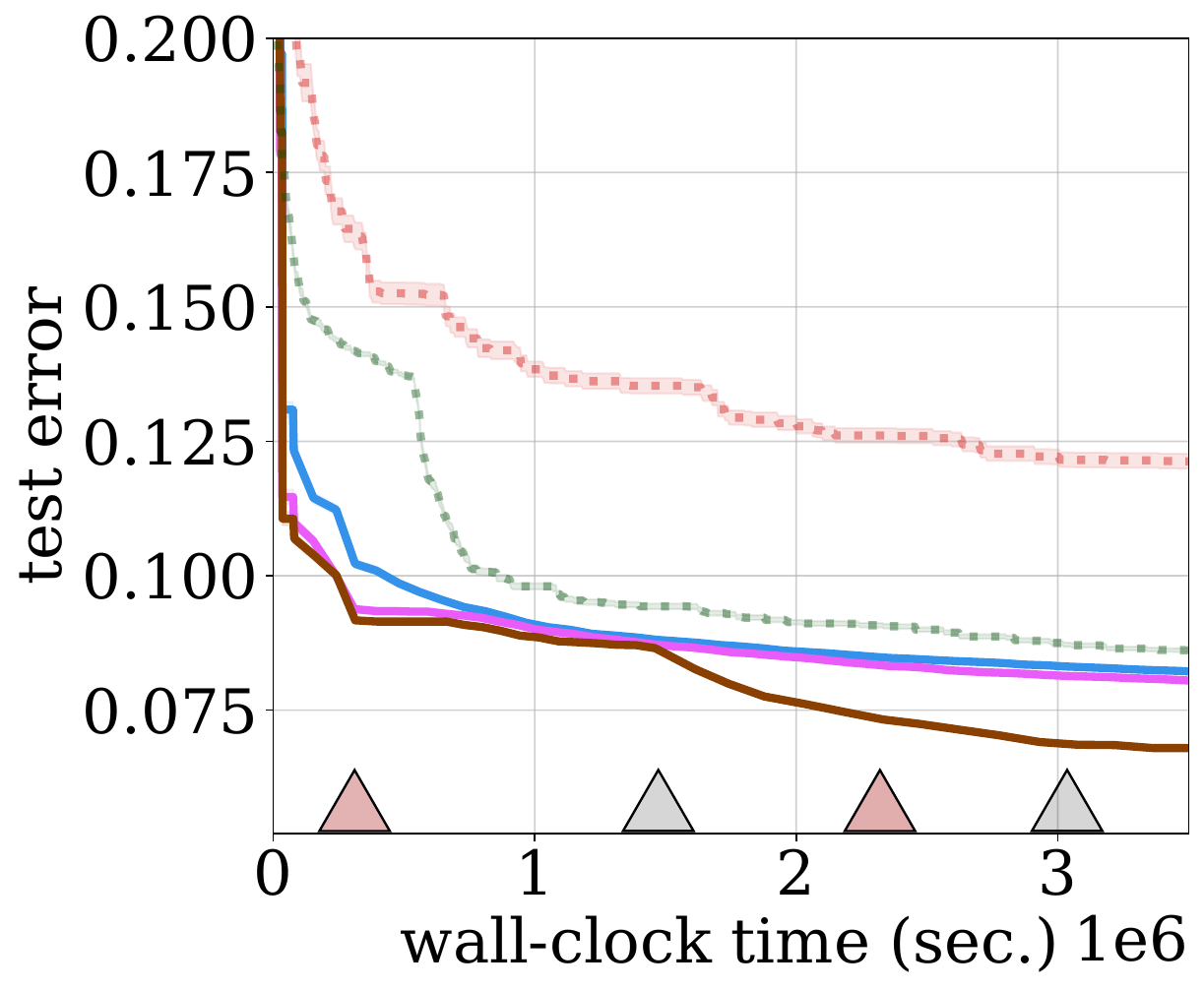}
         \caption{JAHS (C. Histology)}
         \label{fig:4jahs_co}
     \end{subfigure}
     \hfill
     \begin{subfigure}[c]{0.32\textwidth}
         \centering
         \vspace{0.07cm}
         \includegraphics[width=0.83\textwidth]{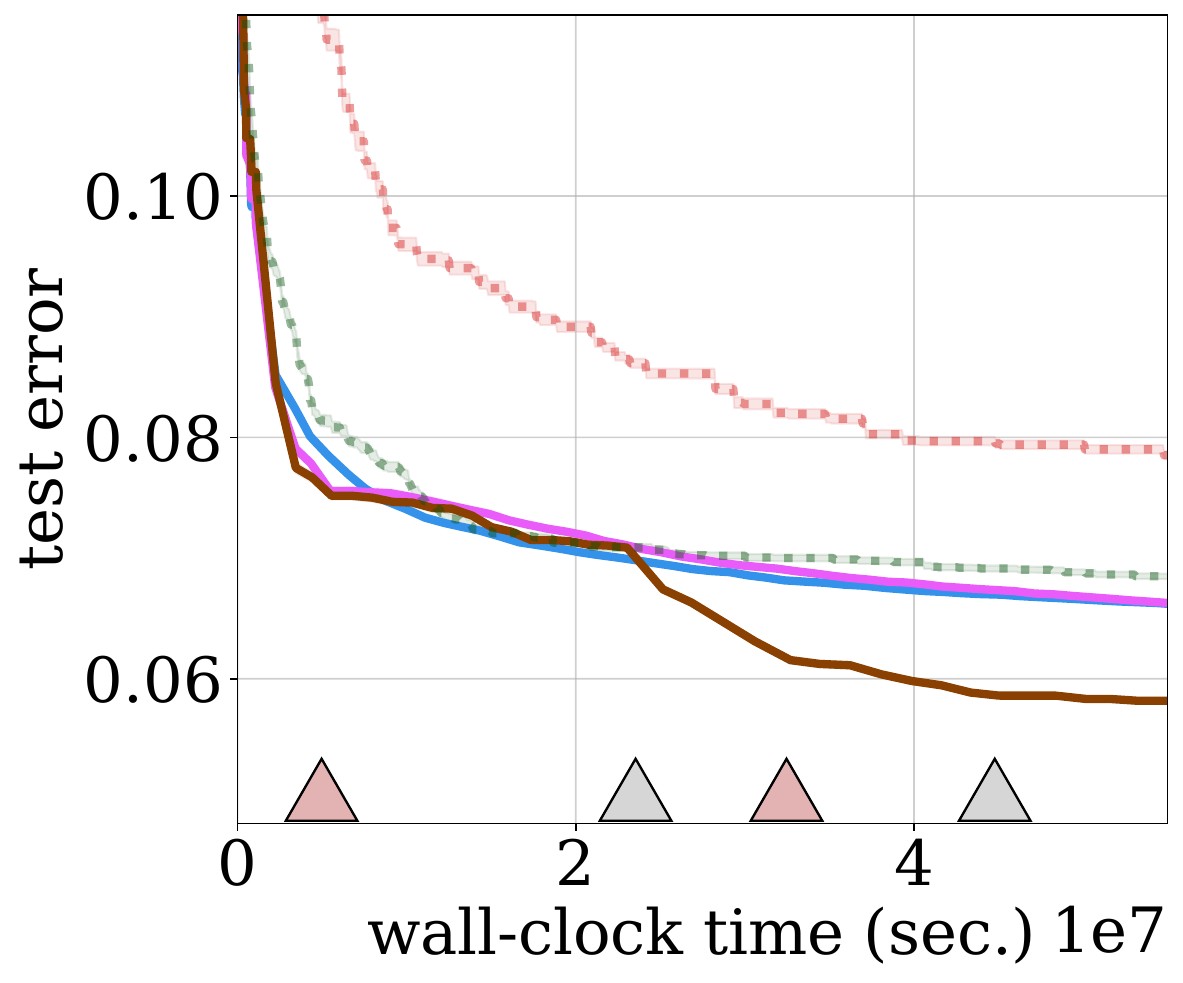}
         \caption{JAHS (F-MNIST)}
         \label{fig:4jahs_fm}
     \end{subfigure}
     \\
     \begin{subfigure}[c]{0.32\textwidth}
         \centering
         \vspace{0.05cm}
         \includegraphics[width=.85\textwidth]{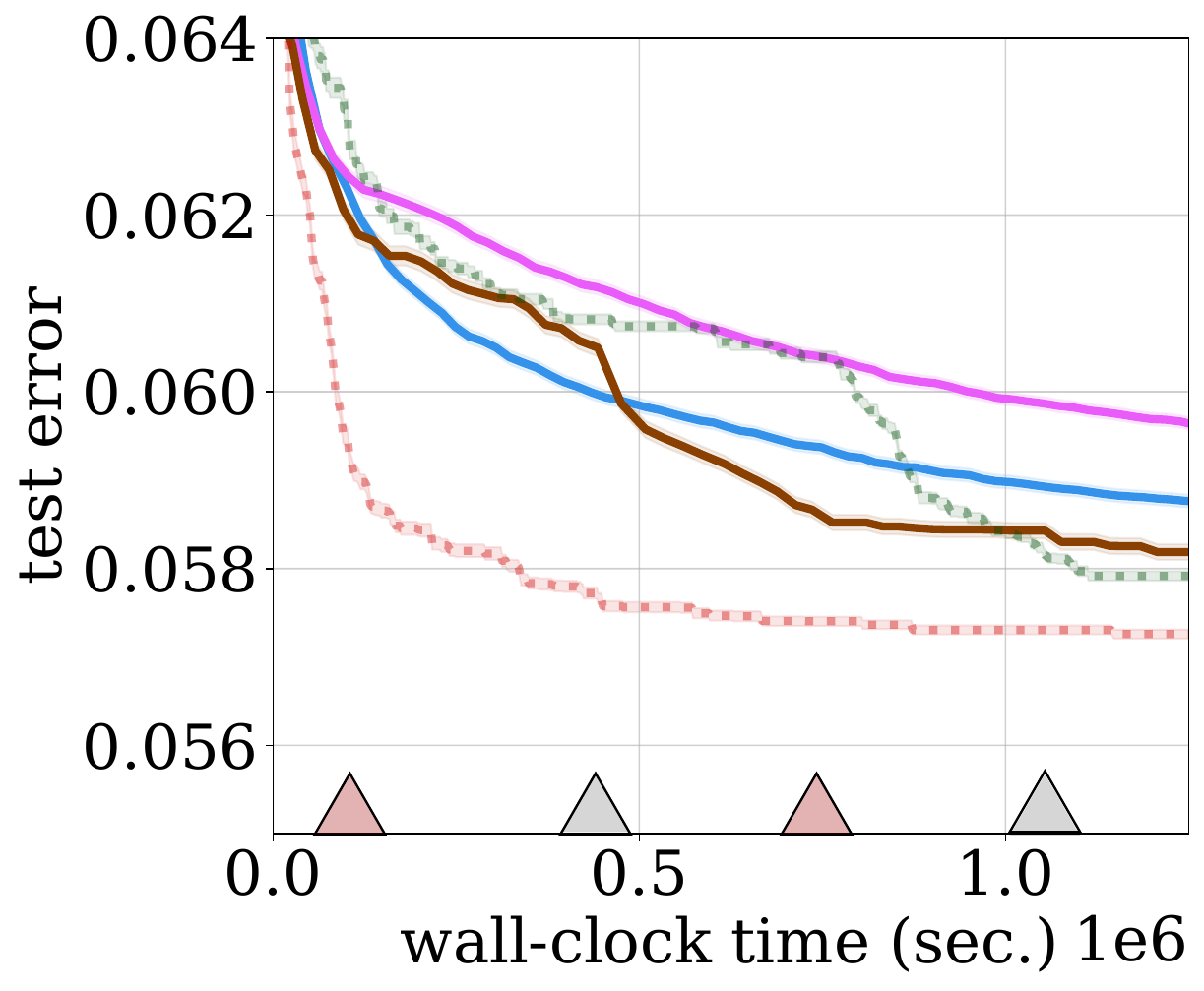}
         \caption{NAS-Bench-101 (CIFAR-10)}
         \label{fig:4nas101}
     \end{subfigure}
     \hfill
     \begin{subfigure}[c]{0.32\textwidth}
         \centering
         \vspace{0.09cm}
         \includegraphics[width=.85\textwidth]{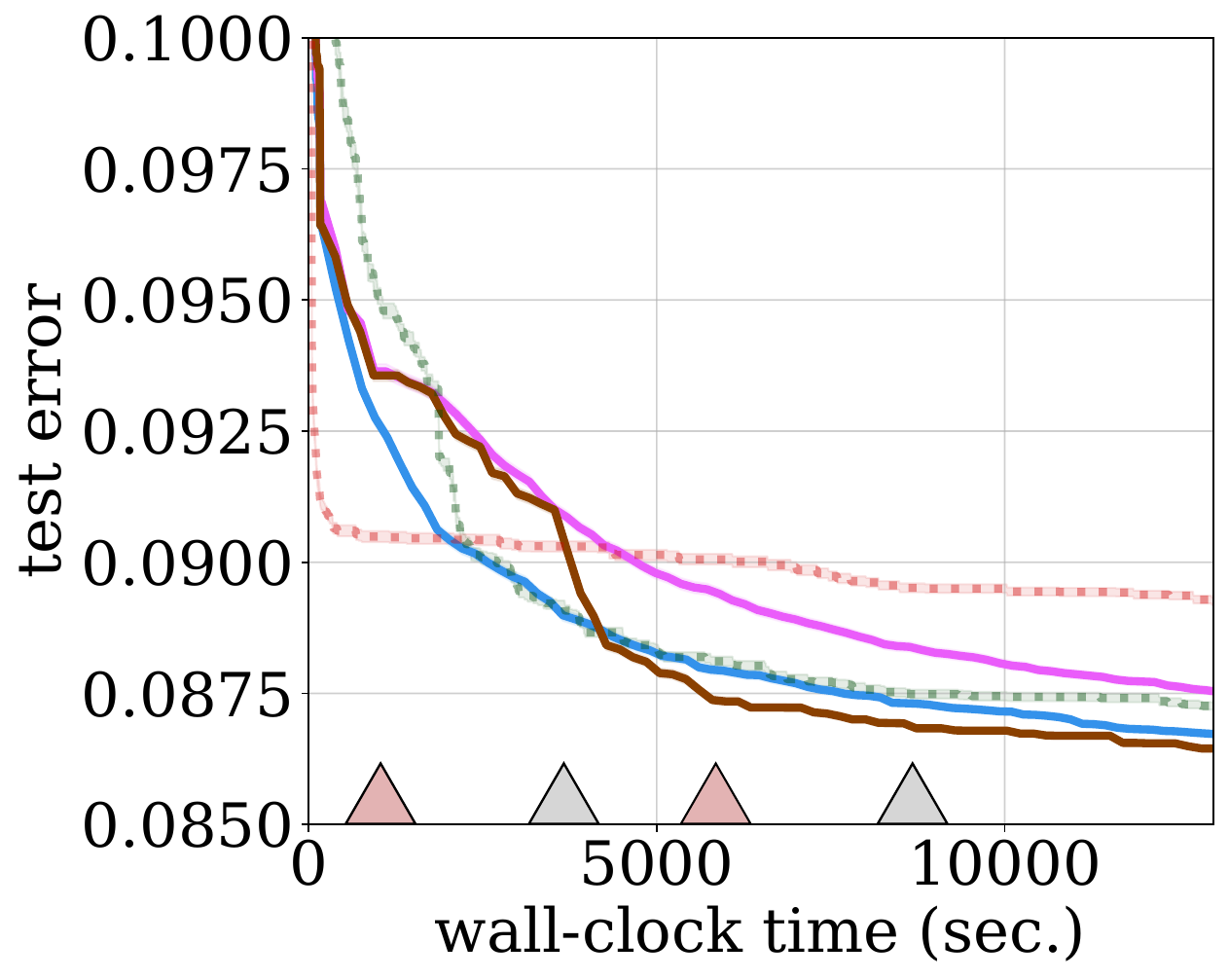}
         \caption{NAS-Bench-201 (CIFAR-10)}
         \label{fig:4nas201}
     \end{subfigure}
     \hfill
     \begin{subfigure}[c]{0.32\textwidth}
         \centering
         \includegraphics[width=0.85\textwidth]
         {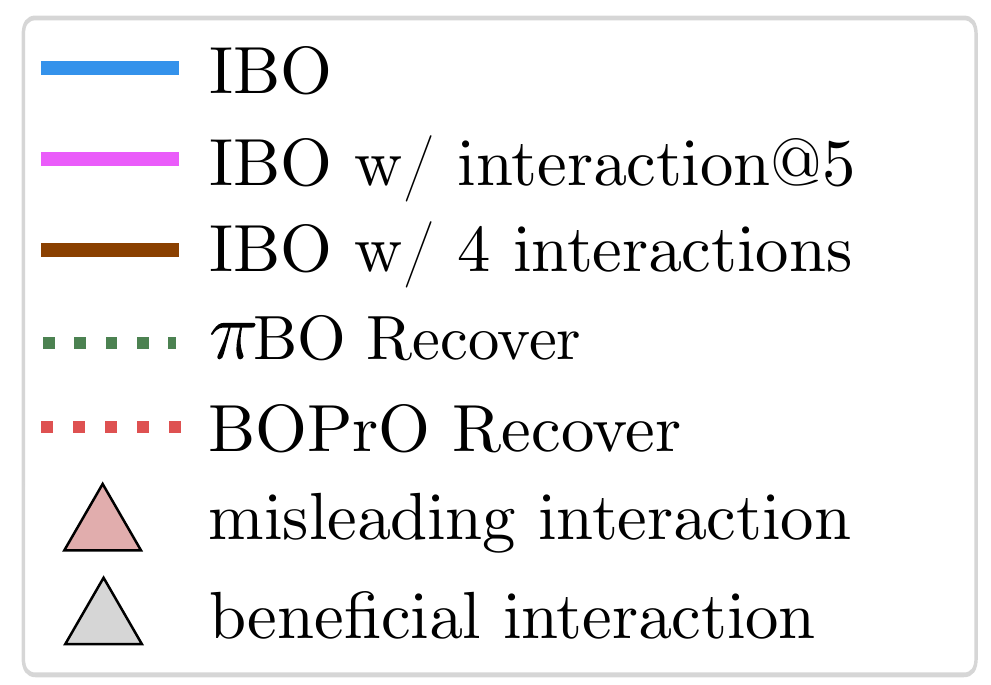}
         \label{fig:4legend}
     \end{subfigure}
    \caption{\textbf{IBO-HPC recovers from misleading interactions.} IBO-HPC automatically recovers from (\solidline{IBOrecover})  misleading feedback provided as point values at the 5th iteration of the search (1st \redtriangletikz). Also, when providing harmful and beneficial beliefs alternatively (\redtriangletikz/\triangletikz), IBO-HPC (\solidline{IBOmany}) catches up with or outperforms $\pi$BO (\dotteddline{PiBOcol}) and BOPrO (\dotteddline{BOPrOcol}) in 4/5 cases. 
    }
    \label{fig:results_recovery}
    \vspace{-0.5cm}
\end{figure*}
\subsection{\textbf{(Q1)} IBO-HPC is Competitive in HPO \& NAS}
\label{subsec:Q1}
To demonstrate the effectiveness of IBO-HPC, we ran IBO-HPC on all tasks without user interaction. We compared its performance against three strong BO baselines and LS (for LS, see Fig. \ref{fig:results_single_interaction_with_dist}). Fig.~\ref{fig:results_single_interaction} and App. \ref{app:further_results} show that the performance of IBO-HPC without user interaction is competitive to or outperforms BO baselines on all selected benchmarks. These results show that IBO-HPC performs well in complex and realistic settings. Also, it underlines that HPCs accurately capture characteristics of the objective function and  that our sampling-based selection policy reliably identifies good configurations. Besides the quality of the final solution, we also observe that IBO-HPC converges at rates similar to those of the baselines. 
We thus answer \textbf{(Q1)}
affirmatively, since IBO-HPC is competitive with existing strong BO baselines without user interaction.

\subsection{\textbf{(Q2, Q3)} IBO-HPC is Interactive and Resilient}
\label{sec:Exp_InteractiveHPO}
We now demonstrate that IBO-HPC successfully handles point values and distributions as user knowledge, analyze the benefits of user knowledge w.r.t. convergence speed, and demonstrate IBO-HPC's recovery from misleading beliefs.
Details about the different beneficial 
and misleading user beliefs on hyperparameters are described in App.~\ref{app:interaction_definition}. 

\paragraph{Beneficial Interactions}
Fig.~\ref{fig:results_single_interaction} and \ref{fig:results_single_interaction_with_dist} show
a clear positive effect of providing beneficial user beliefs (distribution or fixed values) to IBO-HPC across all tasks. 
This holds for very early
interactions (after 5 iterations; \solidline{IBOdist} and \solidline{IBOearly}) and later interactions (after 10 iterations; \solidline{IBOlate}). 
Remarkably, we observed a clear benefit in terms of convergence speed \textit{and} improvement in solution 
quality, especially for more complex search spaces. 
Considering the case in which users provide knowledge, IBO-HPC outperforms $\pi$BO, Priorband, and BOPrO in 4/5 cases w.r.t. convergence speed and/or final performance (see App. \ref{app:stat_sign} for significance test).
The results demonstrate that IBO-HPC's selection policy accurately represents the given user beliefs. Also, it shows that the selection policy effectively leverages information encoded in user priors \textit{and} the surrogate since
beneficial feedback provides decisive improvements, and then the optimization keeps improving.
We found similar results on HPO-B, PD1, and FCNet (see App. \ref{app:further_results}).

\begin{wrapfigure}[15]{r}{0.4\textwidth}  
    \centering
    \includegraphics[width=\linewidth]{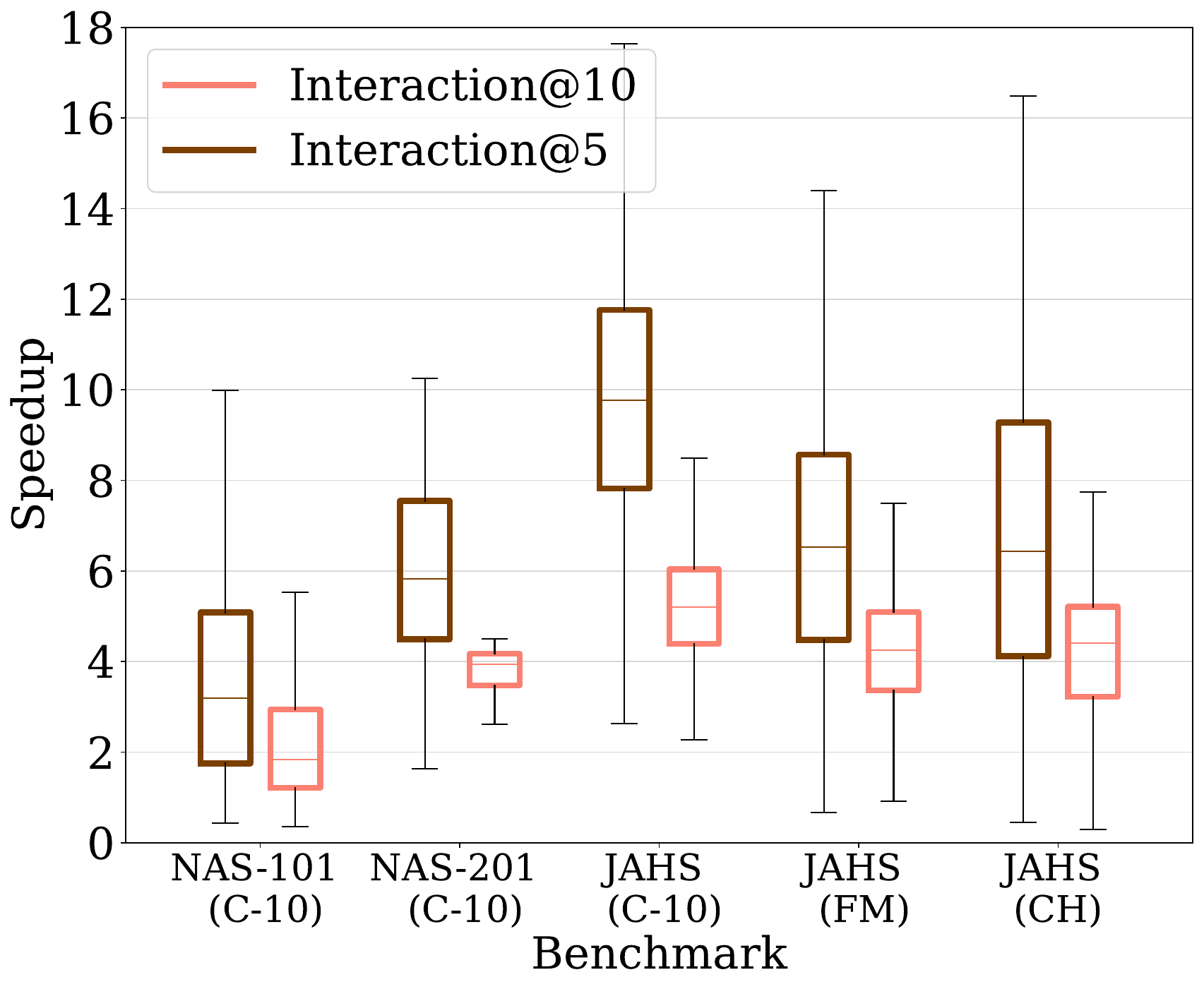}
    \caption{\textbf{IBO-HPC achieves considerable runtime improvement} with beneficial interactions \textbf{(2-10$\times$ faster).} 
    }
    \label{fig:speedup}
\end{wrapfigure}
\paragraph{Recovery and Multiple Interactions}
User beliefs could also mislead the optimization, thus, an interactive HPO algorithm should be able to recover from misleading interactions and allow users to correct their initial beliefs. 
We demonstrate the recovery mechanism of IBO-HPC by deliberately providing IBO-HPC with known 
sub-optimal values for a large subset of hyperparameters to 
ensure a significant negative effect on the optimization process (see App. \ref{app:experiments} for details).  We allowed a budget of $2k$ iterations for each algorithm to test the long-term effects of negative/multiple interactions.
Fig.~\ref{fig:results_recovery} shows that IBO-HPC (\solidline{IBOrecover}) recovers similarly well or better than $\pi$BO and BOPrO from misleading
interactions. In most cases, IBO-HPC catches up with standard HPO competitor methods (having no good/bad interactions). This confirms that IBO-HPC's recovery mechanism works reliably and that misleading user beliefs do not deteriorate IBO-HPC's performance in the long run. 
Users might revise their beliefs when no improvement is obtained. In extreme cases, users could alternate between beneficial and misleading interactions. 
Therefore, we first provided IBO-HPC with the same misleading beliefs as before, after 5 iterations, followed by an alternation of beneficial and harmful beliefs every 10 iterations. As expected, the misleading interactions decelerate IBO-HPC and trigger the recovery mechanism. 
Upon receiving beneficial knowledge, IBO-HPC catches up with or outperforms the baselines, confirming that IBO-HPC leverages valuable
feedback in critical conditions (see Fig.~\ref{fig:results_recovery} (\solidline{IBOmany}) and App. \ref{app:further_results}).

\paragraph{Speed-up}
Both the runtime of the optimization loop (fitting surrogate and suggesting the next configuration) and convergence speed are crucial for efficient HPO. Therefore, we analyze the increase in convergence speed when valuable user knowledge is provided to IBO-HPC as a distribution and the average runtime of IBO-HPC's optimization loop.
To assess the speed-up of IBO-HPC due to beneficial user knowledge, we run IBO-HPC without user interaction and obtain the wall-clock time needed for the best evaluation result (denoted as $t_w$).
Then, we run IBO-HPC with beneficial user knowledge and measure the estimated 
wall-clock time until IBO-HPC finds an equally good or better configuration (denoted as $t_i$). Fig.~\ref{fig:speedup} reports the relative performance speedup $\frac{t_w}{t_i}$ for all 500 runs. 
A median speed-up of 2-10$\times$ with beneficial user interactions
clearly demonstrates IBO-HPC's improvement in convergence speed while saving resources.
Since SMAC is the best of our baselines and performs similarly to IBO-HPC when no user knowledge is provided, we compare the efficiency of the selection policies of SMAC and IBO-HPC, averaging over 20 runs (Fig.~\ref{fig:runtime_smac_vs_ibo} in App. \ref{app:comp_cost}). IBO-HPC is considerably faster than SMAC in 4/5 cases, especially in larger search spaces. 
These results (and those from \textbf{(Q1)}) demonstrate that our selection policy, leveraging conditional sampling of PCs, not only matches or surpasses the performance of BO methods requiring inner-loop optimization, but also significantly improves efficiency.
Given IBO-HPC's remarkable speed-ups and the reliable recovery mechanism, we can answer \textbf{(Q2)} and \textbf{(Q3)} positively.

\section{Conclusion}
We introduced a novel definition of interactive BO policy
and IBO-HPC, an interactive BO method that leverages the flexible inference of probabilistic circuits to accurately and flexibly incorporate user beliefs.
Without user knowledge, IBO-HPC is competitive with strong baselines and outperforms interactive competitors when knowledge is available.
It reliably recovers from misleading user beliefs and
converges significantly faster when provided with valuable user knowledge.
\paragraph{Limitations \& Future Work}
So far, IBO-HPC only allows users to provide external knowledge about a given HPO task but does not provide a way to leverage information from previous HPO runs performed on different tasks. Therefore, a promising prospect for future research is the usage of PCs to enable hyperparameter transfer learning, thus, incorporating both former HPO runs and user knowledge to make HPO more efficient. See App. \ref{app:future_work} for a more detailed discussion. Moreover, IBO-HPC can get stuck in local optima if the surrogate PC's leaves exhibit too low variance for a given task due to its sampling-based exploration. Although this can be tackled by setting a minimal variance or introducing a minimum variance schedule, this introduces new hyperparameters in IBO-HPC itself. These parameters must be set accordingly for each task by the user, which could be challenging for non-experts.

\paragraph{Impact Statement} After careful reflection, the authors have determined that this work presents no notable negative impacts to society or the environment.

\section*{Acknowledgements}
This work was supported by the National High-Performance Computing Project for Computational Engineering Sciences (NHR4CES) and the Federal Ministry of Education and Research (BMBF) Competence Center for AI and Labour ("KompAKI", FKZ 02L19C150). Furthermore, this work benefited from the cluster project "The Third Wave of AI".

\bibliography{references}

\begin{thebibliography}{42}
\providecommand{\natexlab}[1]{#1}
\providecommand{\url}[1]{\texttt{#1}}
\expandafter\ifx\csname urlstyle\endcsname\relax
  \providecommand{\doi}[1]{doi: #1}\else
  \providecommand{\doi}{doi: \begingroup \urlstyle{rm}\Url}\fi

\bibitem[Akiba et~al.(2019)Akiba, Sano, Yanase, Ohta, and Koyama]{akiba2019optuna}
Takuya Akiba, Shotaro Sano, Toshihiko Yanase, Takeru Ohta, and Masanori Koyama.
\newblock {O}ptuna: A next-generation hyperparameter optimization framework.
\newblock In \emph{The 25th ACM SIGKDD International Conference on Knowledge Discovery \& Data Mining}, pages 2623--2631, 2019.

\bibitem[Bansal et~al.(2022)Bansal, Stoll, Janowski, Zela, and Hutter]{jahsbench201}
Archit Bansal, Danny Stoll, Maciej Janowski, Arber Zela, and Frank Hutter.
\newblock Jahs-bench-201: {A} foundation for research on joint architecture and hyperparameter search.
\newblock In \emph{Advances in Neural Information Processing Systems (NeurIPS)}, 2022.

\bibitem[Bergstra and Bengio(2012)]{bergstra_2012}
James Bergstra and Yoshua Bengio.
\newblock Random search for hyper-parameter optimization.
\newblock \emph{Journal of Machine Learning Research}, 13\penalty0 (2), 2012.

\bibitem[Bischl et~al.(2023)Bischl, Binder, Lang, Pielok, Richter, Coors, Thomas, Ullmann, Becker, Boulesteix, Deng, and Lindauer]{bischl_2022}
Bernd Bischl, Martin Binder, Michel Lang, Tobias Pielok, Jakob Richter, Stefan Coors, Janek Thomas, Theresa Ullmann, Marc Becker, Anne{-}Laure Boulesteix, Difan Deng, and Marius Lindauer.
\newblock Hyperparameter optimization: Foundations, algorithms, best practices, and open challenges.
\newblock \emph{Wiley Interdisciplinary Reviews: Data Mining and Knowledge Discovery}, 13\penalty0 (2), 2023.

\bibitem[Bouthillier and Varoquaux(2020)]{bouthillier:hal-02447823}
Xavier Bouthillier and Ga{\"e}l Varoquaux.
\newblock {Survey of machine-learning experimental methods at NeurIPS2019 and ICLR2020}.
\newblock Research report, {Inria Saclay Ile de France}, January 2020.

\bibitem[Breiman(2001)]{breiman2001random}
Leo Breiman.
\newblock Random forests.
\newblock \emph{Machine learning}, 45:\penalty0 5--32, 2001.

\bibitem[Choi et~al.(2020)Choi, Vergari, and Van~den Broeck]{choi_2020}
YooJung Choi, Antonio Vergari, and Guy Van~den Broeck.
\newblock Probabilistic circuits: A unifying framework for tractable probabilistic models.
\newblock Technical report, UCLA, 2020.

\bibitem[Dong and Yang(2020)]{nasbench201}
Xuanyi Dong and Yi~Yang.
\newblock {NAS}-{B}ench-201: Extending the scope of reproducible neural architecture search.
\newblock In \emph{International Conference on Learning Representations (ICLR)}, 2020.

\bibitem[Gens and Domingos(2013)]{gens2013learnSPN}
Robert Gens and Pedro Domingos.
\newblock Learning the structure of sum-product networks.
\newblock In \emph{International Conference on Machine Learning (ICML)}, volume~28, pages 873--880. PMLR, 2013.

\bibitem[Head et~al.()Head, Gilles, Iaroslav, fcharras, Vinícius, cmmalone, Schröder, nel215, Campos, Young, Cereda, Fan, Schwabedal, Hvass-Labs, SoManyUsernamesTaken, Callaway, Estève, Besson, Landwehr, Komarov, Cherti, Shi, Pfannschmidt, Linzberger, Cauet, Gut, Mueller, and Fabisch]{skopt}
Tim Head, MechCoder Gilles, Louppe Iaroslav, Shcherbatyi fcharras, Zé Vinícius, cmmalone, Christopher Schröder, nel215, Nuno Campos, Todd Young, Stefano Cereda, Thomas Fan, Justus Schwabedal, Mikhail Hvass-Labs, Pak SoManyUsernamesTaken, Fred Callaway, Loïc Estève, Lilian Besson, Peter~M. Landwehr, Pavel Komarov, Mehdi Cherti, Kejia~(KJ) Shi, Karlson Pfannschmidt, Fabian Linzberger, Christophe Cauet, Anna Gut, Andreas Mueller, and Alexander Fabisch.
\newblock skopt.
\newblock URL \url{https://github.com/scikit-optimize/scikit-optimize}.

\bibitem[Horv{\'a}th et~al.(2021)Horv{\'a}th, Klein, Richtarik, and Archambeau]{Horvath2021HTLAdaptive}
Samuel Horv{\'a}th, Aaron Klein, Peter Richtarik, and Cedric Archambeau.
\newblock Hyperparameter transfer learning with adaptive complexity.
\newblock In \emph{International Conference on Artificial Intelligence and Statistics (AISTATS)}, pages 1378--1386, 2021.

\bibitem[Hutter et~al.(2011)Hutter, Hoos, and Leyton-Brown]{hutter20111SMAC}
Frank Hutter, Holger~H Hoos, and Kevin Leyton-Brown.
\newblock Sequential model-based optimization for general algorithm configuration.
\newblock In \emph{Learning and Intelligent Optimization: 5th International Conference}, 2011.

\bibitem[Hutter et~al.(2013)Hutter, Hoos, and Leyton-Brown]{hutter_2013}
Frank Hutter, Holger~H Hoos, and Kevin Leyton-Brown.
\newblock Identifying key algorithm parameters and instance features using forward selection.
\newblock In \emph{Learning and Intelligent Optimization: 7th International Conference}, 2013.

\bibitem[Hutter et~al.(2019)Hutter, Kotthoff, and Vanschoren]{hutter2019automated}
Frank Hutter, Lars Kotthoff, and Joaquin Vanschoren.
\newblock \emph{Automated Machine Learning: Methods, Systems, Challenges}.
\newblock Springer Nature, 2019.

\bibitem[Hvarfner et~al.(2022)Hvarfner, Stoll, Souza, Lindauer, Hutter, and Nardi]{hvarfner2022pibo}
Carl Hvarfner, Danny Stoll, Artur Souza, Marius Lindauer, Frank Hutter, and Luigi Nardi.
\newblock $\pi$bo: Augmenting acquisition functions with user beliefs for bayesian optimization.
\newblock In \emph{International Conference on Learning Representations (ICLR)}, 2022.

\bibitem[Jones et~al.(1998)Jones, Schonlau, and Welch]{JonesSW98}
Donald~R Jones, Matthias Schonlau, and William~J Welch.
\newblock Efficient global optimization of expensive black-box functions.
\newblock \emph{Journal of Global Optimization}, 13:\penalty0 455--492, 1998.

\bibitem[Klein and Hutter(2019)]{klein2019fcnet}
Aaron Klein and Frank Hutter.
\newblock Tabular benchmarks for joint architecture and hyperparameter optimization, 2019.

\bibitem[Kohavi and John(1995)]{kohavi1995automatic}
Ron Kohavi and George~H. John.
\newblock Automatic parameter selection by minimizing estimated error.
\newblock In \emph{International Conference on Machine Learning (ICML)}, 1995.

\bibitem[Lindauer et~al.(2022)Lindauer, Eggensperger, Feurer, Biedenkapp, Deng, Benjamins, Ruhkopf, Sass, and Hutter]{lindauer2022SMAC}
Marius Lindauer, Katharina Eggensperger, Matthias Feurer, AndrÃ© Biedenkapp, Difan Deng, Carolin Benjamins, Tim Ruhkopf, RenÃ© Sass, and Frank Hutter.
\newblock Smac3: A versatile bayesian optimization package for hyperparameter optimization.
\newblock \emph{Journal of Machine Learning Research}, 23\penalty0 (54):\penalty0 1--9, 2022.
\newblock URL \url{http://jmlr.org/papers/v23/21-0888.html}.

\bibitem[Lindauer et~al.(2024)Lindauer, Karl, Klier, Moosbauer, Tornede, M{\"{u}}ller, Hutter, Feurer, and Bischl]{LindauerPosition24}
Marius Lindauer, Florian Karl, Anne Klier, Julia Moosbauer, Alexander Tornede, Andreas M{\"{u}}ller, Frank Hutter, Matthias Feurer, and Bernd Bischl.
\newblock Position: {A} call to action for a human-centered automl paradigm.
\newblock In \emph{International Conference on Machine Learning (ICML)}, 2024.

\bibitem[Mallik et~al.(2023)Mallik, Bergman, Hvarfner, Stoll, Janowski, Lindauer, Nardi, and Hutter]{mallik2023priorband}
Neeratyoy Mallik, Eddie Bergman, Carl Hvarfner, Danny Stoll, Maciej Janowski, Marius Lindauer, Luigi Nardi, and Frank Hutter.
\newblock Priorband: Practical hyperparameter optimization in the age of deep learning.
\newblock In \emph{Advances in Neural Information Processing Systems (NeurIPS)}, 2023.

\bibitem[Mockus(1975)]{Mockus1975BO}
J.~Mockus.
\newblock On the bayes methods for seeking the extremal point.
\newblock \emph{IFAC Proceedings Volumes}, 8\penalty0 (1, Part 1):\penalty0 428--431, 1975.

\bibitem[Molina et~al.(2018)Molina, Vergari, Mauro, Natarajan, Esposito, and Kersting]{molina_2018}
Alejandro Molina, Antonio Vergari, Nicola~Di Mauro, Sriraam Natarajan, Floriana Esposito, and Kristian Kersting.
\newblock Mixed sum-product networks: {A} deep architecture for hybrid domains.
\newblock In \emph{AAAI Conference on Artificial Intelligence}, 2018.

\bibitem[Peharz et~al.(2015)Peharz, Tschiatschek, Pernkopf, and Domingos]{PeharzTPD15}
Robert Peharz, Sebastian Tschiatschek, Franz Pernkopf, and Pedro~M. Domingos.
\newblock On theoretical properties of sum-product networks.
\newblock In \emph{International Conference on Artificial Intelligence and Statistics (AISTATS)}, 2015.

\bibitem[Perrone et~al.(2018)Perrone, Jenatton, Seeger, and Archambeau]{Perrone2018ScalableHPO}
Valerio Perrone, Rodolphe Jenatton, Matthias~W Seeger, and Cedric Archambeau.
\newblock Scalable hyperparameter transfer learning.
\newblock In \emph{Advances in Neural Information Processing Systems (NeurIPS)}, volume~31, 2018.

\bibitem[Pineda{-}Arango et~al.(2021)Pineda{-}Arango, Jomaa, Wistuba, and Grabocka]{arango2021hpoblargescalereproduciblebenchmark}
Sebastian Pineda{-}Arango, Hadi~S. Jomaa, Martin Wistuba, and Josif Grabocka.
\newblock {HPO-B:} {A} large-scale reproducible benchmark for black-box {HPO} based on openml.
\newblock In \emph{Proceedings of the Neural Information Processing Systems Track on Datasets and Benchmarks 1,}, 2021.

\bibitem[Probst et~al.(2019)Probst, Boulesteix, and Bischl]{probst2019tunability}
Philipp Probst, Anne-Laure Boulesteix, and Bernd Bischl.
\newblock Tunability: Importance of hyperparameters of machine learning algorithms.
\newblock \emph{The Journal of Machine Learning Research}, 20\penalty0 (1):\penalty0 1934--1965, 2019.

\bibitem[Ramachandran et~al.(2020)Ramachandran, Gupta, Rana, Li, and Venkatesh]{DBLP:journals/kbs/RamachandranGRL20}
Anil Ramachandran, Sunil Gupta, Santu Rana, Cheng Li, and Svetha Venkatesh.
\newblock Incorporating expert prior in bayesian optimisation via space warping.
\newblock \emph{Knowledge-Based Systems}, 195:\penalty0 105663, 2020.

\bibitem[Rasmussen and Williams(2006)]{Rasmussen2006GPs}
Carl Rasmussen and Christopher Williams.
\newblock \emph{Gaussian Processes for Machine Learning}.
\newblock MIT Press, 2006.

\bibitem[Salinas et~al.(2020)Salinas, Shen, and Perrone]{Salinas2020AQA}
David Salinas, Huibin Shen, and Valerio Perrone.
\newblock A quantile-based approach for hyperparameter transfer learning.
\newblock In \emph{International Conference on Machine Learning (ICML)}, pages 8438--8448, 2020.

\bibitem[Shahriari et~al.(2016)Shahriari, Swersky, Wang, Adams, and de~Freitas]{ShahriariSWAF16}
Bobak Shahriari, Kevin Swersky, Ziyu Wang, Ryan~P. Adams, and Nando de~Freitas.
\newblock Taking the human out of the loop: {A} review of bayesian optimization.
\newblock \emph{Proceedings of the IEEE}, 104\penalty0 (1):\penalty0 148--175, 2016.

\bibitem[Snoek et~al.(2012)Snoek, Larochelle, and Adams]{DBLP:conf/nips/SnoekLA12}
Jasper Snoek, Hugo Larochelle, and Ryan~P. Adams.
\newblock Practical bayesian optimization of machine learning algorithms.
\newblock In \emph{Advances in Neural Information Processing Systems (NIPS)}, 2012.

\bibitem[Souza et~al.(2021)Souza, Nardi, Oliveira, Olukotun, Lindauer, and Hutter]{DBLP:conf/pkdd/SouzaNOOLH21}
Artur L.~F. Souza, Luigi Nardi, Leonardo~B. Oliveira, Kunle Olukotun, Marius Lindauer, and Frank Hutter.
\newblock Bayesian optimization with a prior for the optimum.
\newblock In \emph{European Conference on Machine Learning and Principles and Practice of Knowledge Discovery in Databases (ECML-PKDD)}, 2021.

\bibitem[Vanschoren(2018)]{vanschoren2018metalearning}
Joaquin Vanschoren.
\newblock Meta-learning: A survey.
\newblock \emph{arXiv preprint arXiv:1810.03548}, 2018.

\bibitem[Wang et~al.(2022)Wang, Jin, Schmitt, and Olhofer]{wang2022recentBO}
Xilu Wang, Yaochu Jin, Sebastian Schmitt, and Markus Olhofer.
\newblock Recent advances in bayesian optimization.
\newblock \emph{ACM Computing Surveys}, 55\penalty0 (13s):\penalty0 1--36, 2022.

\bibitem[Wang et~al.(2024)Wang, Dahl, Swersky, Lee, Nado, Gilmer, Snoek, and Ghahramani]{Wang2024HyperBO}
Zi~Wang, George~E. Dahl, Kevin Swersky, Chansoo Lee, Zachary Nado, Justin Gilmer, Jasper Snoek, and Zoubin Ghahramani.
\newblock Pre-trained gaussian processes for bayesian optimization.
\newblock \emph{Journal of Machine Learning Research}, 25, 2024.

\bibitem[White et~al.(2020)White, Nolen, and Savani]{White2020ExploringTL}
Colin White, Sam Nolen, and Yash Savani.
\newblock Exploring the loss landscape in neural architecture search.
\newblock In \emph{Conference on Uncertainty in Artificial Intelligence (UAI)}, 2020.

\bibitem[Wilson et~al.(2018)Wilson, Hutter, and Deisenroth]{wilson2018maximizing}
James Wilson, Frank Hutter, and Marc Deisenroth.
\newblock Maximizing acquisition functions for bayesian optimization.
\newblock \emph{Advances in neural information processing systems}, 31, 2018.

\bibitem[Wistuba et~al.(2015)Wistuba, Schilling, and Schmidt-Thieme]{Wistuba2015SMHT}
Martin Wistuba, Nicolas Schilling, and Lars Schmidt-Thieme.
\newblock Sequential model-free hyperparameter tuning.
\newblock In \emph{2015 IEEE International Conference on Data Mining}, pages 1033--1038, 2015.

\bibitem[Ying et~al.(2019)Ying, Klein, Christiansen, Real, Murphy, and Hutter]{nasbench101}
Chris Ying, Aaron Klein, Eric Christiansen, Esteban Real, Kevin Murphy, and Frank Hutter.
\newblock Nas-bench-101: Towards reproducible neural architecture search.
\newblock In \emph{International Conference on Machine Learning (ICML)}, 2019.

\bibitem[Yogatama and Mann(2014)]{Yogatama2014EfficientTL}
Dani Yogatama and Gideon~S. Mann.
\newblock Efficient transfer learning method for automatic hyperparameter tuning.
\newblock In \emph{International Conference on Artificial Intelligence and Statistics (AISTATS)}, 2014.

\bibitem[Zhao et~al.(2016)Zhao, Adel, Gordon, and Amos]{zhaoa16CollapsedVarInf}
Han Zhao, Tameem Adel, Geoff Gordon, and Brandon Amos.
\newblock Collapsed variational inference for sum-product networks.
\newblock In \emph{International Conference on Machine Learning (ICML)}, volume~48, pages 1310--1318, 2016.

\end{thebibliography}
\bibliographystyle{plainnat}

\newpage

\appendix

\section{Motivation \& Further Related Work} \label{app:motivation}
\subsection{Real-World Example}
As highlighted by~\citet{LindauerPosition24}, one crucial limitation of AutoML systems is the lack of incorporation of humans in the AutoML process. One crucial aspect of AutoML is HPO and NAS, where recent works aim to incorporate human knowledge into the optimization process. Reflecting user knowledge accurately is crucial for interactive HPO methods to fully benefit from human knowledge and improve trustworthiness. Existing weighting scheme-based methods like $\pi$BO and BOPrO fail to reflect user priors accurately in their selection policy, as seen in Fig. \ref{fig:motivation} (a). Here, we show a 1d-example of a Branin function with an optimum around $x=0.5$. The user prior (in red) is placed at $x=0.3$. Both $\pi$BO and BOPrO fail to select the next configuration from the high-density region of the prior; thus, the user prior is not incorporated in the selection process as a user would expect. We followed the recommendation of~\citet{hvarfner2022pibo} and set $\beta=\frac{T}{10}$ where we ran $\pi$BO for $T=10$ iterations. The reason for this behavior is the fact that both $\pi$BO and BOPrO reshape the curvature of the acquisition function either directly ($\pi$BO) or indirectly via the surrogate (BOPrO). The objective function's curvature, the acquisition function's curvature, and how they behave when weighed against each other are unknown and not intuitively visualizable to the user (due to high dimensionality). Thus, it is non-trivial for users to define a prior that is strong enough to guarantee that $\pi$BO and BOPrO sample at desired regions but also weak enough to not fully govern the maximum of the acquisition function in early iterations. The latter allows $\pi$BO and BOPrO to leverage knowledge encoded in the surrogate model \textit{and} the user knowledge at early iterations. Our method, IBO-HPC, solves this issue, which we demonstrate based on a real-world example (see below). This is achieved by circumventing the need of users to reshape an unknown acquisition function using a prior. Instead, users can provide their beliefs directly by conditioning the surrogate on their beliefs. Since the search space is static once defined and the surrogate model treats each dimension of the search space as a random variable, it is intuitive for users to define a prior s.t. the selection policy is guided to exactly the location where the prior was defined. In fact, a user prior defined over a certain hyperparameter can be interpreted as shifting the joint distribution over hyperparameters and evaluation scores (represented by our surrogate model) along the dimension corresponding to the hyperparameter the prior is defined over.

\textbf{Details on Fig. \ref{fig:motivation}.} To demonstrate that IBO-HPC reflects user knowledge more accurately than $\pi$BO and BOPrO, we ran $\pi$BO, BOPrO -- both of which leverage a weighting scheme to incorporate user priors --, and IBO-HPC for $T=100$ iterations on the CIFAR-10 task of the JAHS benchmark~\citep{jahsbench201}. Following \citet{hvarfner2022pibo}, we set the decay parameter of $\pi$BO to $10$. We specified a Gaussian prior distribution with $\mu=1$ and $\sigma=0.3$ (Fig. \ref{fig:motivation} (b), purple) over the hyperparameter \textsc{Resolution} ($R$) that controls the down-/up-sampling rate of an image fed into a neural network. The rest of the hyperparameters for this specific task (i.e. the network architecture and all other hyperparameters;  see App. D for details) were optimized by $\pi$BO, BOPrO and IBO-HPC without any user knowledge. All methods received the same user prior ($\pi$BO and BOPrO from the beginning of the optimization; IBO-HPC after 5 iterations). From the iteration the user prior was provided on, we then considered the values chosen for \textsc{Resolution} by $\pi$BO, BOPrO, and IBO-HPC for the next 20 iterations and estimated a density of selected values for $R$ (see Fig. \ref{fig:motivation} (b)). We chose 20 as the horizon under consideration because for higher $\beta$, the prior is weighted down later in $\pi$BO (see \citep{hvarfner2022pibo}, Alg. 1) and BOPrO (see \citep{DBLP:conf/pkdd/SouzaNOOLH21} Eq. 4). In the JAHS setup with $T=100$ and $\beta=10$, the prior is weighted down after the 10th iteration in $\pi$BO and BOPrO. In the 20th iteration, $\pi$BO and BOPrO exponentially weigh down the prior with exponent $0.5$. The density value of the mode of our prior is then $1.26^{0.5} \approx 1.12$. For IBO-HPC, we chose the decay $\gamma=0.995$; hence, after 20 iterations, we get $1.26 \cdot \gamma^{20} \approx 1.14$ for the mode of the prior. Thus, we weigh down the prior by approximately the same factor in $\pi$BO, BOPro, and IBO-HPC, ensuring a fair comparison. We obtained that neither the choices for $R$ by $\pi$BO (green dashed line) nor the choices of BOPrO (red dashed line) reflect the user prior as specified. While $\pi$BO's choices of \textsc{Resolution} are biased towards smaller values, BOPrO does not reflect the user's uncertainty well in its choices of \textsc{Resolution}. In contrast, IBO-HPC (blue solid line) precisely reflects the user prior as specified (up to random variations due to sampling).

\begin{figure}[h]
     \centering
     \begin{subfigure}[c]{0.48\textwidth}
         \centering
         \includegraphics[width=\textwidth]{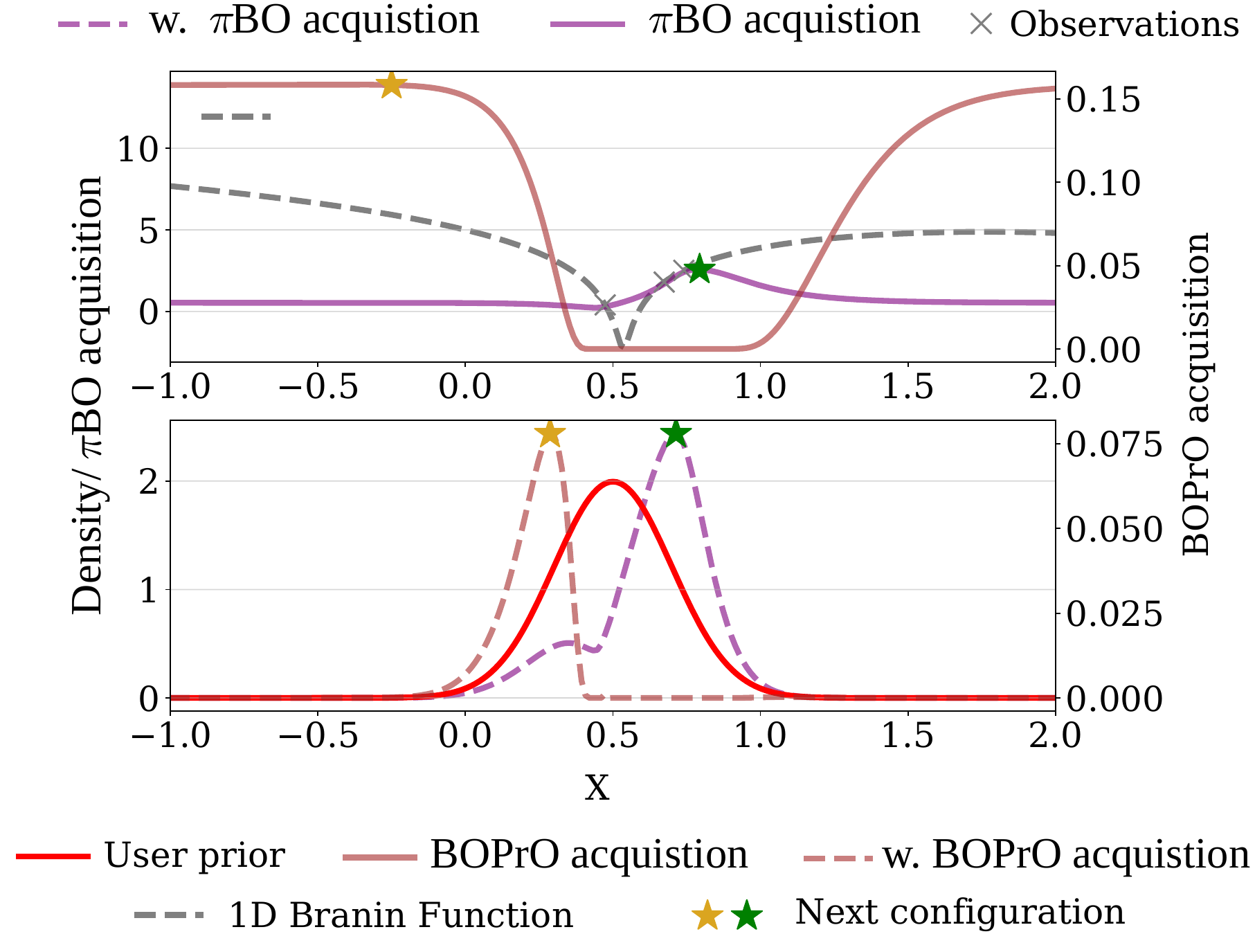}
         \caption{}
     \end{subfigure}
     \hfill
     \begin{subfigure}[c]{0.5\textwidth}
         \centering
         \includegraphics[scale=0.2]{images/interaction_distribution.pdf}
         \caption{}
     \end{subfigure}
    \caption{\textbf{IBO-HPC reflects user priors as specified.} In contrast to other weighting scheme based methods like $\pi$BO and BOPrO, IBO-HPC reflects the user prior as specified in its selection policy.}
    \label{fig:motivation}
\end{figure}

\subsection{Related Work: Hyperparameter Transfer Learning and Benchmarks}
Hyperparameter Transfer Learning (HTL) uses information about former HPO runs, usually stored in logs, to increase the efficiency of subsequent optimization runs of similar HPO tasks. Prominent approaches perform HTL by projecting objective responses of all runs to a common response surface or by pruning the search space based on previous tasks~\citep{Yogatama2014EfficientTL, Wistuba2015SMHT, Perrone2018ScalableHPO, vanschoren2018metalearning, Salinas2020AQA, Horvath2021HTLAdaptive}. 

Since HPO is costly and empirical results often depend on the exact definition of tasks, we use benchmarks for our empirical evaluation. Benchmarks foster the development, reproducibility, and fair comparison of HPO algorithms by defining a search space over hyperparameters and training all candidates to provide quantities like validation and test accuracy. 
Relevant examples covering hyperparameter optimization and neural architecture search are HPO-B~\citep{arango2021hpoblargescalereproduciblebenchmark}, NAS-Bench-101/201~\citep{nasbench101, nasbench201} and JAHS~\citep{jahsbench201}.

\section{Details on User Knowledge}
\label{app:user_knowledge}
This section briefly discusses the theoretical details about the set of the possible relevant (user) knowledge $\bm{\mathcal{K}}$. For a discussion on the user knowledge used in our experiments, refer to App. \ref{app:interaction_definition}.

In general, $\bm{\mathcal{K}}$ refers to a set of possible objects a user can provide to guide the optimization process. In our case, all elements in $\bm{\mathcal{K}}$ are assumed to be either in the form of user priors $q(\hat{\mathcal{H}})$ or assignments of hyperparameters to a certain value, i.e., $\hat{\mathcal{H}} = \hat{\bm{\theta}}$. Here, $\hat{\mathcal{H}}$ refers to a subset of hyperparameters that define the search space. However, other forms of user knowledge are possible. For example,  users could also specify believes about possible correlations between hyperparameters or between hyperparameters and the evaluation score. We decided not to restrict $\bm{\mathcal{K}}$ for our definition of feedback-adhering policies to generalize this notion to a broad set of types of user knowledge.

\section{Probabilistic Circuits}\label{app:PCs}
Since probabilistic circuits (PCs) are a key component of our method, we provide more details on these models in the following. Let us first start with a rigorous definition of PCs.

\begin{defin}
    A probabilistic cricuit (PC) is a computational graph encoding a distribution over a set of random variables $\mathbf{X}$. It is defined as a tuple $(\graph, \phi)$ where $\graph = (V, E)$ is a rooted, directed acyclic graph and $\phi: V \rightarrow 2^{\mathbf{X}}$ is the \textit{scope} function assigning a subset of random variables to each node in $\graph$.
    For each internal node $\Node$ of $\graph$, the scope is defined as the union of scopes of its children, i.e. $\phi(\Node) = \cup_{\Node' \in \ch{\Node}}$. Each leaf node $\Leaf$ computes a distribution/density over its scope $\phi(\Leaf)$. All internal nodes of $\graph$ are either a sum node $\SumNode$ or a product node $\ProductNode$ where each sum node computes a convex combination of its children, i.e.,
    $\SumNode = \sum_{\Node \in \ch{\SumNode}} w_{\SumNode, \Node}\Node$, and each product computes a product of its
    children, i.e., $\ProductNode = \prod_{\Node \in \ch{\ProductNode}}\Node$. 
\end{defin}

With this definition at hand, we describe the tractable key operations of PCs relevant to our method in more detail.

\textbf{Inference.}
Inference in PCs is a bottom-up procedure.
To compute the probability of given evidence $\mathbf{X} = \mathbf{x}$, the densities of the leaf nodes are evaluated first. This yields a density value for each leaf. The leaf densities are then propagated bottom-up by computing all product/sum nodes.
Eventually, the root node holds the probability/density of $\mathbf{x}$. 
Note that typically, multiple leaf nodes correspond to the same random variable. Thus, if the children of a sum node have the same scope, we can interpret sum nodes as mixture models. Conversely, if the children of a product node have \textit{non}-overlapping scopes, a product node can be interpreted as a product distribution of two (independent) random variables. We call these two properties smoothness and decomposability. 
More formally, \textit{smoothness} means that for each sum node $\SumNode \in V$ it holds that $\phi(\Node) = \phi(\Node')$ for $\Node, \Node' \in \ch{\SumNode}$. \textit{Decomposability} means that for each product node $\ProductNode \in V$ it holds that $\phi(\Node) \cap \phi(\Node') = \emptyset$ for $\Node, \Node' \in \ch{\ProductNode}$, $\Node \neq \Node'$.
Hence, PCs can be interpreted as hierarchical mixture models.

\textbf{Marginalization.} Decomposability implies that marginalization is tractable in PCs and can be done in linear time of the circuit size. This is because integrals that can be rewritten by nesting single-dimensional integrals can be computed only in terms of leaf integrals, which are assumed to be tractable as they follow certain distributions (e.g., Gaussian). Computing such nested integrals only in terms of leaf integrals is possible because single-dimensional integrals commute with the sum operation and affect only a single child of product nodes. For more details on the computational implications of decomposability, refer to \citep{PeharzTPD15}.

Practically, there are two ways to marginalize certain variables from the scope of a PC. One approach is structure-preserving, and marginalization is achieved by setting all leaves corresponding to the set of random variables that are supposed to be marginalized to $1$. The second approach constructs a new PC representing the marginal distribution, i.e. the structure of the PC is changed. The second approach is beneficial if samples should be drawn from the marginalized PC because the sampling procedure remains the same, i.e. the PC is adopted to obtain the marginal distribution, not vice versa.

\textbf{Conditioning.}
Computing a conditional distribution $p(\mathbf{X}_1 | \mathbf{X}_2) = \frac{p(\mathbf{X})}{\int_{\mathbf{X}_2} p(\mathbf{X})}$ where $\mathbf{X}_1 \cup \mathbf{X}_2 = \mathbf{X}$ and $\mathbf{X}_1 \cap \mathbf{X}_2 = \emptyset$ is achieved by combining marginalization (denominator) and inference (numerator). Since inference is tractable for PCs in general and marginalization is tractable for decomposable PCs, conditioning is also tractable. 

\textbf{Sampling.}
Sampling in PCs is a top-down procedure and recursively samples a sub-tree, starting at the root. Each sum node $\SumNode$ holds a parameter vector $\mathbf{w}$ s.t. $\sum_{i=0}^{|\ch{\SumNode}|} \mathbf{w}_i = 1$. Based on the distribution induced by $\mathbf{w}$, one of the children of $\SumNode$ is sampled as a sub-tree. By decomposability, the scope of the children of a product node is non-overlapping; thus, sampling from a product node corresponds to sampling from all its child nodes. If a leaf node is reached, a sample is obtained from the distribution at that leaf.

\textbf{Learning.}
Learning PCs consists of two steps: Identify the structure of the PC and learn the parameters of the PC. A common approach to learning both the structure and parameters is LearnSPN~\citep{gens2013learnSPN}. We employ LearnSPN to learn the PC after obtaining new data. The basic idea of LearnSPN is to split the data by alternating clustering (i.e., split the data along the sample dimension) and independence tests (i.e., split the data along the features dimension). In other words, the data matrix is split by rows (samples) and columns (features). Usually, rows are clustered when the independence test fails in splitting the features. Clusters correspond to sum nodes in the learned PC, while product nodes correspond to successfully passed independence tests (assessing that two subsets of features are statistically independent). The parameters (i.e., weights of sum nodes) are set proportional to the cluster sizes of clusters represented by the child nodes of a sum node. Leaf parameters are commonly defined via maximum likelihood estimation.

\section{Proofs}\label{app:proofs}
In this section we provide the proof of Proposition 1 of the main paper.

\subsection{IBO-HPC's Policy is Feedback-Adhering Interactive}
\label{app:PolicyFA}
\textbf{Proposition 1 (IBO-HPC Policy is feedback-adhering interactive). }
    Given a search space $\bm{\Theta}$ over hyperparameters $\bm{\mathcal{H}}$, an PC $s$, user knowledge $\mathcal{K} \in \bm{\mathcal{K}}$ in form of a prior $q$ over $\hat{\bm{\mathcal{H}}} \subset \bm{\mathcal{H}}$ s.t. $\int_{\bm{\mathcal{H}} \setminus \hat{\bm{\mathcal{H}}}} s(\bm{\mathcal{H}} | F=f^*) \neq q(\hat{\bm{\mathcal{H}}})$, the selection policy of IBO-HPC is feedback-adhering interactive.

\begin{proof}
    We have to show that the policy of IBO-HPC is feedback-adhering, i.e. it conforms with Def. 3: The distribution over the configuration space used to obtain new configurations is different if user knowledge is provided from the distribution used if no user knowledge is provided (policy is efficacious) and the provided user knowledge is represented during configuration selection as specified (feedback-adhering). 

    We first show that the selection policy of IBO-HPC is efficacious.

    \textbf{IBO-HPC selection policy is efficacious.}
    Since the decay mechanism allowing IBO-HPC to recover from misleading knowledge can be treated as a constant in each iteration, it is enough if $s(\bm{\mathcal{H}} \setminus \hat{\bm{\mathcal{H}}} | \hat{\bm{\mathcal{H}}}=\hat{\bm{\theta}}, F=f^*) \cdot q(\hat{\bm{\mathcal{H}}}=\hat{\bm{\theta}}) \neq  s(\bm{\mathcal{H}} \setminus \hat{\bm{\mathcal{H}}} | \hat{\bm{\mathcal{H}}}=\emptyset, F=f^*) \cdot q(\hat{\bm{\mathcal{H}}}=\emptyset)$ holds for any surrogate $s$ representing a joint distribution over search space $\bm{\mathcal{H}}$ and prior $q$ over $\hat{\bm{\mathcal{H}}} \subset \bm{\mathcal{H}}$ to make the policy efficacious. Note that we assume that $\mathcal{K}$ is given in form of a prior $q(\hat{\bm{\mathcal{H}}})$ over $\hat{\bm{\mathcal{H}}}$ as before. Since $\emptyset \not \in \hat{\bm{\mathcal{H}}}$ is assumed, our policy ignores any prior if no user knowledge is provided. Thus, in this case, the policy samples from the distribution
    \begin{equation}
        s(\bm{\mathcal{H}} | F=f^*) = s(\bm{\mathcal{H}} \setminus \hat{\bm{\mathcal{H}}} | \hat{\bm{\mathcal{H}}}, F=f^*) \cdot \int_{\bm{\mathcal{H}} \setminus \hat{\bm{\mathcal{H}}}} s(\bm{\mathcal{H}} | F=f^*)
    \end{equation}
    Since $s(\bm{\mathcal{H}} \setminus \hat{\bm{\mathcal{H}}} | \hat{\bm{\mathcal{H}}}, F=f^*)$ is the same, regardless of whether user knowledge is given or not, user knowledge will lead to a different distribution if $\int_{\bm{\mathcal{H}} \setminus \hat{\bm{\mathcal{H}}}} s(\bm{\mathcal{H}} | F=f^*) \neq q(\hat{\bm{\mathcal{H}}})$ holds. Since Prop. 1 demands that this is the case, our policy is efficacious according to Def. 2.

    We can now proceed and show feedback adherence of the IBO-HPC selection policy.

    \textbf{IBO-HPC selection policy is feedback-adhering.}
    The proof that our policy is feedback-adhering directly follows by design: If a user prior $q(\hat{\bm{\mathcal{H}}})$ is given, Eq. 3 is approximated by sampling $N$ conditions $\bm{\theta}'_{1, \dots, N} \sim q(\hat{\bm{\mathcal{H}}})$ and computing $N$ conditionals $s(\bm{\mathcal{H}} \setminus \hat{\bm{\mathcal{H}}} | \hat{\bm{\mathcal{H}}}=\bm{\theta}'_1, F = f^*), \dots, s(\bm{\mathcal{H}} \setminus \hat{\bm{\mathcal{H}}} | \hat{\bm{\mathcal{H}}}=\bm{\theta}'_N, F = f^*)$. We can approximate $q(\hat{\bm{\mathcal{H}}})$ arbitrarily close with $N \rightarrow \infty$. 
    To select the next configuration, we sample $B$ configurations from each of the $N$ conditionals and select the configuration maximizing $s(\bm{\mathcal{H}} | F = f^*)$ for each conditional. This leaves us with $N$ candidates. Note that at this point, the hyperparameters $\hat{\bm{\mathcal{H}}}$ still follow $q(\hat{\bm{\mathcal{H}}})$ with $N \rightarrow \infty$ as the conditions of $s(\bm{\mathcal{H}} \setminus \hat{\bm{\mathcal{H}}} | \hat{\bm{\mathcal{H}}}=\bm{\theta}'_1, F = f^*), \dots, s(\bm{\mathcal{H}} \setminus \hat{\bm{\mathcal{H}}} | \hat{\bm{\mathcal{H}}}=\bm{\theta}'_N, F = f^*)$ remain fixed and only hyperparameters of the set $\bm{\mathcal{H}} \setminus \hat{\bm{\mathcal{H}}}$ can vary/are sampled. Thus, maximizing the likelihood $s(\bm{\mathcal{H}} | F = f^*)$ is only done w.r.t. hyperparameters in $\bm{\mathcal{H}} \setminus \hat{\bm{\mathcal{H}}}$. This implies that sampling hyperparameters $\bm{\mathcal{H}} \setminus \hat{\bm{\mathcal{H}}}$ can be biased while sampling from $q(\hat{\bm{\mathcal{H}}})$ is unaffected because the conditions $\bm{\theta}'_{1, \dots, N}$ are sampled first in i.i.d. fashion. Our policy selects the configuration evaluated next by uniformly sampling from the remaining $N$ candidates. 
    Since uniformly sampling $L$ times from a set of $N$ samples from a distribution $q$ results in approximating $q$ arbitrarily close for $N \rightarrow \infty$ and $L \rightarrow \infty$, we conclude that user priors are exactly reflected as specified in our selection policy. This concludes our proof that the selection policy of IBO-HPC is efficacious and feedback-adhering.
\end{proof}

\subsection{IBO-HPC minimizes Simple Regret}
\label{app:Regret}
We introduce the following proposition:

\begin{prop}[IBO-HPC minimizes Simple Regret]
    IBO-HPC minimizes simple regret, which is defined as $r = f(\bm{\theta}) - f(\bm{\theta}^*)$ for a hyperparameter configuration $\bm{\theta} \in \bm{\Theta}$ and global optimum $\bm{\theta}^*$. 
\end{prop}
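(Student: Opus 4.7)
The plan is to establish simple-regret convergence by a two-step argument: (i) show that the selection distribution used by IBO-HPC always assigns strictly positive probability to every neighborhood of the global optimum $\bm{\theta}^*$, and (ii) invoke a Borel--Cantelli style argument to conclude that $\bm{\theta}^*$ is approached arbitrarily closely, almost surely, as the number of iterations grows. Since the incumbent $\bm{\theta}_t^\star = \arg\min_{i \le t} f(\bm{\theta}_i)$ can only improve, combining (i) and (ii) with continuity of $f$ gives $f(\bm{\theta}_t^\star) \to f(\bm{\theta}^*)$, which is exactly $r \to 0$.

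First, I would analyze the distribution that generates the next candidate in Algorithm \ref{algo:IBO-HPC}. Per Eq.~\eqref{eq:prior_conditional}, this distribution is a convex combination $\gamma^t \rho \cdot s(\bm{\mathcal{H}}' \mid \hat{\bm{\mathcal{H}}}, F{=}f^\ast)\,q(\hat{\bm{\mathcal{H}}}) + (1-\gamma^t\rho)\, s(\bm{\mathcal{H}} \mid F{=}f^\ast)$, where the second mixture component is unaffected by user knowledge. Because the MSPN surrogate uses smooth and decomposable leaves with strictly positive density over their support (piecewise polynomials for bounded domains, Gaussians for unbounded ones), $s(\bm{\mathcal{H}}, F)$ is strictly positive on $\bm{\Theta}\times\mathbb{R}$ and therefore so is the conditional $s(\bm{\mathcal{H}} \mid F{=}f^\ast)$. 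Since $1-\gamma^t\rho > 0$ for all $t$ (and in fact $\gamma^t\rho \to 0$ as the decay erodes any user bias), the effective sampling density assigns strictly positive mass $p_t(B) > 0$ to every open $B \subset \bm{\Theta}$ at every iteration, regardless of whether misleading or beneficial feedback was provided.

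Next, I would fix an arbitrary $\epsilon$-ball $B_\epsilon(\bm{\theta}^*)$ around the global optimum and bound $p_t(B_\epsilon(\bm{\theta}^*))$ from below by a positive constant $p_\epsilon > 0$ using the leaves' minimum-density property and the compactness of $\bm{\Theta}$ (or a suitable tail bound for unbounded components). Since the events ``$\bm{\theta}_t \in B_\epsilon(\bm{\theta}^*)$'' are independent conditional on the surrogate state and satisfy $\sum_t p_t(B_\epsilon(\bm{\theta}^*)) = \infty$, the second Borel--Cantelli lemma implies that $B_\epsilon(\bm{\theta}^*)$ is visited infinitely often almost surely. Assuming continuity of $f$ (Lipschitz continuity is used in Prop.~3 and suffices here), each such visit yields $|f(\bm{\theta}_t) - f(\bm{\theta}^*)| \le L\epsilon$, so $f(\bm{\theta}_t^\star) - f(\bm{\theta}^*) \le L\epsilon$ eventually. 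Taking $\epsilon \to 0$ yields $r_t \to 0$ almost surely, which is the desired simple-regret minimization.

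The main obstacle is ensuring that the lower bound $p_\epsilon$ on the probability of visiting $B_\epsilon(\bm{\theta}^*)$ does not collapse as the surrogate is retrained on increasingly concentrated data. Concretely, if Gaussian leaf variances shrink to zero around well-performing configurations, the conditional $s(\bm{\mathcal{H}} \mid F{=}f^\ast)$ could place vanishing mass on a neighborhood of $\bm{\theta}^*$ that is far from previously sampled configurations. This is precisely the pathology highlighted in the paper's Limitations. The clean fix is a mild assumption, either a minimum-variance constraint on leaves or that leaf distributions cover $\bm{\Theta}$, which uniformly lower-bounds $p_\epsilon$ across iterations. Under this assumption the Borel--Cantelli step goes through, and the proof is complete; details are deferred to App.~\ref{app:Convergence}.
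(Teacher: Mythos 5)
Your proposal rests on the same core idea as the paper's proof: under a positivity assumption on the surrogate PC (positive sum weights and positive leaf densities), the conditional sampling density $s(\bm{\mathcal{H}} \mid F = f^*)$ is strictly positive on $\bm{\Theta}$ at every iteration, so the optimizer eventually finds the optimum. However, your execution is noticeably more careful than the paper's, which simply asserts that ``the probability of sampling the global optimum in one of the iterations gets $1$'' -- a claim that is vacuous for continuous search spaces, where any exact point has measure zero. You repair this correctly by working with $\epsilon$-balls $B_\epsilon(\bm{\theta}^*)$, invoking a Borel--Cantelli argument to get infinitely many visits, and using continuity of $f$ to pass from proximity in $\bm{\Theta}$ to proximity in objective value before letting $\epsilon \to 0$. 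You also surface two genuine gaps the paper glosses over: (i) positivity at each fixed iteration is not enough -- one needs $\sum_t p_t(B_\epsilon(\bm{\theta}^*)) = \infty$, which requires a uniform lower bound $p_\epsilon > 0$ that could fail if retrained leaf variances collapse (exactly the pathology the paper itself admits in its Limitations), and (ii) the candidate draws are not independent across iterations since the surrogate is refit on past data, so the vanilla second Borel--Cantelli lemma does not directly apply; your phrasing ``independent conditional on the surrogate state'' should be tightened to an appeal to the conditional (L\'evy) extension of Borel--Cantelli, which only needs divergence of the conditional probabilities. With the minimum-variance (or covering-leaf) assumption you propose, both issues are resolved and your argument is sound -- arguably sounder than the paper's own two-line version, at the cost of an explicit extra hypothesis the paper leaves implicit.
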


\begin{proof}
    Assume that $w > 0$ holds for each weight $w$ of a PC $s$, that each leaf node of $s$ is a distribution $p$ s.t. $p(x) > 0$ for some $x$ and assume $f$ is not noisy. Then, the PC fulfills the positivity assumption, i.e. $s(\bm{\mathcal{H}} = \bm{\theta}, F=f(\bm{\theta})) > 0$. It follows that $s(\bm{\mathcal{H}} = \bm{\theta} | F=f^*) > 0$ for any $f^*$ and any $\bm{\theta} \in \bm{\Theta}$. Thus, with iterations $T \rightarrow \infty$, the probability of sampling the global optimum $\bm{\theta}^*$ in one of the iterations gets $1$, and thus $r = f(\bm{\theta}^*) - f(\bm{\theta}^*) = 0$.
\end{proof}

\subsection{Convergence Behavior of IBO-HPC}
\label{app:Convergence}
In this section, we analyze the convergence behavior of IBO-HPC at each iteration. Therefore, let us state a well-known result of the PC literature on which our analysis is based.
\begin{defin}
    \label{def:induced_pc}
        Induced Trees~\citep{zhaoa16CollapsedVarInf}. Given a complete and decomposable PC $s$ over $\bm{\mathcal{H}} = \{H_1, \dots, H_n\}$, $\mathcal{T} = (\mathcal{T}_V, \mathcal{T}_E)$ is called an induced tree PC from $s$ if
    \begin{enumerate}
        \item $\Node \in \mathcal{T}_V$ where $\Node$ is the root of $s$.
        \item for all sum nodes $\SumNode \in \mathcal{T}_V$, exactly one child of $\SumNode$ in $s$ is in $\mathcal{T}_V$, and the corresponding edge is in $\mathcal{T}_E$.
        \item for all product node $\ProductNode \in \mathcal{T}_V$, all children of $\ProductNode$ in $s$ are in $\mathcal{T}_V$, and the corresponding edges in $\mathcal{T}_E$.
    \end{enumerate}
\end{defin}

We can use Def. \ref{def:induced_pc} to represent decomposable and complete PCs as mixtures~\citep{zhaoa16CollapsedVarInf}.

\begin{prop}[Induced Tree Representation]
    \label{prop:induced_tree}
        Let $\tau_s$ be the total number of induced trees in $s$. Then the output at the root of $s$ can be written as $\sum_{t=1}^{\tau_s} \prod_{(k, j) \in \mathcal{T}_{t E}} w_{k j} \prod_{i=1}^n p_t(H_i = \bm{\theta}_i)$, where $\mathcal{T}_t$ is the $t$-th unique induced tree of $s$ and $p_t(H_i)$ is a univariate distribution over $H_i$ in $\mathcal{T}_t$ as a leaf node.
\end{prop}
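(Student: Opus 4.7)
The plan is to prove the claim by structural induction on the DAG of $s$, processing nodes in reverse topological order so that, at each internal node $\Node$, the children already admit the stated induced-tree expansion. The statement to be maintained as an induction hypothesis for a node $\Node$ with scope $\phi(\Node)$ is that the value computed at $\Node$ can be written as $\sum_{t=1}^{\tau_{\Node}} \bigl(\prod_{(k,j)\in \mathcal{T}_{t E}} w_{kj}\bigr) \prod_{H_i \in \phi(\Node)} p_t(H_i = \bm{\theta}_i)$, where $\mathcal{T}_t$ ranges over induced trees of the sub-circuit rooted at $\Node$. The target statement then follows by specialising $\Node$ to the root of $s$, whose scope is $\{H_1,\dots,H_n\}$.

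First I would establish the base case: if $\Leaf$ is a leaf with scope $\phi(\Leaf) = \{H_i\}$, then $s$ restricted to $\Leaf$ has a single induced tree consisting of $\Leaf$ alone, no sum edges, and the output is exactly $p_1(H_i = \bm{\theta}_i)$, matching the claimed form with $\tau_{\Leaf}=1$ and an empty edge product equal to one. Next I would handle the sum case $\SumNode = \sum_{\Node \in \ch{\SumNode}} w_{\SumNode,\Node}\Node$. By smoothness, every child has scope equal to $\phi(\SumNode)$, so plugging the induction hypothesis into each $\Node$, the output of $\SumNode$ is a weighted sum of mixture expansions over the same variable set. Because Definition \ref{def:induced_pc} requires that exactly one child of each sum node is retained, the induced trees of the sub-circuit at $\SumNode$ are in bijection with pairs $(\Node, \mathcal{T})$ of a chosen child $\Node \in \ch{\SumNode}$ together with an induced tree $\mathcal{T}$ of that child, and the edge $(\SumNode,\Node)$ contributes the factor $w_{\SumNode,\Node}$ to the product over $\mathcal{T}_{E}$. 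Relabeling indices shows the sum matches the desired form.

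The product case is where the key combinatorial content lives and will be the main obstacle. For $\ProductNode = \prod_{\Node \in \ch{\ProductNode}} \Node$, decomposability guarantees that $\{\phi(\Node)\}_{\Node \in \ch{\ProductNode}}$ partitions $\phi(\ProductNode)$. Expanding each child by the induction hypothesis yields a product of sums; applying distributivity turns this into a single sum indexed by tuples $(\mathcal{T}^{(\Node)})_{\Node \in \ch{\ProductNode}}$ consisting of one induced tree per child. By Definition \ref{def:induced_pc}, such tuples are precisely the induced trees of the sub-circuit rooted at $\ProductNode$, since all children of a product node must appear. The weight products combine into $\prod_{(k,j)\in \mathcal{T}_{t E}} w_{kj}$ because the edge sets are disjoint (child subcircuits are edge-disjoint in terms of their sum-edge contributions), and the leaf products combine into $\prod_{H_i \in \phi(\ProductNode)} p_t(H_i = \bm{\theta}_i)$ with each variable appearing exactly once thanks to the disjoint-scope partition.

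The subtle point I expect to have to argue carefully is exactly this last appearance-exactly-once property: one must verify that no $H_i \in \phi(\ProductNode)$ is duplicated across the children's leaf products (ruled out by decomposability) and none is missing (covered because scopes partition $\phi(\ProductNode)$ and the univariate leaves of any induced sub-tree cover its root's scope, itself an inductive sub-claim that follows by the same structural induction). Once the bijection between induced trees of the whole PC and chains of child-choices at sum nodes plus all-children selection at product nodes is pinned down, together with the disjointness of weight edges and leaf scopes, the claimed closed form for the root follows immediately by instantiating the induction at the root node of $s$, giving the mixture representation $\sum_{t=1}^{\tau_s} \prod_{(k,j)\in \mathcal{T}_{t E}} w_{kj} \prod_{i=1}^n p_t(H_i = \bm{\theta}_i)$.
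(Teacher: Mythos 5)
Your structural induction is correct, and it is the standard argument for this decomposition. Note, however, that the paper does not actually prove this proposition: it imports it as a known result from the SPN literature (Zhao et al., 2016, cited at Def.~\ref{def:induced_pc}), so there is no in-paper proof to compare against --- your write-up supplies the argument the paper delegates to a citation. The three cases are handled the way the cited reference does it: leaves are singleton induced trees; smoothness makes the sum-node case a disjoint union over (child, subtree) pairs with the edge weight $w_{\SumNode,\Node}$ absorbed into the edge product; decomposability plus distributivity turns the product-node case into a sum over tuples of one induced tree per child, with no variable duplicated (disjoint scopes) and none missing (scopes partition $\phi(\ProductNode)$). You also correctly isolate the only sub-claim that needs its own inductive pass, namely that the leaves of an induced subtree cover the scope of its root exactly once. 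Two minor points worth a sentence each if you write this out in full: (i) since $\graph$ is a DAG rather than a tree, an ``induced tree'' selected by Def.~\ref{def:induced_pc} need not literally be a tree when nodes are shared across branches, but your induction is on node values in reverse topological order, so sharing is harmless --- children of a product node cannot share descendants because all scopes are nonempty and must be disjoint; and (ii) the product over leaves being indexed by $i=1,\dots,n$ with univariate factors implicitly assumes all leaves are univariate, which matches the proposition's statement and your base case but is slightly stronger than the general leaf definition in App.~\ref{app:PCs}.
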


With this, we are ready to analyze the convergence speed of IBO-HPC in each iteration. Assume a non-noisy differentiable $L$-Lipschitz continuous function $f: \mathbb{R}^d \rightarrow \mathbb{R}$ with global optimum $\bm{\theta}^* \in \mathbb{R}^d$ that is convex within a ball $B_r(\bm{\theta}^*) = \{\bm{\theta} \in \mathbb{R}^d : || \bm{\theta} - \bm{\theta}^* || < r\}$. Further, assume we have given a dataset $\mathcal{D} = \{(\bm{\theta}_1, y_1), \dots, (\bm{\theta}_n, y_n)\}$ where all $\bm{\theta}_i \in B_r$ and $y_i = f(\bm{\theta}_i)$ and a decomposable, complete PC $s$ over $\bm{\mathcal{H}} \cup \{F\}$ where the support of $\bm{\mathcal{H}} = B_r$ and the support of $F = \mathbb{R}$. Assume $s$ locally maximizes the likelihood over $\mathcal{D}$ and that all leaves are Gaussians. Note that LearnSPN yields decomposable and complete PCs that locally maximize the likelihood of the given data~\citep{gens2013learnSPN}.

We analyze the convergence properties of our algorithm by examining the expected improvement (EI) in each iteration. Therefore, denote the best score obtained until iteration $t$ as $y^*_t$ and its corresponding configuration as $\bm{\theta}^*_t$. For better readability, we write $s(\bm{\mathcal{H}} = \bm{\theta} | F=y_t^*)$ as $s(\bm{\theta} | y_t^*)$ from now on. Then, the expected improvement of IBO-HPC within $B_r$ is given by

\begin{align}
    \label{eq:EI_start}
    & \int_{\bm{\theta} \in B_r} s(\bm{\theta} | y_t^*) \cdot \mathbb{I}[f(\bm{\theta}) < y_t^*] \cdot f(\bm{\theta}) \\
    = & \int_{\bm{\theta}^*}^{\bm{\theta}_t^*} s(\bm{\theta} | y_t^*) \cdot f(\bm{\theta}).
\end{align}

Here, w.l.o.g. we assume that $\bm{\theta}_k^* < \bm{\theta}_{t k}^*$ for all dimensions $k \in \{1, \dots, d\}$ and call $\mathbb{I}$ the indicator function.
Using Prop. \ref{prop:induced_tree}, the fact that the first product of the induced tree representation of a PC $s$ acts as an edge selector, the fact that the conditional of a PC is a PC again, and the Gaussian leaf parameterization of $s$, we can write $s$ as a Gaussian Mixture, i.e., $s(\bm{\theta} | y_t^*) = \sum_{i=1}^{\tau_s} w_i \phi(\bm{\theta}; \bm{\mu}_i, \Sigma_i)$.
Here, $\phi$ is the density of the Gaussian distribution parameterized by mean $\bm{\mu}$ and covariance matrix $\Sigma$ and corresponds to the second product in the induced tree representation of $s$. Thus, Eq. \ref{eq:EI_start} can be rewritten as

\begin{equation}
    \sum_{i=1}^{\tau_s} w_i \int_{\bm{\theta}^*}^{\bm{\theta}_t^*} \phi(\bm{\theta}; \bm{\mu}_i, \Sigma_i) \cdot f(\bm{\theta}).
\end{equation}

Due to the $L$-Lipschitz assumption, $||f(\bm{\theta}) - f(\bm{\theta}') || \leq L \cdot ||\bm{\theta} - \bm{\theta}'||$ holds for all $\bm{\theta}, \bm{\theta}' \in B_r$. Hence, we can use a Taylor approximation and write $f(\bm{\theta}) \approx f(\bm{\theta}^*) + \nabla f(\bm{\theta}^*) \cdot ||\bm{\theta} - \bm{\theta}^*||$ which is upper bounded by $f(\bm{\theta}^*) + L||\bm{\theta} - \bm{\theta}^*||$. Then, we can write an upper bound of EI as

\begin{align}
    & \sum_{i=1}^{\tau_s} w_i \int_{\bm{\theta}^*}^{\bm{\theta}_t^*} \phi(\bm{\theta}; \bm{\mu}_i, \Sigma_i) \cdot (f(\bm{\theta}^*) + L||\bm{\theta} - \bm{\theta}^*||) \\
    = & \sum_{i=1}^{\tau_s} w_i \left ( \int_{\bm{\theta}^*}^{\bm{\theta}_t^*} \phi(\bm{\theta}; \bm{\mu}_i, \Sigma_i) \cdot f(\bm{\theta}^*) + \int_{\bm{\theta}^*}^{\bm{\theta}_t^*} \phi(\bm{\theta}; \bm{\mu}_i, \Sigma_i) \cdot L||\bm{\theta} - \bm{\theta}^*||) \right ) \\
    = & \sum_{i=1}^{\tau_s} w_i \cdot \Big( f(\bm{\theta}^*) \cdot \int_{\bm{\theta}^*}^{\bm{\theta}_t^*} \phi(\bm{\theta}; \bm{\mu}_i, \Sigma_i) + \mathbb{E}_{\phi_i}[g_{\bm{\theta}^*}(\bm{\theta})]\Big).
\end{align}

In the last step, we defined $g_{\bm{\theta}^*}(\bm{\theta}) := L||\bm{\theta} - \bm{\theta}^*||$. Note that we take the expectation w.r.t. the truncated normal distribution because we consider the interval $[\bm{\theta}^*, \bm{\theta}_t^*]$. Also note that $f(\bm{\theta}^*)$ is constant. Thus, we can omit it for the sake of convergence analysis. Since $g_{\bm{\theta}^*}$ is linear, we can use the linearity of the expectation and write

\begin{align}
    & \sum_{i=1}^{\tau_s} w_i \cdot \Big( \int_{\bm{\theta}^*}^{\bm{\theta}_t^*} \phi(\bm{\theta}; \bm{\mu}_i, \Sigma_i) + g_{\bm{\theta}^*}(\mathbb{E}_{\phi_i}[\bm{\theta}]) \Big) \\
    = & \sum_{i=1}^{\tau_s} w_i \cdot \Big( (\Phi(\bm{\theta}_t^*; \bm{\mu}_i, \Sigma_i) - \Phi(\bm{\theta}^*; \bm{\mu}_i, \Sigma_i)) + L||\mathbb{E}_{\phi_i}[\bm{\theta}] - \bm{\theta}^*|| \Big),
\end{align}

where $\Phi(\bm{\theta}; \bm{\mu}, \Sigma)$ is the cumulative distribution function of multivariate Gaussian.
Since the expectations $\mathbb{E}_{\phi_i}[\bm{\theta}]$ are taken over the truncated normal, they can be lower bounded by $\bm{\mu} + \alpha \cdot \text{diag}(\Sigma)$. Thus, we have to set a series of $\alpha_i$ where each $\alpha_i = \text{min}(\bm{\theta}_t^* - \bm{\mu}_i, \bm{\theta}^* - \bm{\mu}_i)$. Then, we can write

\begin{equation}
    \sum_{i=1}^{\tau_s} w_i \cdot \Big( (\Phi(\bm{\theta}_t^*; \bm{\mu}_i, \Sigma_i) - \Phi(\bm{\theta}^*; \bm{\mu}_i, \Sigma_i)) + L||(\bm{\mu}_i + \alpha_i \cdot \text{diag}(\Sigma_i)) - \bm{\theta}^*|| \Big).
\end{equation}

Setting $\epsilon_i = ||\bm{\mu}_i + \alpha_i \cdot \text{diag}(\Sigma_i) - \bm{\theta}^*||$ and splitting the sum yields

\begin{equation}
     \sum_{i=1}^{\tau_s} w_i \cdot \Phi(\bm{\theta}_t^*; \bm{\mu}_i, \Sigma_i) - \sum_{i=1}^{\tau_s} w_i \cdot \Phi(\bm{\theta}^*; \bm{\mu}_i, \Sigma_i) + \sum_{i=1}^{\tau_s} w_i L \epsilon_i.
\end{equation}

Using that the cumulative multivariate Gaussian $\Phi(\bm{\theta}_t^*; \bm{\mu}_i, \Sigma_i)$ can be lower bounded by $\prod_{j=1}^d \Phi(\bm{\theta}_{t i}^*; \bm{\mu}_{i j}, \Sigma_{i_{jj}})$, we can lower-bound the entire equation, giving us

\begin{equation}
         \sum_{i=1}^{\tau_s} w_i \cdot \prod_{j=1}^d \Phi(\bm{\theta}_{t j}^*; \bm{\mu}_{i j}, \Sigma_{i_{jj}}) - \sum_{i=1}^{\tau_s} w_i \cdot \prod_{j=1}^d \Phi(\bm{\theta}_{j}^*; \bm{\mu}_{i j}, \Sigma_{i_{jj}}) + \sum_{i=1}^{\tau_s} w_i L \epsilon_i.
\end{equation}

Since $\Phi(\frac{x - \mu}{\sigma}) = \frac{1}{2}\Big(1 + \text{erf}\big(\frac{x - \mu}{\sigma \sqrt{2}}\big)\Big)$ holds, we rewrite

\begin{align}
    & \sum_{i=1}^{\tau_s} w_i \cdot \prod_{j=1}^d \text{erf}\Big(\frac{\bm{\theta}_{t j}^* - \bm{\mu}_{i j}}{\Sigma_{i_{jj}} \sqrt{2}}\Big) - \sum_{i=1}^{\tau_s} w_i \cdot \prod_{j=1}^d \text{erf}\Big(\frac{\bm{\theta}_{j}^* - \bm{\mu}_{i j}}{\Sigma_{i_{jj}} \sqrt{2}}\Big) + \sum_{i=1}^{\tau_s} w_i L \epsilon_i \\
    = & \sum_{i=1}^{\tau_s} w_i \cdot \Big( \prod_{j=1}^d \text{erf}\Big(\frac{\bm{\theta}_{t j}^* - \bm{\mu}_{i j}}{\Sigma_{i_{jj}} \sqrt{2}}\Big) - \prod_{j=1}^d \text{erf}\Big(\frac{\bm{\theta}_{j}^* - \bm{\mu}_{i j}}{\Sigma_{i_{jj}} \sqrt{2}}\Big) + L \epsilon_i \Big). 
\end{align}

Note that we dropped constants and scaling by $\frac{1}{2}$ of the error function as it does not affect the overall result.


Intuitively spoken, the EI is lower bounded by the cumulative probability mass (given by error function erf) within the region defined by the largest discrepancy between minimal error w.r.t. to the observed data (i.e., bad convergence when $s$ overfits) and the maximal error w.r.t. $\bm{\theta}^*$ (i.e., $\mathcal{D}$ does not contain points close to the optimum), multiplied by a linear approximation of the objective $f$ between the best observed configuration $\bm{\theta}_t^*$ and $\bm{\theta}^*$.

Note that this result does not incorporate user knowledge. The analysis of the effect of user knowledge is straightforward. 
If helpful user knowledge is given, this can be seen as shifting at least one dimension $j$ of at least one mean vector $\bm{\mu}_{k}$ by some $\delta$ towards $\bm{\theta}^*$, i.e., $\bm{\mu}_{* k} = \bm{\mu}_k + (0, \dots, \delta, \dots, 0)$. 
Then, assuming all $\Sigma_i$ stay as above,

\begin{align*}
   & \sum_{i=1}^{\tau_s} w_i \cdot \Big( \prod_{j=1}^d \text{erf}\Big(\frac{\bm{\theta}_{t j}^* - \bm{\mu}_{i j}}{\Sigma_{i_{jj}} \sqrt{2}}\Big) - \prod_{j=1}^d \text{erf}\Big(\frac{\bm{\theta}_{j}^* - \bm{\mu}_{i j}}{\Sigma_{i_{jj}} \sqrt{2}}\Big) + L \epsilon_i \Big) \\
   \leq & \sum_{i=1, i \neq k}^{\tau_s} w_i \cdot \Big( \prod_{j=1}^d \text{erf}\Big(\frac{\bm{\theta}_{t j}^* - \bm{\mu}_{i j}}{\Sigma_{i_{jj}} \sqrt{2}}\Big) - \prod_{j=1}^d \text{erf}\Big(\frac{\bm{\theta}_{j}^* - \bm{\mu}_{i j}}{\Sigma_{i_{jj}} \sqrt{2}}\Big) + L \epsilon_i \Big) \\
   & + w_k \cdot \Big( \prod_{j=1}^d \text{erf}\Big(\frac{\bm{\theta}_{t j}^* - \bm{\mu}_{k j}}{\Sigma_{k_{jj}} \sqrt{2}}\Big) - \prod_{j=1}^d \text{erf}\Big(\frac{\bm{\theta}_{j}^* - \bm{\mu}_{k j}}{\Sigma_{k_{jj}} \sqrt{2}}\Big) + L \epsilon_k \Big). 
\end{align*}
This is easy to see since the distribution we sample configurations from is shifted towards the global optimum $\bm{\theta}^*$, thus increasing the probability of sampling a configuration closer to $\bm{\theta}^*$, ultimately leading to faster convergence.

\subsection{Accuracy of IBO-HPC's Selection Policy}
\label{app:SamplingExact}

Here, we briefly discuss the accuracy of IBO-HPC's policy in selecting new configurations for evaluation based on the obtained data (see Eq. \ref{eq:prior_conditional}). Note that the sampling from the distribution provided in Eq. \ref{eq:prior_conditional} is accurate if (1) the $s$ represents the data $\mathcal{D}$ accurately and  (2) sampling from $s$ and the prior $q$ is unbiased (i.e., samples are drawn according to the underlying distribution).
Let us start with (1). Since we employ LearnSPN~\citep{gens2013learnSPN} to obtain $s$ (a PC in form of SPN), $s$ will locally maximize the log-likelihood of the training data (i.e., the configuration-evaluation pairs obtained). This means that there is no other SPN in the space of the learnable SPNs via LearnSPN that achieves a better log-likelihood given the data.\footnote{Assuming an oracle for the variable splitting. See Proposition 1 in~\citet{gens2013learnSPN}.} Hence, as long as the ground truth distribution $p$ (or a good approximation of it) is representable by an SPN, we can recover $p$ with arbitrarily small error with iterations $T \rightarrow \infty$.

Looking at (2), we sample from two distributions when selecting a new configuration. First, we sample from the prior $q$, then from the conditional $s(\bm{\mathcal{H}} \setminus \hat{\bm{\mathcal{H}}} | \hat{\bm{\mathcal{H}}} = \bm{\theta}, F = f^*)$ where $\bm{\theta} \sim q$. 
Assuming $q$ is a tractable distribution (e.g., a parametric one such as an isotropic Gaussian), sampling is immediate and not biased (i.e., performed via simple transformations such as the Box-Muller transform). Note that the assumption on $q$ being a tractable (and relatively simple) distribution can be made safely since providing highly complex distributions as user knowledge is hard to do for most users. 
When considering sampling from the conditional $s(\bm{\mathcal{H}} \setminus \hat{\bm{\mathcal{H}}} | \hat{\bm{\mathcal{H}}} = \bm{\theta}, F = f^*)$, it should be noted that this conditional is a valid PC again (specifically, a PC in the form of an SPN when obtained with LearnSPN). The model is unchanged and only evaluated differently, i.e., by providing the partial evidence at leaves and evaluating the model bottom-up first (see~\citet{choi_2020}). Then, PC sampling is performed top-down by sampling from the simple categorical variables represented by the sum nodes and then from the selected univariate leaves. Thus, the process is tractable (linear in the circuit size) and not biased by further operations or assumptions~\citep{choi_2020}. Thus, we conclude that the approximation in Eq. \ref{eq:prior_conditional} is accurate in the limit $N, T \rightarrow \infty$.

\section{Experimental Details}\label{app:experiments}
Here, we present additional details of our empirical evaluation. The raw logs from our experiments are available at \url{https://figshare.com/ndownloader/files/53323751}, and our code is available at \url{https://github.com/ml-research/ibo-hpc}.

\subsection{Benchmarks}
\label{app:benchmarks}
Benchmarks are a valuable tool in HPO/NAS research. They allow a cheap and reproducible evaluation of HPO and NAS algorithms, thus allowing researchers without many computing resources to test their algorithms reliably under real-world scenarios. Benchmarks achieve cheap evaluation and reproducibility by providing pre-computed training and evaluation statistics for a set of tasks. A task is usually defined by a dataset (e.g., CIFAR-10) and a search space (e.g., a space over neural architectures). Then, all configurations of the search space (or a large part of it) are evaluated on the given dataset(s) and the results are saved in a look-up table. Some benchmarks go beyond mere look-up tables and train a surrogate model on the saved results to predict e.g., the accuracy of a model given a configuration. This is especially useful for continuous hyperparameters since a look-up table only represents a discrete subset of the space while the surrogate model can interpolate between values. 

In our experimental evaluation, we used the following benchmarks: JAHS~\citep{jahsbench201}, NAS-Bench-101~\citep{nasbench101}, NAS-Bench-201~\citep{nasbench201}, HPO-B~\citep{arango2021hpoblargescalereproduciblebenchmark}, FCNet~\citep{klein2019fcnet}, and PD1~\citep{Wang2024HyperBO}. We now briefly describe the characteristics of these benchmarks.

\textbf{JAHS.} JAHS aims to provide a reproducible benchmark for joint optimization of neural architectures and other hyperparameters on real-world tasks. It consists of a 14-dimensional search space (of which some dimensions, like the number of epochs, can also be treated as fidelities). The search space contains both discrete and continuous hyperparameters. Regarding tasks, JAHS provides training and evaluation statistics of models trained on CIFAR-10, Fashion-MNIST and Colorectal Histology. All tasks are image classification tasks. See~\citep{jahsbench201} for more information.

\textbf{NAS-Bench-101/201.} NAS-Bench-101/201 provide a reproducible benchmark for neural architecture search. Both come with similar search spaces. However, they use different encoding to represent architectures. Thus, NAS-Bench-101 comes with a sparser but high-dimensional search space (26 dimensions), while NAS-Bench-201 comes with a more dense encoding and a 6-dimensional search space. NAS-Bench-101 provides training and evaluation statistics for CIFAR-10, while NAS-Bench-201 provides those statistics for CIFAR-10, CIFAR-100, and Imagenet. In our experiments, we only used CIFAR-10. See~\citep{nasbench101, nasbench201} for more information.

\textbf{HPO-B.} HPO-B is a large-scale HPO benchmark based on a diverse set of OpenML tasks. It comes with 176 search spaces and 196 datasets. Since some search spaces have been evaluated on multiple datasets, many tasks are available. Most of the tasks are classification and regression tasks on different data modalities, such as tabular data or image data. In our evaluation, we used the credit-g and vehicle datasets (dataset IDs 31 and 9914), paired with search spaces over hyperparameters of a random search model (search space ID 6794) and a gradient boosting model (search space ID 6767). Both datasets are tabular datasets. We chose these since both provide many evaluations reported at OpenML (506k for credit-g and 31k for vehicle), allowing rigorous comparability. The search spaces we used contained discrete and continuous hyperparameters. The gradient boosting search space is defined over 17 hyperparameters, while the random forest search space is defined over 9 hyperparameters. For more information, refer to~\citep{arango2021hpoblargescalereproduciblebenchmark}.

\textbf{PD1.} The PD1 benchmark is an HPO benchmark for neural networks. The search space is defined over 4 continuous hyperparameters (learning rate, momentum, learning rate decay, and learning rate decay steps) that influence the learning behavior of the networks. Since the search space does not contain any architecture-specific hyperparameters, PD1 is not a NAS benchmark, although it only considers neural networks as models. Different variants of CNNs and Transformer architectures have been evaluated on eight different tasks spanning image classification and language modeling and corresponding training and evaluation statistics are provided. In our experiments, we considered the HPO task defined by PD1 with a ResNet50 as a neural network and Imagenet as a large-scale image classification task. In contrast to CIFAR-10, Imagenet is more realistic as it is more diverse than CIFAR-10. For more information, refer to~\citep{Wang2024HyperBO}.

\textbf{FCNet.} FCNet provides training and evaluation statistics for fully connected neural networks, evaluated on four different tabular regression tasks. The search space contains 9 dimensions, including some architecture choices (activation function, number of hidden units) and optimization-specific hyperparameters such as the learning rate. While architecture choices are naturally discrete, FCNet has discretized all continuous hyperparameters. Thus, all hyperparameters are discrete in the FCNet search space. In our experiments, we chose the \textsc{slice\_localization} task as it is the most challenging task included in FCNet regarding feature dimension (385) and number of available samples (31k). Refer to~\citep{klein2019fcnet} for more details.

\subsection{Search Space Extension of JAHS}
\label{app:jahs_extension}
To make the HPO problem on JAHS more challenging, we decided to extend the search space slightly as JAHS -- as a surrogate benchmark -- allows us to query hyperparameter values which were not tested explicitly in the benchmark. We defined three search spaces for JAHS which are presented in the following table.

\begin{table}[h]
\resizebox{\textwidth}{!}{
\begin{tabular}{l|lll}
                 & S1                          & S2                          & S3                          \\ \hline
Activation       & {[}Mish, ReLU, Hardswish{]} & {[}Mish, ReLU, Hardswish{]} & {[}Mish, ReLU, Hardswish{]} \\
Learning Rate    & {[}1e-3, 1e0{]}             & {[}1e-3, 1e0{]}             & {[}1e-3, 1e0{]}             \\
Weight Decay     & {[}1e-5, 1e-2{]}            & {[}1e-5, 1e-2{]}            & {[}1e-5, 1e-2{]}            \\
Trivial Argument & {[}True, False{]}           & {[}True, False{]}           & {[}True, False{]}           \\
Op1              & 0-6                         & 0-6                         & 0-6                         \\
Op2              & 0-6                         & 0-6                         & 0-6                         \\
Op3              & 0-6                         & 0-6                         & 0-6                         \\
Op4              & 0-6                         & 0-6                         & 0-6                         \\
Op5              & 0-6                         & 0-6                         & 0-6                         \\
Op6              & 0-6                         & 0-6                         & 0-6                         \\
N                & 1-15                        & 1-11                        & 1-5                         \\
W                & 1-31                        & 1-23                        & 1-16                        \\
Epoch            & 1-200                       & 1-200                       & 1-200                       \\
Resolution       & 0-1                         & 0-1                         & 0-1                        
\end{tabular}
}
\caption{\textbf{JAHS Search Space.} We define three versions of the JAHS search space, ranging from simpler to harder spaces.}
\end{table}

\newpage
\subsection{Interactions}
\label{app:interaction_definition}
Here, we provide the interactions used in our experiments. For the experiments, beneficial and misleading user interactions have been defined as user priors for each benchmark. 
We aim to analyze the behavior of IBO-HPC under various user interactions. Thus, we analyze several scenarios in which users interact with IBO-HPC: (1) The user provides beneficial knowledge about only a few hyperparameters in early optimization iterations. (2) The user provides beneficial knowledge about only a few hyperparameters at later iterations. By providing beneficial knowledge only for a small set of hyperparameters, we analyze whether IBO-HPC can effectively leverage knowledge that helps it converge, even if this kind of knowledge is only sparse. (3) The user provides misleading knowledge for many hyperparameters, thus challenging our recovery mechanism in cases in which IBO-HPC is misled over the majority of dimensions. (4) The user interacts multiple times, alternating between providing beneficial and misleading knowledge. This tests if IBO-HPC can handle contradictory sets of knowledge given during optimization, thus analyzing if IBO-HPC handles interactions that could occur in the real world well. With the alternation between beneficial and misleading knowledge, we can analyze whether IBO-HPC effectively makes use of beneficial knowledge while the recovery mechanism prevents IBO-HPC from stalling when misleading user knowledge is provided.

To define beneficial and misleading user knowledge/priors for each benchmark and the corresponding search space,
we randomly sample $10k$ configurations and kept the best/worst performing ones, denoted as $\bm{\theta}^+$ and $\bm{\theta}^-$, respectively. To demonstrate that beneficial user priors over a few hyperparameters are enough to improve the performance of IBO-HPC remarkably, we define beneficial interactions by selecting a small subset of hyperparameters $\hat{\bm{\mathcal{H}}} \subset \bm{\mathcal{H}}$. The subset size was set to cover less than 25\% of the hyperparameters. The subset was sampled randomly and was reused for all experiments. Then, we define a prior over each $H \in \hat{\bm{\mathcal{H}}}$ favoring the value of $H$ given in $\bm{\theta}^+$, denoted by $\bm{\theta}^+[H]$, with a probability up to $1000$ times higher than for other values.
For misleading interaction, $\hat{\bm{\mathcal{H}}}$ is chosen to be large to demonstrate that IBO-HPC recovers even if a large amount of misleading information is provided. This time, the probability to sample $\bm{\theta}^-[H]$ is up to $1000$ times higher than for other values for each $H \in \hat{\bm{\mathcal{H}}}$. The priors are chosen to be rather strong since, as emphasized in Sec. \ref{sec:Intro}, the stronger the prior, the better $\pi$BO and BOPrO reflect user knowledge in their selection policy. Striving for a fair comparison, we opt for such strong priors. Further, we aim to show that IBO-HPC reliably recovers from receiving large amounts of strongly misleading knowledge.
Sometimes, users might want to specify a concrete value for certain hyperparameters instead of defining a distribution. Thus, we also conducted experiments with point masses as user priors. For the experiments in which multiple interactions are provided, we randomly chose a beneficial and a misleading interaction and provided them to IBO-HPC alternatingly.

\textbf{JAHS.} The following JSON code shows the interactions performed in our JAHS experiments. The first interaction is a misleading interaction, followed by a beneficial interaction and a no interaction (for recovery).
\begin{verbatim}
[
    {
        "type": "bad",
        "intervention": {"Activation": 1, "LearningRate": 0.8201676371308472, "N": 15,
        "Op1": 3, "Op2": 4, "Op3": 1, "Op4": 2, "Resolution": 0.5096959403985494,
        "TrivialAugment": 0, "W": 14,
         "WeightDecay": 0.002697686639935806, "epoch": 10},
        "iteration": 5
    },
    {
        "type": "good",
        "intervention": {"N": 3, "W": 16, "Resolution": 1},
        "iteration": 15
    },
    {
        "type": "good",
        "intervention": null,
        "iteration": 20
    },
    {
        "type": "good",
        "kind": "dist",
        "intervention": {"N": {"dist": "cat", "parameters": 
        [1, 1, 1, 1e4, 1, 1, 1, 1, 1, 1, 1, 1, 1, 1, 1, 1]},
        "W": {"dist": "cat", "parameters": 
        [1, 1, 1, 1, 1, 1, 1, 1, 1, 1, 1, 1, 1, 1, 1, 1, 1e4]},
        "Resolution": {"dist": "uniform", "parameters": [0.98, 1.02]}},
        "iteration": 5
    }
]
\end{verbatim}

\textbf{NAS-Bench-101.} The following JSON code shows the interactions performed in our experiments on NAS-Bench-101. The first interaction is a misleading interaction, followed by a beneficial interaction and a no interaction (for recovery).

\begin{verbatim}
[
    {
        "type": "bad",
        "kind": "point",
        "intervention": [0, 1, 1, 0, 0, 0, 0, 1, 0, 0, 0, 1, 0, 1, 0, 0, 1, 1, 1, 0, 1],
        "iteration": 5
    },
    {
        "type": "good",
        "kind": "point",
        "intervention": [1, 0, 1, 0, 1, 1, 1, 0, 0, 0, 0, 0, 1, 0, 0, 0, 1, 0, 1, 0, 1],
        "iteration": 12
    },
    {
        "type": "good",
        "kind": "point",
        "intervention": null,
        "iteration": 20
    },
    {
        "type": "good",
        "kind": "dist",
        "intervention": {
            "e_0_1": {"dist": "cat", "parameters": [1, 1e4]}, 
            "e_0_2": {"dist": "cat", "parameters": [1e4, 1]}, 
            "e_0_3": {"dist": "cat", "parameters": [1, 1e4]}, 
            "e_0_4": {"dist": "cat", "parameters": [1e4, 1]}, 
            "e_0_5": {"dist": "cat", "parameters": [1, 1e4]}, 
            "e_0_6": {"dist": "cat", "parameters": [1, 1e4]}, 
            "e_1_2": {"dist": "cat", "parameters": [1, 1e4]},
            "e_1_3": {"dist": "cat", "parameters": [1e4, 1]}, 
            "e_1_4": {"dist": "cat", "parameters": [1e4, 1]},
            "e_1_5": {"dist": "cat", "parameters": [1e4, 1]}, 
            "e_1_6": {"dist": "cat", "parameters": [1e4, 1]},
            "e_2_3": {"dist": "cat", "parameters": [1e4, 1]}, 
            "e_2_4": {"dist": "cat", "parameters": [1, 1e4]}, 
            "e_2_5": {"dist": "cat", "parameters": [1e4, 1]}, 
            "e_2_6": {"dist": "cat", "parameters": [1e4, 1]}, 
            "e_3_4": {"dist": "cat", "parameters": [1e4, 1]}, 
            "e_3_5": {"dist": "cat", "parameters": [1, 1e4]}, 
            "e_3_6": {"dist": "cat", "parameters": [1e4, 1]}, 
            "e_4_5": {"dist": "cat", "parameters": [1, 1e4]}, 
            "e_4_6": {"dist": "cat", "parameters": [1e4, 1]}, 
            "e_5_6": {"dist": "cat", "parameters": [1, 1e4]}
        },
        "iteration": 5
    }
]
\end{verbatim}

\textbf{NAS-Bench-201.} The following JSON code shows the interactions performed in our experiments on NAS-Bench-201. The first interaction is a misleading interaction, followed by a beneficial interaction and a no interaction (for recovery).

\begin{verbatim}
[
    {
        "type": "good",
        "kind": "point",
        "intervention": {"Op_0": 2, "Op_1": 2, "Op_2": 0},
        "iteration": 5
    },
    {
        "type": "bad",
        "kind": "point",
        "intervention": {"Op_0": 1, "Op_1": 2, "Op_2": 1},
        "iteration": 5
    },
    {
    
        "type": "good",
        "kind": "point",
        "intervention": null,
        "iteration": 20
    },
    {
        "type": "good",
        "kind": "dist",
        "intervention": {"Op_0": {"dist": "cat", "parameters": [1, 1, 1e4, 1, 1]},
                         "Op_1": {"dist": "cat", "parameters": [1, 1, 1e4, 1, 1]},
                         "Op_2": {"dist": "cat", "parameters": [1e4, 1, 1, 1, 1]}},
        "iteration": 5
    }
]
\end{verbatim}

\textbf{HPO-B.} The following shows the interactions defined for our HPO-B experiments. Each of the three interactions corresponds to one of the three tasks we tested from HPO-B. The order is 6767:31, 6794:31, 6794:9914. Note that we only applied beneficial interactions in the case of HPO-B.

\begin{verbatim}
    {
        "type": "good",
        "kind": "dist",
        "intervention": {
            "eta": {"dist": "uniform", "parameters": [0.45, 0.55]}, 
            "subsample": {"dist": "uniform", "parameters": [0.55, 0.65]}, 
            "lambda": {"dist": "uniform", "parameters": [-150, 0]}, 
            "min_child_weight": {"dist": "uniform", "parameters": [-15, -10]}},
        "iteration": 1
    },
    {
        "type": "good",
        "kind": "dist",
        "intervention": {
            "num_trees": {"dist": "int_uniform", "parameters": [1690, 1720]}, 
            "mtry": {"dist": "int_uniform", "parameters": [30, 34]},
            "min_node_size": {"dist": "int_uniform", "parameters": [780, 800]}
        },
        "iteration": 1
    },
    {
        "type": "good",
        "kind": "dist",
        "intervention": {
            "num_trees": {"dist": "int_uniform", "parameters": [1690, 1720]}, 
            "mtry": {"dist": "int_uniform", "parameters": [280, 320]},
            "sample_fraction": {"dist": "uniform", "parameters": [0.5, 0.53]}
        },
        "iteration": 1
    }
\end{verbatim}

\textbf{PD1.} The following shows the interactions defined for the task considered on PD1. Note that we only applied beneficial interactions in the case of PD1.

\begin{verbatim}
    {
        "type": "good",
        "kind": "dist",
        "intervention": {
            "hps.lr_hparams.decay_steps_factor": {"dist": "uniform", 
                                            "parameters": [0.8, 1.0]}, 
            "hps.opt_hparams.momentum": {"dist": "uniform", 
                                            "parameters": [1.0, 1.2]}
        },
        "iteration": 1
    }
\end{verbatim}

\textbf{FCNet.} The following shows the interactions defined for the task considered on FCNet. Note that we only applied beneficial interactions in the case of FCNet.

\begin{verbatim}
    {
        "type": "good",
        "kind": "dist",
        "intervention": {
            "activation_fn_1": {"dist": "cat", "parameters": [100, 1], 
                                "values": ["relu", "tanh"]},
            "activation_fn_2": {"dist": "cat", "parameters": [1, 100], 
                                "values": ["relu", "tanh"]},
            "n_units_1": {"dist": "cat", "parameters": [1, 1, 1, 1, 1, 100], 
                                "values": [16, 32, 64, 128, 256, 512]},
            "n_units_2": {"dist": "cat", "parameters": [1, 1, 1, 1, 1, 100], 
                                "values": [16, 32, 64, 128, 256, 512]}
        },
        "iteration": 1
    }
\end{verbatim}

\subsection{Further Results \& Ablations}
\label{app:further_results}
In this section, we provide further results and ablations. Fig. \ref{fig:hpob} provides additional results on five challenging tasks of the HPO-B, PD1 and FCNet benchmarks. For HPO-B, we used a random forest HPO problem (search space ID 6767) and a gradient-boosting HPO problem (search space ID 6794). As datasets, we used credit-g (dataset ID 31) and vehicle (9914) since both are widely used benchmark datasets for classification. Regarding PD1, we used a ResNet50 as an architecture and optimized over the 4-dimensional search space provided by PD1. As a dataset, we considered Imagenet, a challenging real-world classification task. Finally, for FCNet, we used the slice localization dataset since it is the most complex dataset in the FCNet benchmark w.r.t. the number of features (385) and number of data instances. IBO-HPC outperforms the baselines or is competitive with the baselines in both cases, i.e., where feedback is given, and no user feedback is given.

Fig \ref{fig:results_single_interaction_with_dist} shows results of IBO-HPC on JAHS, NAS201, and NAS101 where the given user feedback was either a fixed value or a distribution over configurations. Both cases are handled well by IBO-HPC, demonstrating its flexibility.
Fig.~\ref{fig:recovery-full} provides a more detailed view of the effectiveness of IBO-HPC and its recovery mechanism. It can be seen that IBO-HPC successfully recovers from harmful user feedback in JAHS and NAS-201 (\solidline{IBOrecover}).
Also, it can be seen that IBO-HPC handles alternating and contradictory user feedback well by leveraging information from beneficial feedback and ignoring harmful feedback (\solidline{IBOmany}).
In NAS-101, however, IBO-HPC is less effective in general, which can be explained by the extreme sparsity of the NAS-101 benchmark. While NAS-101 and NAS-201 are highly similar, NAS-101 uses a binary encoding of architectures, while NAS-201 uses a much denser dictionary-like representation. Although both benchmarks are highly similar, IBO-HPC performs well on NAS-201 but is not as effective on NAS-101, underlining our explanation.

Fig. \ref{fig:cdf} and \ref{fig:cdf2}, we show the CDF of test accuracy/mean squared error (MSE) across the baselines and IBO-HPC. It can be seen that IBO-HPC invests most of the computational resources in good-performing configurations. In other words, IBO-HPC avoids exploration in unpromising regions of the search space. This is because IBO-HPC samples configurations from a conditional distribution where the condition is the best evaluation score obtained. Thus, exploration is purely data-driven and focuses on regions that perform similarly to the incumbent at a particular iteration.

Fig. \ref{fig:ablation_decay} shows the influence of the decay parameter $\gamma$ in cases where harmful or misleading user knowledge was provided to IBO-HPC at an early iteration (10 in this case). It can be seen that for higher $\gamma$, IBO-HPC requires more time to recover than for smaller $\gamma$. This aligns with our expectations since a larger $\gamma$ corresponds to a high likelihood of the user knowledge being used for many iterations. In contrast, if $\gamma$ is small, likely, the user knowledge is only considered for a certain number of iterations with high likelihood. Thus, for smaller $\gamma$ IBO-HPC can recover faster.

Fig. \ref{fig:ablation_quant} shows the effect of conditioning on the $\{0.25, 0.5, 0.75\}$-quantile of the obtained evaluation scores instead of the maximum evaluation score. As expected, the higher the quantile, the better the performance of IBO-HPC as we aim to maximize the objective function. Thus, conditioning on higher values guides the optimization algorithm to configurations that yield better evaluation scores.

Lastly, Fig. \ref{fig:ablation_sample_size} depicts the effect of changing $L$, i.e. the number of samples drawn from the surrogate before the surrogate is updated. We found that the sample size has no effect on the overall performance of IBO-HPC. However, for some tasks (JAHS CIFAR-10 and CO), a significant variation of convergence speed in early iterations -- depending on the choice of $L$ -- was obtained. Choosing $L = 5$ seems to lead to fast and stable convergence behavior. Note that setting $L=1$ would mean that we re-fit the PC in each iteration. However, this also linearly increases the cost of optimization, which is not desirable in practice.

We followed the same experimental protocol as for all other experiments in Fig. \ref{fig:ablation_decay}-\ref{fig:ablation_sample_size}, except that each algorithm was run only 100 times instead of 500 times on each task.

\begin{figure}[h!]
    \centering
    \begin{subfigure}[c]{0.32\textwidth}
        \centering
        \includegraphics[width=.93\textwidth]{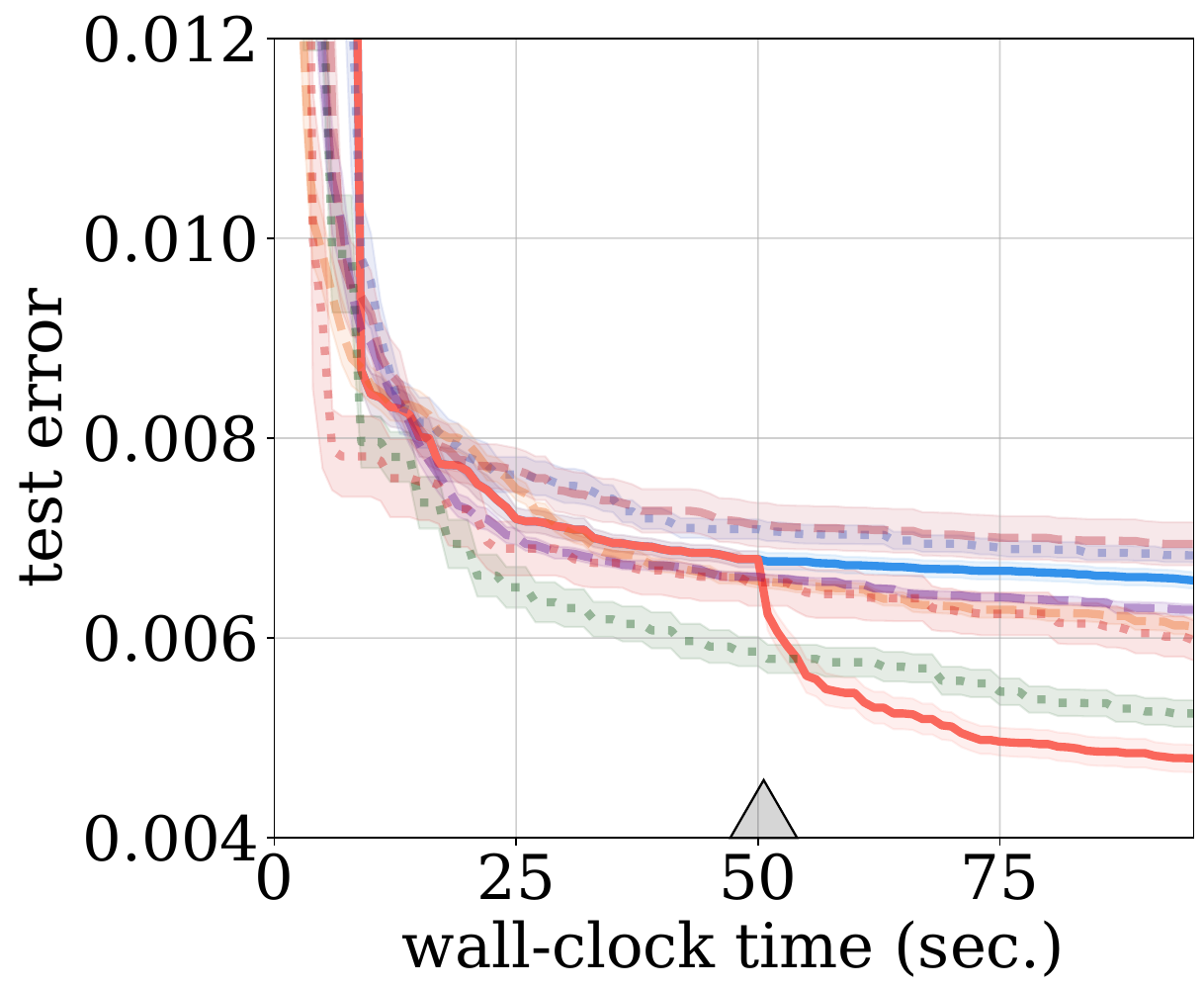}
        \caption{HPO-B (6794:31)}
    \end{subfigure}
    \hfill
    \begin{subfigure}[c]{0.32\textwidth}
        \centering
        \includegraphics[width=.91\textwidth]{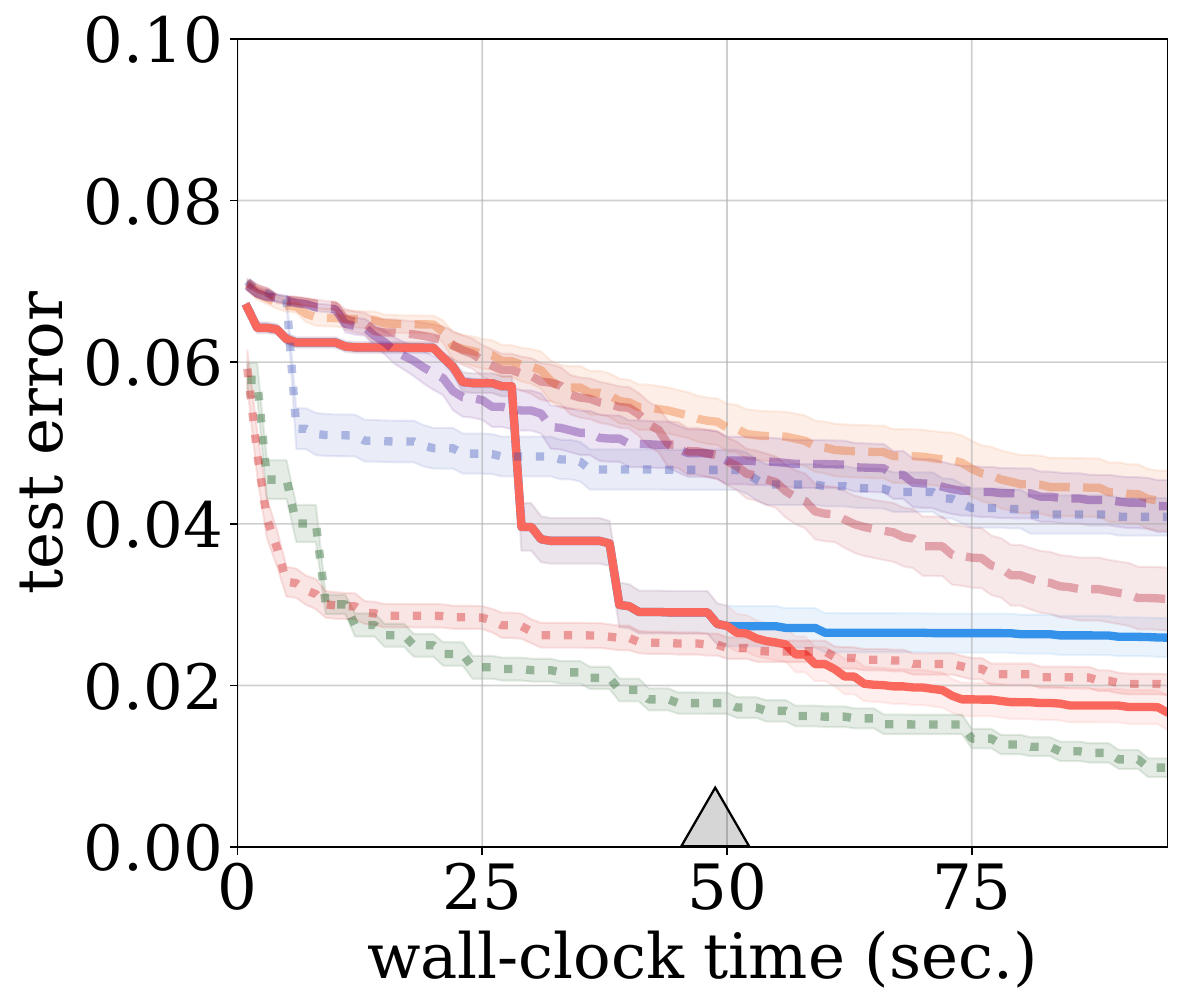}
        \caption{HPO-B (6767:31)}
    \end{subfigure}
    \hfill
    \begin{subfigure}[c]{0.32\textwidth}
        \centering
        \includegraphics[width=\textwidth]{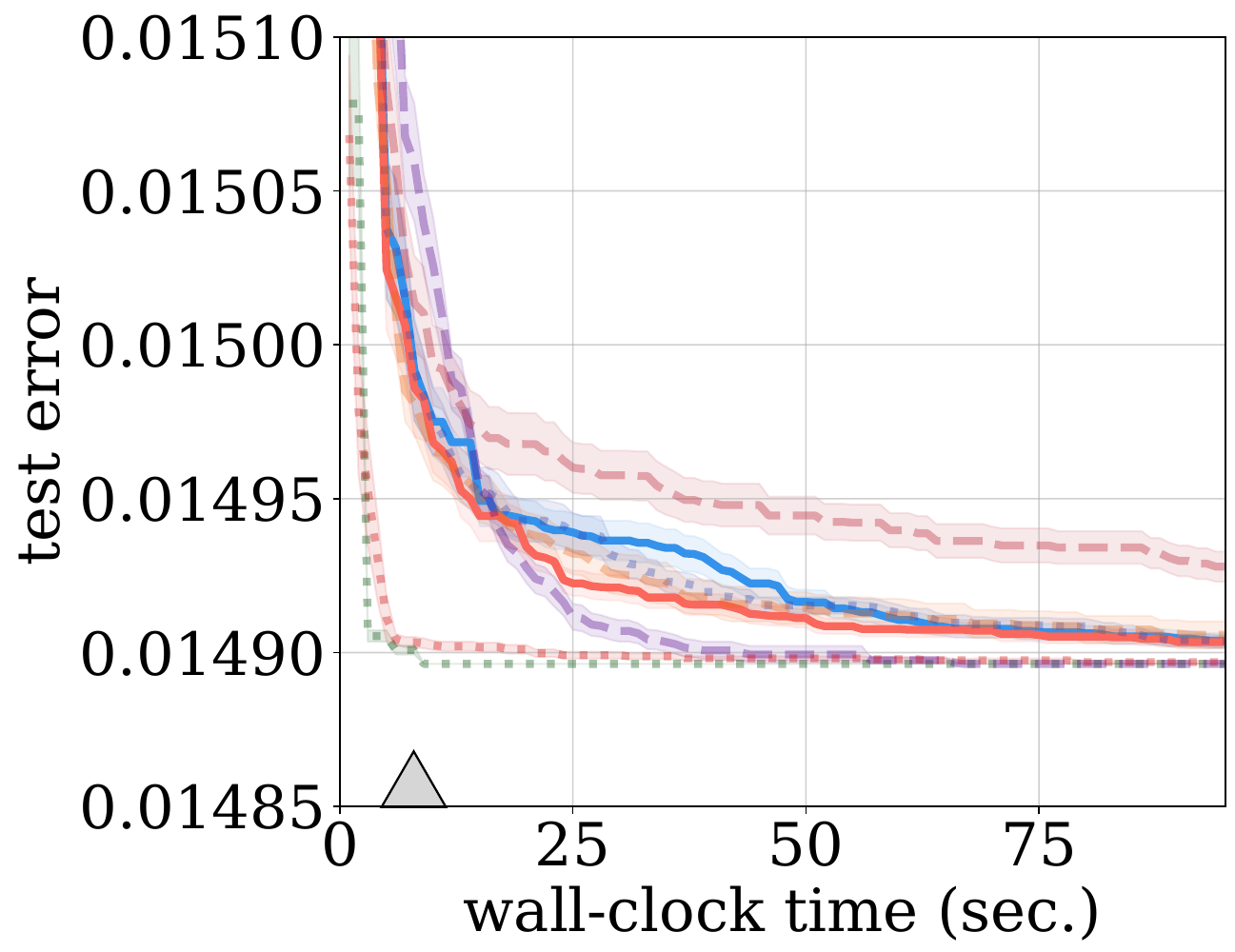}
        \caption{HPO-B (6794:9914)}
    \end{subfigure}%
    \hfill
    \begin{subfigure}[c]{0.32\textwidth}
        \centering
        \includegraphics[width=.91\textwidth]{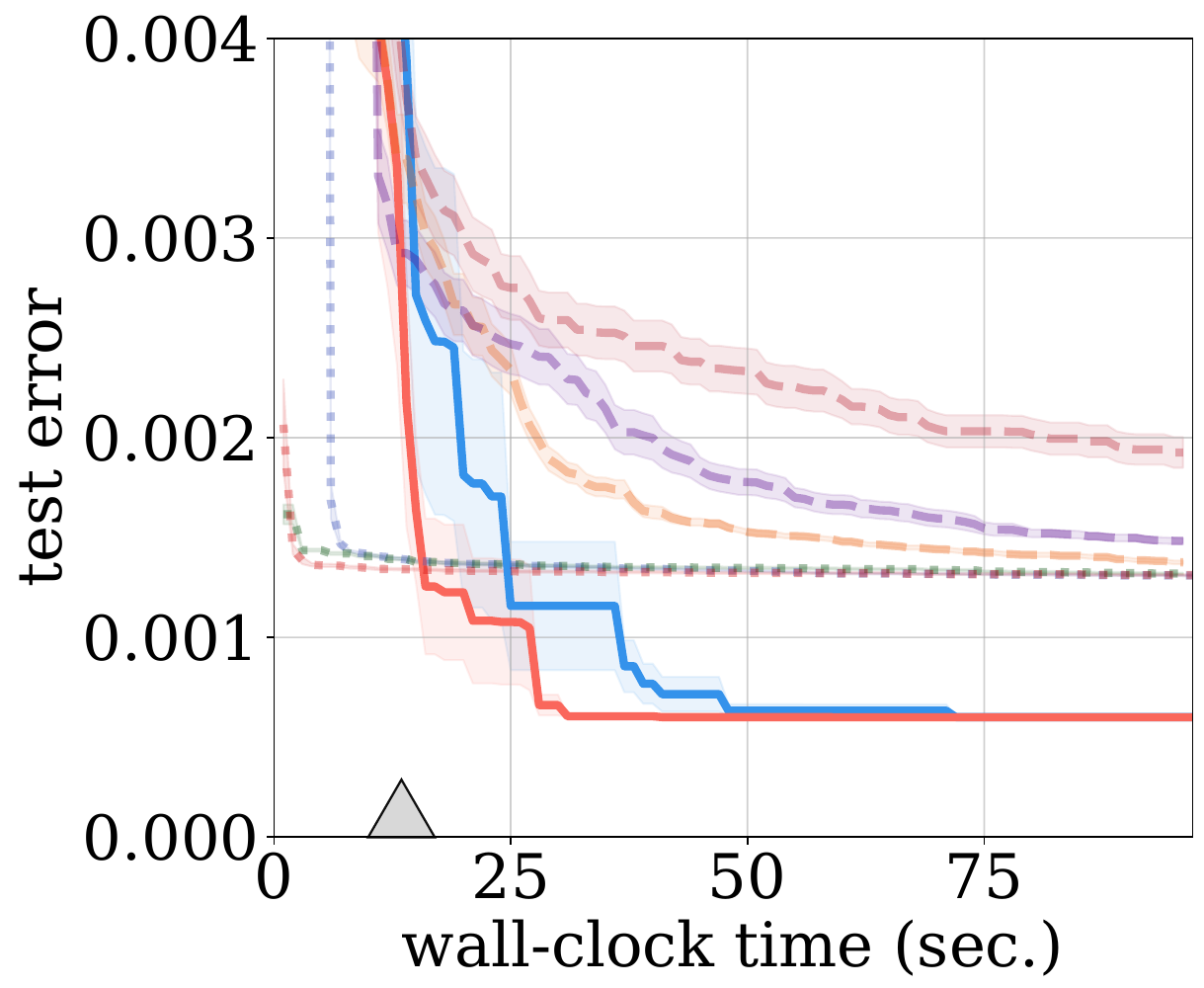}
        \caption{PD1 (Imagenet)}
    \end{subfigure}
    \hfill
    \begin{subfigure}[c]{0.32\textwidth}
        \centering
        \includegraphics[width=.91\textwidth]{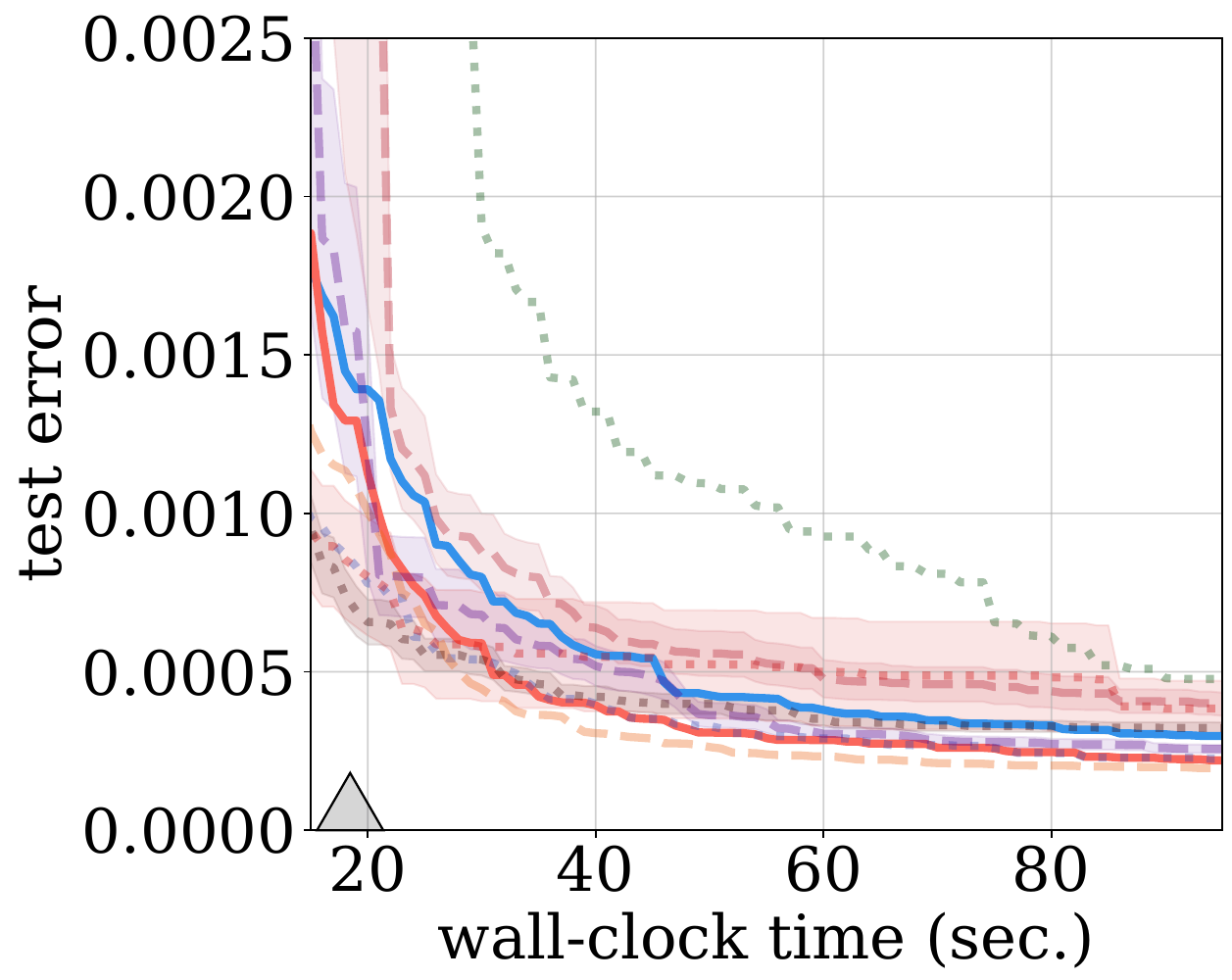}
        \caption{FCNet (Slice Localization)}
    \end{subfigure}
    \hfill
    \begin{subfigure}[c]{0.32\textwidth}
        \centering
        \includegraphics[width=.8\textwidth]{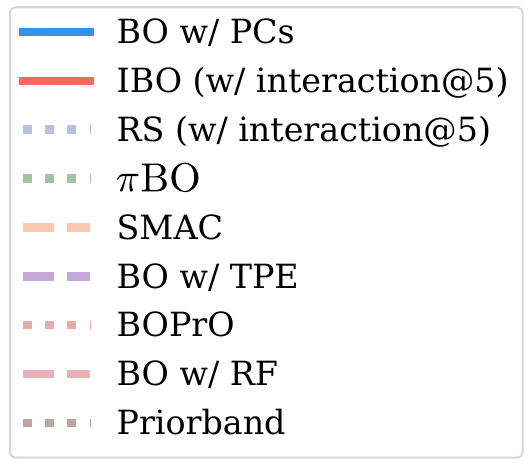}
        \caption{Legend}
        \label{fig:hpob_legend}
    \end{subfigure}
    \caption{\textbf{IBO-HPC is competitive or outperforms strong baselines on HPO-B, PD1 and FCNet.} IBO-HPC outperforms all BO baselines that allow users to provide a prior before optimization when feedback is provided at the 2nd iteration on 3/5 tasks. For the other tasks, only one baseline ($\pi$BO in (b)) beats IBO-HPC or IBO-HPC is on par with the baselines (c). Moreover, IBO-HPC is competitive with other BO methods without any user knowledge given. Results were obtained on HPO-B search spaces 6794 and 6767 with dataset IDs 31 and 9914. For PD1, we optimized hyperparameters of a ResNet50 training on Imagenet, and for FCNet, we tuned hyperparameters of a fully connected neural network on \textsc{slice\_localization}.}
    \label{fig:hpob}
\end{figure}

\begin{figure}[h!]
     \centering
     \begin{subfigure}[c]{0.32\textwidth}
         \centering
         \includegraphics[width=0.92\textwidth]{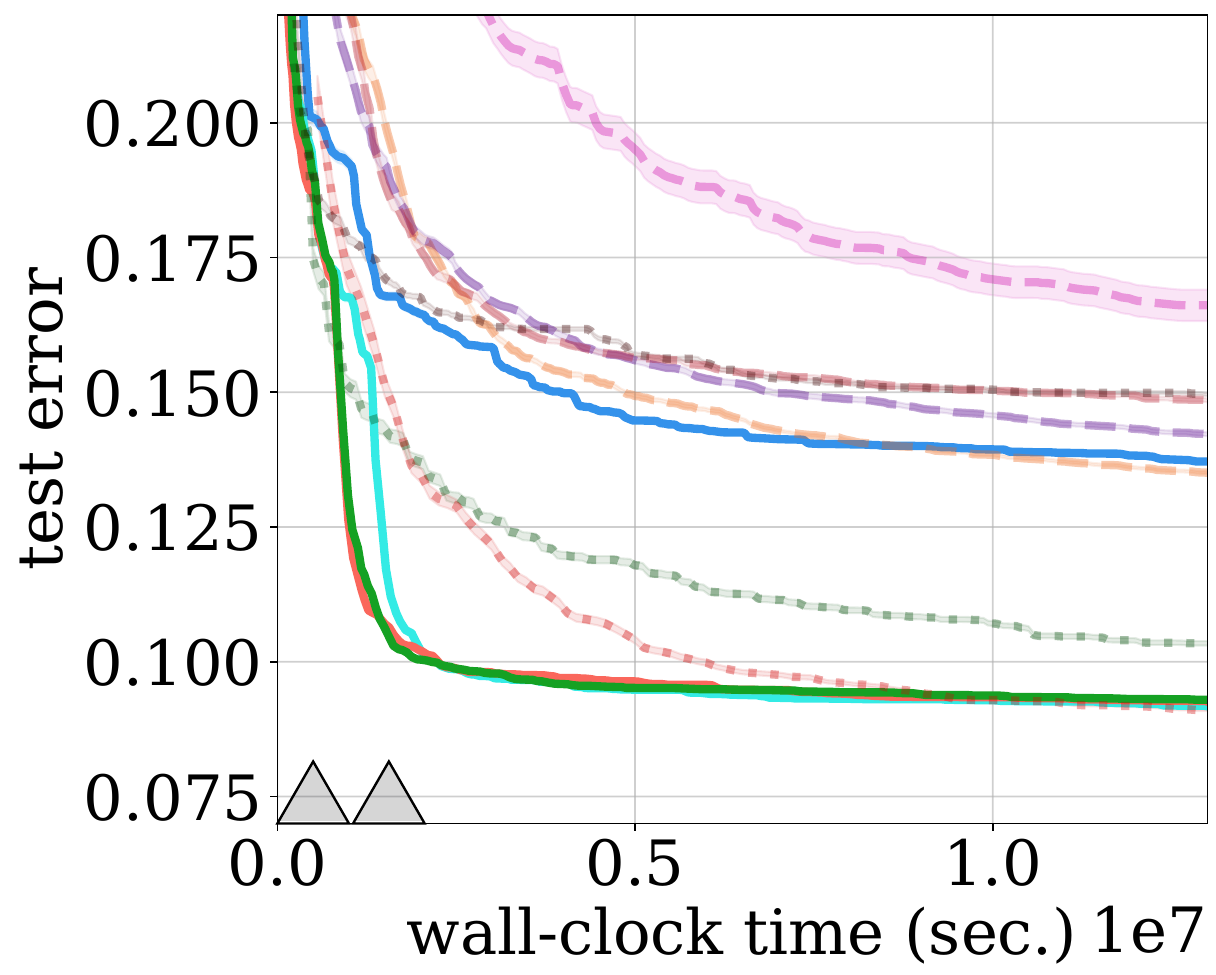}
         \caption{JAHS (CIFAR-10)}
     \end{subfigure}
     \hfill
     \begin{subfigure}[c]{0.32\textwidth}
         \centering
         
         \includegraphics[width=0.92\textwidth]{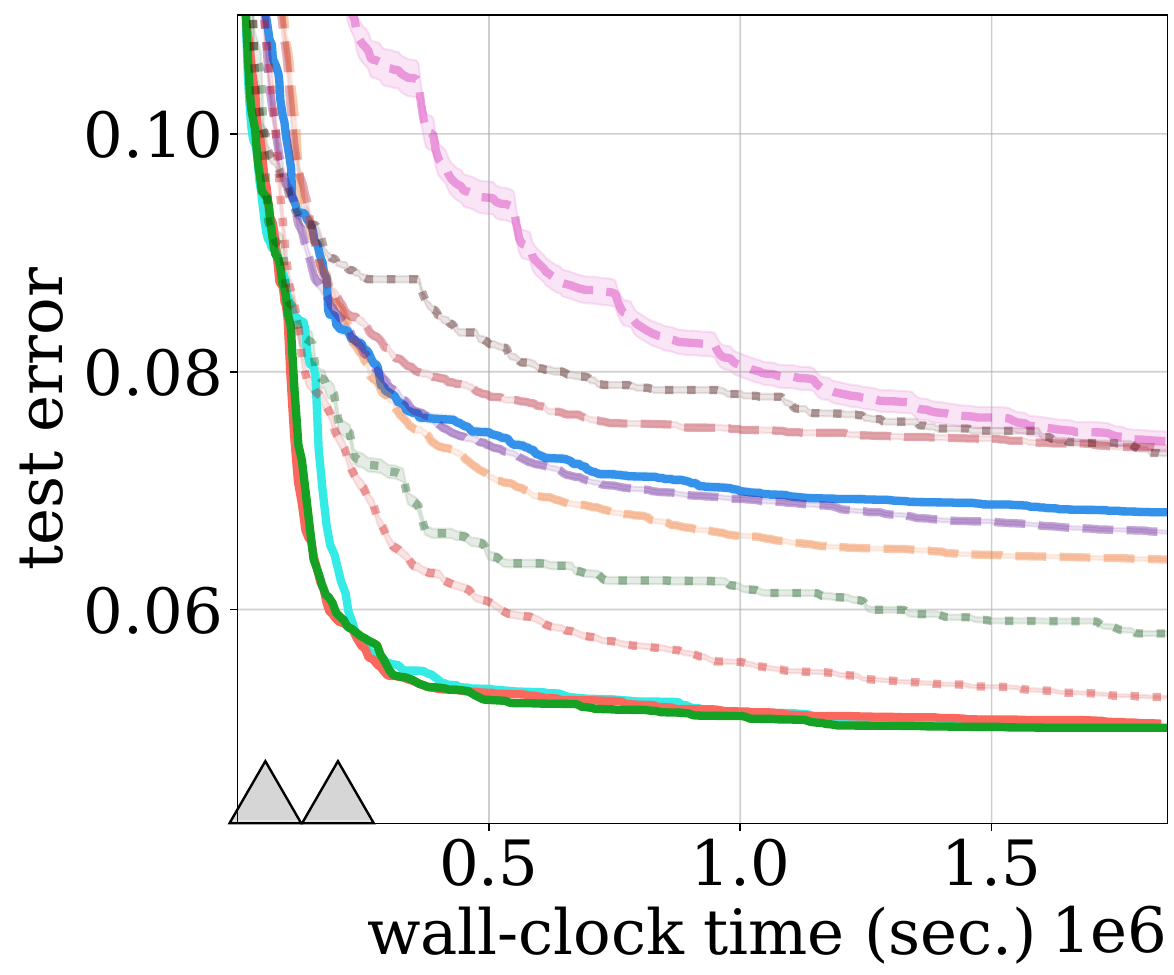}
         \caption{JAHS (C. Histology)}
     \end{subfigure}
     \hfill
     \begin{subfigure}[c]{0.32\textwidth}
         \centering
         \includegraphics[width=0.92\textwidth]{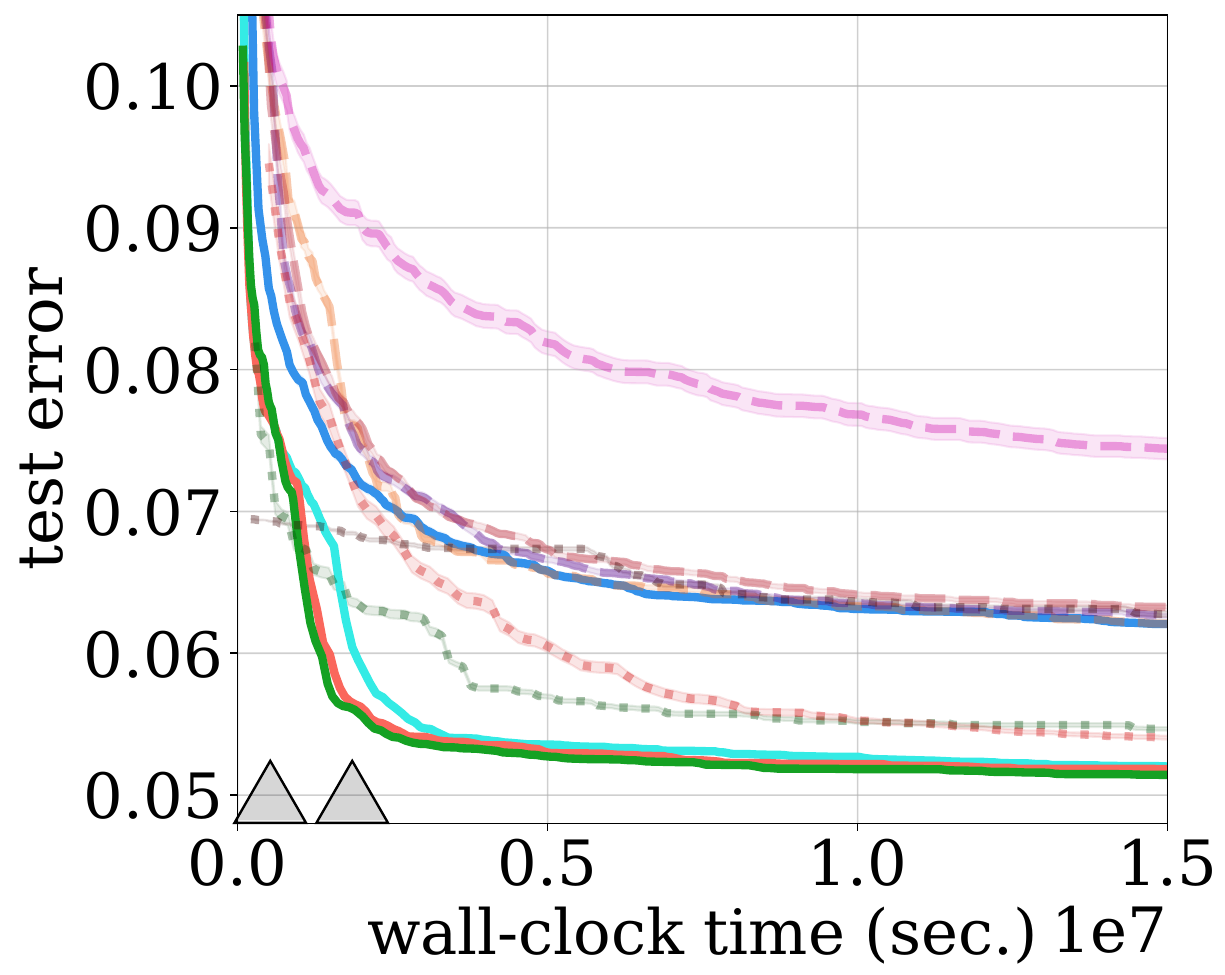}
         \caption{JAHS (F-MNIST)}
     \end{subfigure}
     \\
     \begin{subfigure}[c]{0.32\textwidth}
         \centering
         \includegraphics[width=0.92\textwidth]{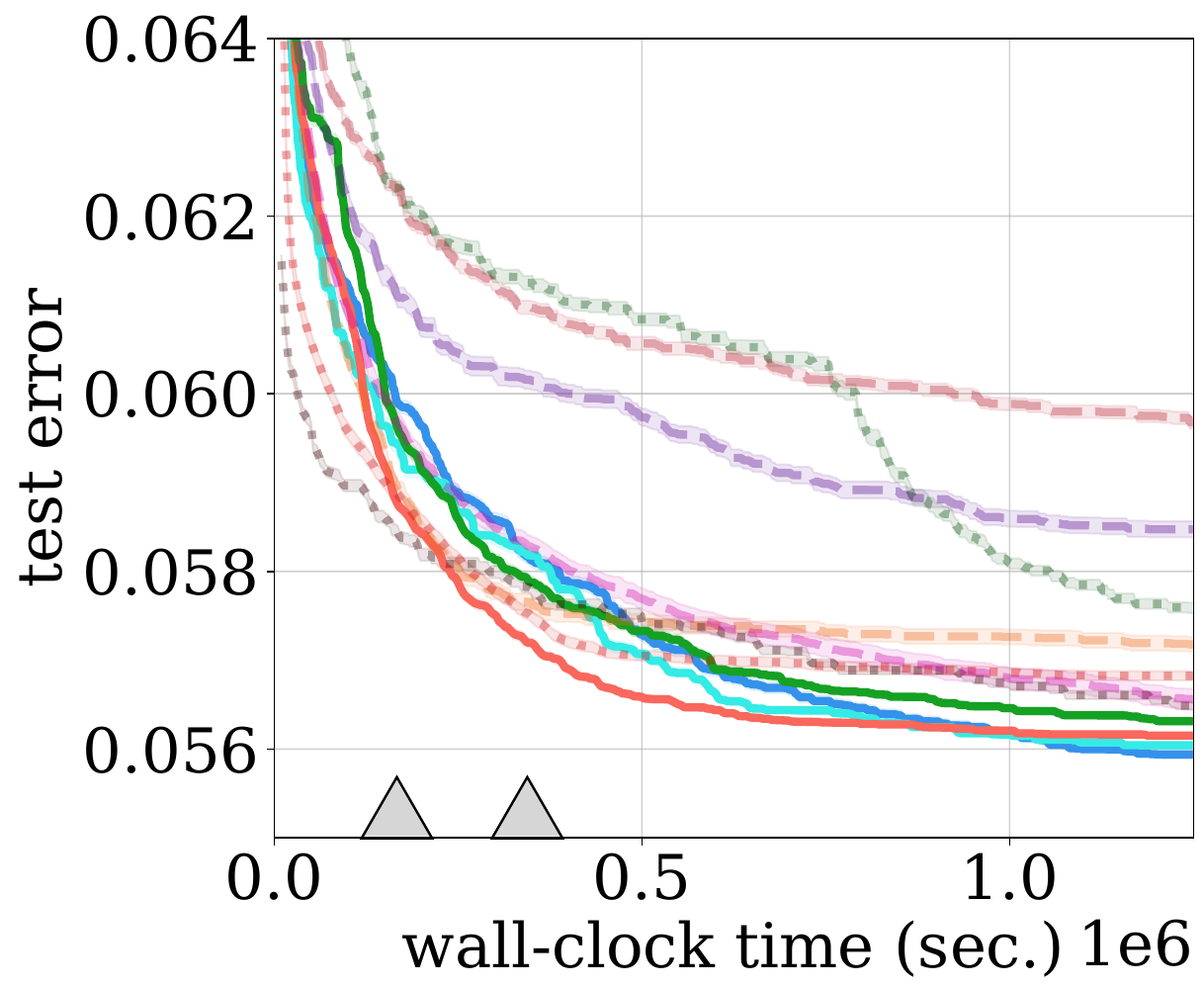}
         \caption{NAS-Bench-101 (CIFAR-10)}
     \end{subfigure}
     \hfill
     \begin{subfigure}[c]{0.32\textwidth}
         \centering
         \vspace{0.07cm}
         \includegraphics[width=0.92\textwidth]{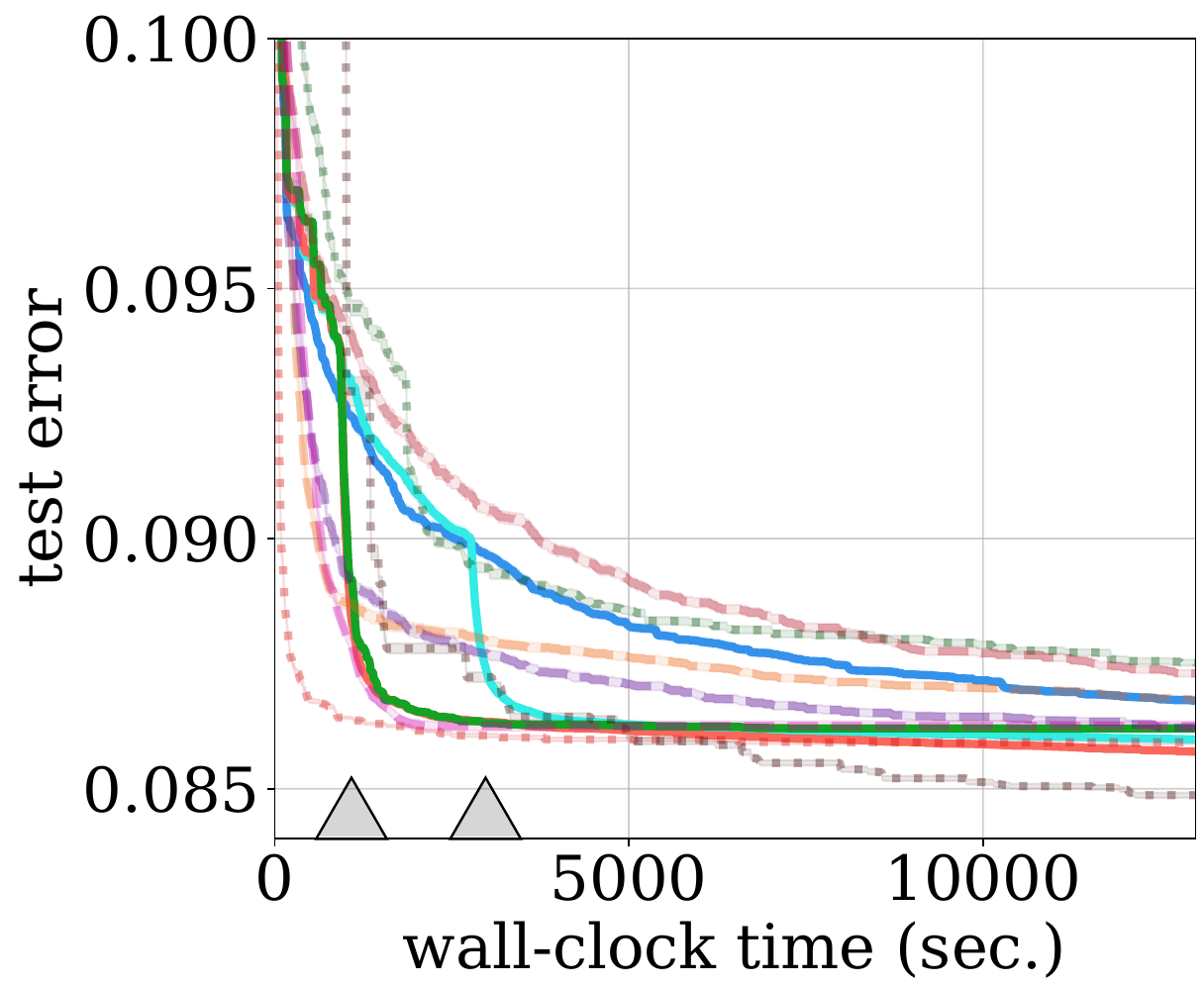}
         \caption{NAS-Bench-201 (CIFAR-10)}
     \end{subfigure}
     \hfill
     \begin{subfigure}[c]{0.32\textwidth}
         \centering
         \includegraphics[width=0.8\textwidth]
         {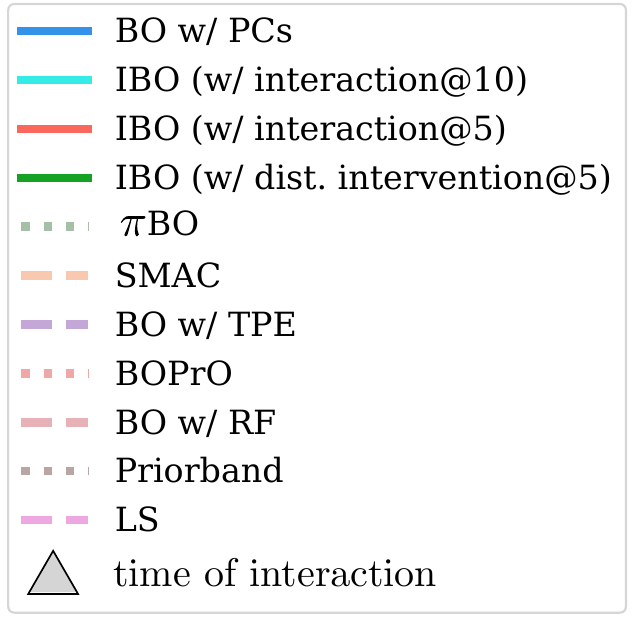}
     \end{subfigure}
    \caption{\textbf{IBO-HPC outperforms state of the art.} For 4/5 tasks across three challenging
    benchmarks, IBO-HPC is competitive with strong baselines when no user knowledge is provided. 
    When beneficial user beliefs (\triangletikz) are provided, either as distributions (\solidline{IBOdist}) or point values (\solidline{IBOearly}, \solidline{IBOlate}), it outperforms all competitors w.r.t. convergence and solution quality on most tasks. Early interactions (\solidline{IBOdist}/\solidline{IBOearly} at 5th iteration, \solidline{IBOlate} at 10th iteration) speed convergence up.}
    \label{fig:results_single_interaction_with_dist}
\end{figure}

\begin{figure}[h!]
     \centering
     \begin{subfigure}[c]{0.32\textwidth}
         \centering
         \includegraphics[width=.92\textwidth]{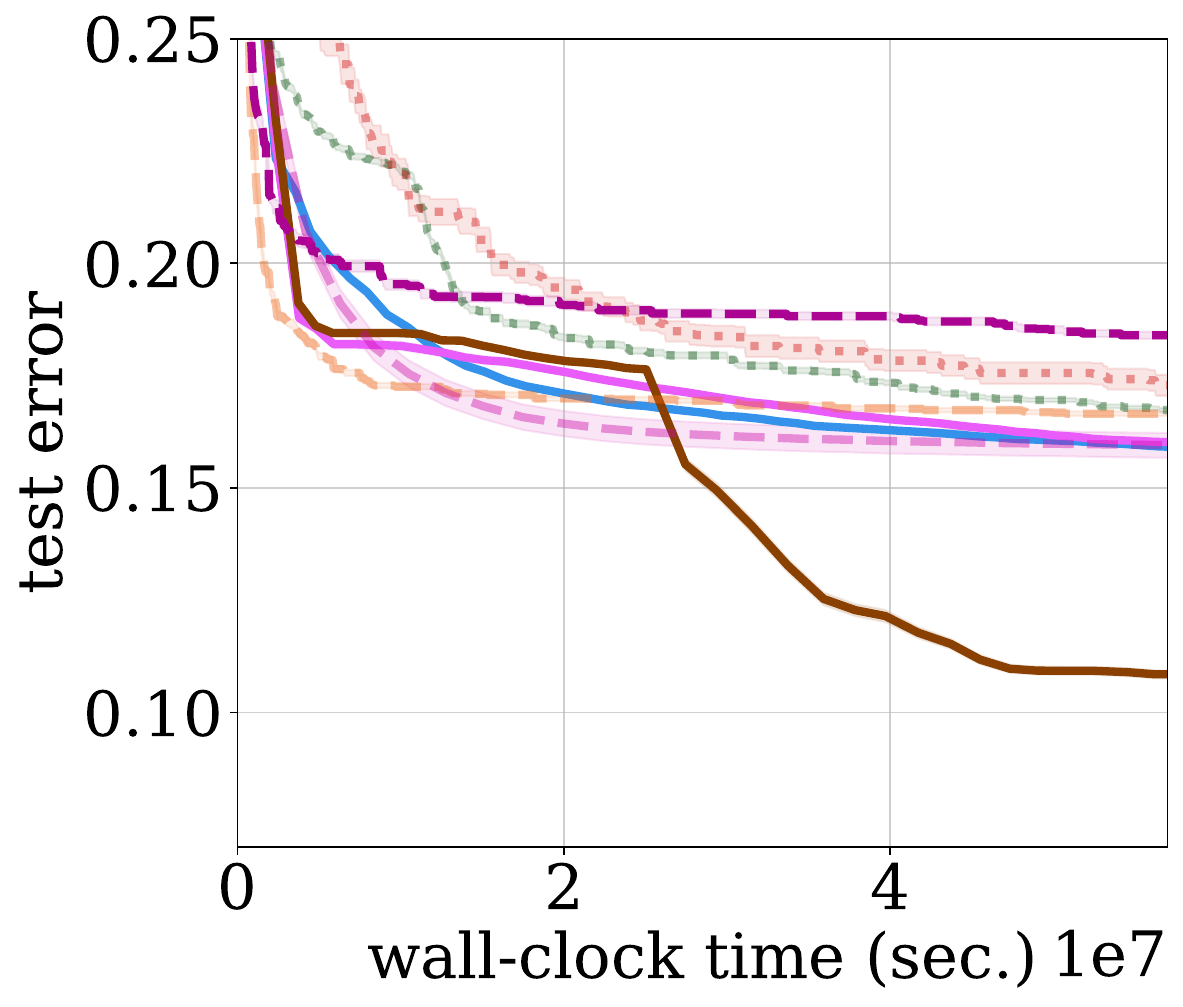}
         \caption{JAHS (CIFAR-10)}
         \label{fig:rfjahs_cifar10}
     \end{subfigure}
     \hfill
     \begin{subfigure}[c]{0.32\textwidth}
         \centering
         \includegraphics[width=.95\textwidth]{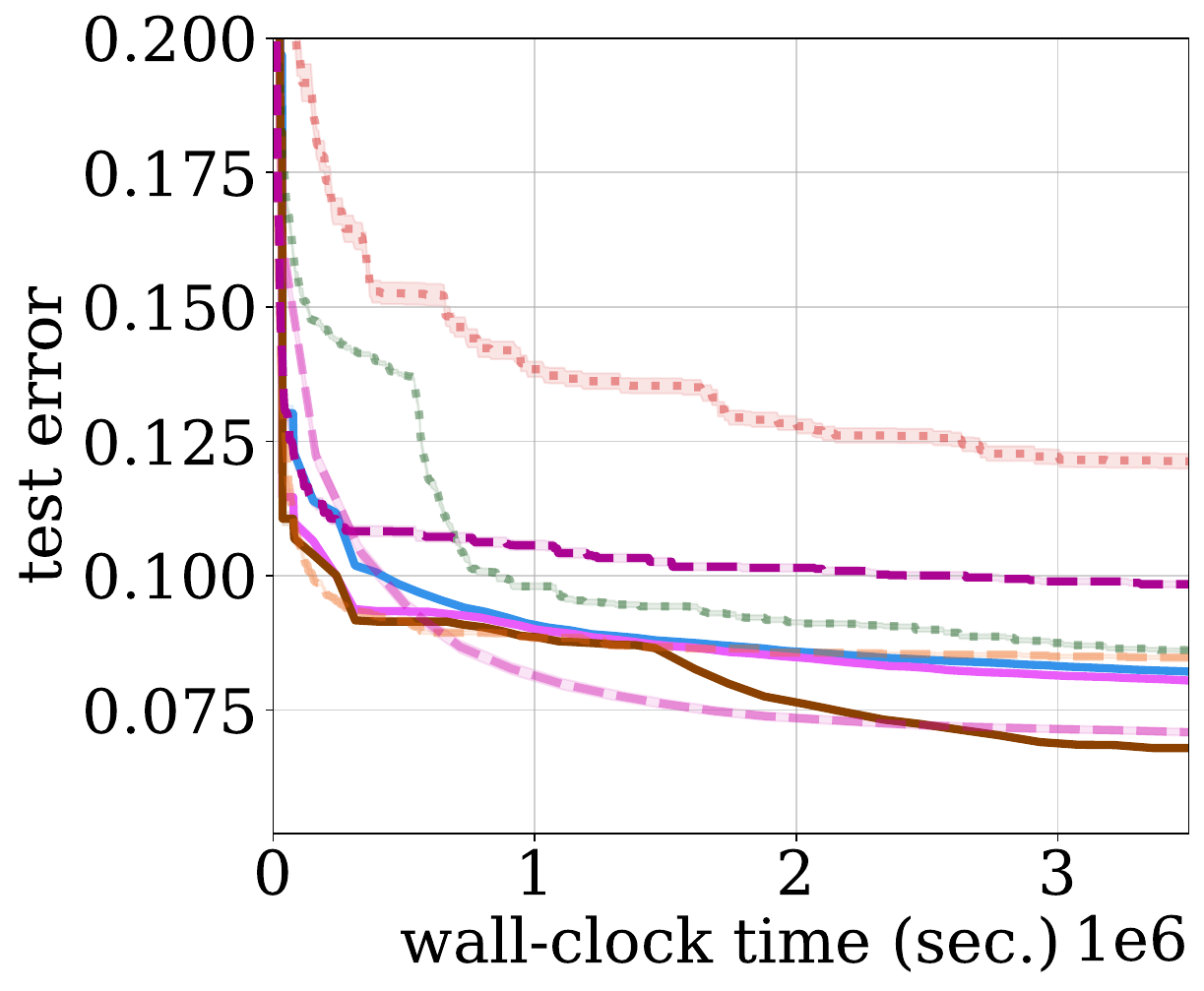}
         \caption{JAHS (C. Histology)}
         \label{fig:rfjahs_co}
     \end{subfigure}
     \hfill
     \begin{subfigure}[c]{0.32\textwidth}
         \centering
         \includegraphics[width=.92\textwidth]{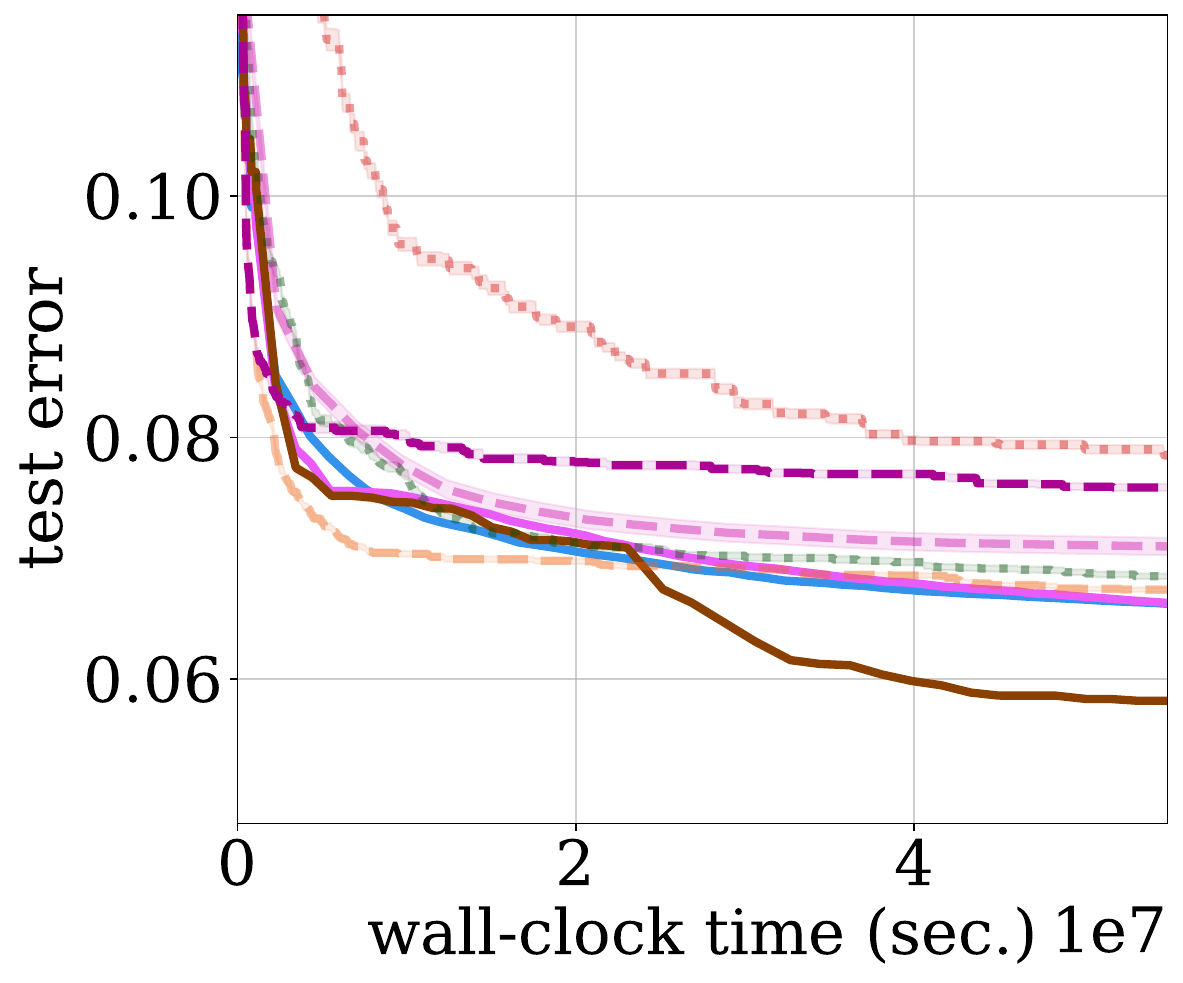}
         \caption{JAHS (F-MNIST)}
         \label{fig:rfjahs_fm}
     \end{subfigure}
     \\
     \begin{subfigure}[c]{0.32\textwidth}
         \centering
         \includegraphics[width=.92\textwidth]{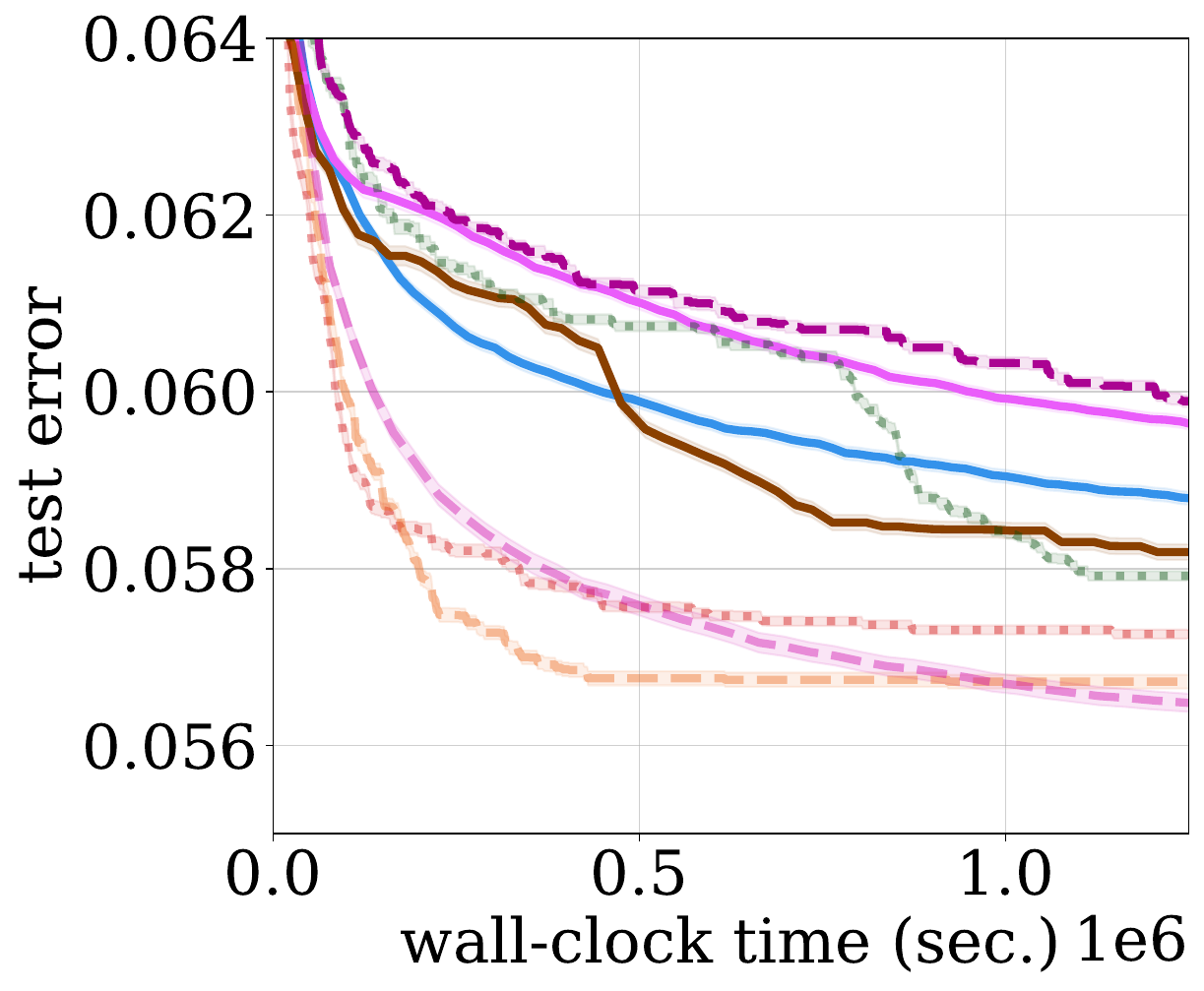}
         \caption{NAS-Bench-101 (CIFAR-10)}
         \label{fig:rfnas101}
     \end{subfigure}
     \hfill
     \begin{subfigure}[c]{0.32\textwidth}
         \centering
         \includegraphics[width=.92\textwidth]{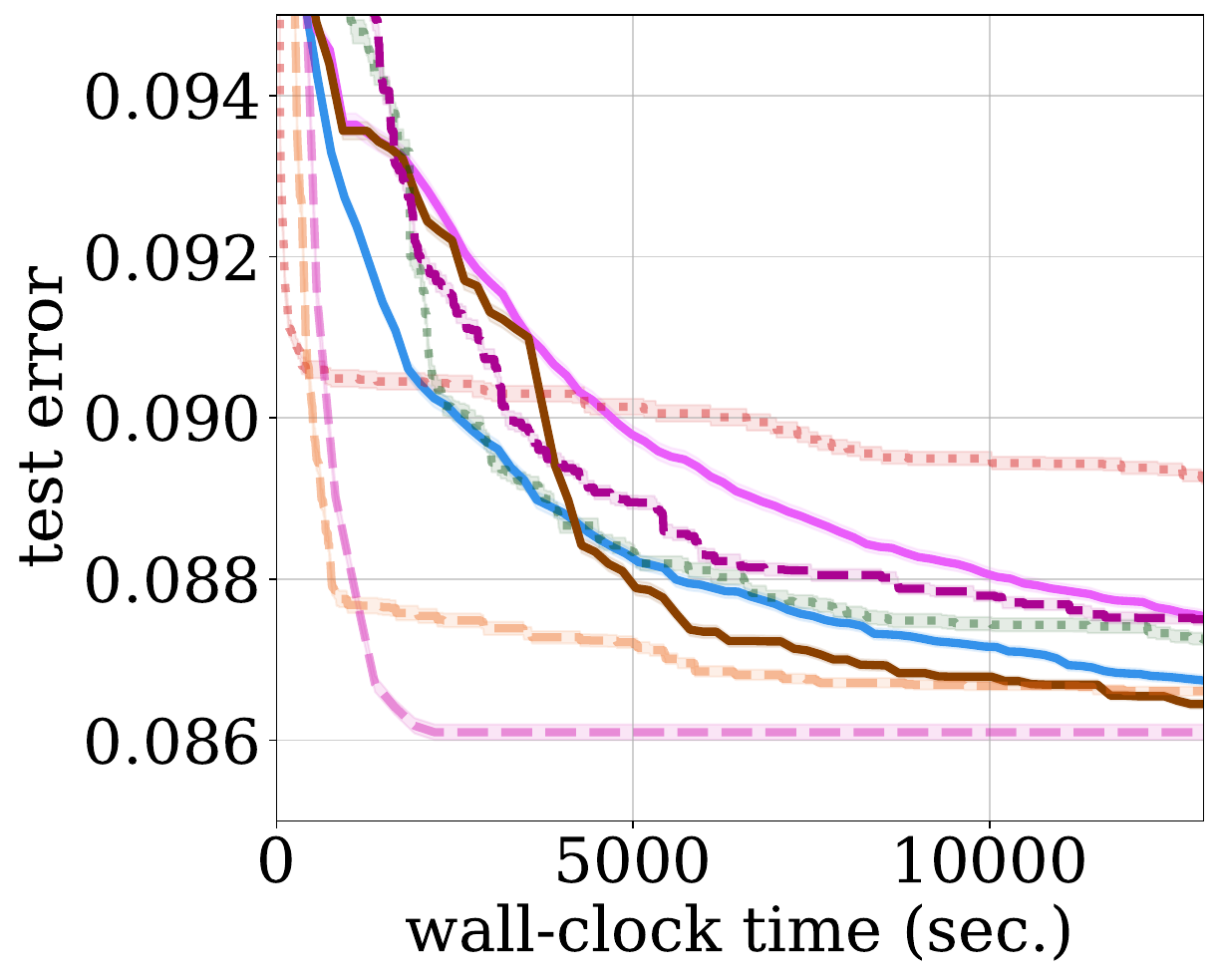}
         \caption{NAS-Bench-201 (CIFAR-10)}
         \label{fig:rfnas201}
     \end{subfigure}
     \hfill
     \begin{subfigure}[c]{0.32\textwidth}
         \centering
         \includegraphics[width=.8\textwidth]{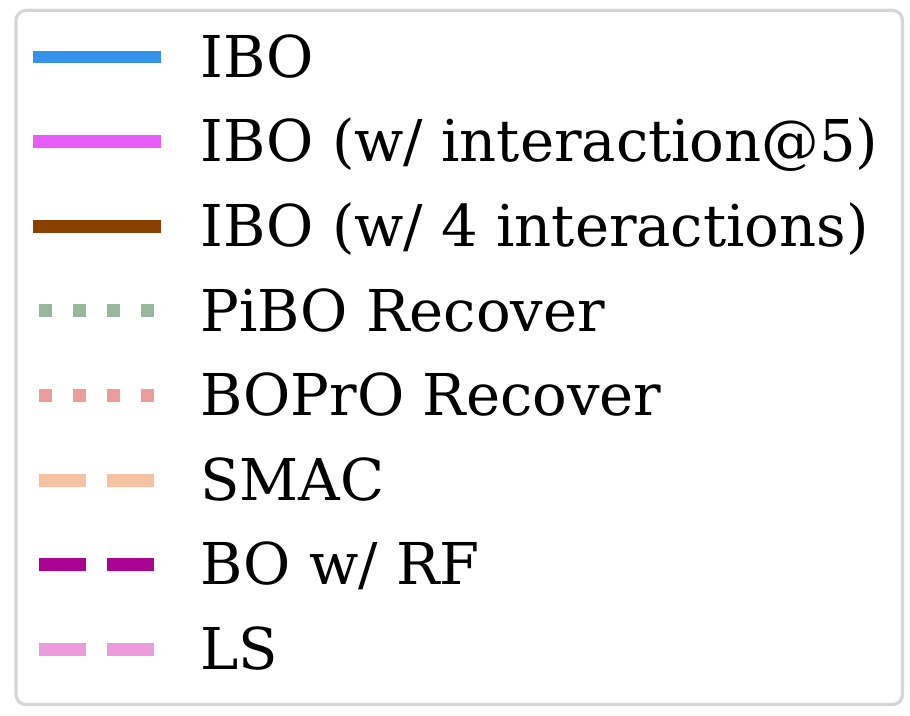}
     \end{subfigure}
    \caption{\textbf{IBO-HPC recovers from misleading user feedback.} IBO-HPC successfully and consistently recovers from misleading user feedback and performs equally well as if no feedback was given. Also, IBO-HPC handles alternating, contradictory feedback well and is able to leverage beneficial feedback while ignoring misleading feedback. 
    }
    \label{fig:recovery-full}
\end{figure}

\begin{figure}[h!]
     \centering
     \begin{subfigure}[c]{0.32\textwidth}
         \centering
         \includegraphics[width=0.95\textwidth]{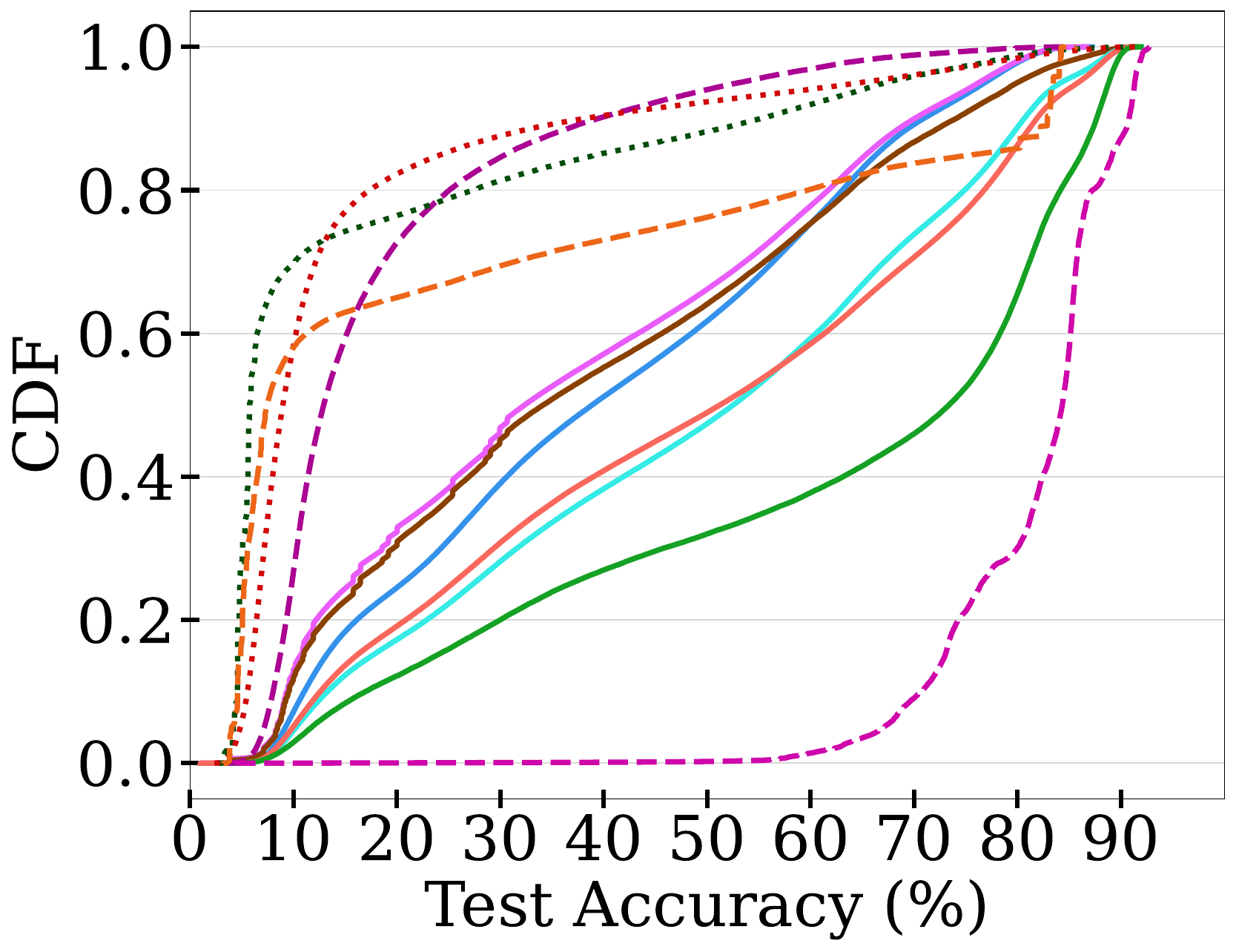}
         \caption{JAHS (CIFAR-10)}
         \label{fig:7jahs_cifar10}
     \end{subfigure}
     \hfill
     \begin{subfigure}[c]{0.32\textwidth}
         \centering
         \includegraphics[width=0.95\textwidth]{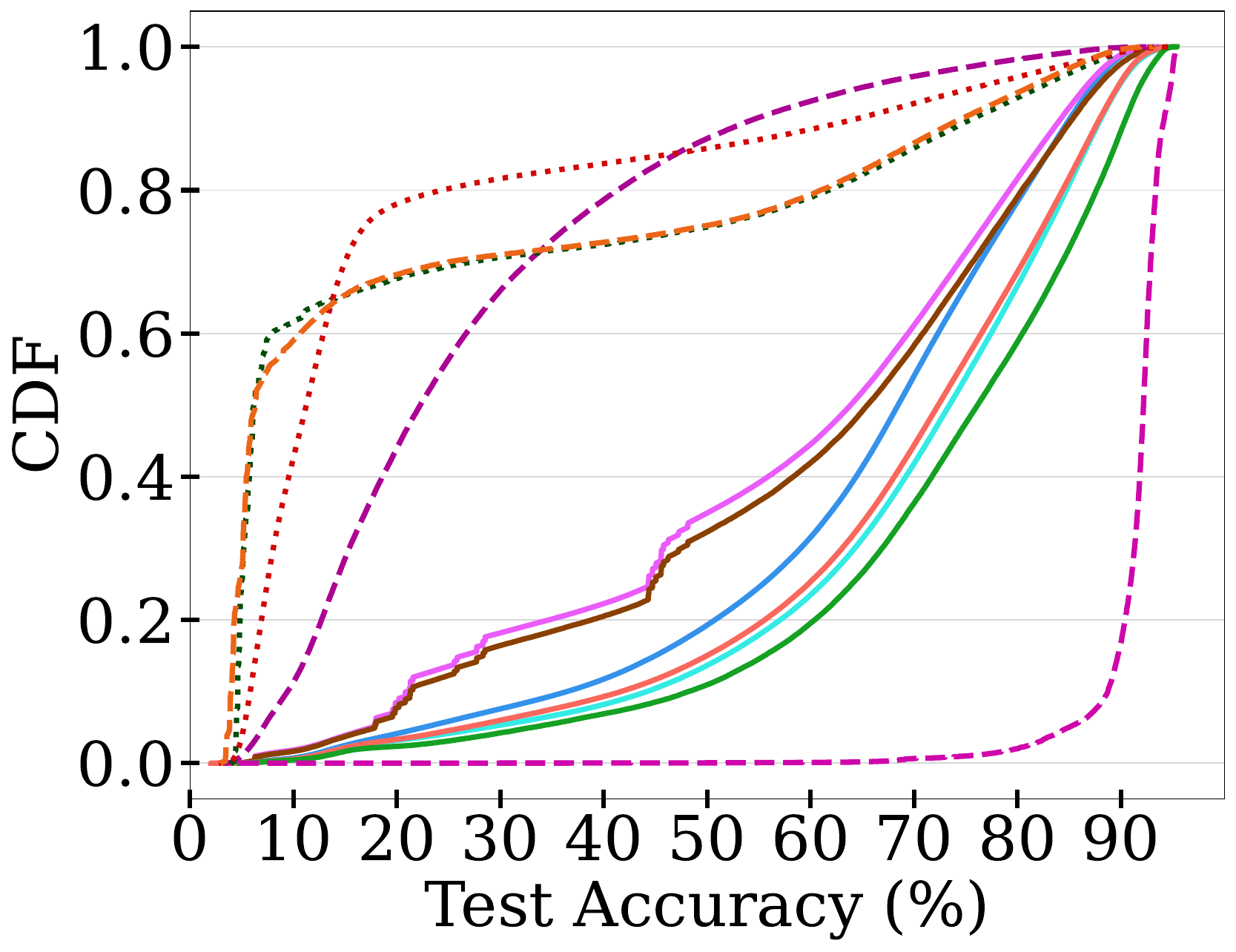}
         \caption{JAHS (C. Histology)}
         \label{fig:7jahs_co}
     \end{subfigure}
     \hfill
     \begin{subfigure}[c]{0.32\textwidth}
         \centering
         \includegraphics[width=0.95\textwidth]{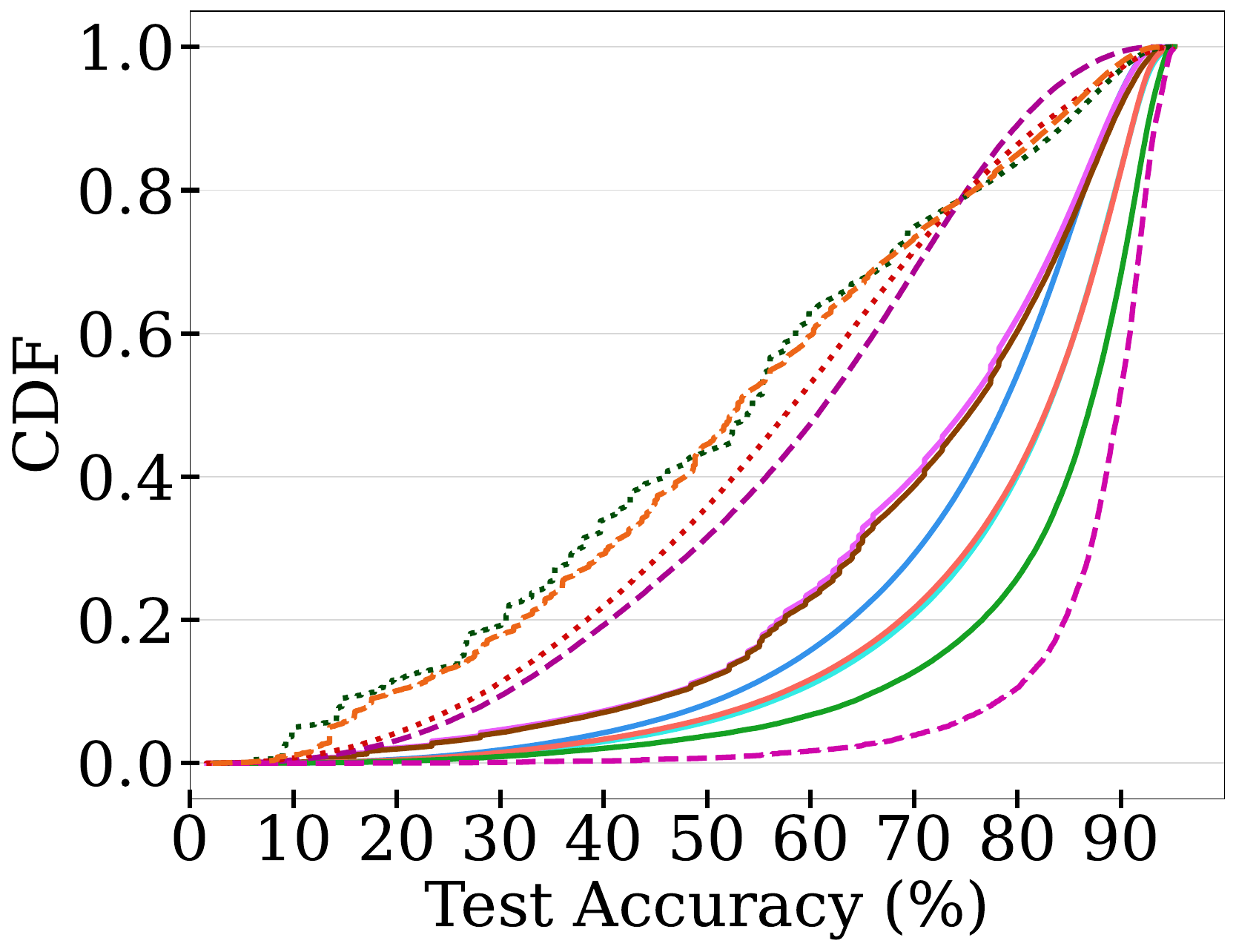}
         \caption{JAHS (F-MNIST)}
         \label{fig:7jahs_fm}
     \end{subfigure}
     \\
     \begin{subfigure}[c]{0.32\textwidth}
         \centering
         \includegraphics[width=0.95\textwidth]{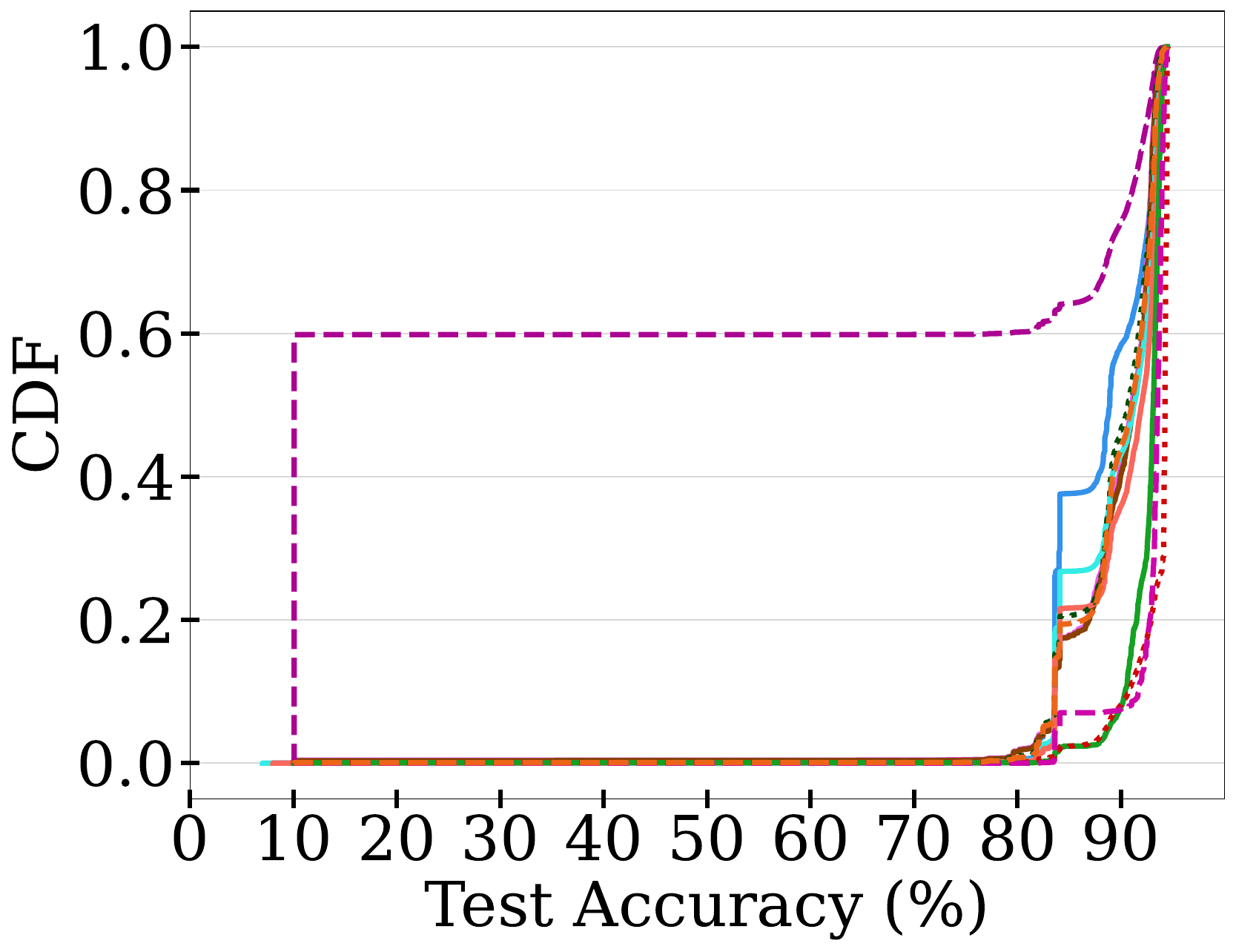}
         \caption{NAS-Bench-101 (CIFAR-10)}
         \label{fig:7nas101}
     \end{subfigure}
     \hfill
     \begin{subfigure}[c]{0.32\textwidth}
         \centering
         \includegraphics[width=0.95\textwidth]{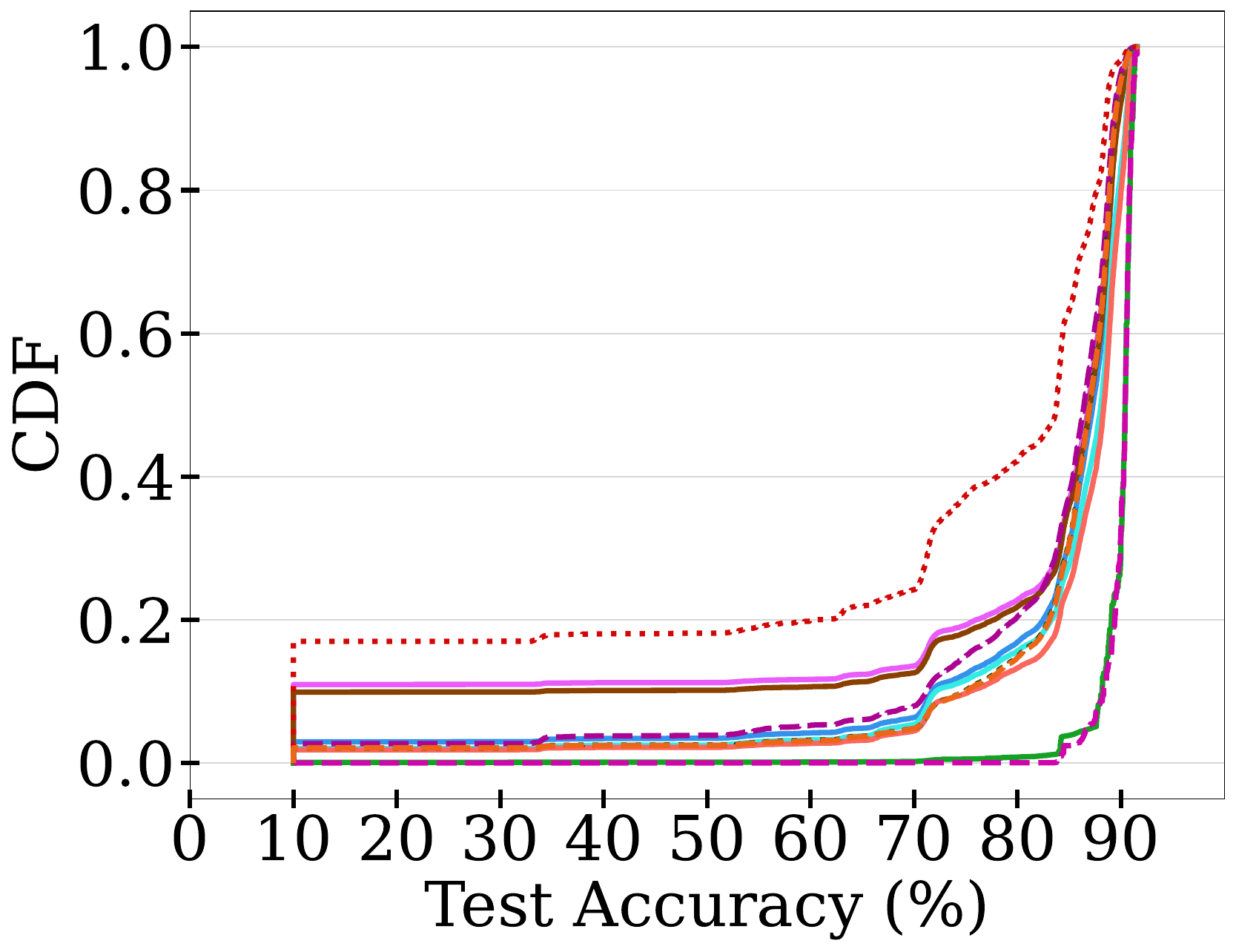}
         \caption{NAS-Bench-201 (CIFAR-10)}
         \label{fig:7nas201}
     \end{subfigure}
     \hfill
     \begin{subfigure}[c]{0.32\textwidth}
         \centering
         \includegraphics[width=0.8\textwidth]
         {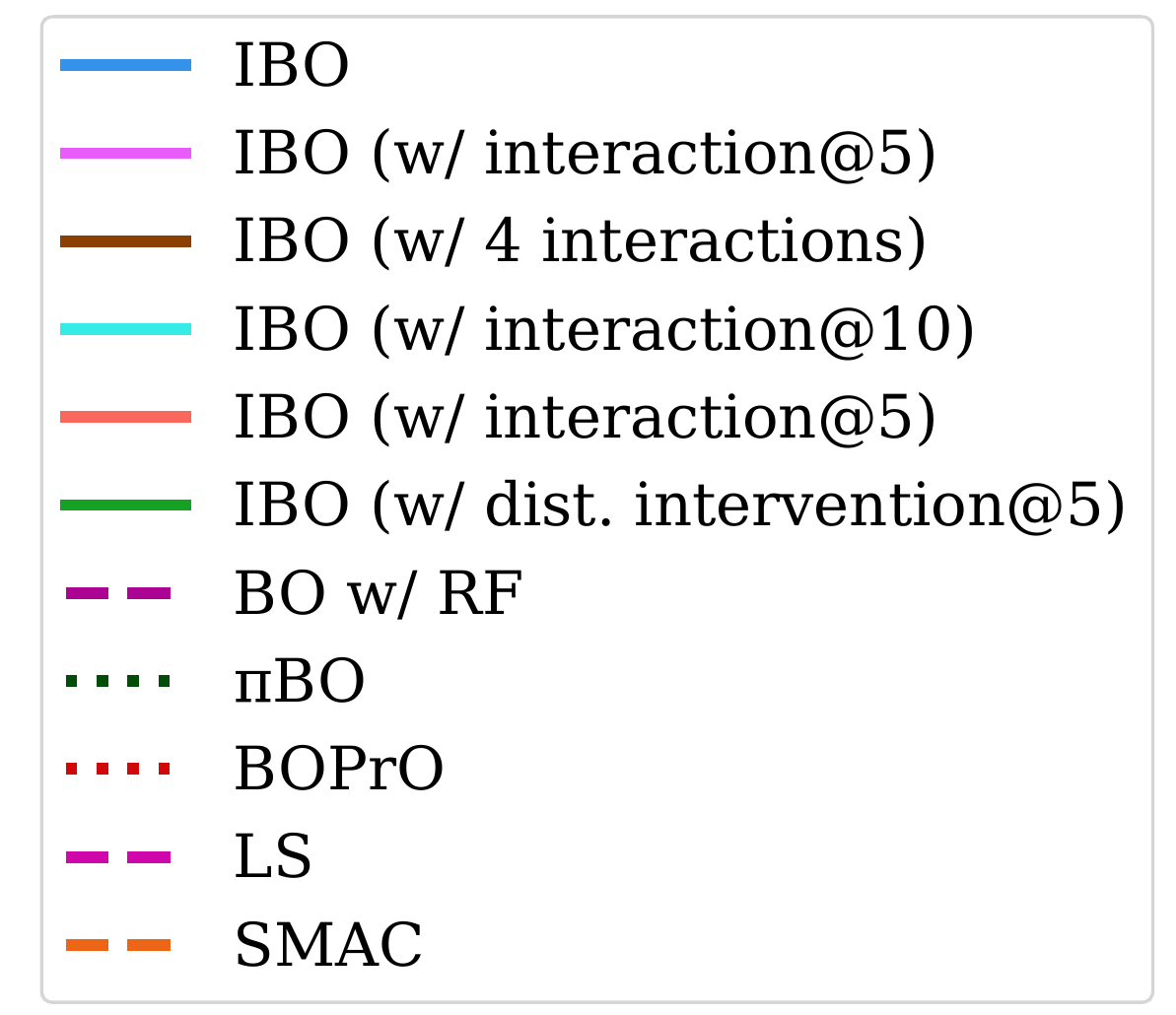}
         \label{fig:7legend}
     \end{subfigure}
    \caption{\textbf{CDF of Test Accuracy.} The majority of IBO-HPC's sampled candidate configurations are high-performing configurations. Thus, IBO-HPC invests more computational resources in good configurations than other methods. We conjecture that this is because IBO-HPC selects configurations s.t. they are likely to perform similarly to the incumbent in each iteration. Interestingly, RS also samples many well-performing configurations on the JAHS benchmark.
    }
    \label{fig:cdf}
\end{figure}

\begin{figure}[h!]
     \centering
     \begin{subfigure}[c]{0.32\textwidth}
         \centering
         \includegraphics[width=0.95\textwidth]{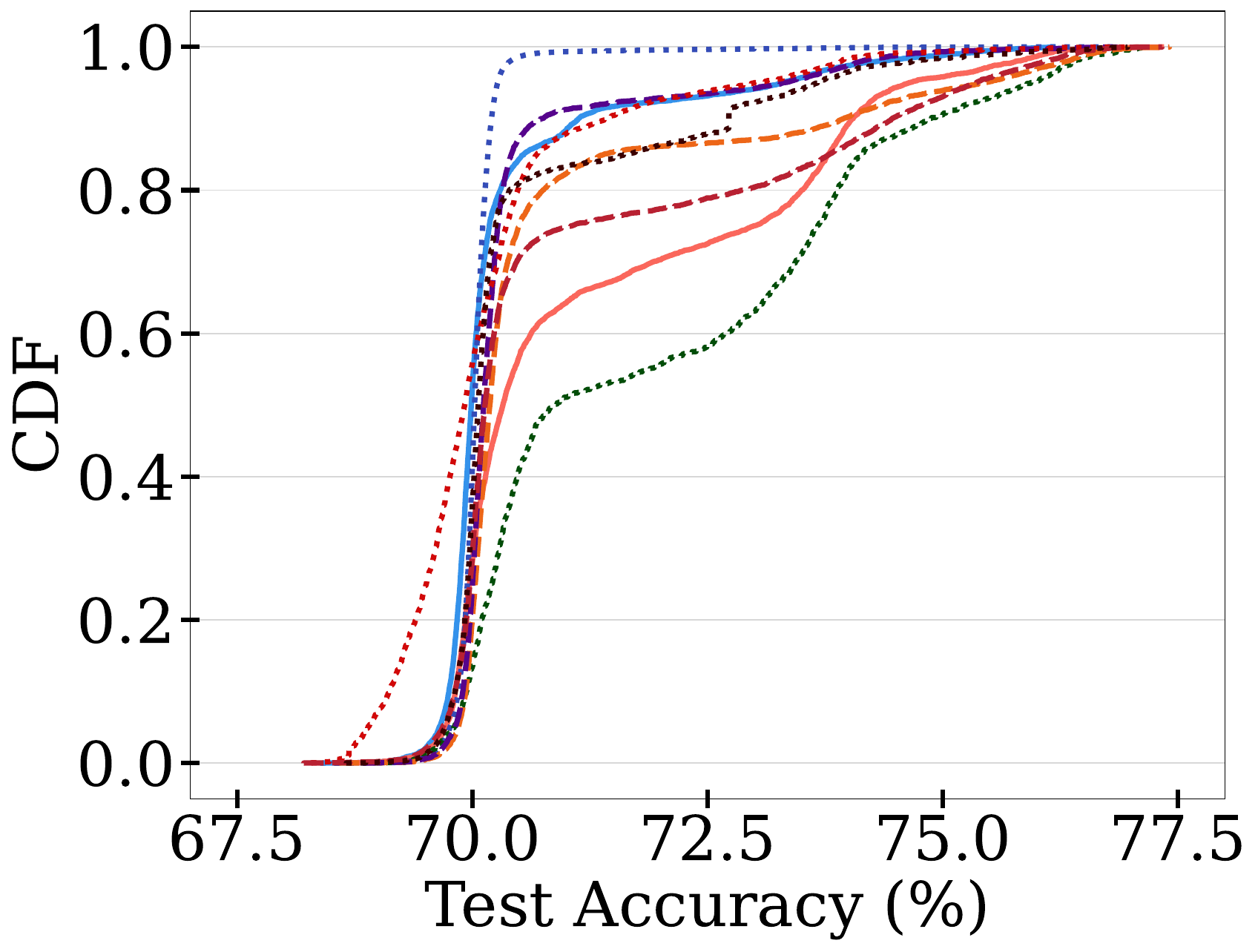}
         \caption{HPO-B (6767:31)}
     \end{subfigure}
     \hfill
     \begin{subfigure}[c]{0.32\textwidth}
         \centering
         \includegraphics[width=0.95\textwidth]{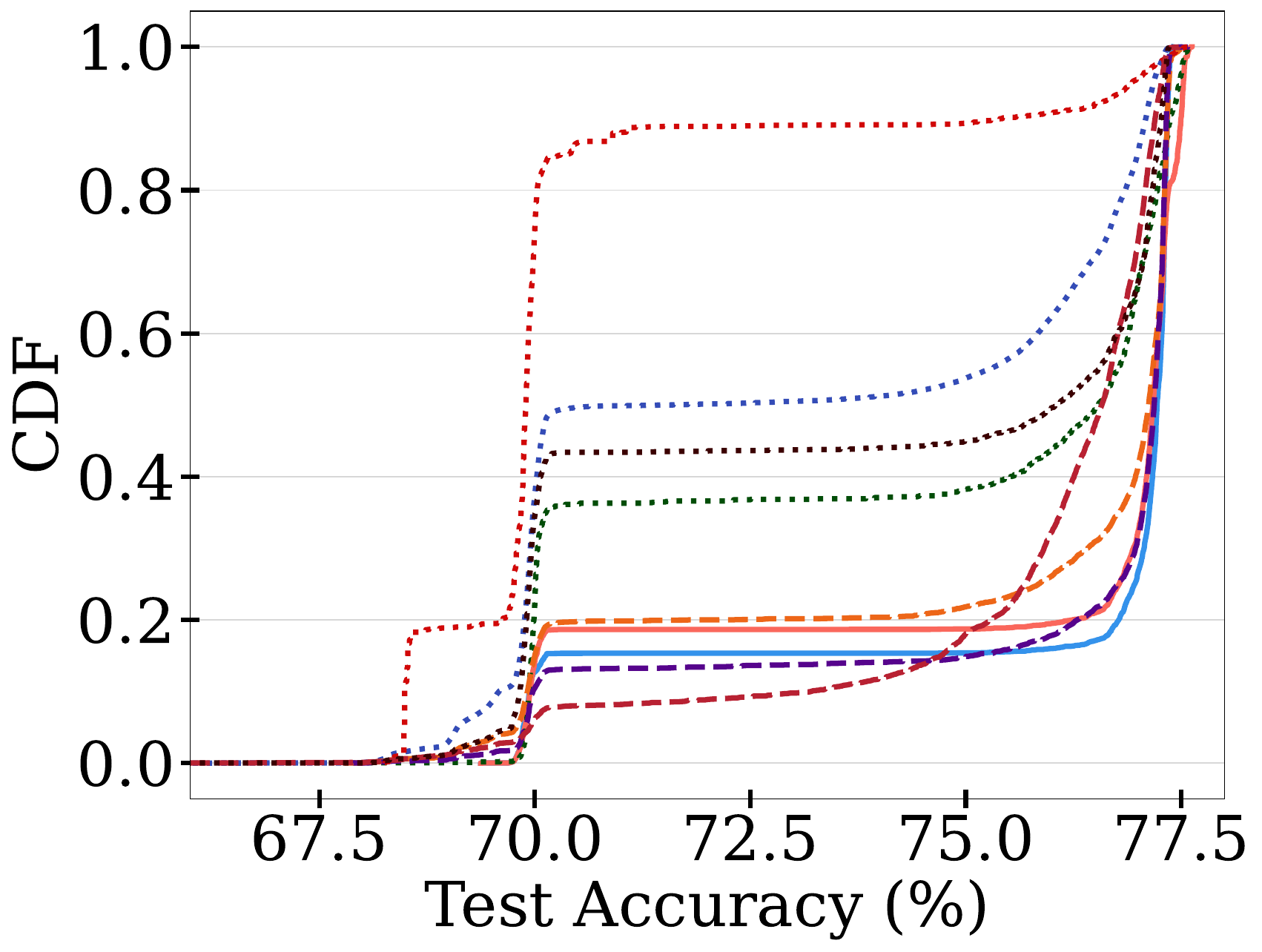}
         \caption{HPO-B (6794:31)}
     \end{subfigure}
     \hfill
     \begin{subfigure}[c]{0.32\textwidth}
         \centering
         \includegraphics[width=0.95\textwidth]{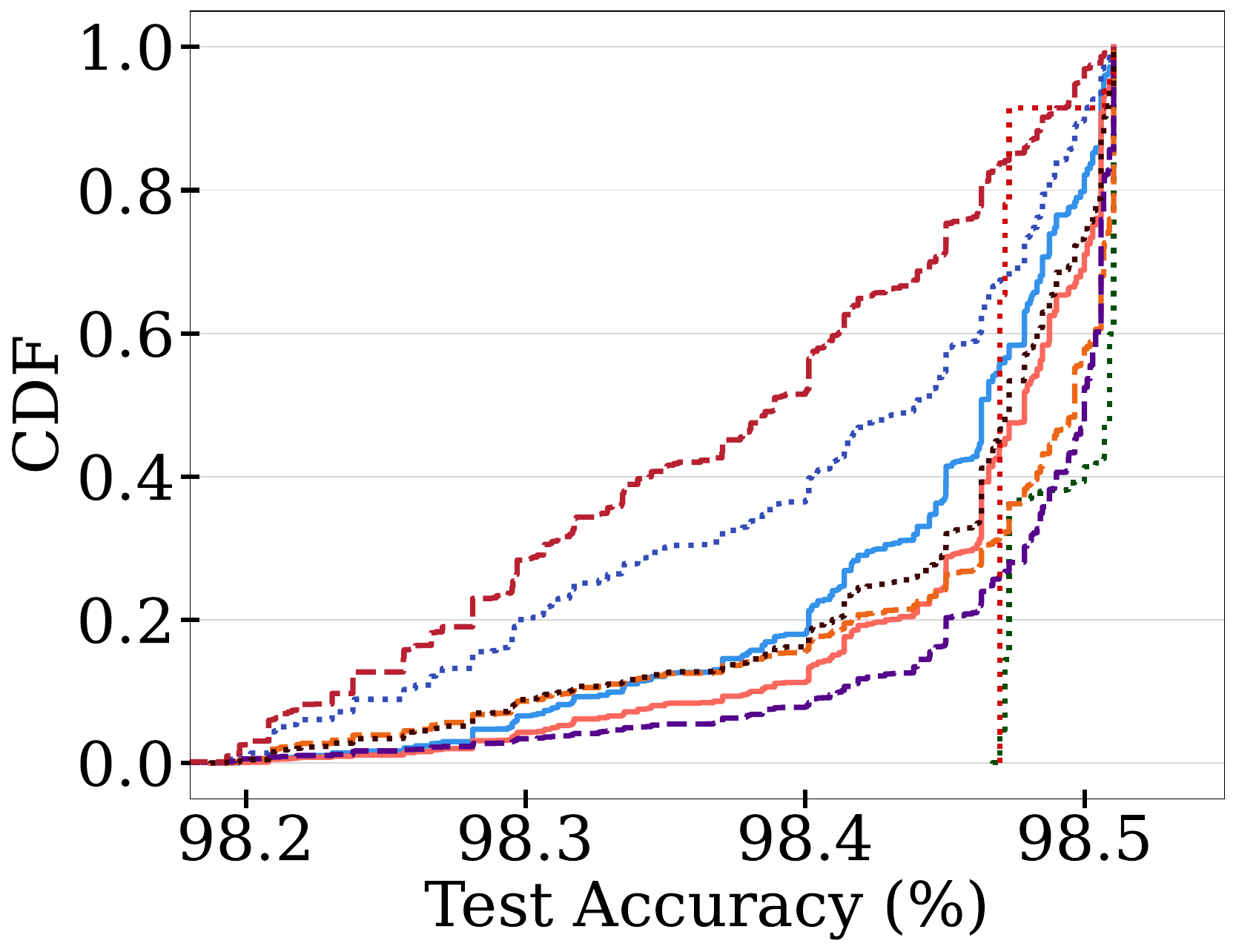}
         \caption{HPO-B (6794:9914)}
     \end{subfigure}
     \\
     \begin{subfigure}[c]{0.32\textwidth}
         \centering
         \includegraphics[width=0.95\textwidth]{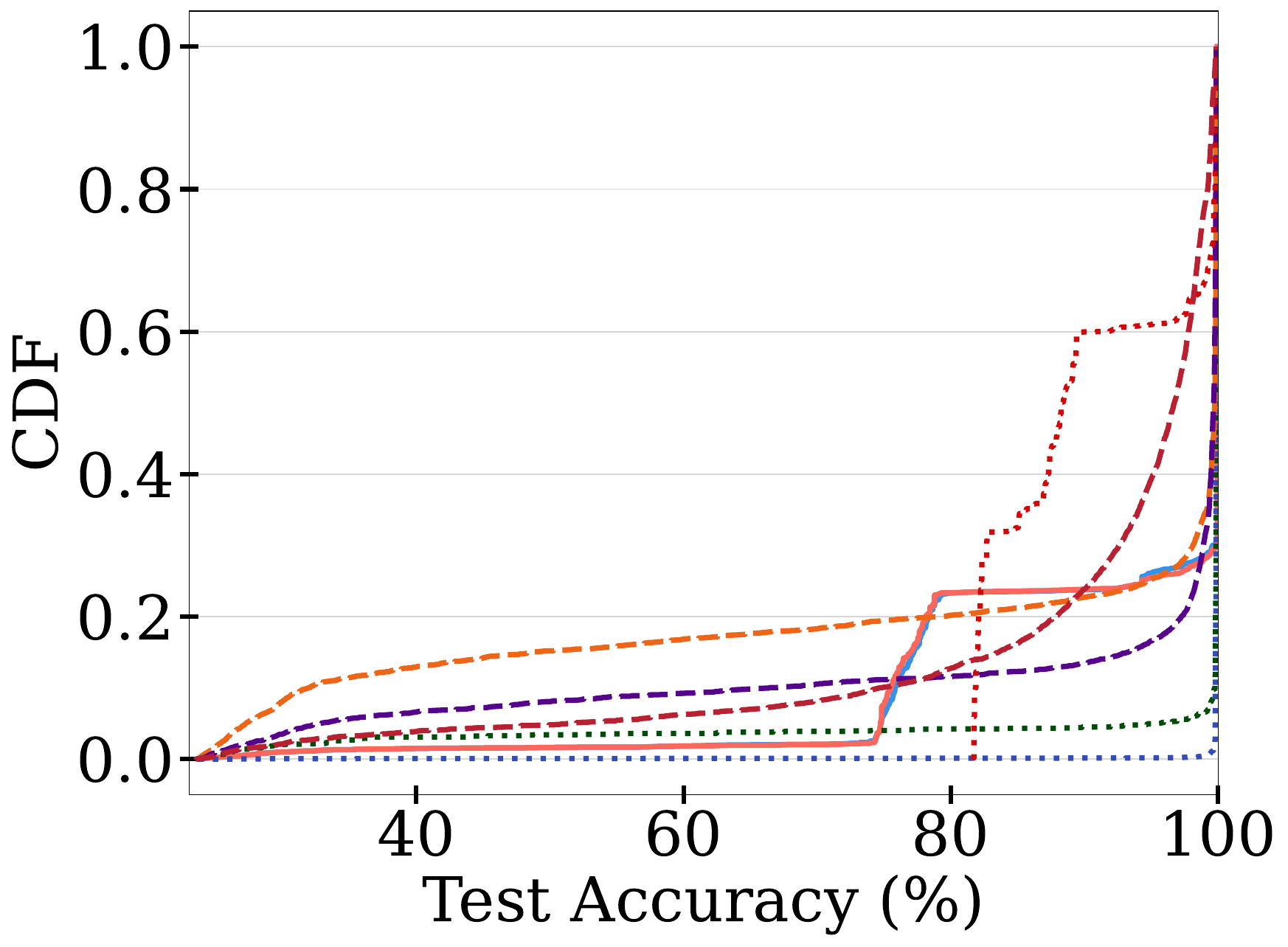}
         \caption{PD1 (Imagenet)}
     \end{subfigure}
     \hfill
     \begin{subfigure}[c]{0.32\textwidth}
         \centering
         \includegraphics[width=0.95\textwidth]{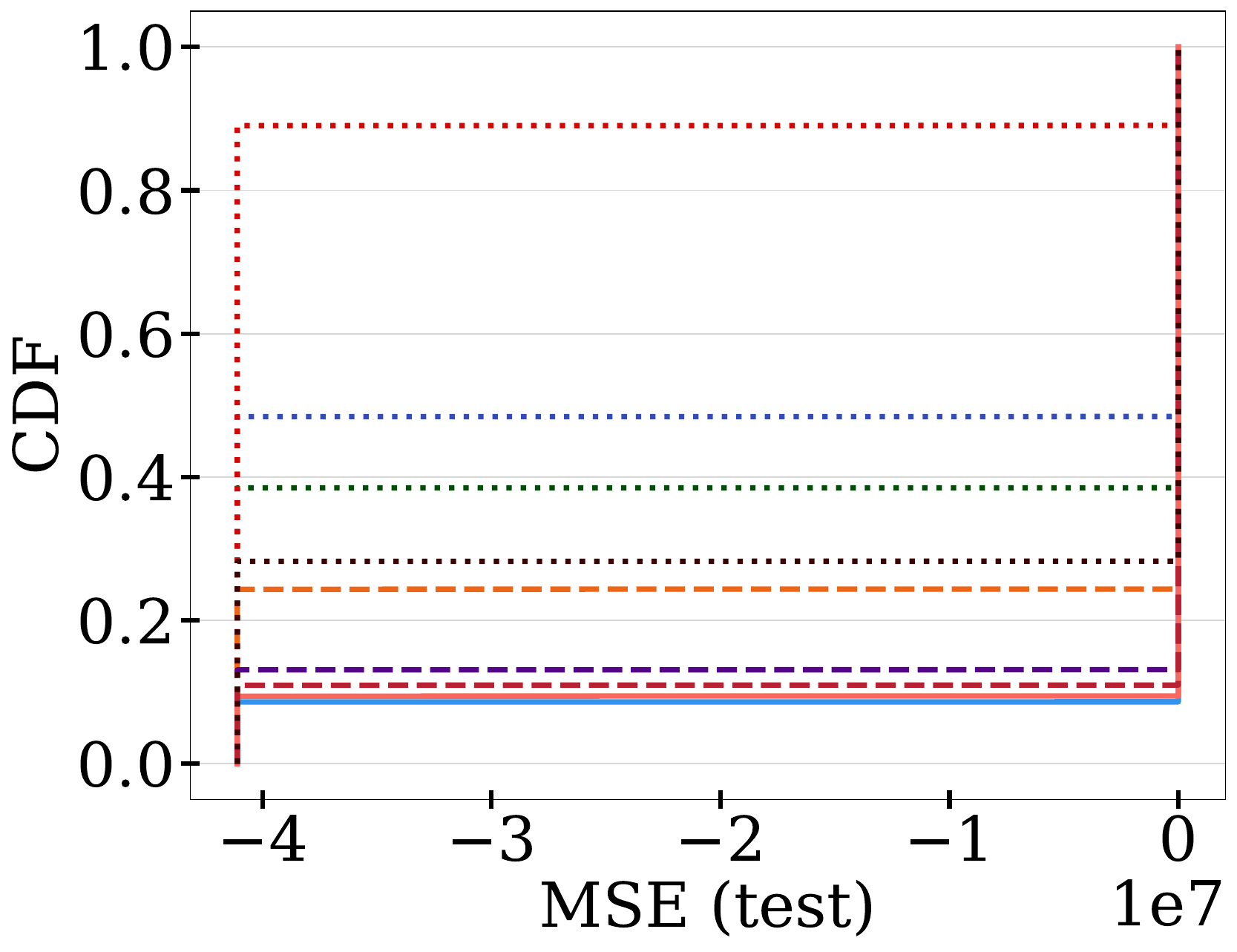}
         \caption{FCNet (Slize Localization)}
     \end{subfigure}
     \hfill
     \begin{subfigure}[c]{0.32\textwidth}
         \centering
         \includegraphics[width=0.7\textwidth]
         {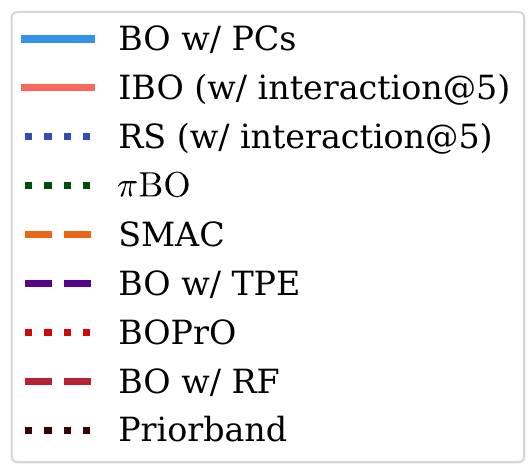}
         \label{fig:7legend2}
     \end{subfigure}
    \caption{\textbf{CDF of Test Accuracy/Mean Squared Error (MSE).} The majority of IBO-HPC's sampled candidate configurations are high-performing configurations. Thus, IBO-HPC invests more computational resources in good configurations than other methods. We conjecture that this is because IBO-HPC selects configurations s.t. they are likely to perform similarly to the incumbent in each iteration.
    }
    \label{fig:cdf2}
\end{figure}

\begin{figure}[h!]
     \centering
     \begin{subfigure}[c]{0.32\textwidth}
         \centering
         \includegraphics[width=\textwidth]{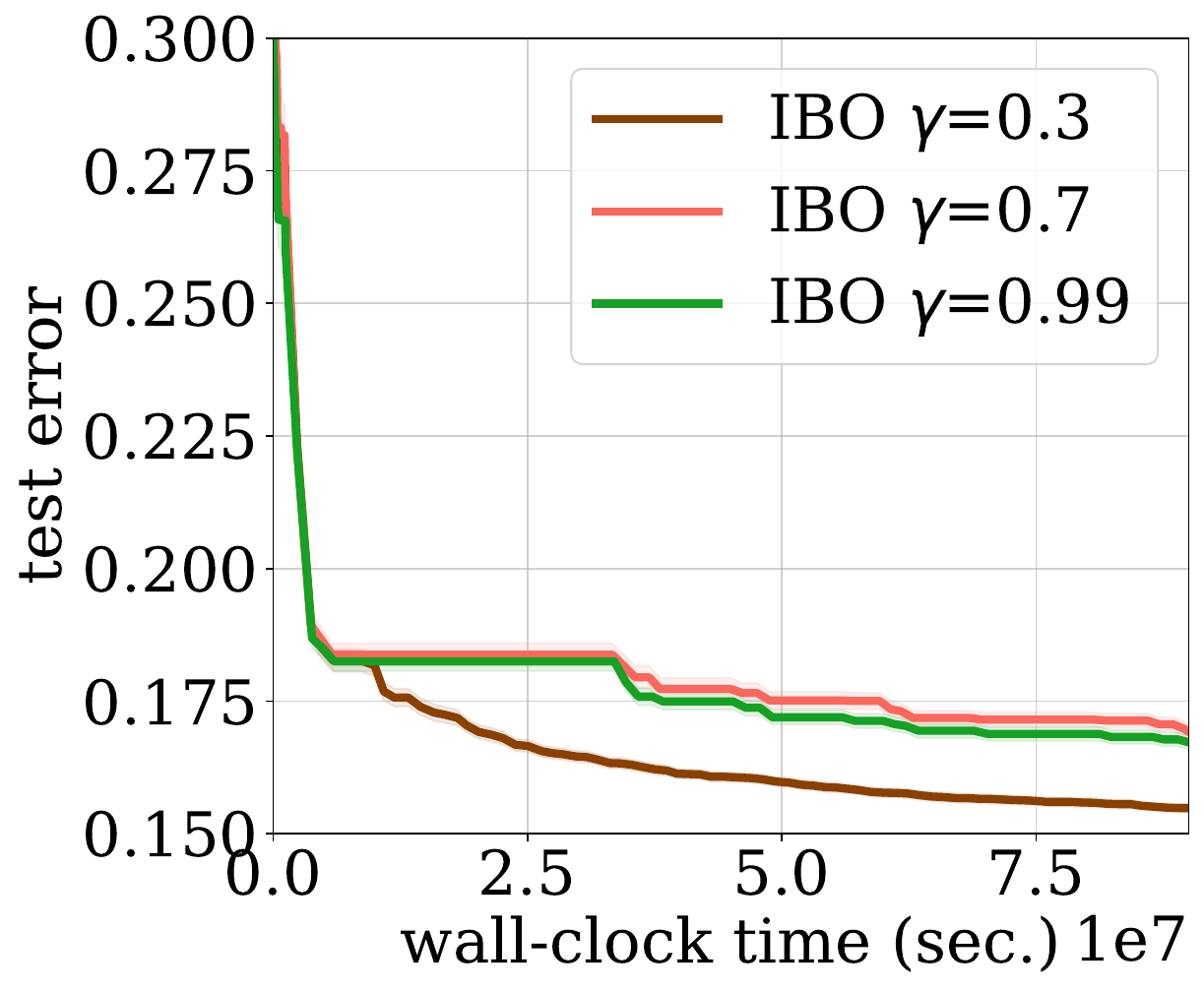}
         \caption{JAHS (CIFAR-10)}
         \label{fig:05jahs_cifar10}
     \end{subfigure}
     \hfill
     \begin{subfigure}[c]{0.32\textwidth}
         \centering
         \includegraphics[width=\textwidth]{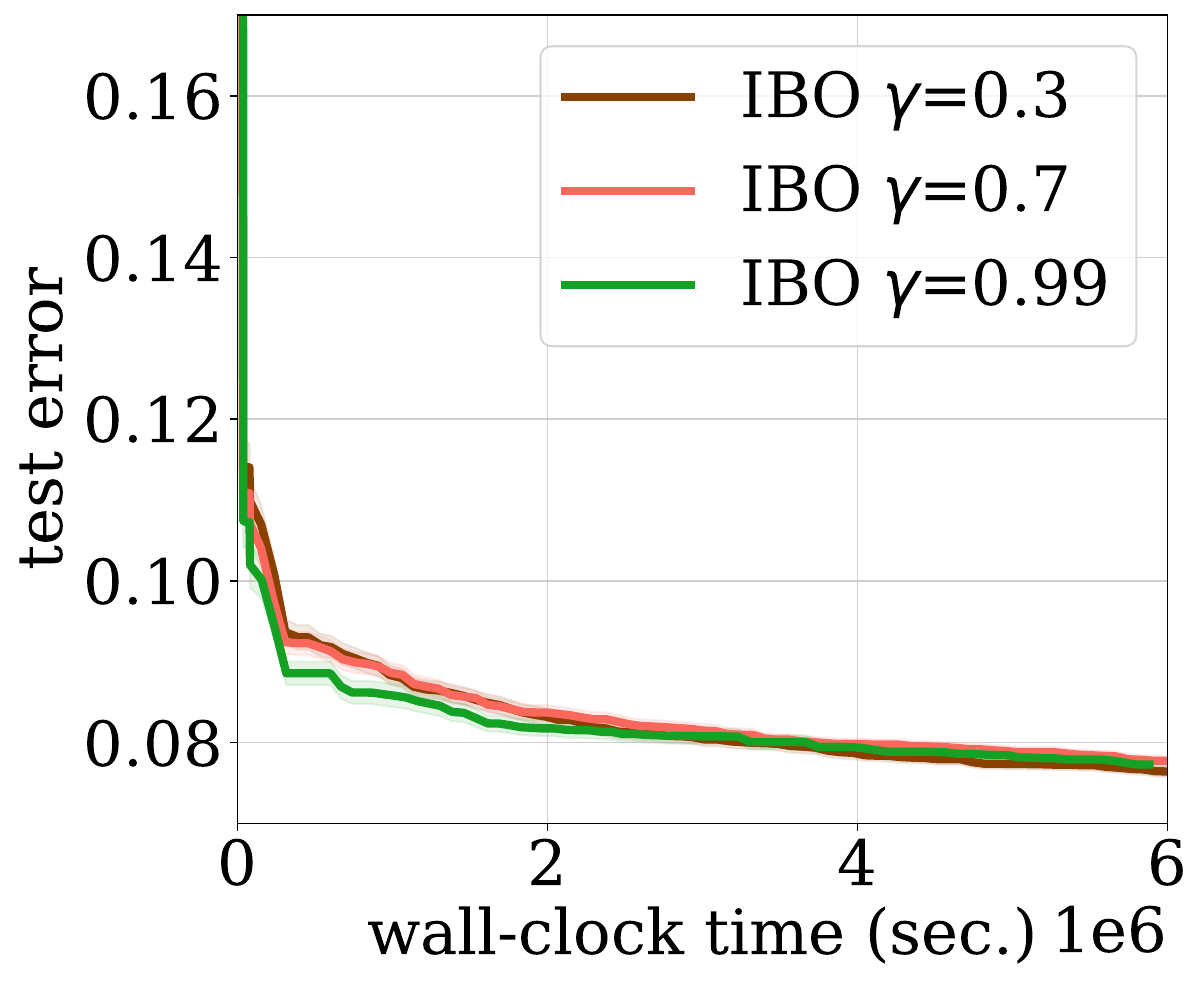}
         \caption{JAHS (C. Histology)}
         \label{fig:05jahs_co}
     \end{subfigure}
     \hfill
     \begin{subfigure}[c]{0.32\textwidth}
         \centering
         \includegraphics[width=\textwidth]{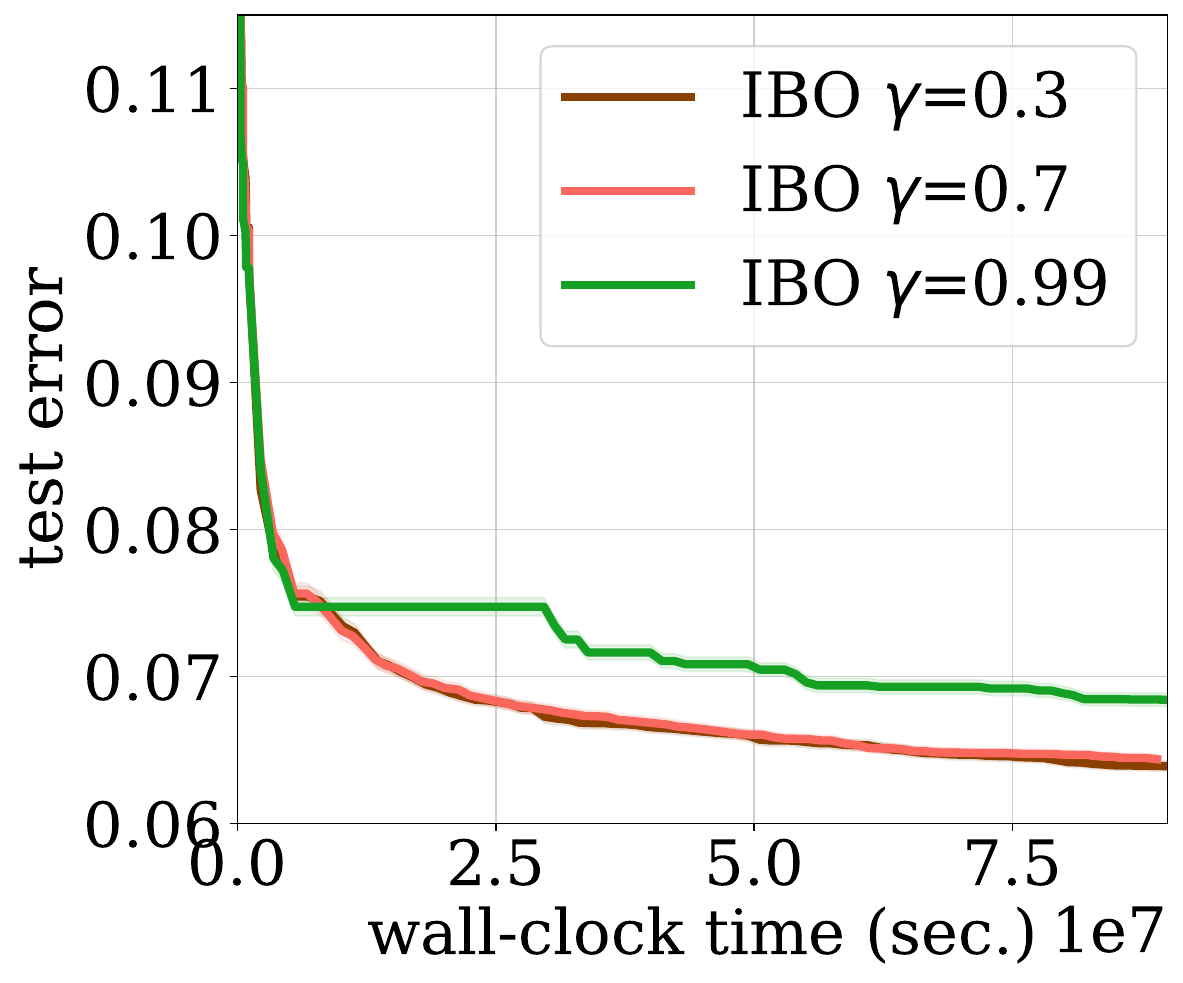}
         \caption{JAHS (F-MNIST)}
         \label{fig:05jahs_fm}
     \end{subfigure}
     \\
     \begin{subfigure}[c]{0.32\textwidth}
         \centering
         \includegraphics[width=\textwidth]{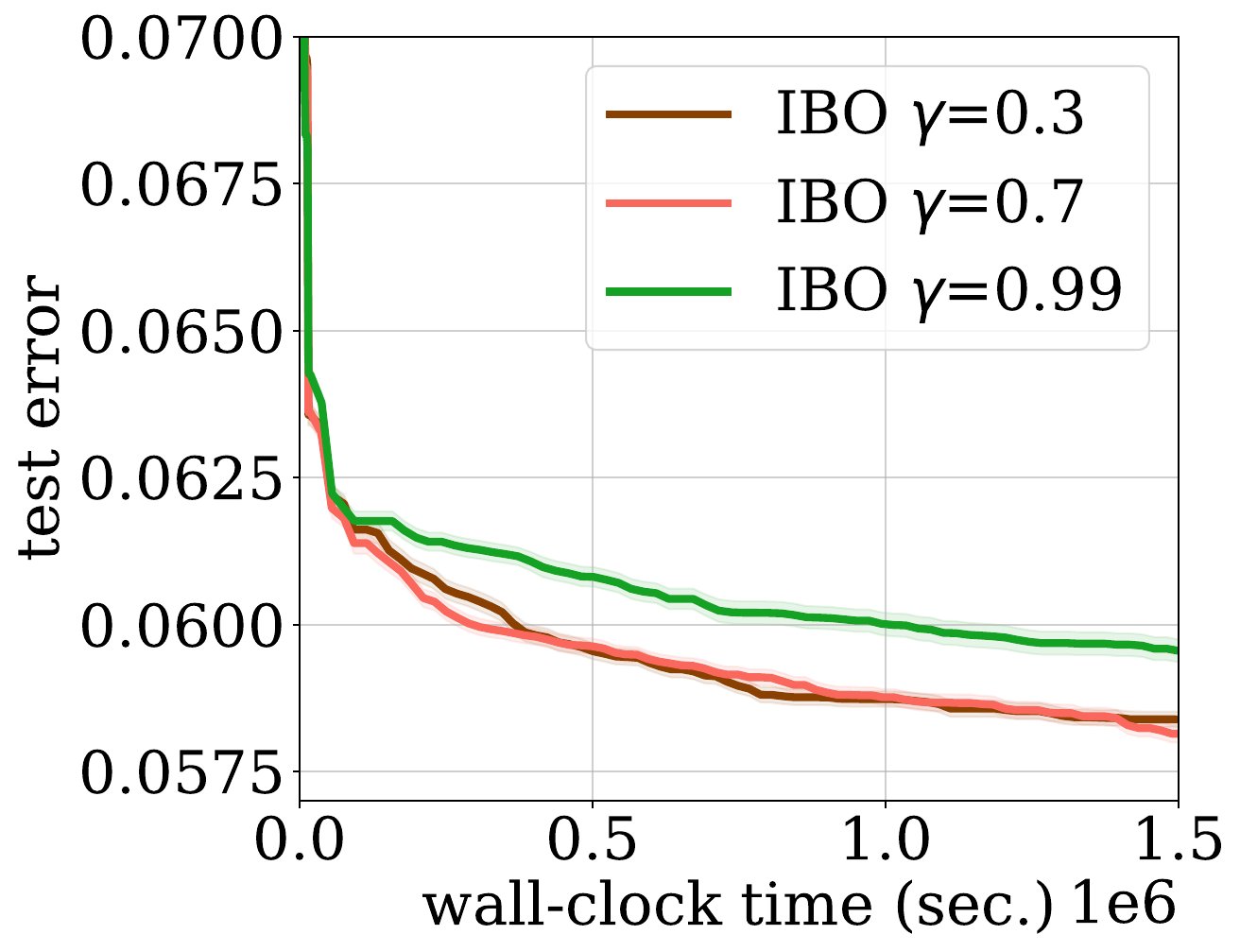}
         \caption{NAS-Bench-101 (CIFAR-10)}
         \label{fig:05nas101}
     \end{subfigure}
     \hfill
     \begin{subfigure}[c]{0.32\textwidth}
         \centering
         \includegraphics[width=.95\textwidth]{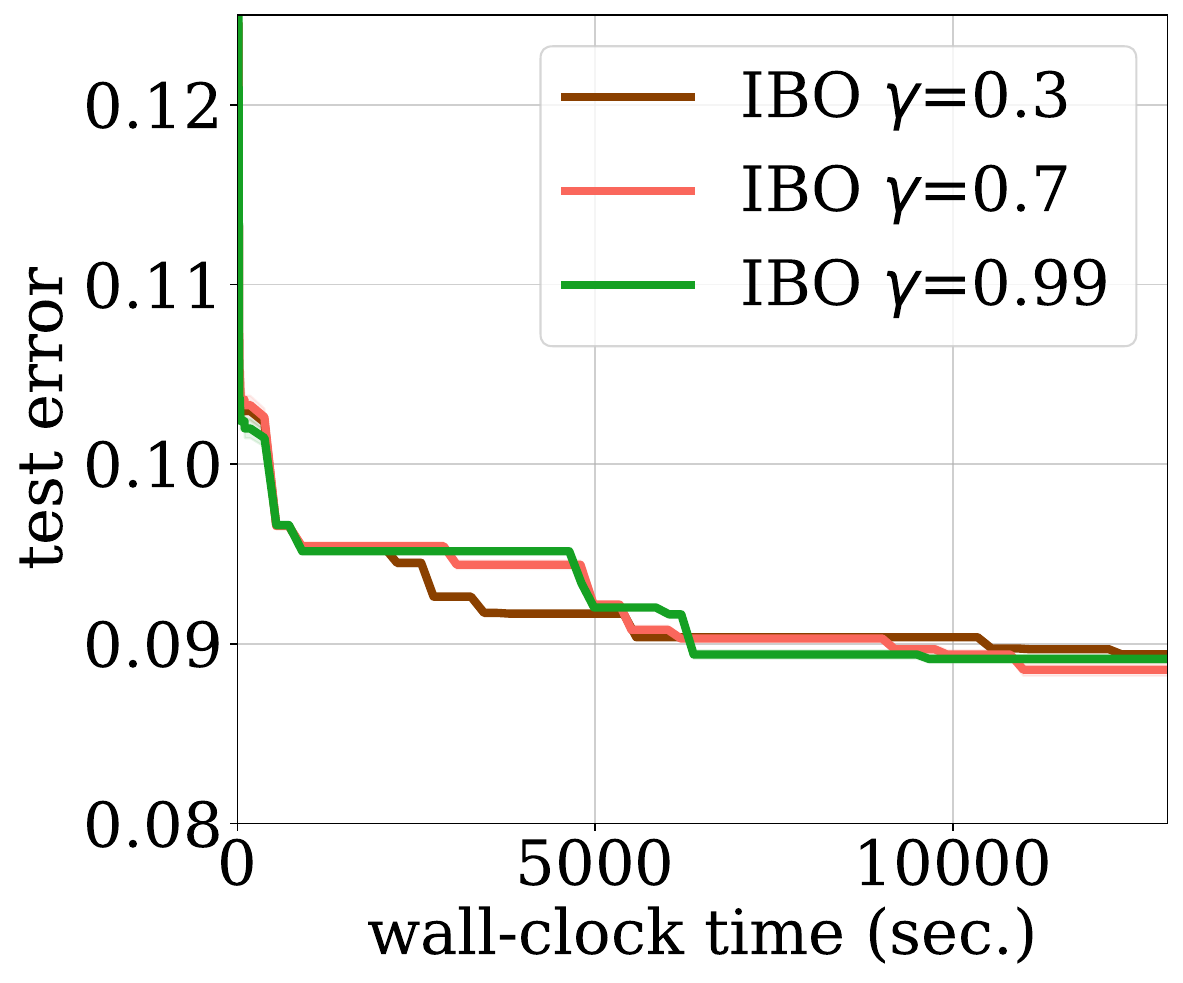}
         \caption{NAS-Bench-201 (CIFAR-10)}
         \label{fig:05nas201}
     \end{subfigure}
     \hfill
     \begin{subfigure}[c]{0.32\textwidth}
         \centering
     \end{subfigure}
    \caption{\textbf{Ablation: Effect of $\bm{\gamma}$ on recovery of IBO-HPC.} As expected, we found that IBO-HPC recovers faster for smaller values of $\gamma$. This is because smaller $\gamma$ values lead to a higher decay of the probability of conditioning on the provided user knowledge. Thus, with faster decay, IBO-HPC recovers faster from harmful or misleading user knowledge (provided at iteration 10).
    }
    \label{fig:ablation_decay}
\end{figure}

\begin{figure}[h!]
     \centering
     \begin{subfigure}[c]{0.32\textwidth}
         \centering
         \includegraphics[width=\textwidth]{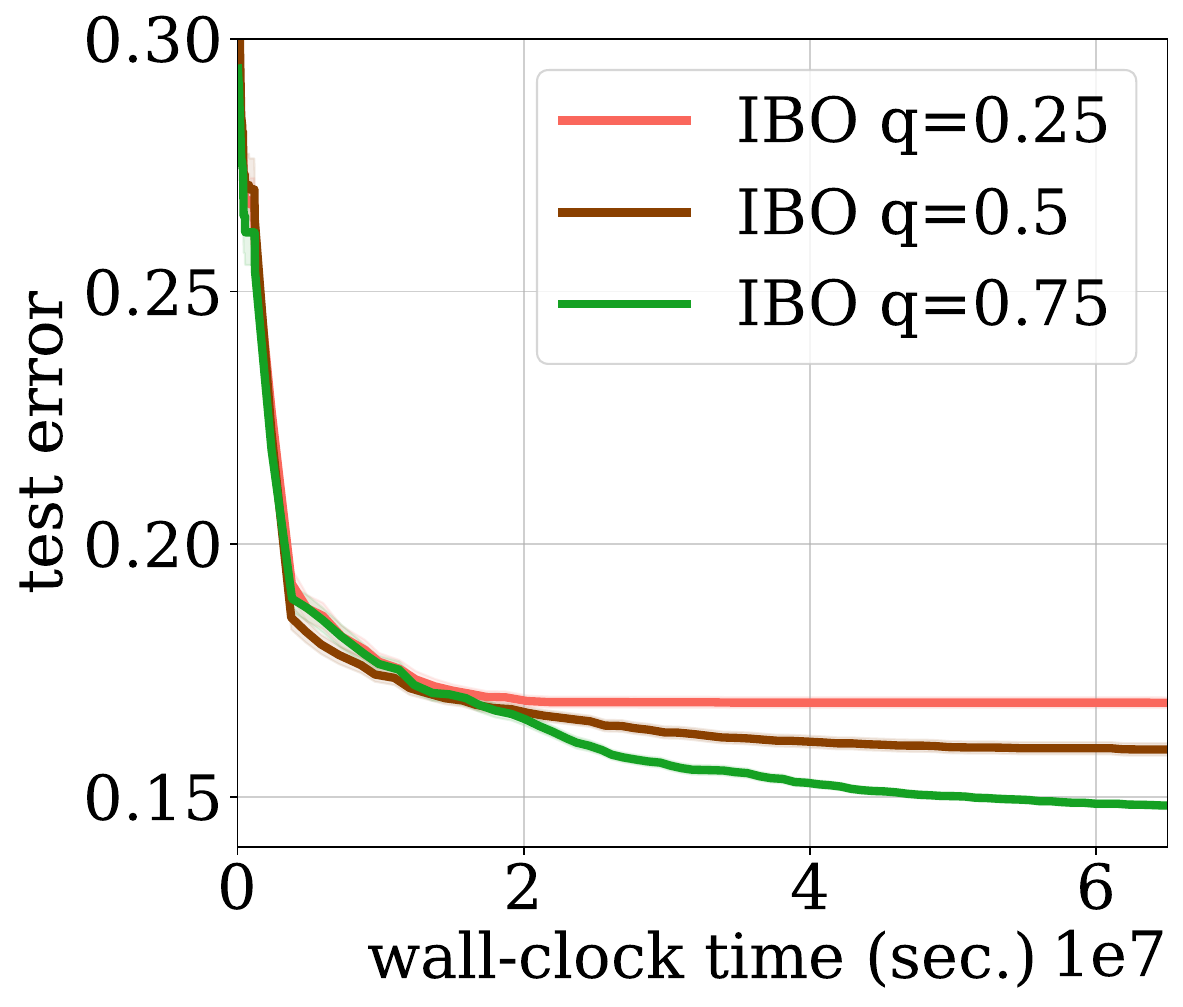}
         \caption{JAHS (CIFAR-10)}
         \label{fig:15jahs_cifar10}
     \end{subfigure}
     \hfill
     \begin{subfigure}[c]{0.32\textwidth}
         \centering
         \includegraphics[width=\textwidth]{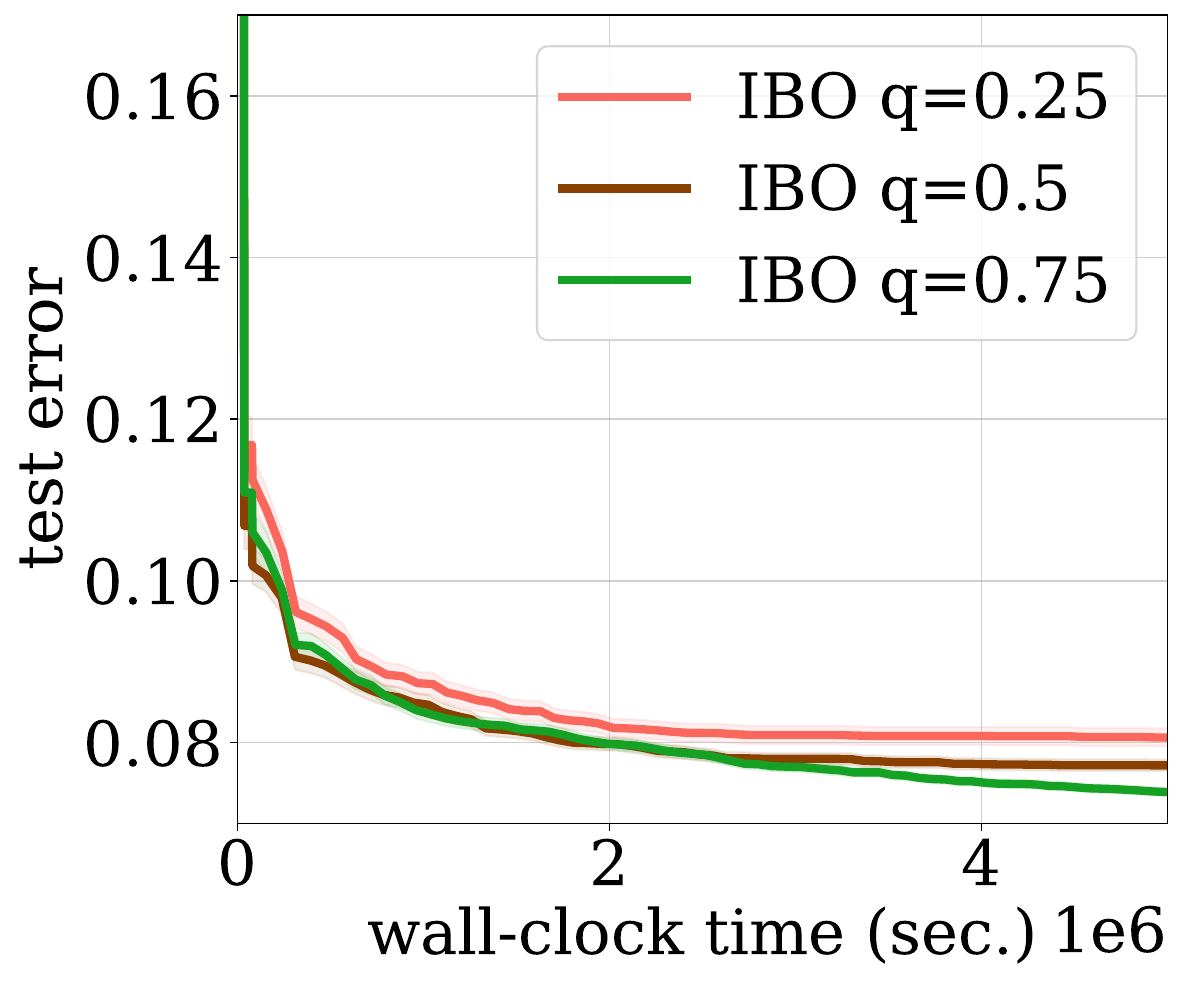}
         \caption{JAHS (C. Histology)}
         \label{fig:15jahs_co}
     \end{subfigure}
     \hfill
     \begin{subfigure}[c]{0.32\textwidth}
         \centering
         \includegraphics[width=\textwidth]{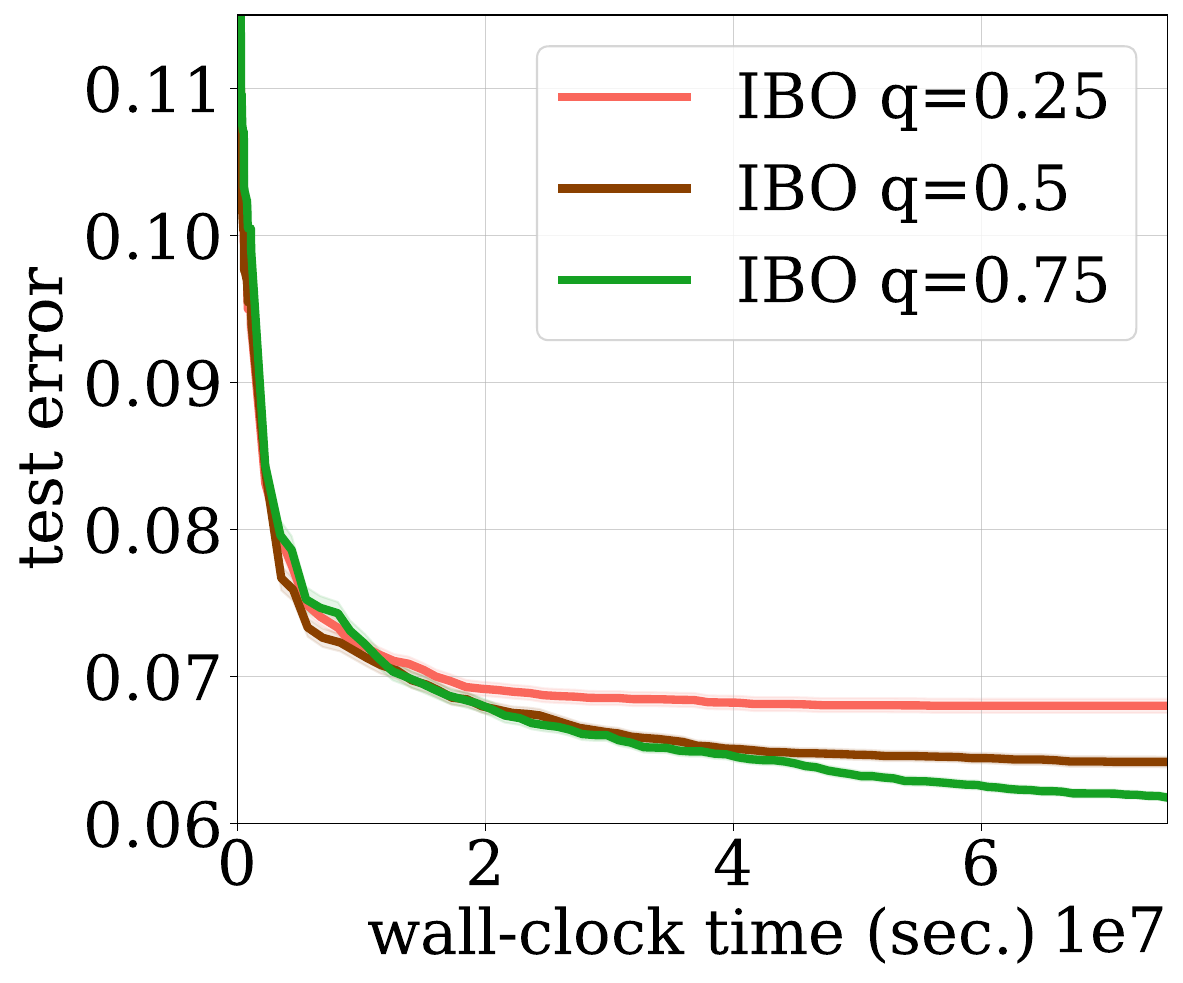}
         \caption{JAHS (F-MNIST)}
         \label{fig:15jahs_fm}
     \end{subfigure}
     \\
     \begin{subfigure}[c]{0.32\textwidth}
         \centering
         \includegraphics[width=\textwidth]{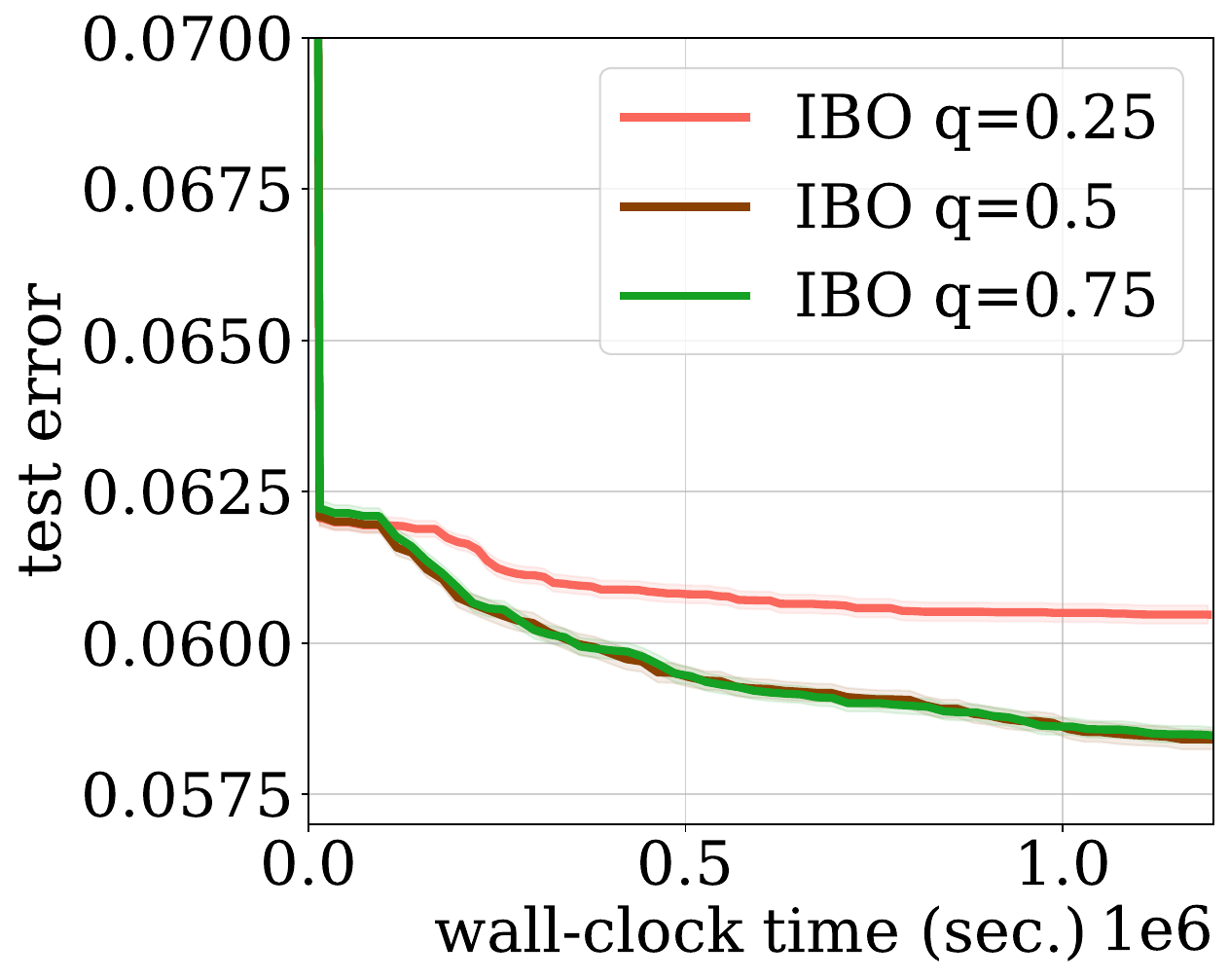}
         \caption{NAS-Bench-101 (CIFAR-10)}
         \label{fig:15nas101}
     \end{subfigure}
     \hfill
     \begin{subfigure}[c]{0.32\textwidth}
         \centering
         \includegraphics[width=.95\textwidth]{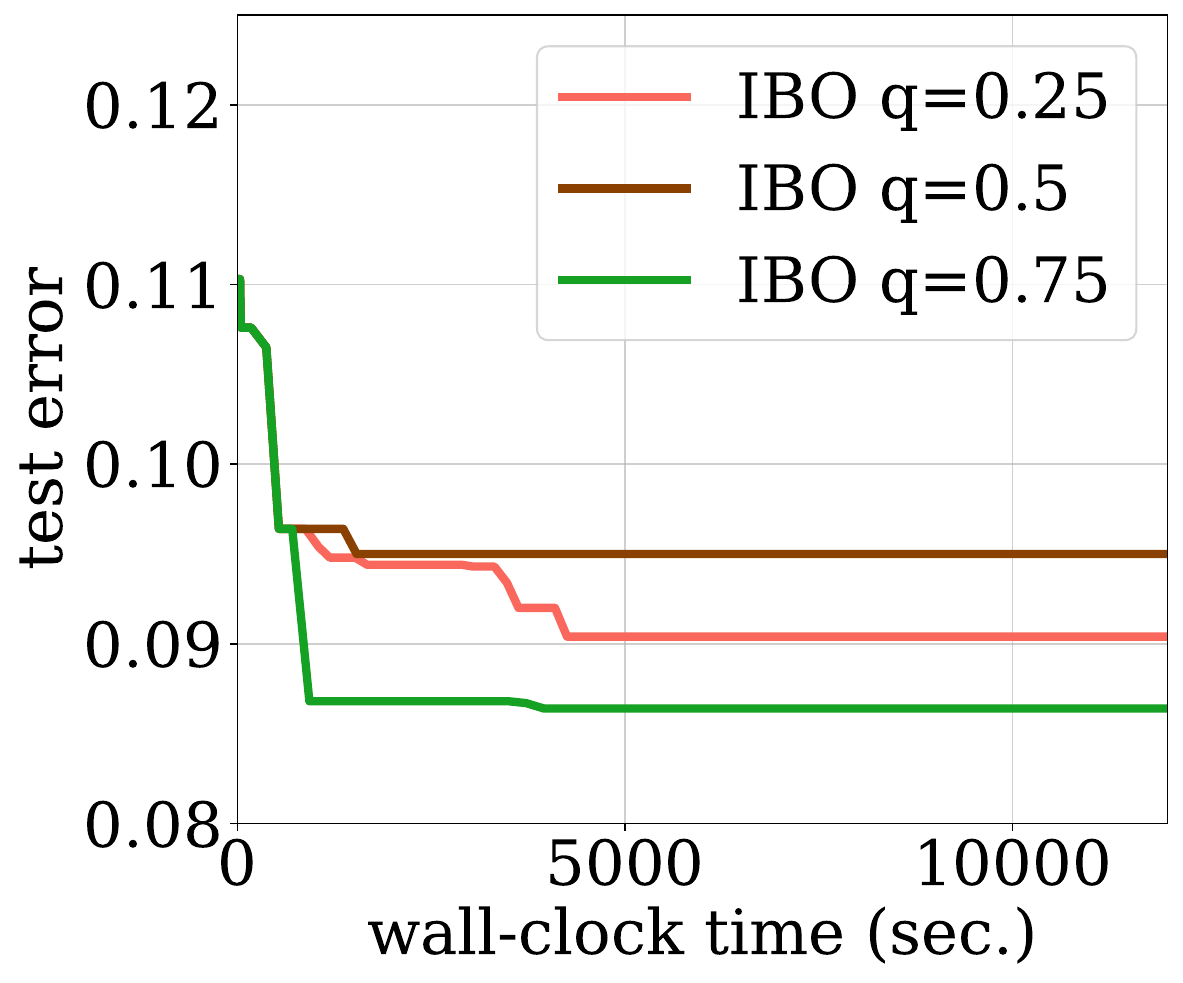}
         \caption{NAS-Bench-201 (CIFAR-10)}
         \label{fig:15nas201}
     \end{subfigure}
     \hfill
     \begin{subfigure}[c]{0.32\textwidth}
         \centering
     \end{subfigure}
    \caption{\textbf{Conditioning on sub-optimal evaluation scores slow down IBO-HPC-} Conditioning on the evaluation score of high-performing configurations is crucial for the performance of IBO-HPC. To analyze the effect of conditioning on evaluation scores of sub-optimal configurations, we conditioned on the $\{0.25, 0.5, 0.75\}$-quantile of all evaluation scores obtained until iteration $t$. As expected, for higher quantiles (i.e. better evaluation scores), IBO-HPC finds better configurations.
    }
    \label{fig:ablation_quant}
\end{figure}

\begin{figure}[h!]
     \centering
     \begin{subfigure}[c]{0.32\textwidth}
         \centering
         \includegraphics[width=\textwidth]{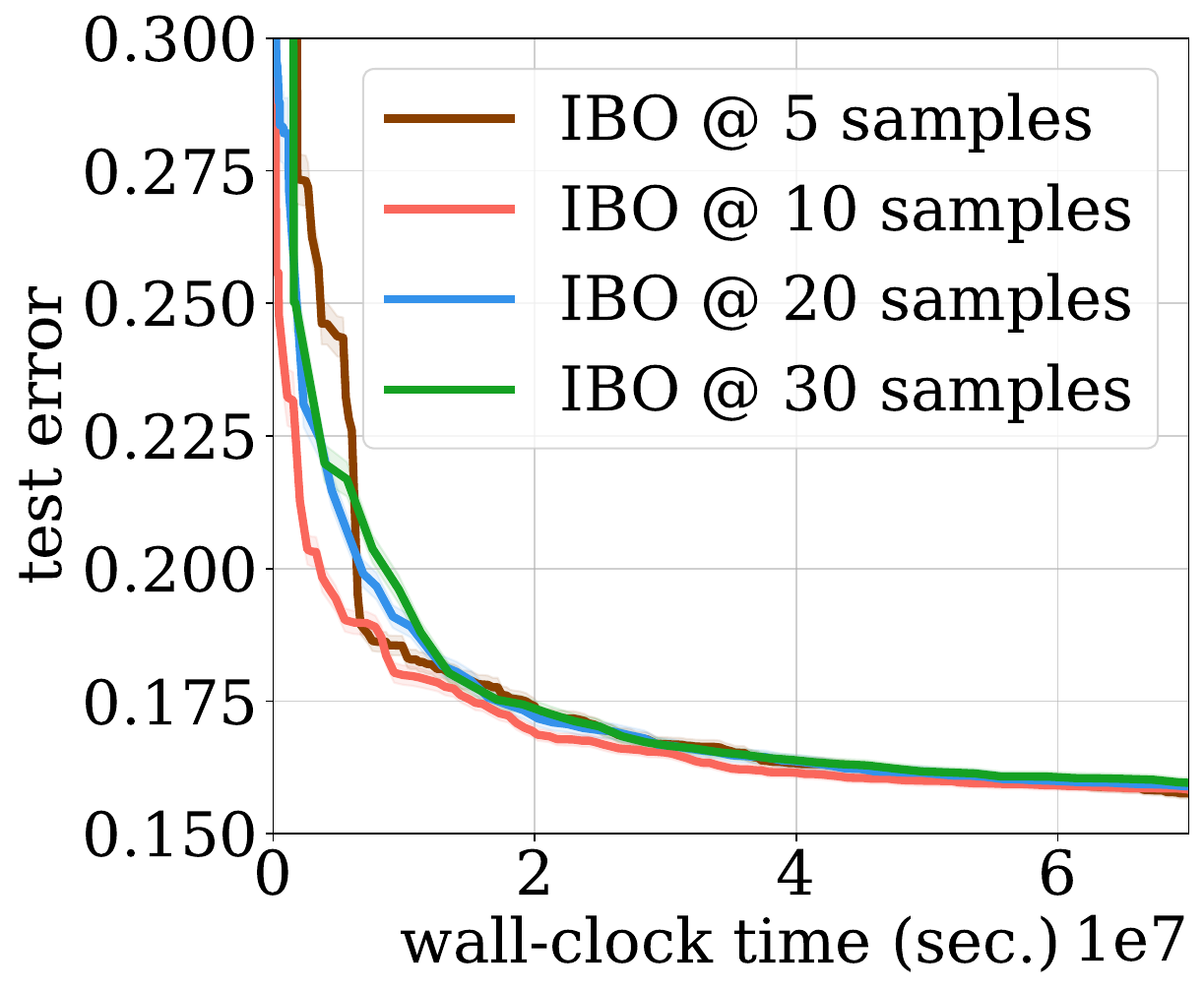}
         \caption{JAHS (CIFAR-10)}
         \label{fig:25jahs_cifar10}
     \end{subfigure}
     \hfill
     \begin{subfigure}[c]{0.32\textwidth}
         \centering
         \includegraphics[width=\textwidth]{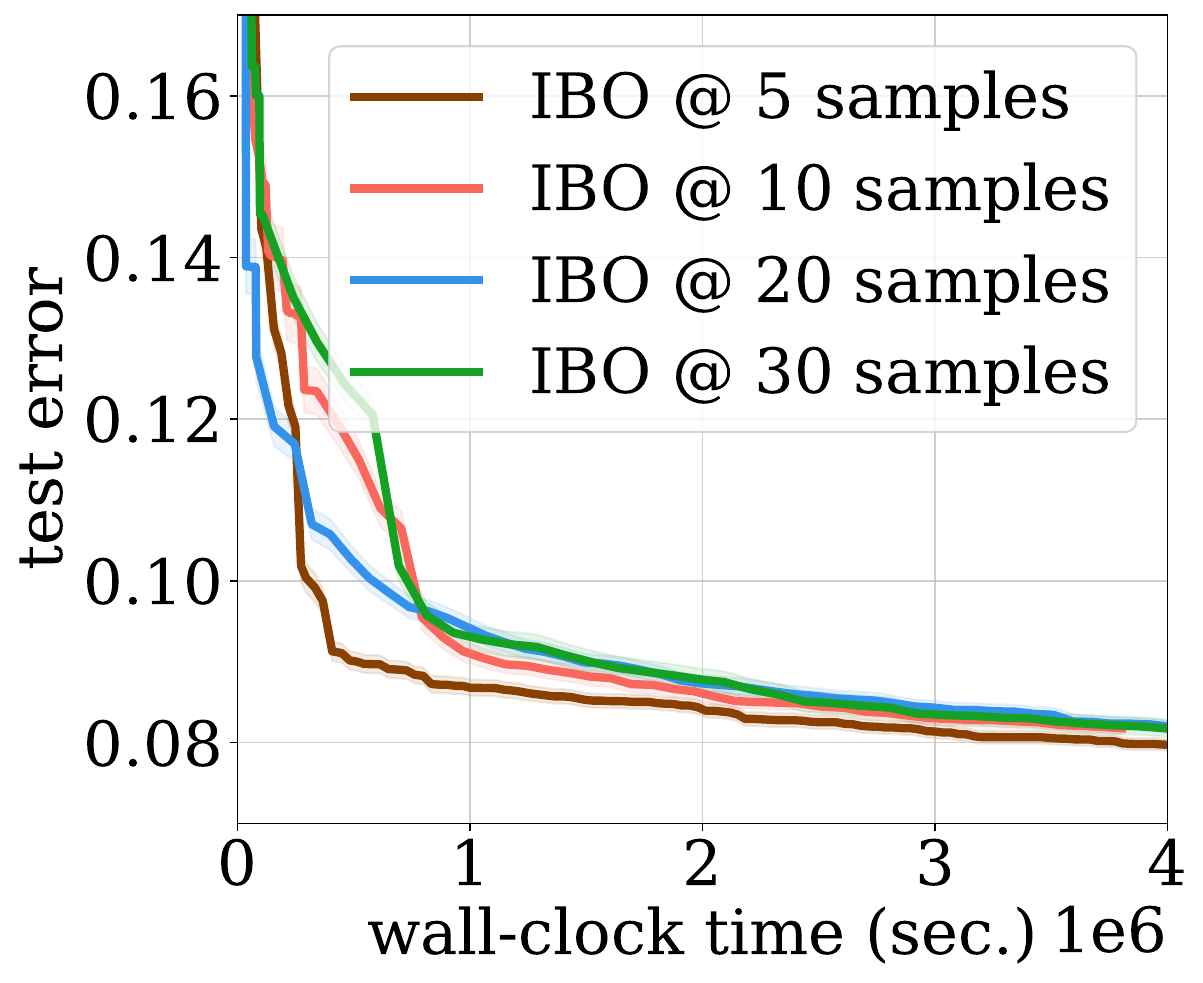}
         \caption{JAHS (C. Histology)}
         \label{fig:25jahs_co}
     \end{subfigure}
     \hfill
     \begin{subfigure}[c]{0.32\textwidth}
         \centering
         \includegraphics[width=\textwidth]{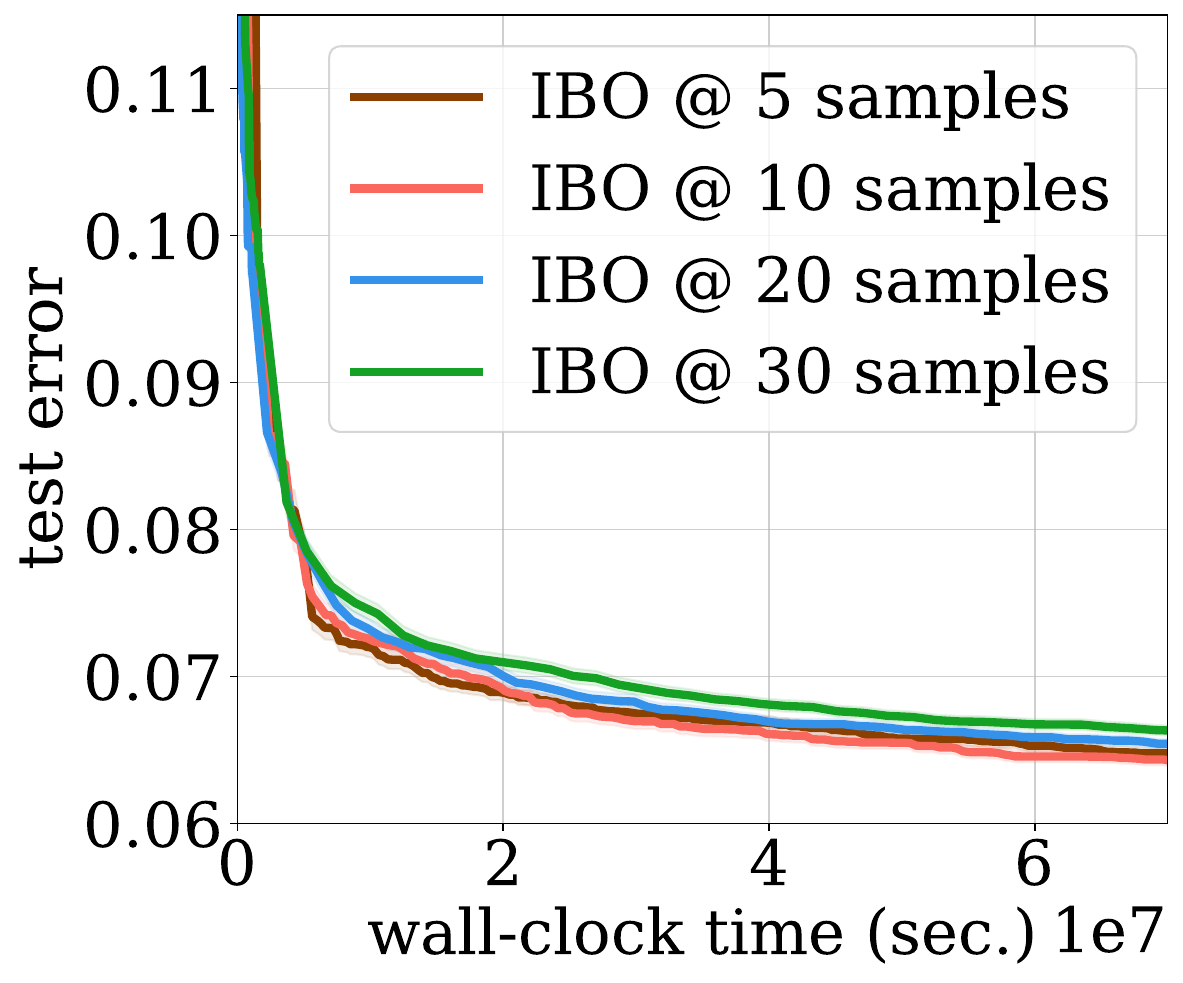}
         \caption{JAHS (F-MNIST)}
         \label{fig:25jahs_fm}
     \end{subfigure}
     \\
     \begin{subfigure}[c]{0.32\textwidth}
         \centering
         \includegraphics[width=\textwidth]{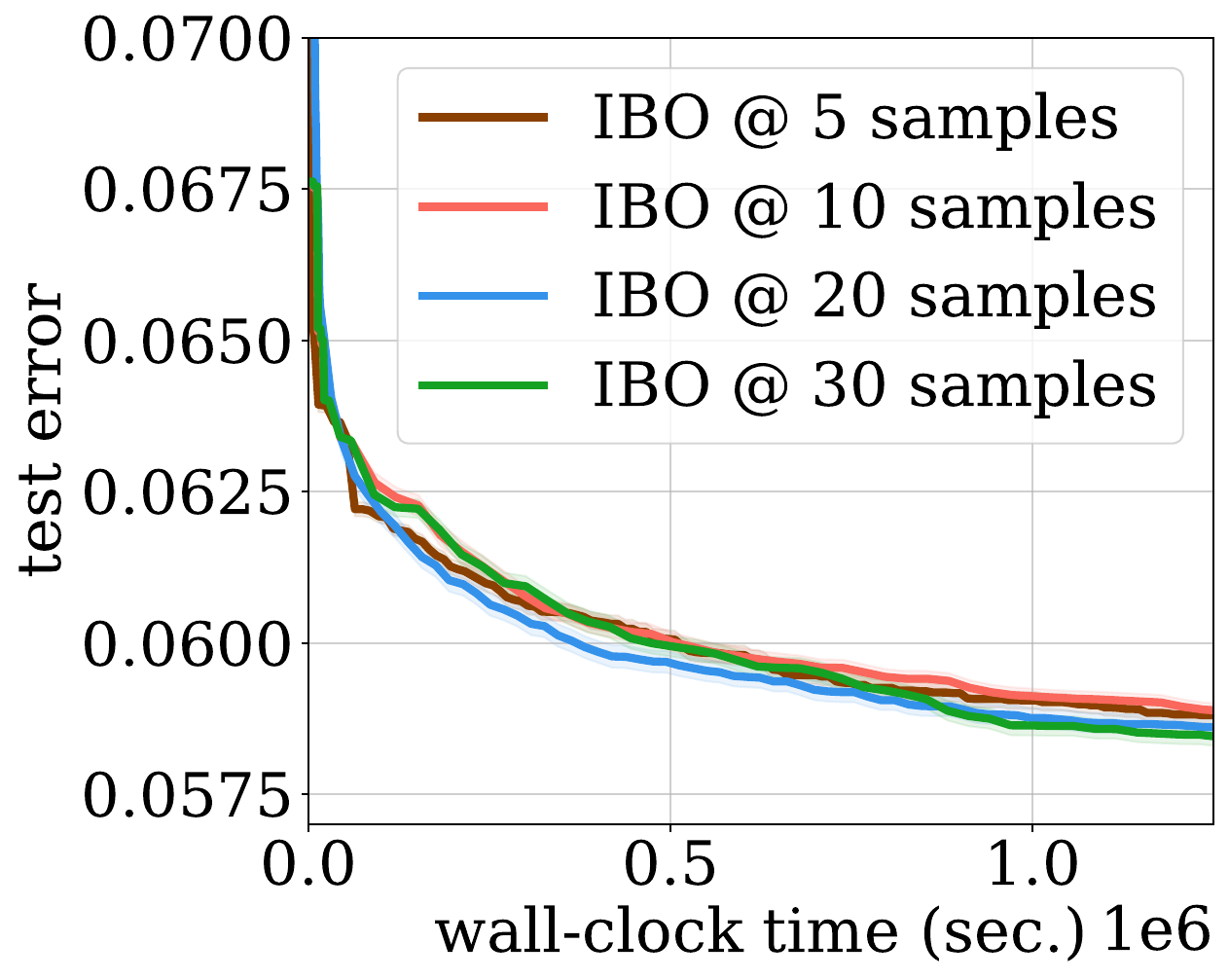}
         \caption{NAS-Bench-101 (CIFAR-10)}
         \label{fig:25nas101}
     \end{subfigure}
     \hfill
     \begin{subfigure}[c]{0.32\textwidth}
         \centering
         \includegraphics[width=\textwidth]{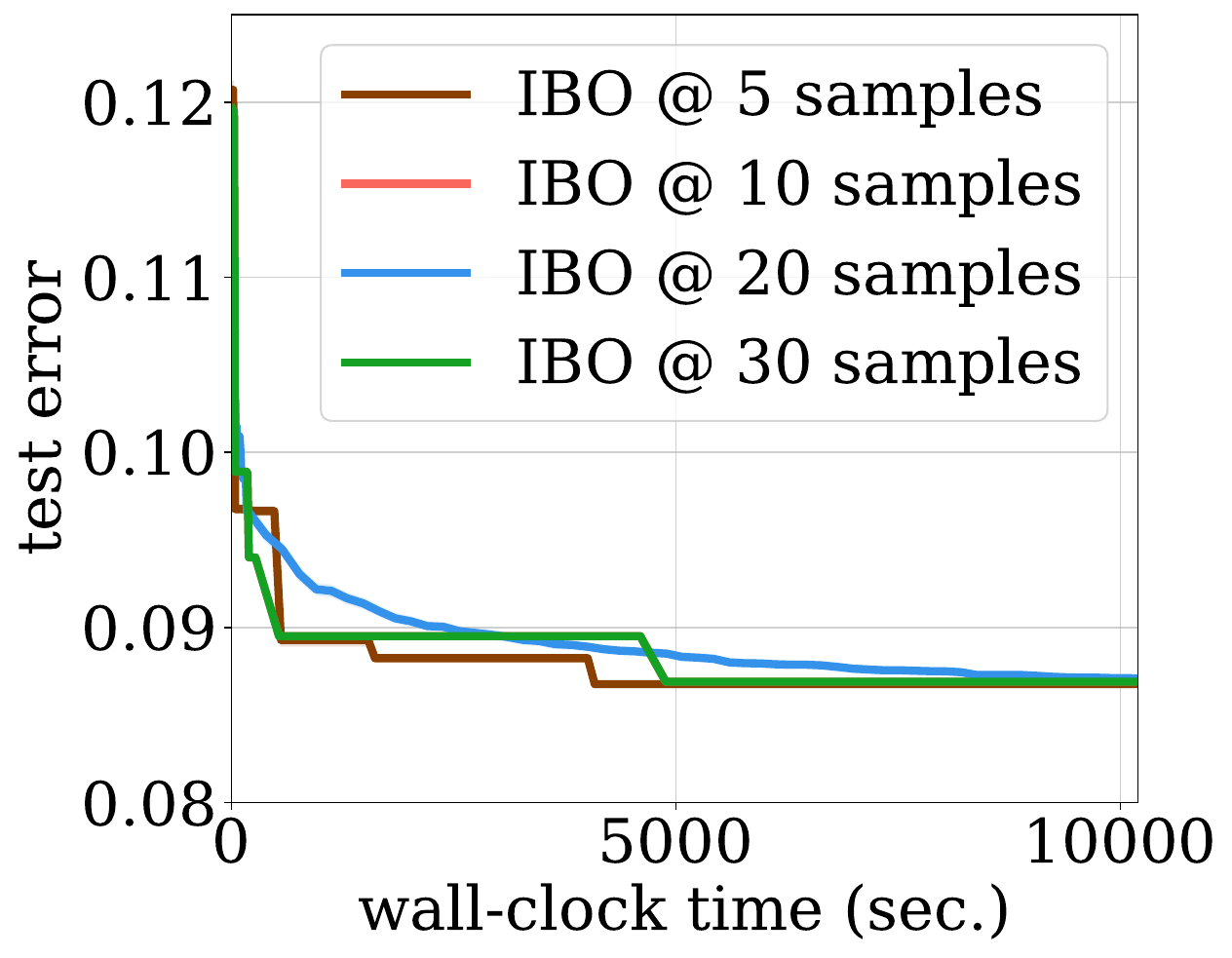}
         \caption{NAS-Bench-201 (CIFAR-10)}
         \label{fig:25nas201}
     \end{subfigure}
     \hfill
     \begin{subfigure}[c]{0.32\textwidth}
         \centering
     \end{subfigure}
    \caption{\textbf{$\bm{L}$ has no significant effect on IBO-HPC's performance.} We found that fixing the surrogate model for $L=\{5, 10, 20, 30\}$ iterations does not lead to significant differences in the performance and convergence speed of IBO-HPC. Only in earlier iterations was a significant variation in convergence speed found on JAHS CIFAR-10 and JAHS CO. However, these variations vanish with the progress of optimization. 
    }
    \label{fig:ablation_sample_size}
\end{figure}

\clearpage
\subsection{Statistical Significance}
\label{app:stat_sign}
We applied a one-sided Wilcoxon test to validate our results are statistically significant. Tab. \ref{tab:wilcox_at_5} provides p-values for comparing IBO-HPC against $\pi$BO, BOPrO, and Priorband for runs in which the user interaction happened at the 5th iteration. Tab. \ref{tab:wilcox_at_10} shows the same for runs with user interactions at the 10th iteration. Note that we only provided user interactions at the 10th iteration for JAHS, NAS-201, and NAS-101. Further, we compare IBO-HPC with BO w/ RF, BO w/ TPE, and SMAC. In case of runs without user interaction (see Tab. \ref{tab:wilcox_non_interactive}). Overall, it can be seen that IBO-HPC significantly outperforms $\pi$BO, BOPrO, and Priorband if user knowledge is provided. Also, there is no clear pattern in terms of significance when comparing IBO-HPC with other BO methods when no interaction takes place. In approximately 50\% of the cases, IBO-HPC outperforms the baselines. In the other cases, the baselines outperform IBO-HPC. Thus, we conclude that IBO-HPC is competitive when no user knowledge is given. We set a significance level of $p=0.05$.

\begin{table*}[h!]
    \centering
    \begin{tabular}{lccc}
        \toprule
        & IBO-HPC vs. PiBO & BO-HPC vs. BOPrO & BO-HPC vs. Priorband \\
        \midrule
        JAHS (CIFAR10) & $\mathbf{2.0 \times 10^{-9}}$ & $9.0 \times 10^{-1}$ & $\mathbf{8.8 \times 10^{-16}}$ \\
        JAHS (C. Hist.) & $\mathbf{3.0 \times 10^{-7}}$ & $\mathbf{1.5 \times 10^{-3}}$ & $\mathbf{8.9 \times 10^{-16}}$ \\
        JAHS (F.-MNIST) & $\mathbf{1.0 \times 10^{-2}}$ & $\mathbf{1.2 \times 10^{-2}}$ & $\mathbf{1.8 \times 10^{-15}}$ \\
        NAS201 & $\mathbf{1.9 \times 10^{-6}}$ & $9.6 \times 10^{-1}$ & $1.4 \times 10^{-1}$ \\
        NAS101 & $\mathbf{1.3 \times 10^{-4}}$ & $\mathbf{2.8 \times 10^{-4}}$ & $\mathbf{2.4 \times 10^{-4}}$ \\
        HPO-B (6767:31) & $98 \times 10^{-1}$ & $6.0 \times 10^{-2}$ & $\mathbf{2.6 \times 10^{-5}}$ \\
        HPO-B (6794:31) & $\mathbf{2.0 \times 10^{-2}}$ & $\mathbf{5.2 \times 10^{-6}}$ & $\mathbf{1.5 \times 10^{-13}}$ \\
        HPO-B (6794:9914) & $9.9 \times 10^{-1}$ & $9.9 \times 10^{-1}$ & $\mathbf{2.0 \times 10^{-2}}$ \\
        PD1 (Imagenet) & $\mathbf{8.9 \times 10^{-16}}$ & $\mathbf{3.6 \times 10^{-15}}$ & - \\
        FCNet (Slice Localization) & $\mathbf{3.0 \times 10^{-4}}$ & $\mathbf{1.8 \times 10^{-3}}$ & $\mathbf{1.9 \times 10^{-7}}$ \\
        \bottomrule
    \end{tabular}%
    \caption{\textbf{IBO-HPC significantly outperforms $\pi$BO, BOPrO and Priorband.} IBO-HPC significantly outperforms our baselines that allow for user priors. The table above shows p-values of the Wilcoxon test with significance level $p=0.05$ for runs in which the same beneficial user knowledge was provided to all algorithms. For IBO-HPC, the knowledge was provided at the 5th iteration, while for the baselines, the knowledge was provided ex ante. Significance is reported in \textbf{bold}.}
    \label{tab:wilcox_at_5}
\end{table*}

\begin{table}[h!]
    \centering
    \begin{tabular}{lccc}
        \toprule
        & IBO-HPC vs. PiBO & IBO-HPC vs. BOPrO & IBO-HPC vs. Priorband \\
        \midrule
        JAHS (CIFAR10) & $\mathbf{1.6 \times 10^{-10}}$ & $9.9 \times 10^{-1}$ & $\mathbf{1.7 \times 10^{-15}}$ \\
        JAHS (C. Hist.) & $\mathbf{1.2 \times 10^{-7}}$ & $\mathbf{1.1 \times 10^{-3}}$ & $\mathbf{1.7 \times 10^{-15}}$ \\
        JAHS (F.-MNIST) & $\mathbf{8.9 \times 10^{-3}}$ & $8.2 \times 10^{-2}$ & $\mathbf{1.8 \times 10^{-15}}$ \\
        NAS201 & $\mathbf{1.9 \times 10^{-6}}$ & $9.9 \times 10^{-1}$ & $1.0 \times 10^{-1}$ \\
        NAS101 & $\mathbf{1.3 \times 10^{-4}}$ & $\mathbf{1.3 \times 10^{-4}}$ & $\mathbf{2.9 \times 10^{-4}}$ \\
        \bottomrule
    \end{tabular}%
    \caption{\textbf{IBO-HPC significantly outperforms $\pi$BO, BOPrO and Priorband.} IBO-HPC significantly outperforms our baselines that allow for user priors. The table above shows p-values of the Wilcoxon test with significance level $p=0.05$ for runs in which the same beneficial user knowledge was provided to all algorithms. For IBO-HPC, the knowledge was provided at the 10th iteration, while for the baselines, the knowledge was provided ex ante. Significance is reported in \textbf{bold}.}
    \label{tab:wilcox_at_10}
\end{table}

\begin{table}[h!]
    \centering
    \begin{tabular}{lccc}
        \toprule
        & IBO-HPC vs. BO /w RF & IBO-HPC vs. BO /w TPE & IBO-HPC vs. SMAC \\
        \midrule
        JAHS (CIFAR10) & $\mathbf{5.3 \times 10^{-9}}$ & $0.19$ & $0.9$ \\
        JAHS (C. Histology) & $\mathbf{9.8 \times 10^{-6}}$ & $0.93$ & $0.96$ \\
        JAHS (Fashion-MNIST) & $0.08$ & $0.28$ & $0.52$ \\
        NAS201 & $\mathbf{1.4 \times 10^{-4}}$ & $\mathbf{7.8 \times 10^{-4}}$ & $\mathbf{7.7 \times 10^{-3}}$ \\
        NAS101 & $\mathbf{8.8 \times 10^{-4}}$ & $\mathbf{9.5 \times 10^{-7}}$ & $\mathbf{2.8 \times 10^{-4}}$ \\
        HPO-B (6767:31) & $\mathbf{3.1 \times 10^{-4}}$ & $0.31$ & $\mathbf{6.7 \times 10^{-4}}$ \\
        HPO-B (6794:31) & $0.98$ & $\mathbf{0.01}$ & $0.99$ \\
        HPO-B (6794:9914) & $0.99$ & $\mathbf{1.3 \times 10^{-5}}$ & $0.38$ \\
        PD1 & $\mathbf{8.9 \times 10^{-16}}$ & $\mathbf{8.9 \times 10^{-16}}$ & $\mathbf{8.8 \times 10^{-16}}$ \\
        FCNet & $0.97$ & $\mathbf{0.01}$ & $0.99$ \\
        \bottomrule
    \end{tabular}%
    \caption{\textbf{IBO-HPC is competitive with BO baselines.} IBO-HPC significantly outperforms our baselines in 50\% of the cases when no user knowledge is provided. The table above shows p-values of the Wilcoxon test with significance level $p=0.05$ for runs in which no user knowledge was provided. It can be seen that no clear pattern is recognizable. Hence, there is no clear winner among the competing algorithms on standard HPO tasks. Thus, IBO-HPC can be seen as competitive in these settings. Significance is reported in \textbf{bold}.}
    \label{tab:wilcox_non_interactive}
\end{table}

\clearpage
\subsection{Cost Efficiency of IBO-HPC's Selection Policy}
\label{app:comp_cost}
We now provide details on the computational costs of IBO-HPC. Therefore, we analyzed the composition of the overall runtime of an optimization run and measured the time needed to train configurations suggested by IBO-HPC versus the time spent on actually performing optimization (including fitting the surrogate PC and sampling new configurations). Fig. \ref{fig:runtime_split} shows that the computation time spent on learning the PC and sampling new configurations is negligible compared to the time spent on training the suggested configurations. Additionally, \ref{fig:runtime_smac_vs_ibo} shows that IBO-HPC is faster than SMAC in 4/5 cases in terms of runtime. Here, we considered the time spent in updating the surrogate and suggesting new configurations. Note that this does not include training costs. Interestingly, with the increasing size of the search space, the efficiency advantage of IBO-HPC is increasing. We suspect that the intensify-mechanism in SMAC, which includes a local search, is the reason for the higher computational costs of SMAC.

\begin{figure}[h!]
     \centering
     \begin{subfigure}[c]{0.49\textwidth}
         \centering
         \includegraphics[width=.9\textwidth]{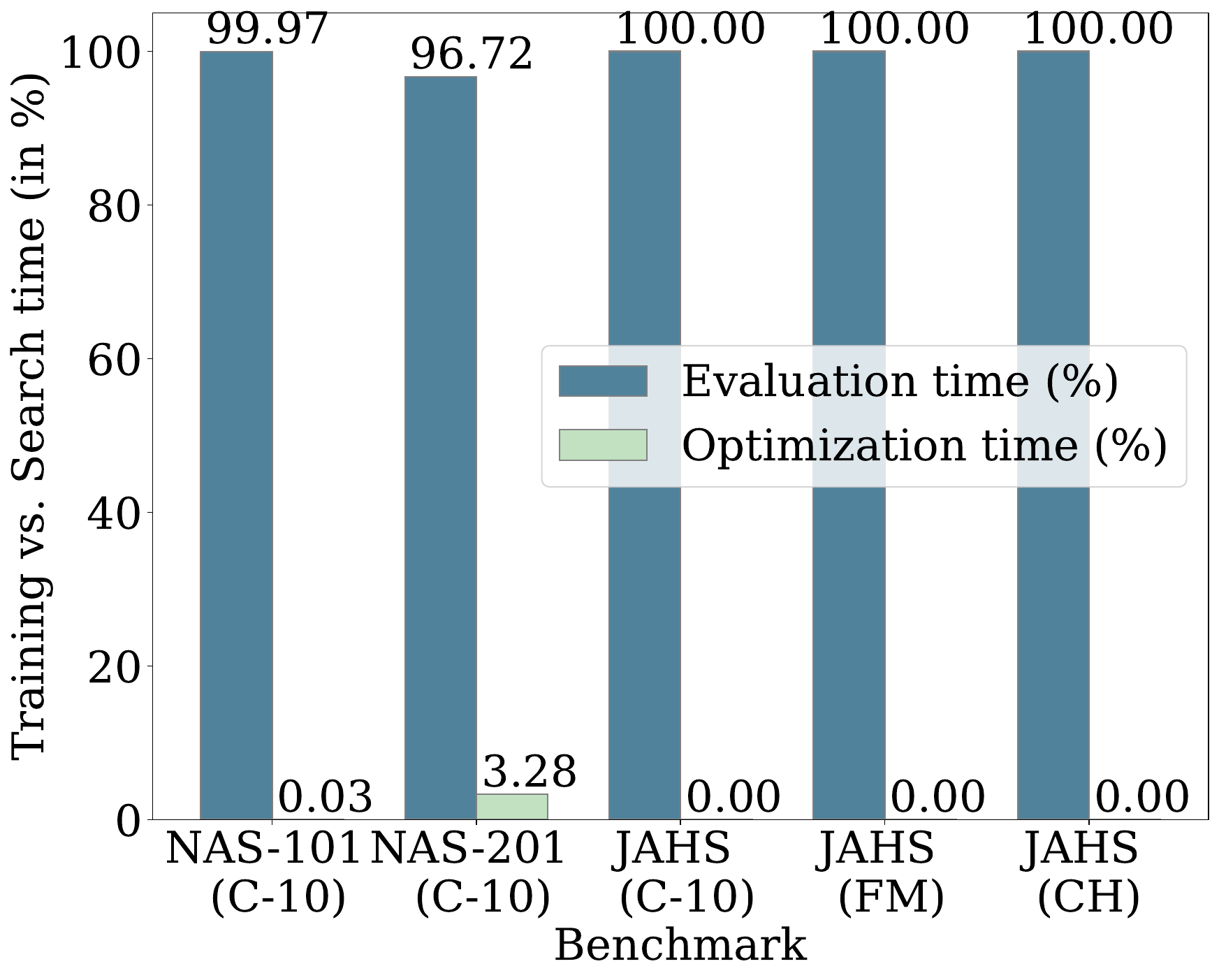}
         \caption{}
         \label{fig:runtime_split}
     \end{subfigure}
     \hfill
     \begin{subfigure}[c]{0.49\textwidth}
         \centering
         \includegraphics[width=.87\textwidth]{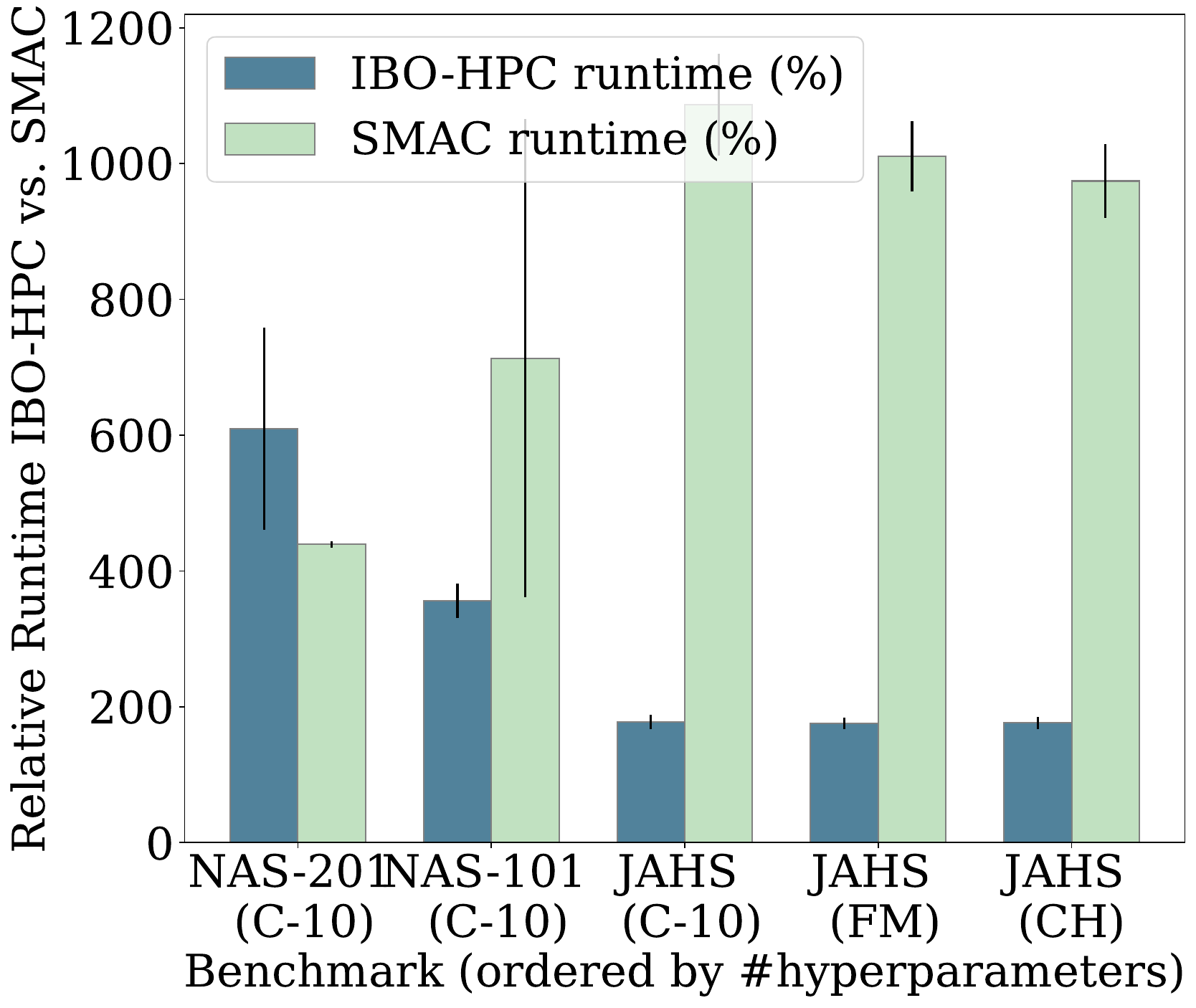}
         \caption{}
         \label{fig:runtime_smac_vs_ibo}
     \end{subfigure}
     \caption{\textbf{IBO-HPC is a cost-efficient HPO method.} (a) Learning a surrogate and suggesting new configurations is negligible in terms of computational costs compared to training the suggested configurations. We computed the time spent on training configurations (blue) vs. time spent learning a PC and suggesting new configurations (orange). In all experiments, the training of configurations caused the large majority of computational costs, often even approaching 100\%. (b) IBO-HPC is more efficient than the prominent HPO algorithm SMAC in 4/5 cases (averaged over 20 runs). Also, with the increasing number of hyperparameters, the gap between IBO-HPC and SMAC in terms of computational efficiency is larger. We report runtimes normalized between [0, 1] per benchmark s.t. the highest obtained runtime for a given benchmark is 1.}
    \label{fig:runtime}
\end{figure}

\subsection{Exploration-Exploitation Trade-off of IBO-HPC}
\label{app:exploitation_exploration_to}
An effective mechanism to trade off exploration versus exploitation is crucial for high-performing hyperparameter optimization algorithms. Below we show that IBO-HPC's sampling policy effectively achieves this trade-off. In early iterations, IBO-HPC explores the search space (high sample variance), while in later iterations, it exploits the knowledge collected (low sample variance).

\begin{figure}[h!]
     \centering
     \begin{subfigure}[c]{0.32\textwidth}
         \centering
         \includegraphics[width=\textwidth]{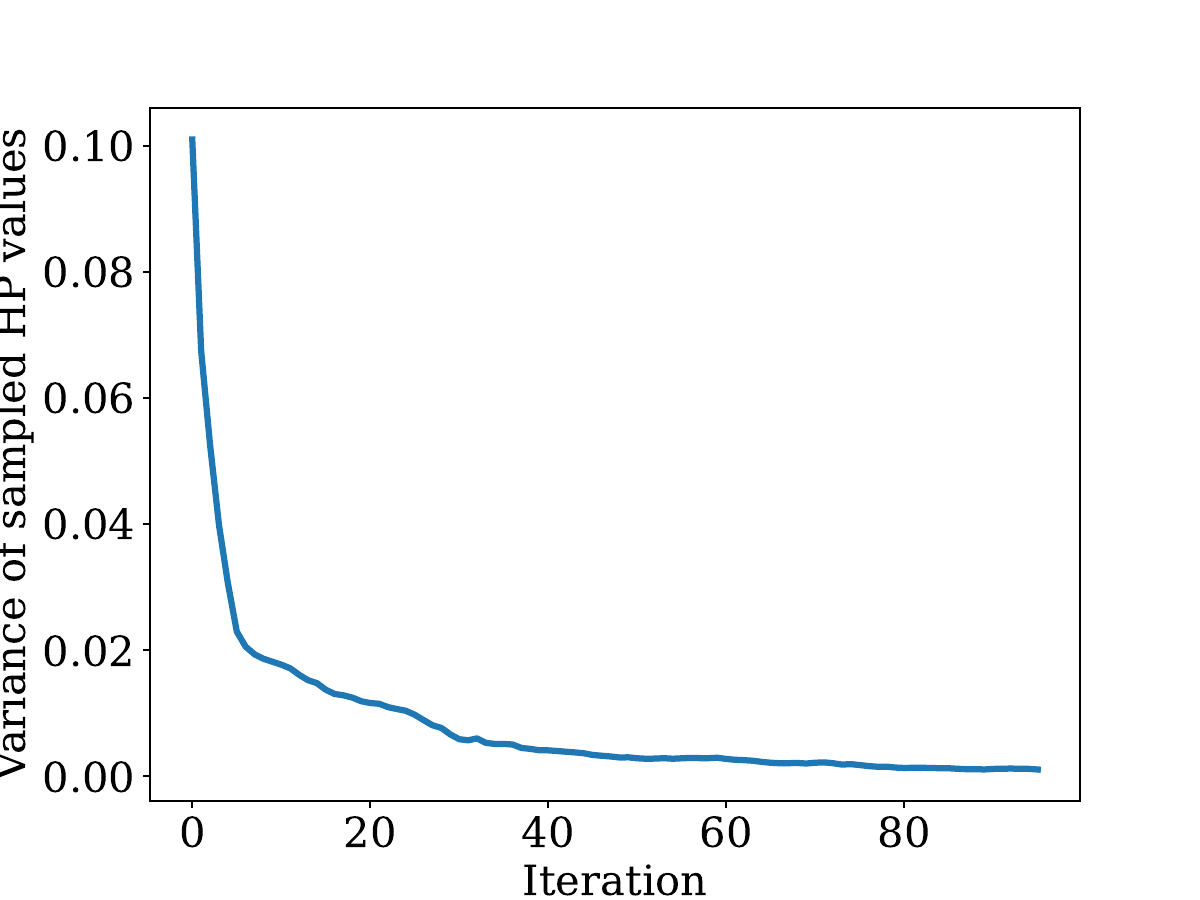}
         \caption{Learning Rate}
     \end{subfigure}
     \hfill
     \begin{subfigure}[c]{0.32\textwidth}
         \centering
         \includegraphics[width=\textwidth]{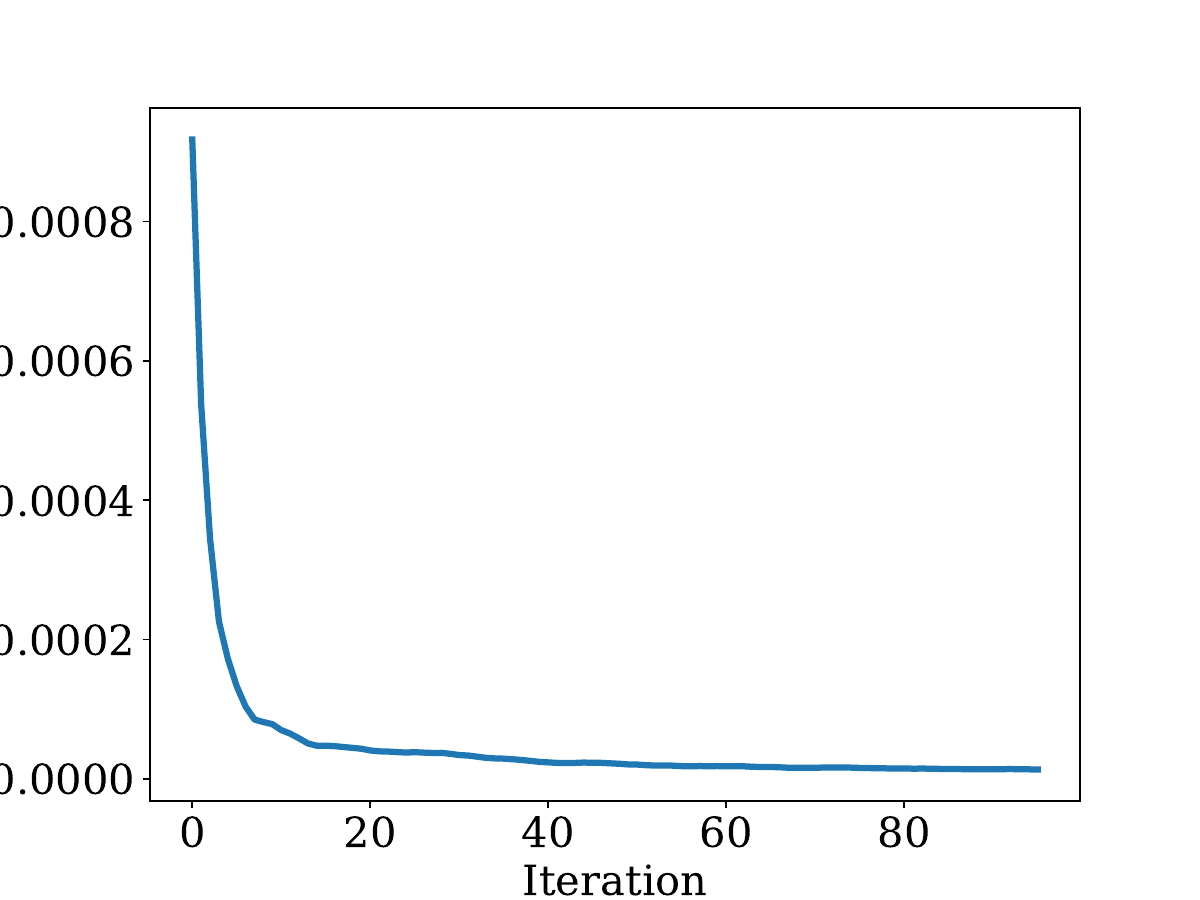}
         \caption{Weight Decay}
     \end{subfigure}
     \hfill
     \begin{subfigure}[c]{0.32\textwidth}
         \centering
         \includegraphics[width=\textwidth]{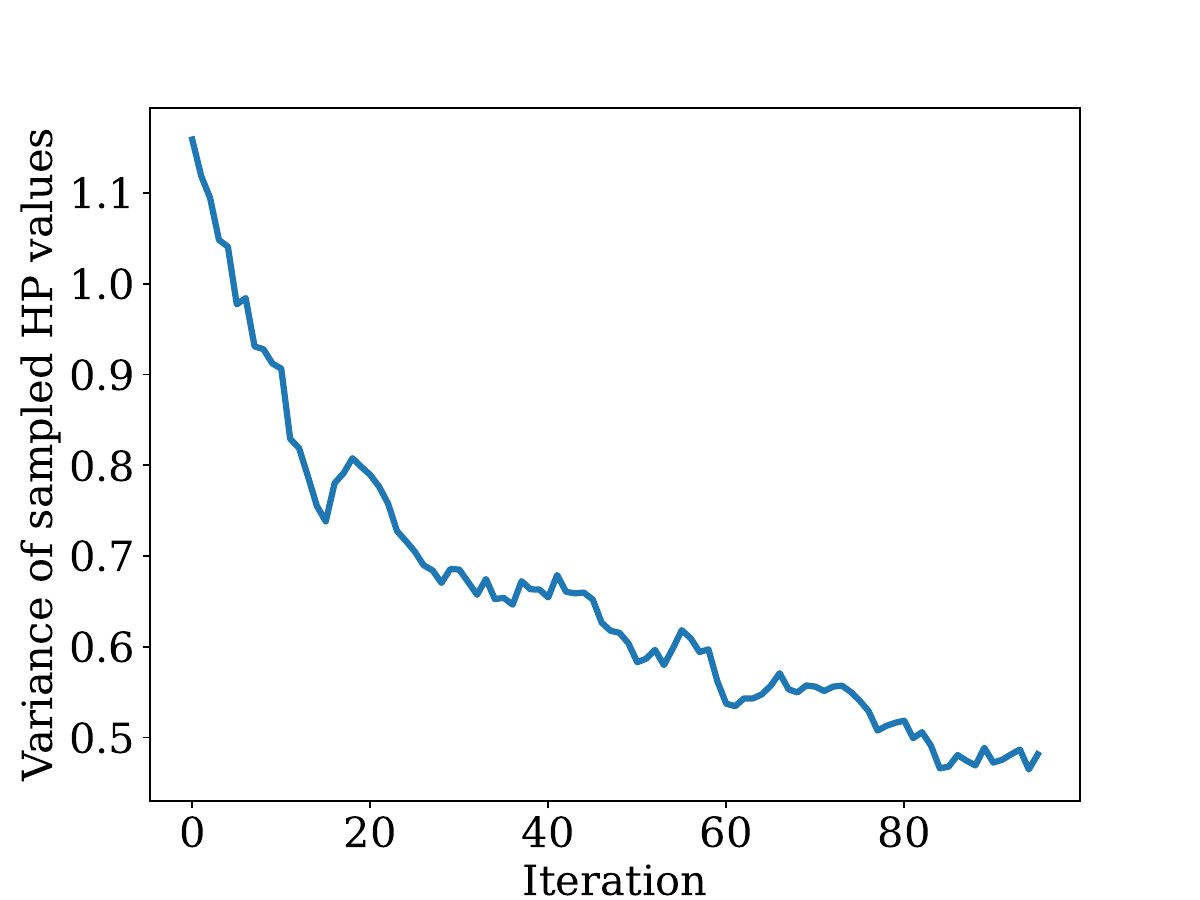}
         \caption{Op1}
     \end{subfigure}
     \\
     \begin{subfigure}[c]{0.32\textwidth}
         \centering
         \includegraphics[width=\textwidth]{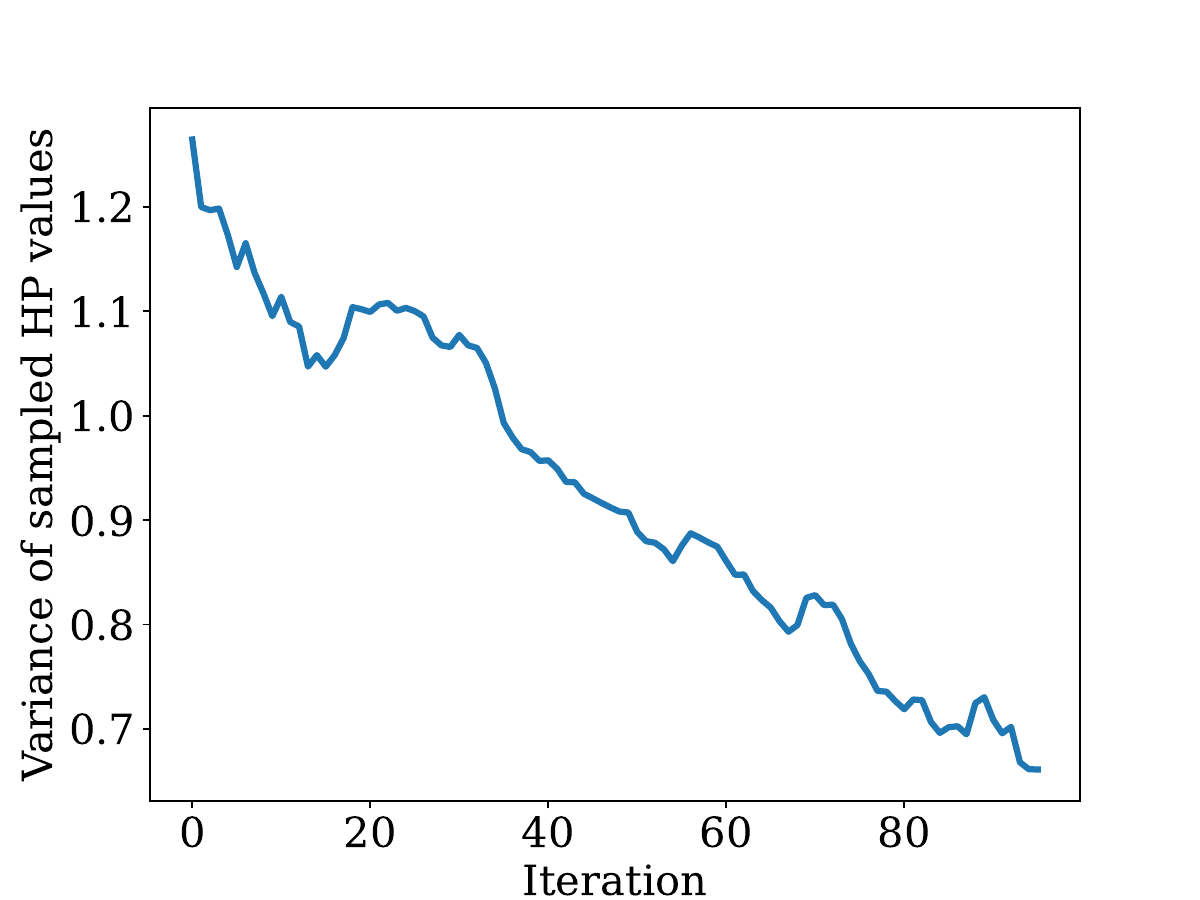}
         \caption{Op2}
     \end{subfigure}
     \hfill
     \begin{subfigure}[c]{0.32\textwidth}
         \centering
         \includegraphics[width=\textwidth]{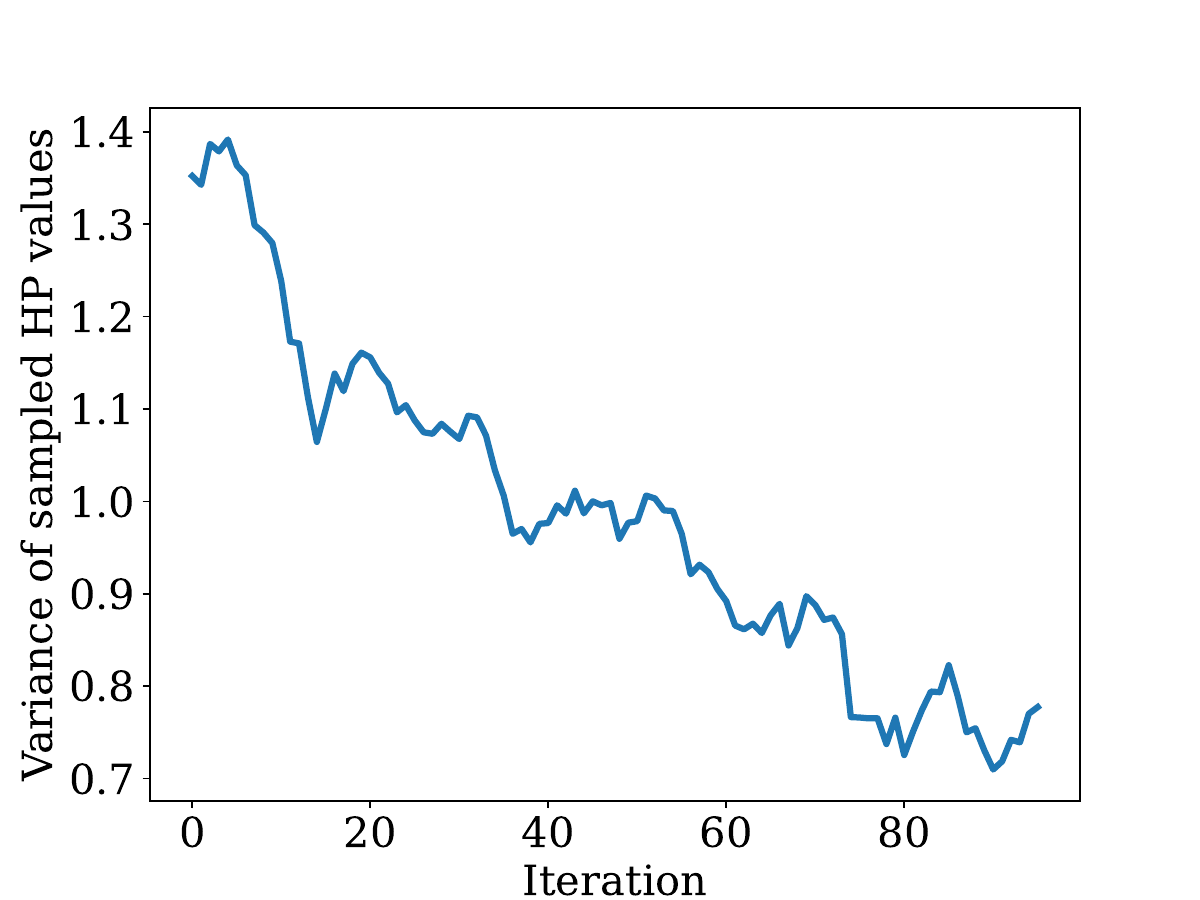}
         \caption{Op3}
     \end{subfigure}
     \hfill
     \begin{subfigure}[c]{0.32\textwidth}
         \centering
         \includegraphics[width=\textwidth]{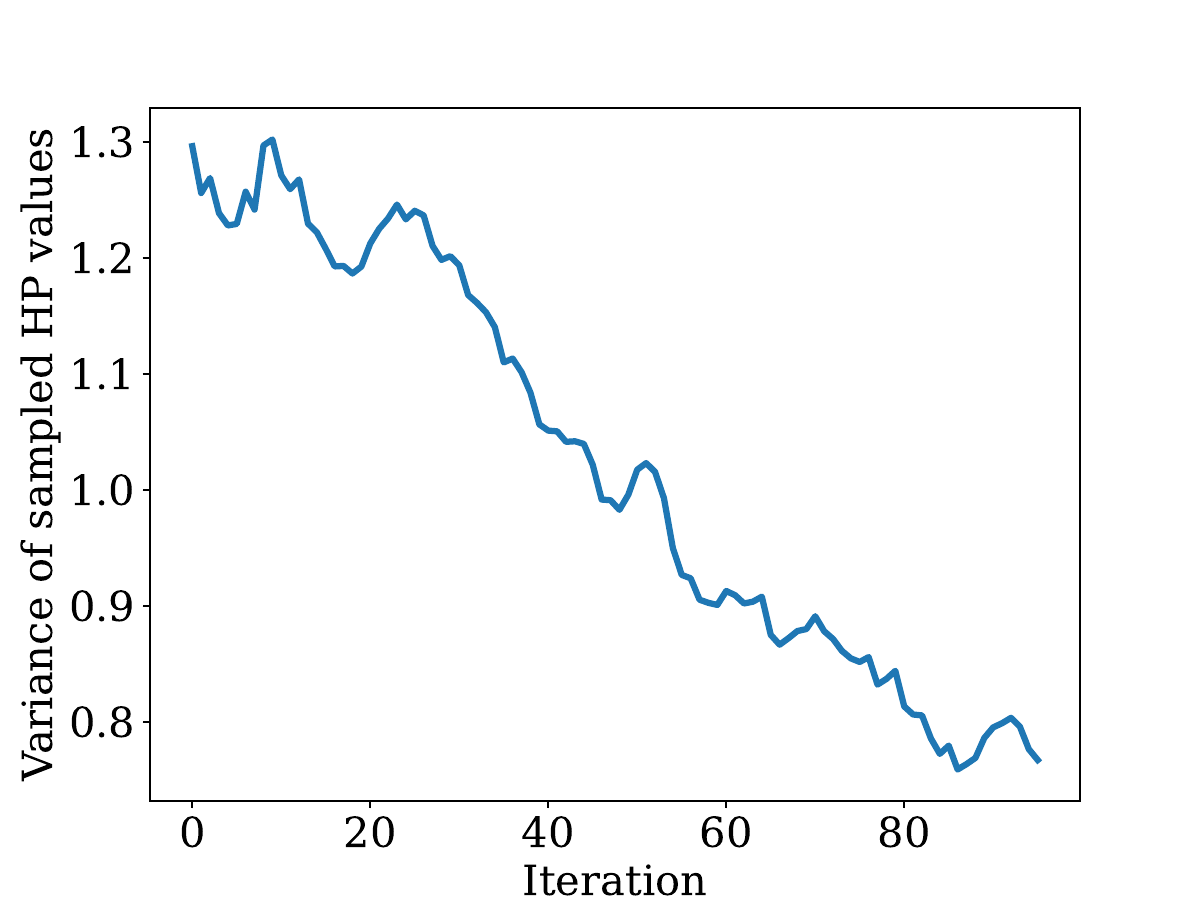}
         \caption{Op4}
     \end{subfigure}
    \caption{\textbf{IBO-HPC effectively trades off exploration and exploitation.} IBO-HPC's sampling policy naturally and effectively trades off exploration (high sampling variance in early iterations) versus exploitation (low sampling variance in later iterations). We show the sampling variance of 6 hyperparameters of the JAHS benchmark (CIFAR10) for each iteration, averaged over 20 runs of IBO-HPC.}
    \label{fig:exploration_exploitation_cifar10}
\end{figure}

\begin{figure}[h!]
     \centering
     \begin{subfigure}[c]{0.32\textwidth}
         \centering
         \includegraphics[width=.85\textwidth]{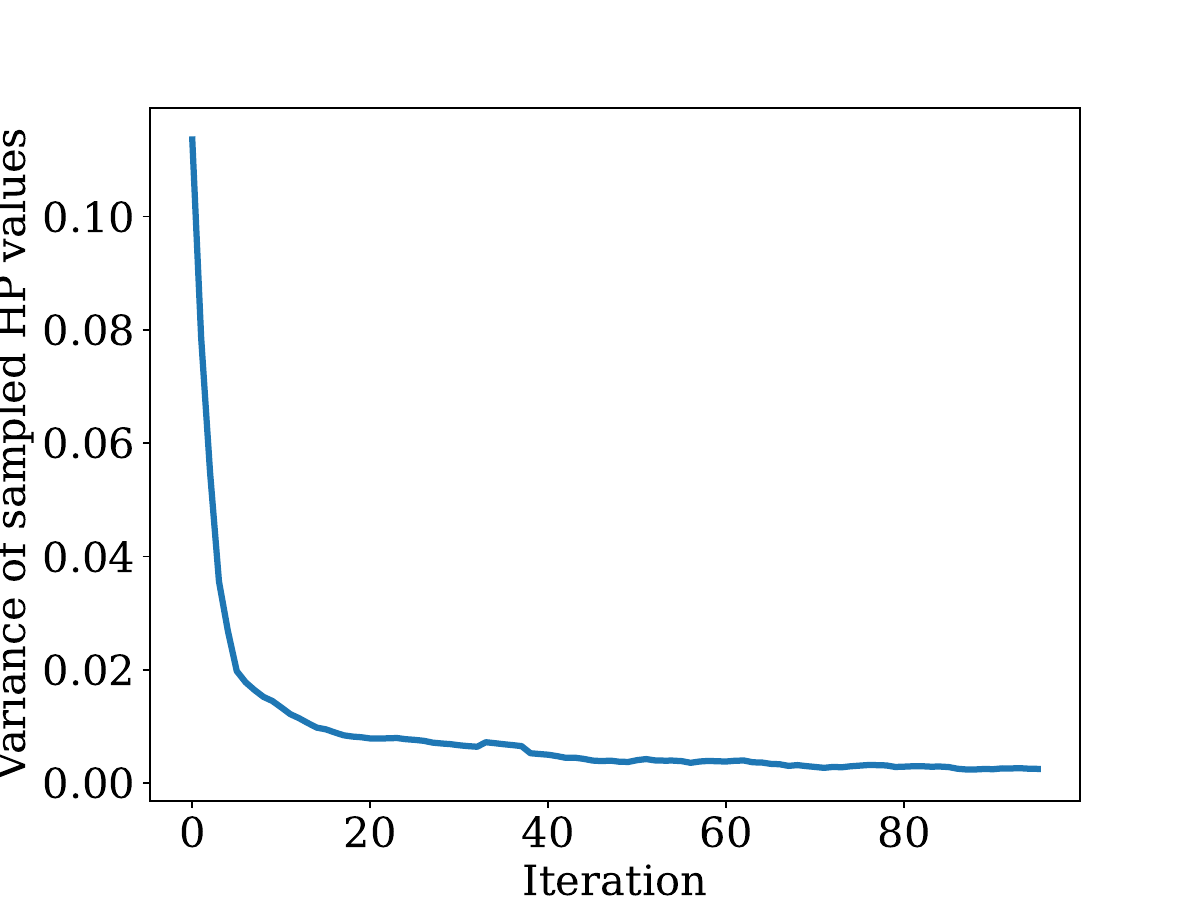}
         \caption{Learning Rate}
     \end{subfigure}
     \hfill
     \begin{subfigure}[c]{0.32\textwidth}
         \centering
         \includegraphics[width=.85\textwidth]{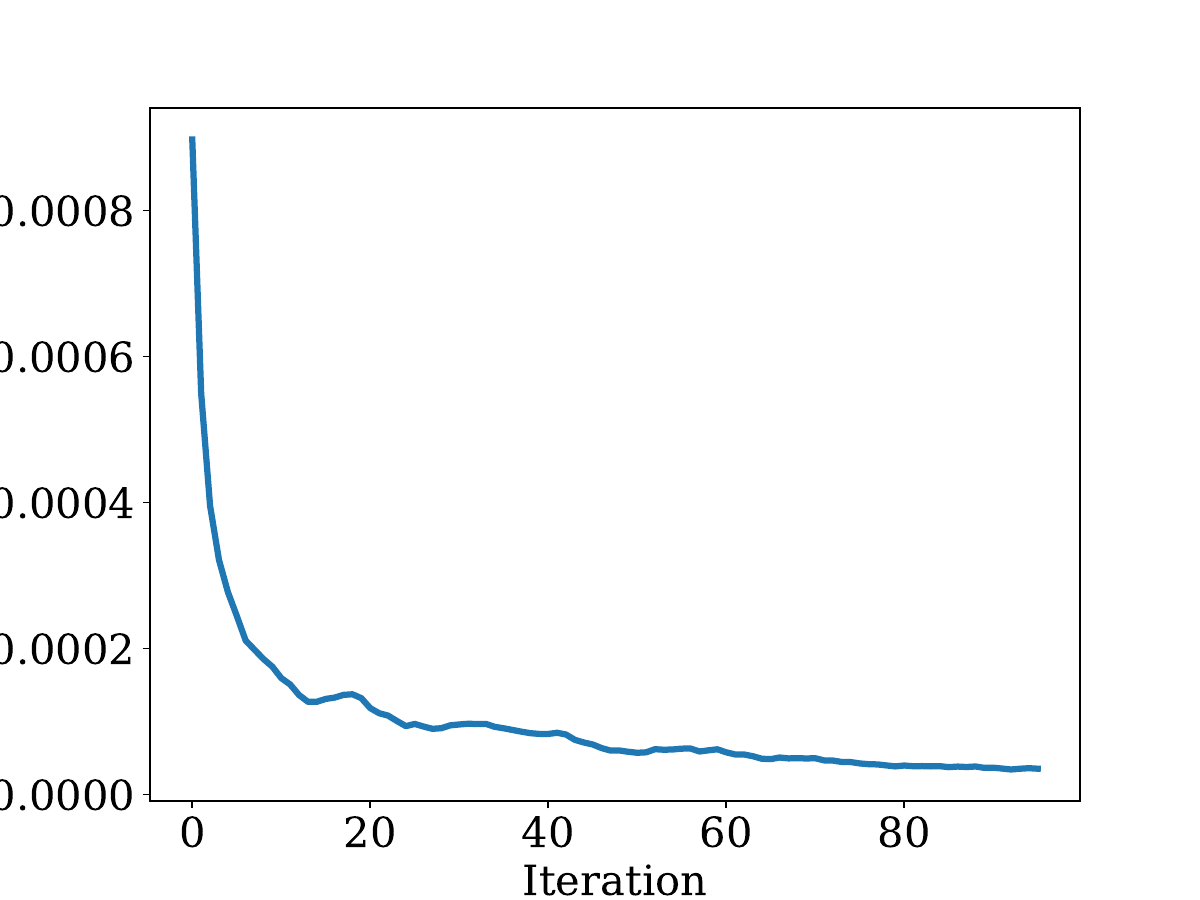}
         \caption{Weight Decay}
     \end{subfigure}
     \hfill
     \begin{subfigure}[c]{0.32\textwidth}
         \centering
         \includegraphics[width=.85\textwidth]{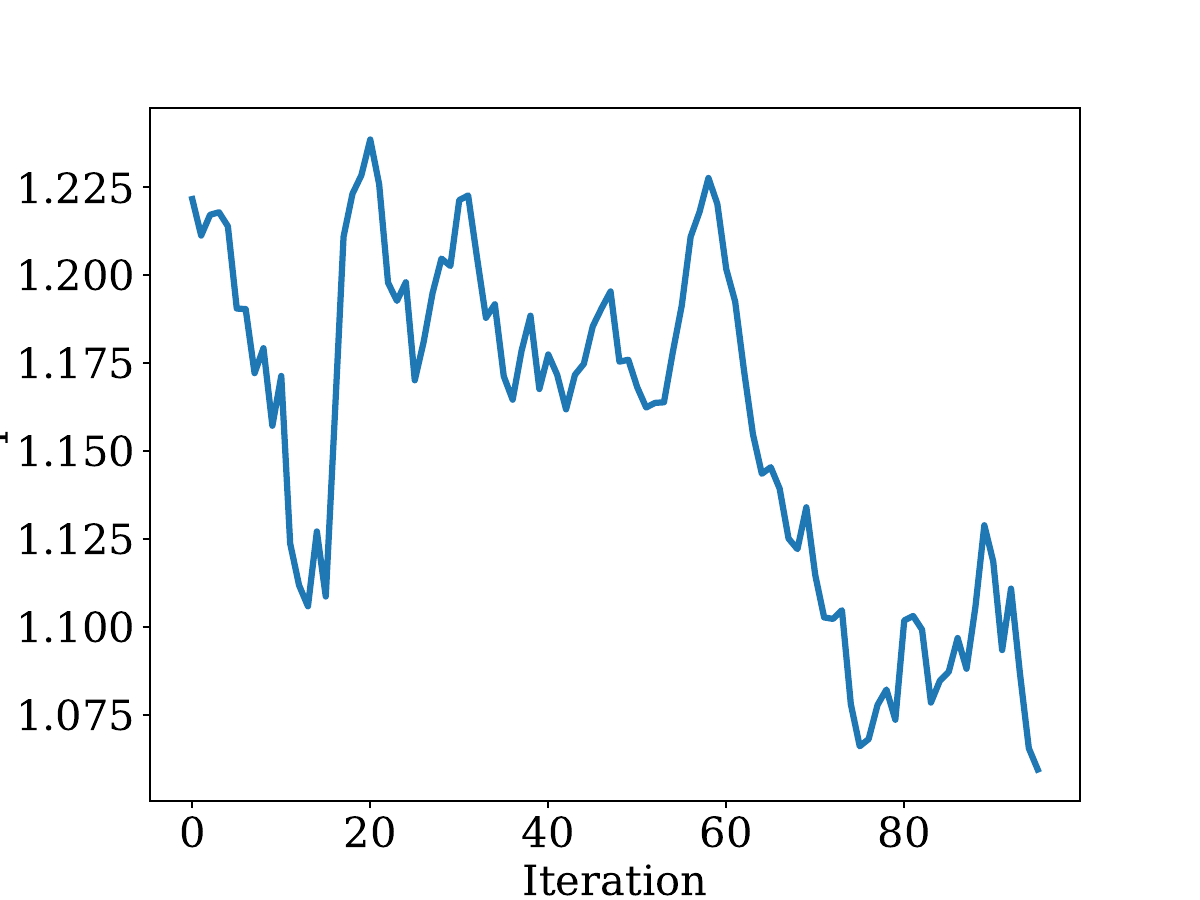}
         \caption{Op1}
     \end{subfigure}
     \\
     \begin{subfigure}[c]{0.32\textwidth}
         \centering
         \includegraphics[width=.85\textwidth]{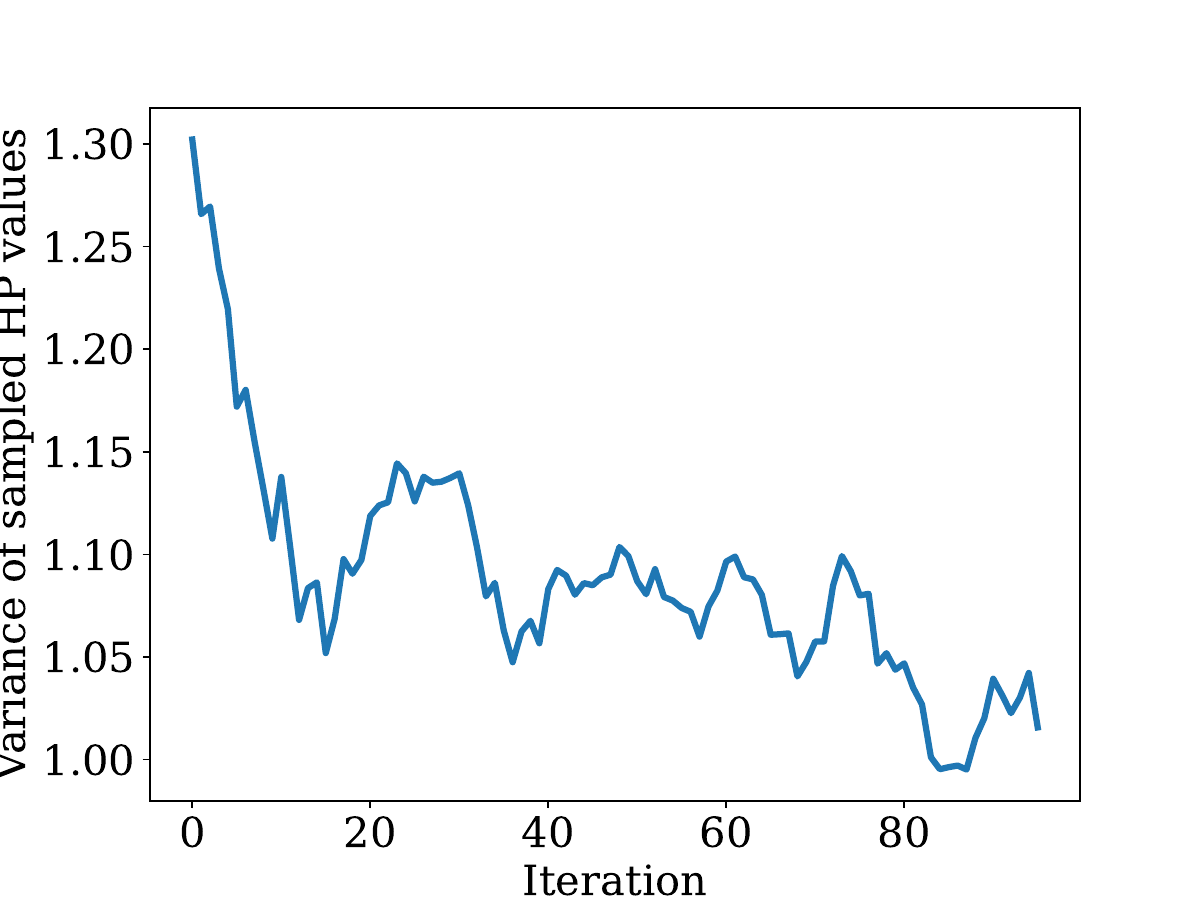}
         \caption{Op2}
     \end{subfigure}
     \hfill
     \begin{subfigure}[c]{0.32\textwidth}
         \centering
         \includegraphics[width=.85\textwidth]{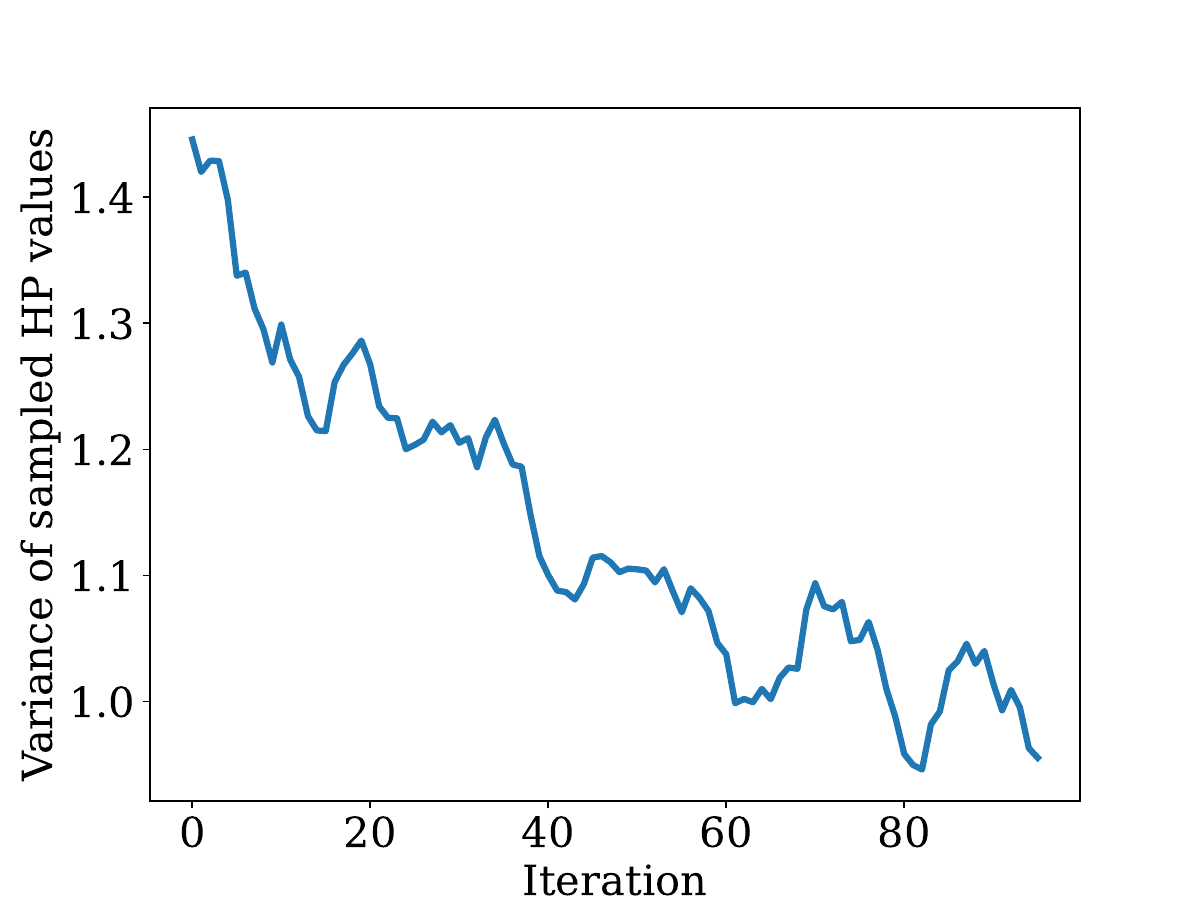}
         \caption{Op3}
     \end{subfigure}
     \hfill
     \begin{subfigure}[c]{0.32\textwidth}
         \centering
         \includegraphics[width=.9\textwidth]{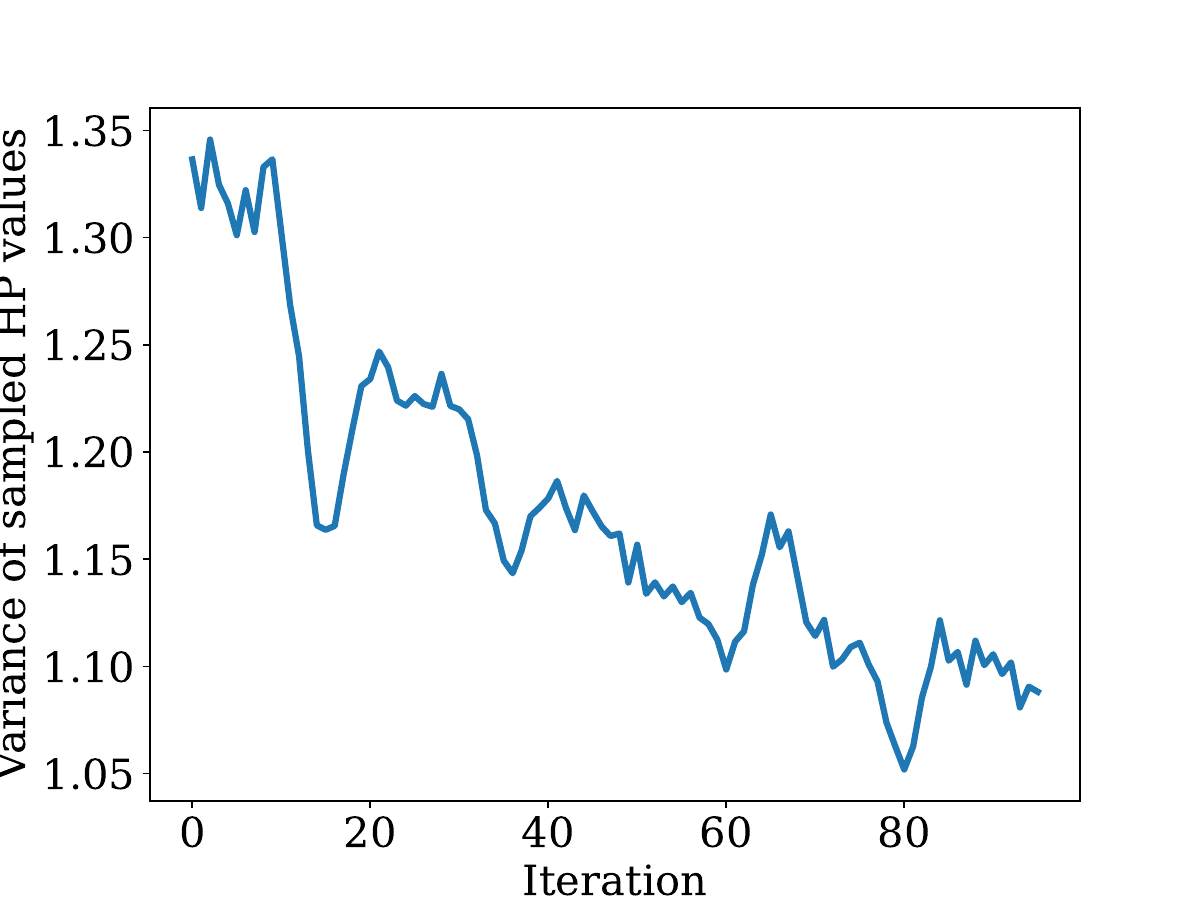}
         \caption{Op4}
     \end{subfigure}
    \caption{\textbf{IBO-HPC effectively trades off exploration and exploitation.} IBO-HPC's sampling policy naturally and effectively trades off exploration (high sampling variance in early iterations) versus exploitation (low sampling variance in later iterations). We show the sampling variance of 6 hyperparameters of the JAHS benchmark (Colorectal Histology) for each iteration, averaged over 20 runs of IBO-HPC.}
    \label{fig:exploration_exploitation_co}
\end{figure}

\clearpage
\subsection{Hyperparameters of IBO-HPC}
\label{app:hp_ibo}
IBO-HPC comes with a few hyperparameters itself, which have to be set. For our experiments, we set the number of iterations the surrogate is kept fixed $L=20$, the decay value $\gamma = 0.9$. We let all methods optimize for $2000$ iterations for fair comparison. Our surrogate models, i.e., PCs and the associated learning algorithm, have some hyperparameters as well. The structure learning algorithm splits use the RDC independence test and K-means clustering. The threshold to detect independencies is set to $0.3$, and the minimum number of instances per leaf is adapted dynamically based on the number of configurations tested during an optimization run.

\subsection{Hardware and Computational Cost}
\label{app:hw}
We ran all our experiments on DGX-A100 machines and used 10 CPUs for each run, thus parallelizing some sub-routines (e.g. learning of PCs). We did not use any GPUs as we queried the benchmarks employed to provide the performance of configurations. The JAHS benchmark requires a relatively large RAM ($>16GB$) to run smoothly as it loads large ensemble models. 

\textbf{Computational Cost.} We used HPO and NAS benchmarks to provide reproducible results and to keep the computational effort as low as possible, allowing researchers with relatively low computational resources to reproduce our results. Considering all baselines and all IBO-HPC runs, we recorded approximately 70k algorithm executions. On average, one run lasts 15 minutes (thanks to the benchmarks), resulting in approximately 1800 CPU hours (on DGX-A100 machines) needed for our experimental evaluation. Note that due to the use of benchmarks, we did not need any GPUs. 

Note that these computational costs reflect the cost of running all the HPO algorithms and are not to be confused with the computational costs reported by the benchmarks. In contrast, the benchmarks provide a wall-clock time estimate of training and evaluating a configuration from a hyperparameter search space. This allows us to plot the test error against the computation time.

\section{Limitations and Future Work}
\label{app:future_work}
IBO-HPC allows users to act as an external source of knowledge that can help to solve HPO tasks more efficiently. While this is an important step towards a more inclusive vision of AutoML, IBO-HPC also ignores another crucial source of information, namely previous HPO runs. Since IBO-HPC is built on PCs and PCs are a modular architecture, one could leverage PCs -- learned on previous HPO runs -- to act as a guide for future HPO tasks. This way, one could incorporate user knowledge \textit{and} information from previous HPO runs to increase the efficiency of HPO.

\end{document}